%% file: main.tex
\documentclass[10pt,twocolumn,letterpaper]{article}

\usepackage[pagenumbers]{cvpr} 

\input{preamble}

\definecolor{cvprblue}{rgb}{0.21,0.49,0.74}
\usepackage[pagebackref,breaklinks,colorlinks,allcolors=cvprblue]{hyperref}

\def\paperID{XXXXX}
\def\confName{CVPR}
\def\confYear{2026}

\title{Robust Continual Unlearning \\against Knowledge Erosion and Forgetting Reversal}

\author{Eun-Ju Park$^1$
\qquad
Youjin Shin$^2$\thanks{Corresponding Authors}
\qquad
Simon S. Woo$^{1,3}$\footnotemark[1]\\
$^1$Sungkyunkwan University \\
$^2$The Catholic University of Korea\\
$^3$Secure Machines Lab\\
{\tt\small 
\{eunju.park, swoo\}@g.skku.edu,
yj.shinn@catholic.ac.kr}
}

\begin{document}

\maketitle
\input{sec/0_abstract}    
\input{sec/1_intro}
\input{sec/2_relwork}
\input{sec/3_prelim}
\input{sec/4_method}

\input{sec/5_experiment}

\input{sec/6_analysis}
\input{sec/7_conclusion}

{
    \small

}

\input{sec/8_suppl}

\end{document}

%% file: preamble.tex
\usepackage{kotex}   
\usepackage{subcaption}
\usepackage[table]{xcolor}
\usepackage{multirow}
\usepackage{booktabs}

\usepackage{amsmath}
\usepackage{amsfonts}
\usepackage{mathtools}
\usepackage{xcolor}
\usepackage{amsthm}
\usepackage{comment}
\usepackage{soul}
\usepackage{accents}

\newtheorem{proposition}{Proposition}



%% file: sec/0_abstract.tex
\begin{abstract}
As a means to balance the growth of the AI industry with the need for privacy protection, machine unlearning plays a crucial role in realizing the ``right to be forgotten'' in artificial intelligence. This technique enables AI systems to remove the influence of specific data while preserving the rest of the learned knowledge.
Although it has been actively studied, most existing unlearning methods assume that unlearning is performed only once. In this work, we evaluate existing unlearning algorithms in a more realistic scenario where unlearning is conducted repeatedly, and in this setting, we identify two critical phenomena:
(1) Knowledge Erosion, where the accuracy on retain data progressively degrades over unlearning phases, and
(2) Forgetting Reversal, where previously forgotten samples become recognizable again in later phases.
To address these challenges, we propose SAFER (StAbility-preserving Forgetting with Effective Regularization), a continual unlearning framework that maintains representation stability for retain data while enforcing negative logit margins for forget data.
Extensive experiments show that SAFER mitigates not only knowledge erosion but also forgetting reversal, achieving stable performance across multiple unlearning phases.
Source code: \url{https://github.com/DASH-Lab/SAFER}
\end{abstract}


%% file: sec/1_intro.tex
\section{Introduction}
\label{sec:intro}

With large-scale datasets, deep learning has achieved significant progress across a wide range of computer vision tasks, including classification~\cite{krizhevsky2012imagenet,deng2009imagenet,he2016deep,dosovitskiy2021an}.
While applying such technology to real-world problems can bring significant convenience to daily life, it inevitably requires collecting and processing more personal data, thereby intensifying concerns over privacy violations~\cite{bourtoule2021machine,carlini2021extracting}.
To address these concerns, data protection regulations such as the GDPR~\cite{GDPR} guarantee individuals the ``right to be forgotten,'' which grants them the right to request the deletion of information related to themselves.
This extends beyond merely removing the original data from databases; it also encompasses erasing the influence of that data from models that have already been trained on it.

Therefore, deep learning systems should be equipped with the capability to selectively forget specific user data. 
Enabling a model to forget certain information is essential not only for addressing privacy concerns but also for correcting unintended learning outcomes resulting from erroneous data during training. 
In this respect, machine unlearning has become an important research area for ensuring the reliability, security, and fairness of AI systems.

Most prior studies on machine unlearning~\cite{foster2024fast,fan2024salun, chen2023boundary,kurmanji2023towards} implicitly assume that the unlearning operation is processed only once.
In real-world applications, however, data deletion requests may arise continuously over time, requiring models to support ongoing unlearning throughout their entire life cycle.
In other words, models should be capable of maintaining their utility even when data removal is performed repeatedly.
Thus, developing robust and efficient approaches for continual unlearning is essential to ensure both regulatory compliance and long-term model reliability.

In our analysis of applying existing unlearning methods to multi-phase unlearning, we identify two critical issues that undermine both model reliability and privacy compliance.
The first issue, \textit{Knowledge Erosion}, refers to the progressive degradation of accuracy on retain data as unlearning proceeds across multiple phases. 
The second issue, \textit{Forgetting Reversal}, describes the phenomenon in which samples that were previously forgotten become recognizable again in later unlearning phases. 

To mitigate these issues, we propose SAFER (StAbility-preserving Forgetting with Effective Regularization), a continual unlearning framework designed to maintain model utility while preventing previously forgotten samples from being recognized again in later unlearning phases.
SAFER simultaneously enhances the clusterability of retain data representations by reducing intra-class variation and increasing inter-class separation, and encourages forgotten samples to maintain negative unlearning margins across phases.
By combining these two mechanisms, SAFER reduces the likelihood that forgotten samples re-enter class decision regions as correct predictions as the unlearning process repeats.
In brief, our contributions can be summarized as follows:
\begin{itemize}
    \item We propose \textbf{SAFER}, a continual unlearning framework that enhances the clusterability of retain data representations, while encouraging forgotten samples to maintain negative unlearning margins across phases. This mechanism prevents previously forgotten information from being reactivated even after multiple unlearning phases.
    \item Through comprehensive evaluations on diverse datasets and continual unlearning configurations, we show that SAFER maintains strong model utility while robustly preventing forgetting reversal, outperforming existing state-of-the-art approaches.
    \item We provide detailed analyses of representation clusterability and unlearning-margin behaviors to uncover the factors that enable reliable continual unlearning.
\end{itemize}


%% file: sec/2_relwork.tex
\section{Related Work}
\label{sec:relwork}

Cao and Yang~\cite{cao2015towards} introduced the notion of machine unlearning, focusing on the challenge of removing a subset of training data from a trained model. 
Earlier research on machine unlearning, including SISA~\cite{bourtoule2021machine} and Amnesiac learning~\cite{graves2021amnesiac}, explored strategies for training models under the assumption that data deletion occurs. 
More recent studies~\cite{chen2023boundary,fan2024salun,foster2024fast} have instead focused on approaches that modify the weights of an already trained model to minimize the influence of the forget data.

This category of unlearning methods, known as \textit{approximate unlearning}, aims to statistically ensure that an unlearned model obtained through post-hoc methods remains indistinguishable from models that achieve exact unlearning, such as retrained models. 
In other words, approximate unlearning is regarded as effective when it achieves performance comparable to retrained models, under comprehensive evaluation criteria.

These methods typically fine-tune the model to reduce the influence of the forget data~\cite{chundawat2023can,kurmanji2023towards,fan2024salun,chen2023boundary} or estimate the degree to which model parameters are affected when updating forget data or train data and adjust them accordingly~\cite{foster2024fast,golatkar2020eternal}. 
However, approximate unlearning does not inherently guarantee the complete removal of the forget samples’ influence.
Therefore, its effectiveness is typically evaluated by comparison with retrained models.
Even though most unlearning studies achieve high levels of forgetting, they generally assume that the unlearning process is performed only once and evaluate their methods using forget data as a single batch.

In this work, we study continual unlearning, also referred to as iterative unlearning~\cite{grimes2024gone} or online unlearning~\cite{xu2024machine}.
Recent works~\cite{chatterjee2024unified,adhikari2025unlearning,huang2025unified,tang2025acu} consider scenarios where learning and unlearning occur at different stages of the training process, or where, after multi-stage learning, unlearning removes knowledge acquired at a specific learning stage.
In contrast, our setting assumes a fully trained model, after which unlearning requests may arise on arbitrary data in either class-aligned or class-misaligned settings. Thus, unlearning is not tied to a specific training phase.
GS-LoRA~\cite{Zhao_2024_CVPR,zhao2026practical} presents a LoRA-based approach that adapts Transformer-specific FFN modules to address continual forgetting.
In the context of continual unlearning, we characterize two key issues, \textit{Knowledge Erosion} and \textit{Forgetting Reversal}, and propose corresponding metrics to quantify them. 
Furthermore, we introduce an architecture-agnostic framework that addresses these challenges.

%% file: sec/3_prelim.tex
\section{Preliminaries}
\label{sec:prelim}

\subsection{Problem Setting and Notation}
\label{subsec:problem}

\begin{figure}[!tbp]
    \centering
    \includegraphics[width=\columnwidth]{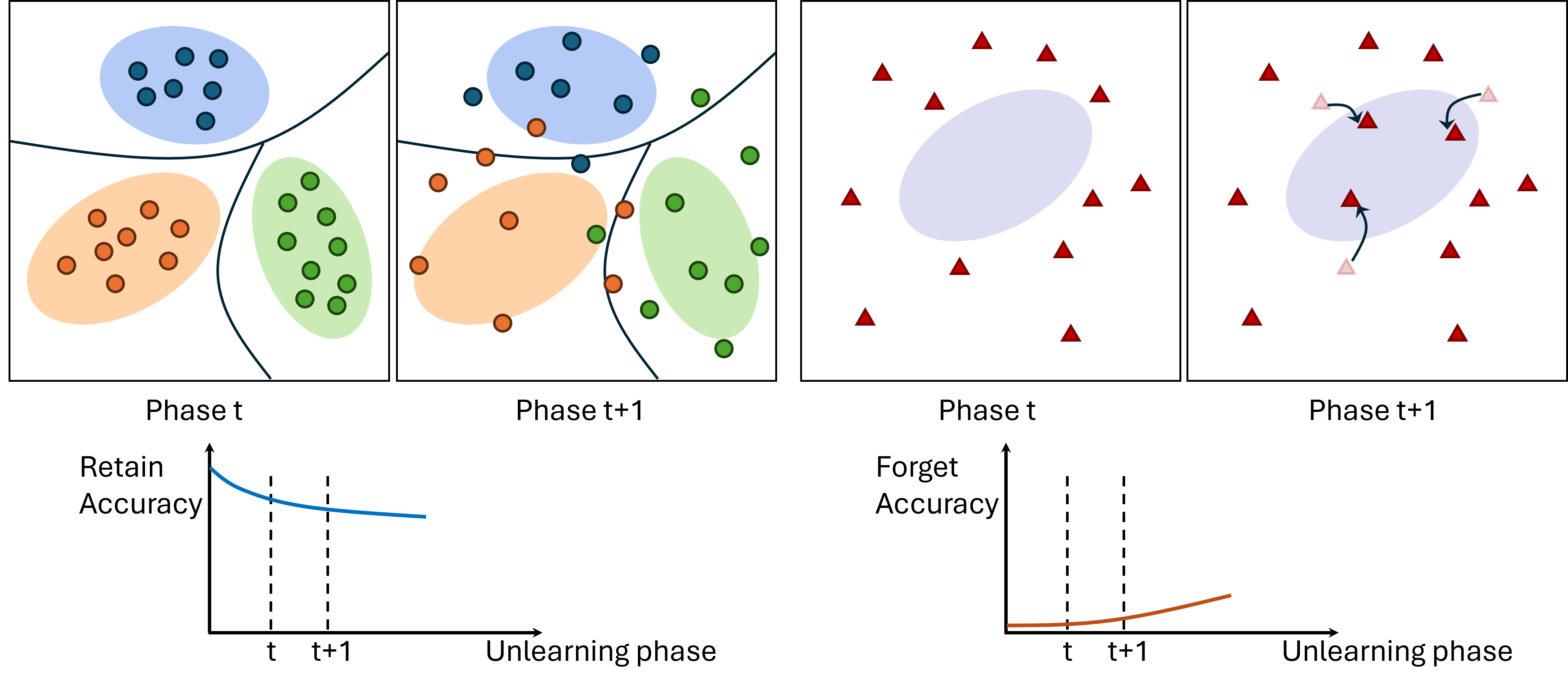}
    \caption{Illustration of knowledge erosion and forgetting reversal. The left figure illustrates knowledge erosion, where the model's performance on the retain data progressively degrades as unlearning is repeated. The right figure depicts forgetting reversal, where previously forgotten samples become recognizable again in later unlearning phases.}
    \label{fig:problem}
    \vspace{-1.em}
\end{figure}



Let the entire training dataset used for a classification model \(f_{\theta} : \mathcal{X} \rightarrow \mathcal{Y}\) be \(\mathcal{D} = \{(\boldsymbol{\mathrm{x}}^{(i)}, \mathrm{y}^{(i)})\}_{i=1}^N \subseteq \mathcal{X} \times \mathcal{Y}\), where each input \(\boldsymbol{\mathrm{x}}^{(i)} \in \mathcal{X}\) maps to a label \(\mathrm{y}^{(i)} \in \mathcal{Y}\). 
The set of labels is denoted as \(\mathcal{Y} = \{l_1, l_2, ..., l_K\}\) where \(K\) is the number of classes. 

We denote the original classification model trained on \(\mathcal{D}\) as \(f_{\theta_0}\).
When a subset of the training data should be unlearned from \(f_{\theta_0}\) in the first phase, we denote the forgetting data as \(\mathcal{D}_{\text{forget}}^{(1)}\), and the remaining data as \(\mathcal{D}_{\text{retain}}^{(1)} = \mathcal{D} \setminus \mathcal{D}_{\text{forget}}^{(1)}\). 
After the first unlearning, we obtain the unlearned model \(f_{\theta_1}\), where the parameters are updated from \(f_{\theta_0}\) to \(f_{\theta_1}\) during the unlearning process. 

When additional deletion requests arise in the second phase, we denote the new forgetting data as \(\mathcal{D}_{\text{forget}}^{(2)}\), and the remaining data as \(\mathcal{D}_{\text{retain}}^{(2)}\). 
Importantly, the forgetting data from the first phase becomes the forgotten data in the second phase, denoted as \(\mathcal{D}_{\text{forgot}}^{(2)}\).
This implies that the forgotten data has already been removed in the first unlearning phase and therefore remains inaccessible. 
The unlearned model after the second unlearning phase is denoted as \(f_{\theta_2}\), obtained using \(f_{\theta_1}\), \(\mathcal{D}_{\text{forget}}^{(2)}\), and \(\mathcal{D}_{\text{retain}}^{(2)}\).

Let the model at phase \(t\) be denoted as \(f_{\theta_t}\), which has undergone unlearning with respect to a forget set \(\mathcal{D}_{\text{forget}}^{(t)}\).
When a new forget request \(\mathcal{D}_{\text{forget}}^{(t+1)}\) arise, the model \(f_{\theta_t}\) serves as the starting point for the next unlearning phase, producing \(f_{\theta_{t+1}}\).  
The continual unlearning process can therefore be represented as:
\[
f_{\theta_0} \xrightarrow[]{\text{unlearn on } D_{\text{forget}}^{(1)}} 
f_{\theta_1} \xrightarrow[]{\text{unlearn on } D_{\text{forget}}^{(2)}} 
\cdots \xrightarrow[]{\text{unlearn on } D_{\text{forget}}^{(t)}} f_{\theta_t}.
\]
We also denote the relationship among \(\mathcal{D}\), \(\mathcal{D}_{\text{forget}}^{(t)}\), \(\mathcal{D}_{\text{retain}}^{(t)}\), and \(\mathcal{D}_{\text{forgot}}^{(t)}\) as follows:
\begin{equation}
\label{eq:data_def}
    \mathcal{D} = \mathcal{D}_{\text{forget}}^{(t)} \cup \mathcal{D}_{\text{retain}}^{(t)}
    \cup \mathcal{D}_{\text{forgot}}^{(t)},
\end{equation}
where \(\mathcal{D}_{\text{forgot}}^{(1)} = \emptyset\) and \(\mathcal{D}_{\text{forgot}}^{(t)} = \bigcup_{i=1}^{t - 1}\mathcal{D}_{\text{forget}}^{(i)}\).


Let \(f_{\accentset{\ast}{\theta_t}}\) denote a model retrained from scratch on \(\mathcal{D}_{\text{retain}}^{(t)}\). 
The objective for \(f_{\theta_t}\) does not imply that \(\theta_t\) should be similar to \(\accentset{\ast}{\theta_t}\). 
Instead, we expect the behavior of \(f_{\theta_t}\) to be analogous to that of \(f_{\accentset{\ast}{\theta_t}}\). 
Hence, our objective for continual unlearning is to (i) prevent forgotten samples from being classified as their original labels again, while (ii) preserving performance on the retain dataset at any unlearning phase \(t\).
\subsection{Challenges in Continual Unlearning}
\label{subsec:challenges}

Applying existing unlearning algorithms in the continual unlearning setup, as described in~\cref{subsec:problem}, tends to result in two fundamental challenges that threaten both performance and reliability.

\paragraph{(1) Knowledge Erosion.}
We define \textit{Knowledge Erosion} as the phenomenon where the model’s performance on retain data gradually degrades as unlearning is repeated across phases:
\begin{equation}
    \text{Acc}_{t}(\mathcal{D}_{\text{retain}}^{(t)}) > \text{Acc}_{t+k}(\mathcal{D}_{\text{retain}}^{(t+k)}), 
\end{equation}
where \(k \geq 1\). 
Here, \(\text{Acc}_{t}(\cdot)\) and \(\text{Acc}_{t+k}(\cdot)\) denote the accuracy measured after completing the \(t\)-th and (\(t+k\))-th unlearning phase, using the models \(f_{\theta_t}\) and \(f_{\theta_{t+k}}\), respectively.
This degradation suggests that the model’s ability to preserve the discriminative characteristics of retain data weakens as unlearning is repeated. 
As a result, the model becomes increasingly unstable and less reliable in continual unlearning scenarios.

\paragraph{(2) Forgetting Reversal.}
We identify another critical issue, \textit{Forgetting Reversal}, in which samples that were successfully forgotten in earlier phases become recognizable again in subsequent ones:
\begin{equation}
    \text{Acc}_{t}(\mathcal{D}_{\text{forget}}^{(t)}) < \text{Acc}_{t+k}(\mathcal{D}_{\text{forget}}^{(t)}),
\end{equation}
where \(k \geq 1\).
This implies that during later unlearning optimization, the representations of previously forgotten data drift toward newly adjusted decision boundaries, causing them to re-enter recognizable regions. 

These two phenomena, \textit{Knowledge Erosion} and \textit{Forgetting Reversal}, represent distinct yet interrelated challenges in continual unlearning.  
The former pertains to the stability of retained knowledge, whereas the latter concerns the reliability of forgetting.  
Consequently, an effective continual unlearning algorithm should aim to achieve both:
(i) robust preservation of knowledge learned from \(\mathcal{D}_{\text{retain}}\), and  
(ii) consistent unrecognizability of both \(\mathcal{D}_{\text{forget}}\) and \(\mathcal{D}_{\text{forgot}}\).  
In the next section, we present a method that explicitly addresses these challenges by optimizing intra/inter-class feature variation for retain data while enforcing negative logit margins for forget data.

%% file: sec/4_method.tex
\section{Method}
\label{sec:method}

In this section, we propose SAFER (\textit{StAbility-preserving Forgetting with Effective Regularization}), 
a continual unlearning framework designed to address two critical issues: 
\textit{Knowledge Erosion}, the progressive degradation of performance on retain data across unlearning phases, and 
\textit{Forgetting Reversal}, the unintended reclassification of previously forgotten data as correct predictions. 

To overcome these challenges, continual unlearning should satisfy two key requirements.
First, as unlearning is repeated, the model should preserve the structural integrity of retain data representations:
maintaining compact intra-class distributions while keeping distinct class boundaries prevents accumulated representation drift and helps sustain predictive utility over phases.
Second, forgotten samples should remain unrecognizable in future updates so that forgetting does not unintentionally reverse.
This motivates constraining the model’s confidence for the ground-truth class of forget data to remain lower than that of retain classes, even as unlearning continues across phases.

To effectively counteract the two challenges, SAFER introduces two optimization objectives:
(i) improving the clusterability of retain data representations to protect retained knowledge from degradation, and
(ii) encouraging negative logit margins for forget data to suppress the possibility of their re-entering decision regions in later phases.

\paragraph{Improving Representation Clusterability.}
To mitigate knowledge erosion, SAFER aims to maintain and reinforce the clusterability of retain data representations across phases. Our key idea is to encourage (i) low intra-class variation and (ii) well-separated cluster structures over unlearning phases.

We introduce a class-conditioned latent-variable module that operates exclusively on retain samples. Given a feature vector \(x \in \mathbb{R}^d\) with class label \(c\), the encoder produces the parameters of a latent distribution conditioned on the input feature \(x\) and class label \(c\), from which a latent embedding is sampled as:
\begin{equation}
    z = \mu + \sigma \odot \epsilon, \quad \epsilon \sim \mathcal{N}(0,I),
\end{equation}
where \(\mu\) and \(\sigma\) are the mean and standard deviation of the latent distribution. A decoder reconstructs the feature from \(z\) as \(\hat{x}\), and the stabilized feature forwarded to the classifier is computed as:
\begin{equation}
    x' = \frac{x + \hat{x}}{2}.    
\end{equation}
Averaging the original and reconstructed representations encourages feature smoothness and consistency between encoder and decoder, reducing noisy intra-class variability.
We then optimize retain samples with the following objective:
\begin{equation}
\label{eq:retain_loss}
\begin{aligned}
    \mathcal{L}_{\text{retain}} &=
    \underbrace{\mathcal{L}_{\text{CE}}(f(x'), c)}_{\text{classification consistency}} 
    + \underbrace{\| x - \hat{x} \|^2}_{\text{intra-class compactness}} \\
    & + \underbrace{\mathrm{KL}(q(z|x,c)\parallel\mathcal{N}(0,I))}_{\text{smooth and bounded latent structure}} 
    + \underbrace{\lambda (\mathbb{E}[\| \mu - \mu^{\text{EMA}} \|_2])^{-1}}_{\text{cluster separation}}, 
\end{aligned}
\end{equation}
where \(\mu^{\text{EMA}}\) denotes a global exponential moving average of latent means computed over mini-batches.
The final term encourages separation between latent representations and the global mean, preventing collapse of class clusters.
The regularization terms in~\cref{eq:retain_loss} reduce intra-class variability and maintain well-separated cluster structures across unlearning phases, thereby preserving retain-data utility.

\paragraph{Negative Logit Margin for Forget Data.}
In continual unlearning, our objective is to ensure that forgotten data remain unrecognizable even after multiple phases.
To characterize this behavior, we introduce the unlearning margin, inspired by the concept of the logit margin~\cite{ngnawe2024detecting}.
For a sample $\mathbf{x}$ with original class $y$, we define
\begin{equation}
\label{eq:unlearning_margin}
    UM(\mathbf{x}) = \ell_y - \max_{k \neq y} \ell_k,
\end{equation}
where \(\ell_y\) is the logit of the original class and \(\max_{k \neq y}\ell_k\) is the maximum competing logit.

SAFER encourages negative unlearning margins for forget samples by suppressing the logit of the original class and assigning probability mass to retain classes.
For each forget sample at phase \(t\), we define a target vector \(\mathbf{q}\) such that only retain classes receive non-zero supervision: 
\begin{equation}
q_i =
\begin{cases}
0, & i \notin \mathcal{D}_{\text{retain}}^{(t)}, \\
r_i, & i \in \mathcal{D}_{\text{retain}}^{(t)}, 
\end{cases}
\end{equation}
where \(r_i\) is independently sampled from a uniform distribution over [0, 1]. Then, the target vector is normalized as: 
\begin{equation}
\label{eq:random_logitdist}
q_i \leftarrow \frac{q_i}{\sum_j q_j}.
\end{equation}

Since the original label \(y\) of a forget sample does not belong to \(\mathcal{D}_{\text{retain}}^{(t)}\), it receives zero target probability.
Minimizing KL divergence using \(\mathbf{q}\) drives the model toward
\begin{equation}
\label{eq:negative_logit_margin_objective}
    \ell_y < \max_{i \in \mathcal{D}_{\text{retain}}^{(t)}} \ell_i
    \quad \Rightarrow \quad UM(\mathbf{x}) < 0.
\end{equation}
As a result, forgotten classes remain suboptimal in the decision space throughout unlearning phases, preventing inadvertent reclassification of forgotten samples.
A formal analysis of the objective based on the negative logit margin is provided in~\cref{sec:lms_theory}.

%% file: sec/5_experiment.tex
\section{Experiment}
\label{sec:experiment}

\subsection{Experimental Settings}
\label{subsec:setup}

\noindent\textbf{Dataset and Models.}
For unlearning evaluation, we use three datasets: CIFAR-100~\cite{krizhevsky2009learning}, VGGFace2~\cite{cao2018vggface2datasetrecognisingfaces}, and MUFAC~\cite{choi2024towards}.
CIFAR-100 is a benchmark dataset comprising 100 object classes, each containing 600 images, of which 500 are used for training and 100 for testing.
VGGFace2 is a facial dataset containing 100 identities, with the number of images per identity ranging from 87 to 843.
Lastly, MUFAC consists of facial images from a total of 1,216 subjects, with 8 to 24 photographs per individual, taken at different ages and categorized into eight distinct age groups. 
We use ResNet-18, ResNet-50~\cite{he2016deep}, and Vision Transformer (ViT)~\cite{dosovitskiy2021an} as backbone models. 


\input{tables/table_acc_c100}

\input{tables/table_acc_vgg2}

\input{tables/table_acc_mufac}

\noindent\textbf{Continual Unlearning Setup.}
In our experiments, we evaluate the performance of unlearning algorithms under scenarios where the unlearning process is performed multiple times.
To simulate diverse continual unlearning situations, we consider two settings based on the alignment between the task label and the forgetting unit:
\begin{itemize}
\item \textit{Class-Aligned Continual Unlearning}: the forgetting unit directly corresponds to the model’s classification labels (CIFAR-100, VGGFace2).
\item \textit{Class-Misaligned Continual Unlearning}: unlearning requests occur at the individual level, while the task objective is attribute classification, resulting in a misalignment between the forgetting unit and the classification target (MUFAC).
\end{itemize}
For each setting, we conduct experiments under the \textit{N CLASSES-3PHASES} configuration, which consists of three consecutive unlearning processes with \textit{N} different classes.
All experiments begin with a pre-trained classification model \(f_{\theta_0}\).
At each phase \textit{t}, a new deletion request \(\mathcal{D}_{\text{forget}}^{(t)}\) is processed using \(f_{\theta_{t-1}}\), \(\mathcal{D}_{\text{forget}}^{(t)}\), and \(\mathcal{D}_{\text{retain}}^{(t)}\), resulting in an updated model \(f_{\theta_t}\).

\begin{figure*}[!tbp]
\centering
    \begin{subfigure}{\textwidth} 
        \centering 
        \includegraphics[width=0.33\textwidth]{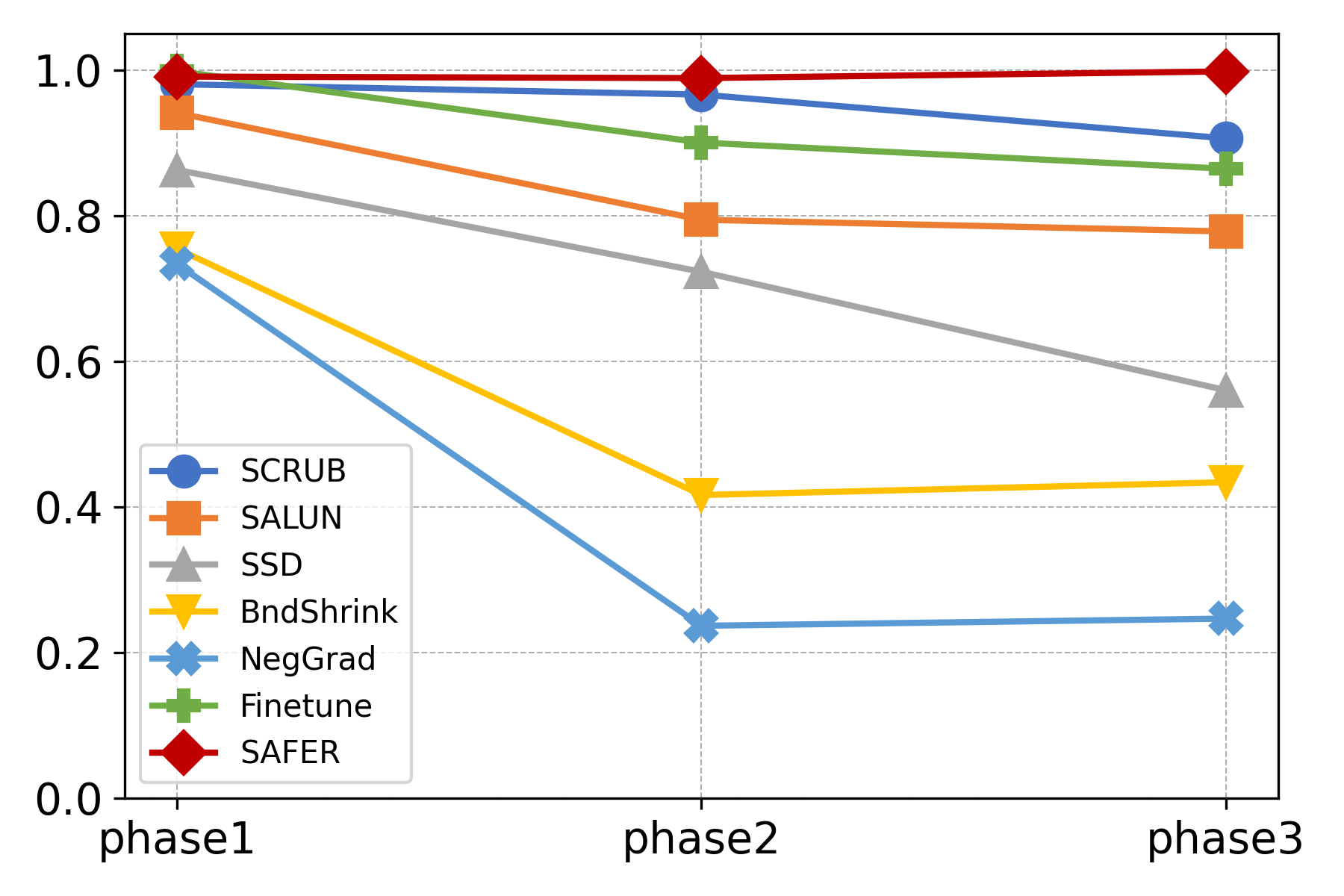}
        \includegraphics[width=0.33\textwidth]{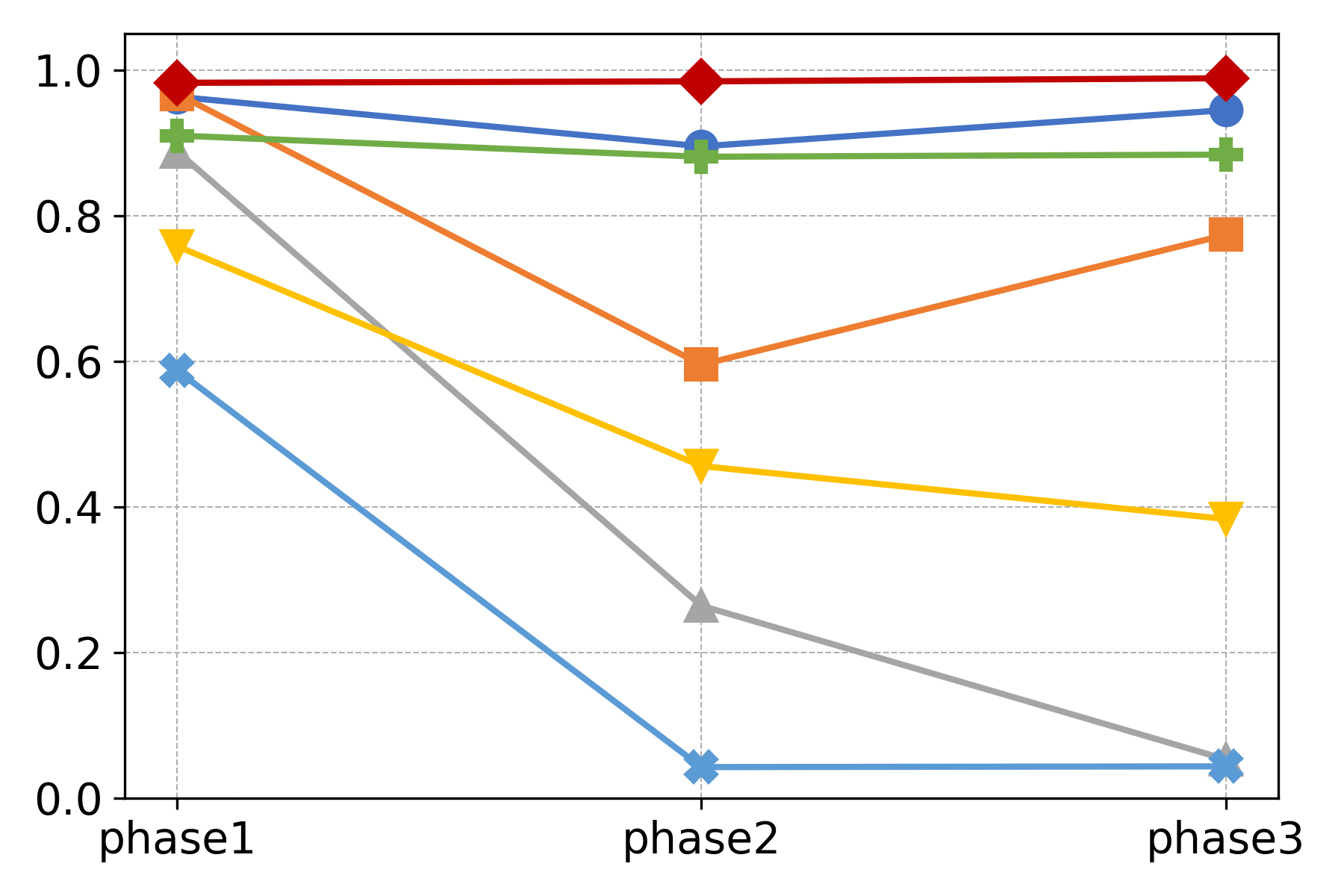}
        \includegraphics[width=0.33\textwidth]{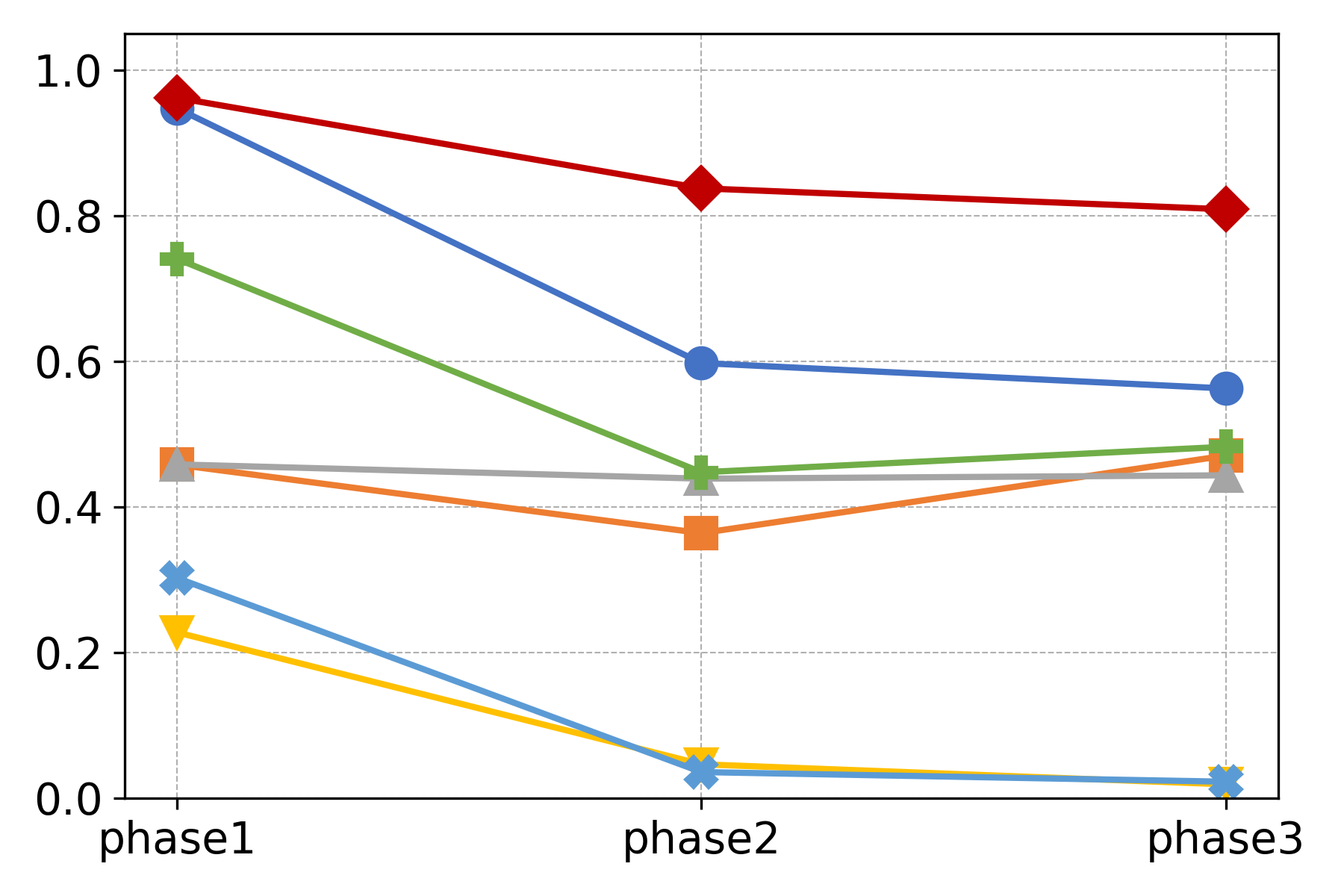}
        \caption{Unlearning Efficiency (ToW)}
        \label{fig:tow_all}
    \end{subfigure}

    \hfill

    \begin{subfigure}[b]{0.49\textwidth}
    \includegraphics[width=\textwidth]{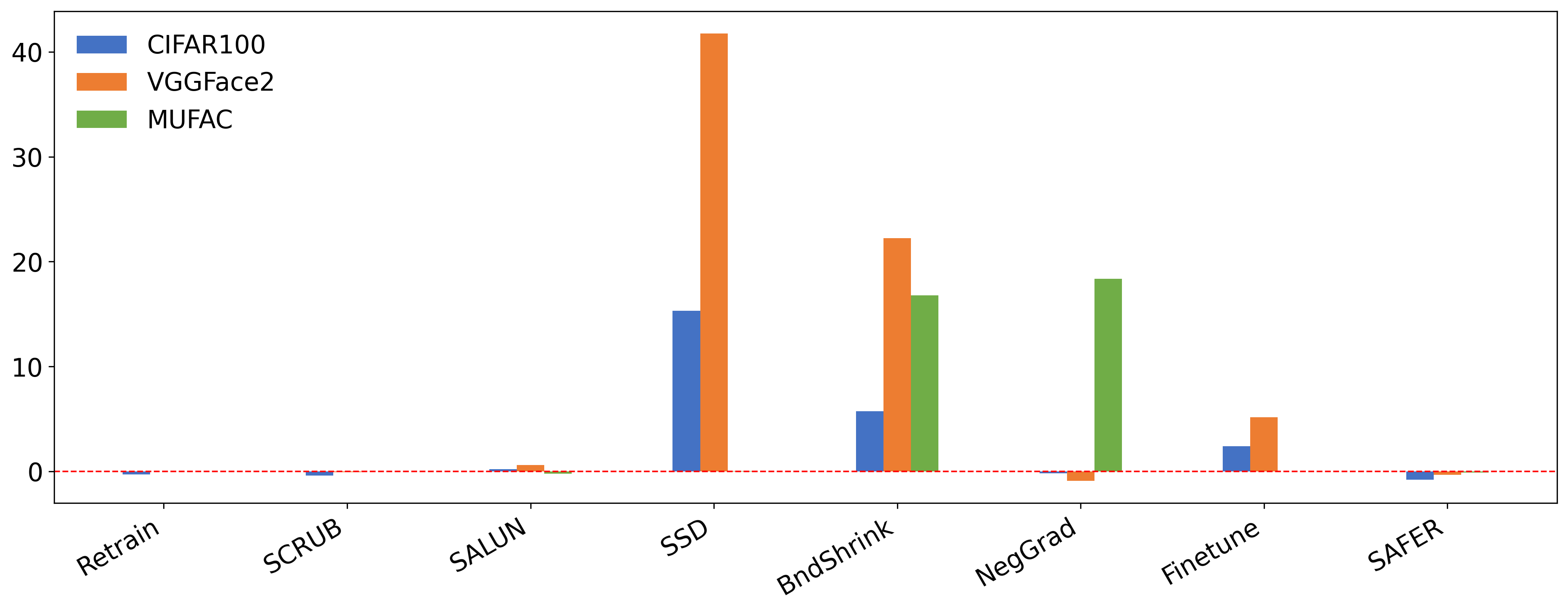}
    \caption{Knowledge Erosion} 
    \label{fig:ke_all}
    \end{subfigure}
    \begin{subfigure}[b]{0.49\textwidth}
    \includegraphics[width=\textwidth]{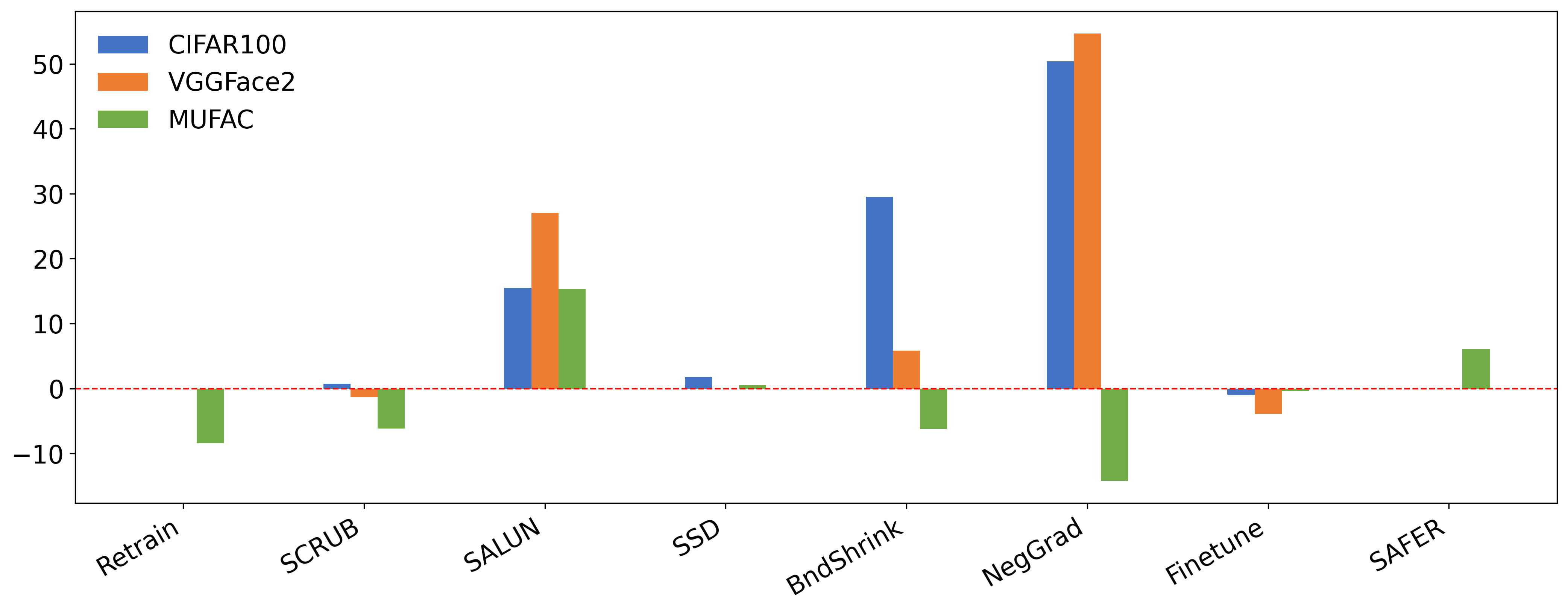}
    \caption{Forgetting Reversal} 
    \label{fig:fr_all}
    \end{subfigure}
    
\caption{Measured continual unlearning metrics. In (a), ToW results, shown from left to right for CIFAR-100, VGGFace2, and MUFAC, indicate that higher values are better. For both (b) knowledge erosion and (c) forgetting reversal, lower values indicate better performance.}
\label{fig:continual_measurement}
\end{figure*}


\noindent\textbf{Evaluation.}
At each unlearning phase, we measure and report the accuracy using the following evaluation sets: 
\begin{itemize}
\item \textit{Class-Aligned Continual Unlearning}: \(\mathcal{D}_{tr}\) (retain class data of test set), \(\mathcal{D}_{tf}\) (forget class data of test set), and \(\mathcal{D}_{tg}\) (forgotten class data of test set).
\item \textit{Class-Misaligned Continual Unlearning}: \(\mathcal{D}_{r}\) (retain data), \(\mathcal{D}_{f}\) (forget data), \(\mathcal{D}_{t}\) (test data), and \(\mathcal{D}_{g}\) (forgotten data).
\end{itemize}
We note that the measurement of forgotten data is included starting from the second unlearning process.

To provide a concise and comparable assessment across multiple unlearning phases, we introduce two metrics that quantify the two key phenomena observed in continual unlearning: Knowledge Erosion and Forgetting Reversal.

\noindent\textit{Measurement of Knowledge Erosion.} 
Let \(\text{Acc}_{t}(\mathcal{D}_{\text{retain}}^{(t)})\) denote the accuracy on retain data measured after completing the \textit{t}-th unlearning phase. 
Knowledge erosion characterizes the progressive decline of predictive performance on retain data across phases. 
Accordingly, we define
\begin{equation}
\label{eq:m_ke}
    m(\mathrm{KE}) \coloneqq \frac{1}{T-1}\sum_{t=1}^{T-1}\Big(\text{Acc}_{t}(\mathcal{D}_{\text{retain}}^{(t)}) - \text{Acc}_{t+1}(\mathcal{D}_{\text{retain}}^{(t+1)})\Big).
\end{equation}
Since smaller changes in the performance on retain data indicate that the model better preserves its utility as unlearning progresses, lower values correspond to more stable performance.

\noindent\textit{Measurement of Forgetting Reversal.}
We measure the extent to which samples forgotten at phase-(\textit{t}) become recognizable in subsequent phases.
To quantify forgetting reversal, we compare \(\text{Acc}_{t}(\mathcal{D}_{\text{forget}}^{(t)})\) with \(\text{Acc}_{t+1}(\mathcal{D}_{\text{forgot}}^{(t+1)})\), since 
\(\mathcal{D}_{\text{forgot}}^{(t+1)}\) contains \(\mathcal{D}_{\text{forget}}^{(t)}\), as described in~\cref{eq:data_def}.  
Thus, we define
\begin{equation}
\label{eq:m_fr}
    m(\mathrm{FR}) \coloneqq \frac{-1}{T-1}\sum_{t=1}^{T-1}\Big(\text{Acc}_{t}(\mathcal{D}_{\text{forget}}^{(t)}) - \text{Acc}_{t+1}(\mathcal{D}_{\text{forgot}}^{(t+1)})\Big).
\end{equation}
Since it is desirable for the performance on forget data to remain low even after multiple unlearning processes, lower values indicate more robust unlearning efficiency.

In addition, we adopt the Tug of War (ToW) metric to comprehensively rank unlearning methods for each phase and dataset, following~\cite{zhao2024makes}. In brief, ToW quantifies the performance discrepancy between the unlearned model (\(f_{\theta_t}\)) and the retrained model (\(h_\theta\)) as follows:
\begin{equation}
    ToW(t) = \Pi_{\mathcal{D}^{(t)}}(1 - |\text{Acc}(\mathcal{D}^{(t)}; f_{\theta_t}) - \text{Acc}(\mathcal{D}^{(t)}; h_\theta)|).
\end{equation}


\noindent\textbf{Baselines.}
We compare our proposed method against various state-of-the-art baselines for continual unlearning.
The baselines used in our experiments are as follows. \textit{Retrain} is a model retrained from scratch on \(\mathcal{D}_{\text{retain}}\) and serves as an oracle reference for continual unlearning.
\textit{SCRUB}~\cite{kurmanji2023towards} optimizes the distance between the student and teacher models, decreasing it for the retain data while increasing it for the forgetting data.
\textit{SALUN}~\cite{fan2024salun} identifies model weights where the gradient of the forgetting loss exceeds a threshold and applies random labeling to update those selected weights.
\textit{SSD}~\cite{foster2024fast} selectively updates weights of \(f_{\theta}\) that are important for the forgetting data but not for the retain data, using the Fisher information matrix. 
\textit{BndShrink}~\cite{chen2023boundary} fine-tunes \(f_{\theta}\) with the nearest but incorrect labels for the forgetting data, determined through adversarial perturbations.
\textit{NegGrad}~\cite{golatkar2020eternal} fine-tunes the model in the direction that increases the loss for \(\mathcal{D}_{\text{forget}}\). 
\textit{Finetune}~\cite{golatkar2020eternal} updates the weights of \(f_{\theta}\) to minimize the loss only on the retain data.



\subsection{Results}
\label{subsec:results}

\begin{figure*}[!tbp]
\centering
    \begin{subfigure}[b]{0.32\textwidth} 
        \centering 
        \includegraphics[width=\textwidth]{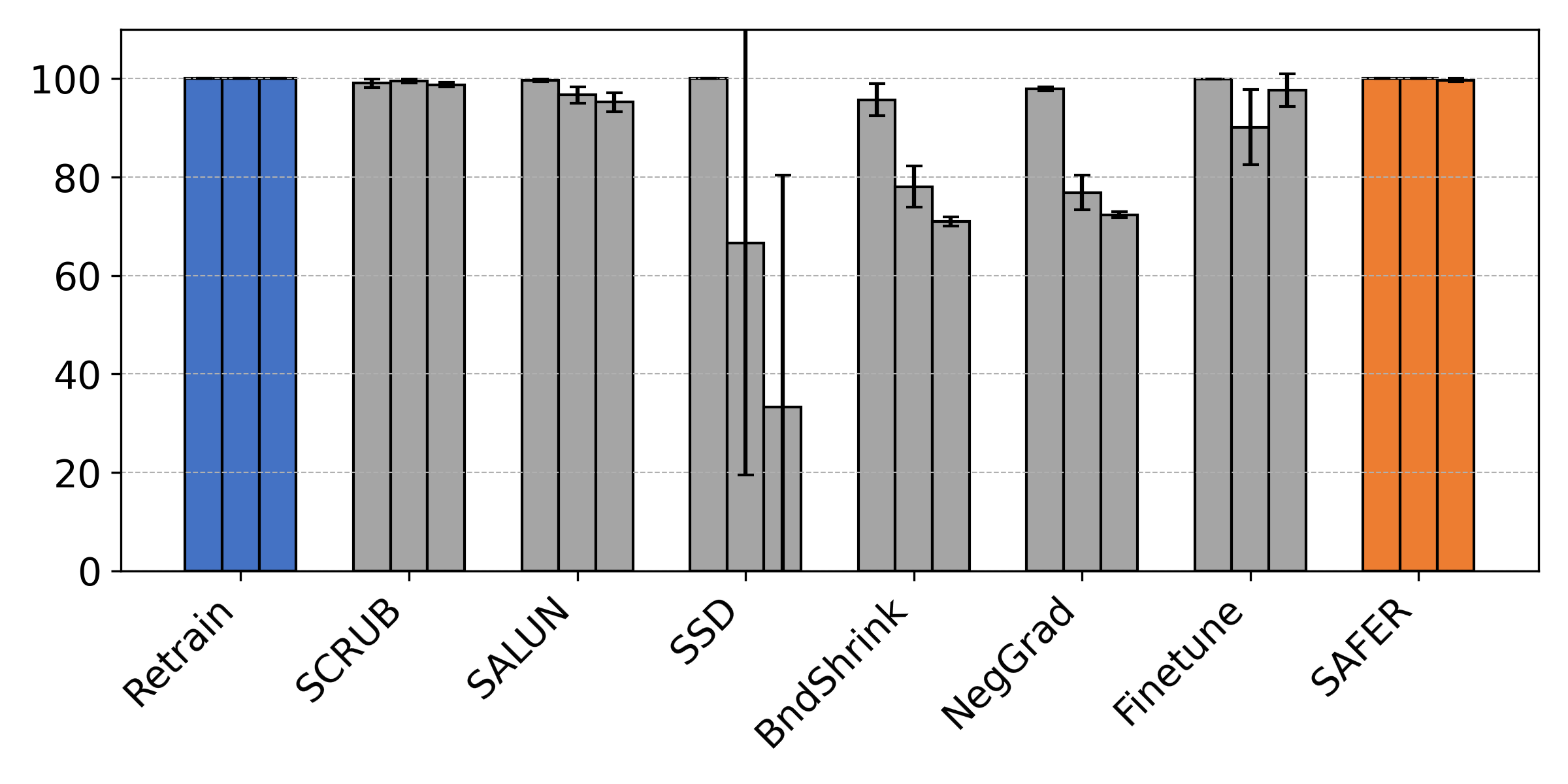}
        \caption{CIFAR-100}
        \label{fig:mia_c100}
    \end{subfigure}
    \hfill
    \begin{subfigure}[b]{0.32\textwidth} 
        \centering 
        \includegraphics[width=\textwidth]{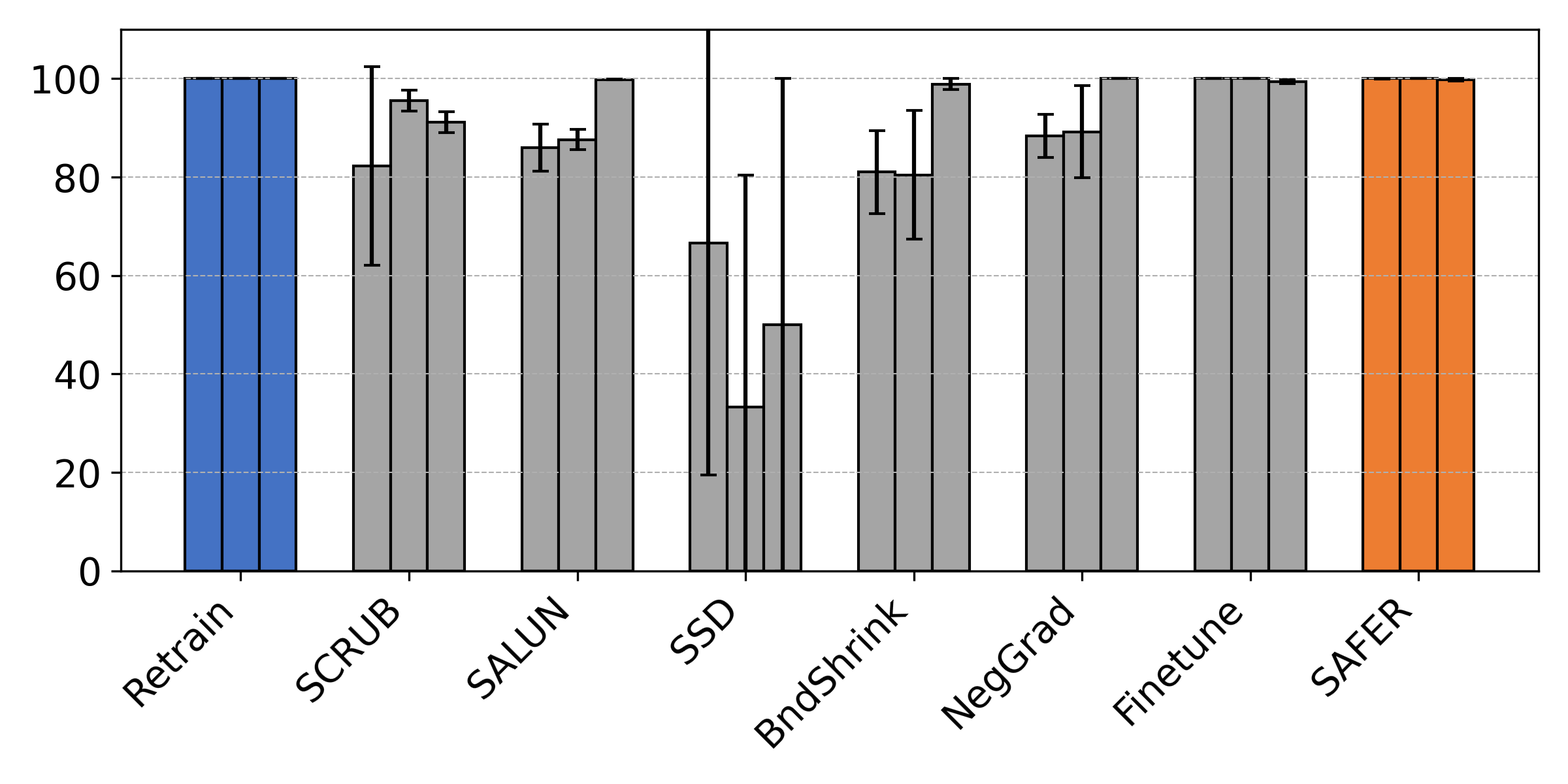}
        \caption{VGGFace2}
        \label{fig:mia_vgg2}
    \end{subfigure}
    \hfill
    \begin{subfigure}[b]{0.32\textwidth} 
        \centering 
        \includegraphics[width=\textwidth]{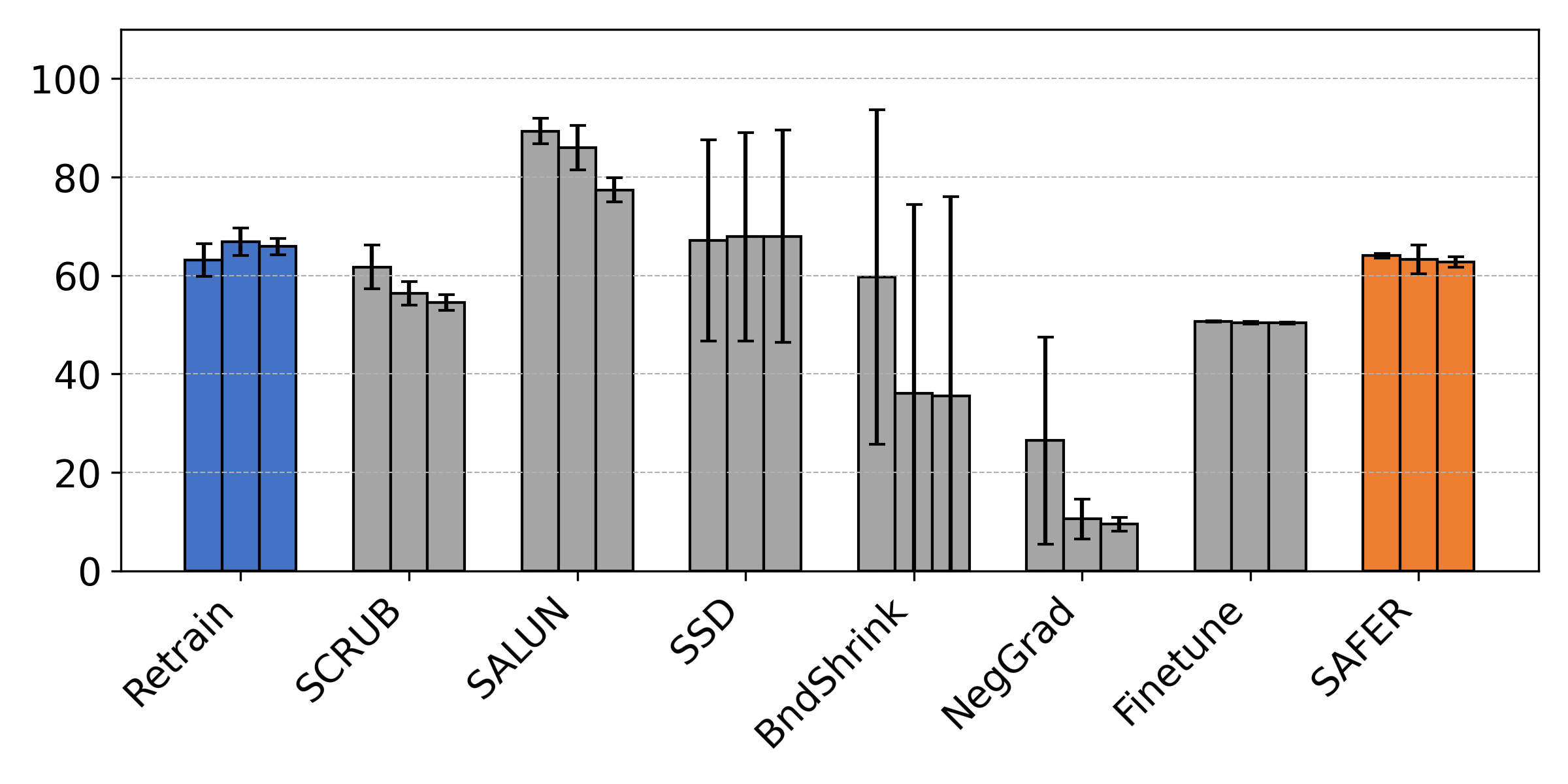}
        \caption{MUFAC}
        \label{fig:mia_mufac}
    \end{subfigure}

\caption{MIA results across phases. For each method, the three bars correspond to the MIA scores at phases 1, 2, and 3.}
\label{fig:mia}
\end{figure*}

\noindent\textbf{Model Performance.}
In~\cref{tab:c100_perf},~\cref{tab:vgg2_perf}, and~\cref{tab:mufac_perf}, we present the accuracy results for CIFAR-100, VGGFace2, and MUFAC after each algorithm processes multiple unlearning requests.
For each dataset, we conduct three experiments under the N CLASSES-3PHASES setting, using different \(\mathcal{D}_{\text{forget}}^{(t)}\) at each phase.
We report the average accuracy and standard deviation across the three runs.

In~\cref{tab:c100_perf} and ~\cref{tab:vgg2_perf}, Retrain shows zero accuracy for both \(\mathcal{D}_{tf}\) and \(\mathcal{D}_{tg}\), while demonstrating almost identical performance on \(\mathcal{D}_{tr}\) in each unlearning phase.
On the other hand, the results in~\cref{tab:mufac_perf} show that the accuracy on forget and forgotten data is comparable to that of the test data within the margin of error, whereas the accuracy on retain data remains nearly unchanged, staying close to 100\%.

We now describe the performance results of the remaining unlearning methods using~\cref{fig:continual_measurement}.
In our experiments, SCRUB exhibits almost no knowledge erosion or forgetting reversal. However, as the number of phases increases, its ToW value, which measures similarity to the retrain performance, tends to gradually decrease. SALUN shows pronounced forgetting reversal, and this effect leads to a divergence from the retrain performance as reflected in its ToW scores. In contrast, SSD and Finetune display signs of knowledge erosion. BndShrink and NegGrad exhibit both substantial knowledge erosion and forgetting reversal, resulting in extremely low values of ToW.

For SAFER, the performance on retain data remains nearly unchanged across phases for all datasets, showing virtually no knowledge erosion as illustrated in~\cref{fig:ke_all}. In~\cref{tab:c100_perf} and~\cref{tab:vgg2_perf}, the accuracy on forget data and forgotten data remains close to zero. 
However, in~\cref{tab:mufac_perf}, the accuracy of unlearned samples in earlier phases becomes higher when evaluated in later phases. Reflecting these observations,~\cref{fig:fr_all} shows that forgetting reversal does not occur for CIFAR-100 or VGGFace2, whereas it does occur for MUFAC. Correspondingly, ToW scores for CIFAR-100 and VGGFace2 stay close to 1, while those for MUFAC exhibit a downward trend in ToW as the number of phases increases. 
To investigate whether this trend persists over extended phases, we provide the results of additional experiments in~\cref{sec:add_analysis}.

\noindent\textbf{Unlearning Efficacy.}
In~\cref{fig:mia}, we present MIA results for CIFAR-100, VGGFace2, and MUFAC.
To assess unlearning efficacy, we compute the proportion of forget samples that an attacker determines to be absent from the training set of the unlearned model, following the MIA protocol in~\cite{fan2024salun, zhang2025toward}. 
As shown in~\cref{fig:mia_c100} and~\cref{fig:mia_vgg2}, the retrained model in the class-aligned continual unlearning setting maintains an MIA score of 100\% across all phases, indicating that an attacker identifies none of the unlearned samples as belonging to the model's training set. In contrast, in the class-misaligned continual unlearning setting, illustrated in~\cref{fig:mia_mufac}, the retrained model yields scores near 60 across all phases.
Other algorithms exhibit varying MIA scores across phases under both the class-aligned and class-misaligned continual unlearning settings. However, SAFER consistently achieves MIA scores similar to the retrained model for all datasets and across all phases.

\noindent\textbf{Efficiency.}
\begin{figure}[t]
\centering
\includegraphics[width=0.97\columnwidth]{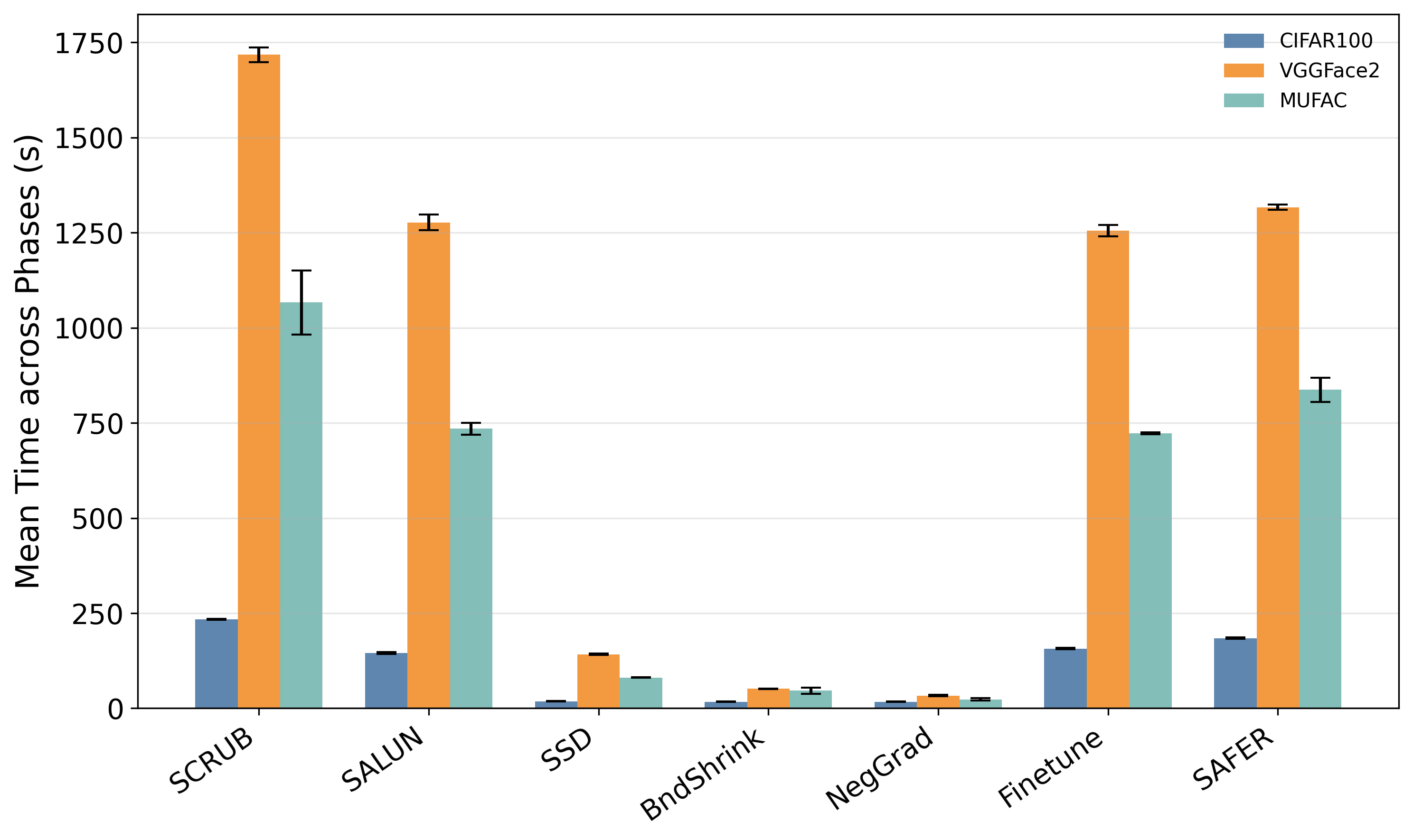}
\caption{Time efficiency of each unlearning method.}
\vspace{-1. em}
\label{fig:time-efficiency}
\end{figure}
We evaluate the computational efficiency of SAFER and other baselines in continual unlearning settings, as shown in Fig.~\ref{fig:time-efficiency}. Each value corresponds to the mean runtime across phases, and the error bar reflects the runtime variation between phases.
SSD, BndShrink, and NegGrad exhibit faster unlearning times across all datasets. 
Although SAFER requires more time than these methods, its time efficiency remains comparable to most approaches that fine-tune the model using retain data.
This implies that SAFER is able to maintain stable performance across phases without introducing meaningful computational overhead, making it practical for real-world continual unlearning scenarios.



%% file: tables/table_acc_c100.tex
\begin{table*}[h!]
    \centering
    \setlength{\tabcolsep}{9pt}
    \resizebox{\textwidth}{!}{
    \begin{tabular}{c|cc|cccccccc}
        \toprule
        
        & Phase & Acc. & Retrain & SCRUB & SALUN & SSD & BndShrink & NegGrad & Finetune & SAFER \\
         
        \midrule
        \multirow{8}{*}{\rotatebox{90}{3CLASSES-3PHASES}} & \multirow{2}{*}{1} 
        
        & \(\mathcal{D}_{tr}\) 
        & 76.60{\footnotesize $\pm$ 0.54} & 77.22{\footnotesize $\pm$ 0.30} & 71.97{\footnotesize $\pm$ 0.80} & 68.49{\footnotesize $\pm$ 8.23} & 63.22{\footnotesize $\pm$ 1.83} & 64.28{\footnotesize $\pm$ 3.03} & 76.73{\footnotesize $\pm$ 0.33} & 75.69{\footnotesize $\pm$ 0.81} \\
        
        & & \(\mathcal{D}_{tf}\)
        & 0.00{\footnotesize $\pm$ 0.00} & 1.33{\footnotesize $\pm$ 0.98} & 1.33{\footnotesize $\pm$ 0.82} & 6.11{\footnotesize $\pm$ 8.64} & 12.89{\footnotesize $\pm$ 7.82} & 16.33{\footnotesize $\pm$ 7.32} & 0.00{\footnotesize $\pm$ 0.00} & 0.00{\footnotesize $\pm$ 0.00} \\
        
        \cline{2-11}
        & \multirow{3}{*}{2} 
        
        & \(\mathcal{D}_{tr}\) 
        & 76.71{\footnotesize $\pm$ 0.24} & 77.78{\footnotesize $\pm$ 0.41} & 71.93{\footnotesize $\pm$ 0.42} & 53.80{\footnotesize $\pm$ 24.49} & 61.41{\footnotesize $\pm$ 2.40} & 68.14{\footnotesize $\pm$ 2.14} & 68.48{\footnotesize $\pm$ 6.88} & 77.82{\footnotesize $\pm$ 0.16} \\
        
        & & \(\mathcal{D}_{tf}\)
        & 0.00{\footnotesize $\pm$ 0.00} & 0.33{\footnotesize $\pm$ 0.27} & 0.56{\footnotesize $\pm$ 0.42} & 0.00{\footnotesize $\pm$ 0.00} & 10.11{\footnotesize $\pm$ 2.61} & 17.11{\footnotesize $\pm$ 2.98} & 1.89{\footnotesize $\pm$ 2.67} & 0.00{\footnotesize $\pm$ 0.00} \\
        
        & & \(\mathcal{D}_{tg}\)
        & 0.00{\footnotesize $\pm$ 0.00} & 2.00{\footnotesize $\pm$ 1.09} & 16.11{\footnotesize $\pm$ 4.90} & 6.22{\footnotesize $\pm$ 8.80} & 45.33{\footnotesize $\pm$ 19.81} & 68.78{\footnotesize $\pm$ 16.65} & 0.00{\footnotesize $\pm$ 0.00} & 0.00{\footnotesize $\pm$ 0.00} \\
        
        \cline{2-11}
        & \multirow{3}{*}{3} 
        
        & \(\mathcal{D}_{tr}\) 
        & 77.20{\footnotesize $\pm$ 0.32} & 78.03{\footnotesize $\pm$ 0.31} & 71.55{\footnotesize $\pm$ 0.54} & 37.89{\footnotesize $\pm$ 29.77} & 51.72{\footnotesize $\pm$ 2.05} & 64.65{\footnotesize $\pm$ 1.45} & 71.95{\footnotesize $\pm$ 5.73} & 77.29{\footnotesize $\pm$ 0.61} \\
        
        & & \(\mathcal{D}_{tf}\)
        & 0.00{\footnotesize $\pm$ 0.00} & 7.56{\footnotesize $\pm$ 6.00} & 0.89{\footnotesize $\pm$ 0.42} & 4.44{\footnotesize $\pm$ 6.28} & 8.00{\footnotesize $\pm$ 1.19} & 18.56{\footnotesize $\pm$ 3.27} & 8.78{\footnotesize $\pm$ 11.03} & 0.11{\footnotesize $\pm$ 0.16} \\
        
        & & \(\mathcal{D}_{tg}\)
        & 0.00{\footnotesize $\pm$ 0.00} & 1.17{\footnotesize $\pm$ 0.76} & 16.78{\footnotesize $\pm$ 2.28} & 3.39{\footnotesize $\pm$ 4.79} & 36.72{\footnotesize $\pm$ 4.36} & 65.39{\footnotesize $\pm$ 3.75} & 0.00{\footnotesize $\pm$ 0.00} & 0.00{\footnotesize $\pm$ 0.00} \\
        
        \bottomrule
    \end{tabular}
}
\caption{Performance for continual unlearning on CIFAR-100 with ResNet-18.}
\label{tab:c100_perf}
\end{table*}

%% file: tables/table_acc_vgg2.tex
\begin{table*}[h!]
    \centering
    \setlength{\tabcolsep}{9pt}
    \resizebox{\textwidth}{!}{
    \begin{tabular}{c|cc|cccccccc}
        \toprule
        
        & Phase & Acc. & Retrain & SCRUB & SALUN & SSD & BndShrink & NegGrad & Finetune & SAFER \\
        
        \midrule
        \multirow{8}{*}{\rotatebox{90}{3CLASSES-3PHASES}} & \multirow{2}{*}{1} 
        
        & \(\mathcal{D}_{tr}\) 
        & 96.66{\footnotesize $\pm$ 0.35} & 95.30{\footnotesize $\pm$ 0.03} & 94.05{\footnotesize $\pm$ 0.32} & 85.49{\footnotesize $\pm$ 4.85} & 85.15{\footnotesize $\pm$ 3.36} & 88.91{\footnotesize $\pm$ 1.69} & 95.39{\footnotesize $\pm$ 0.51} & 94.90{\footnotesize $\pm$ 0.03} \\
        
        & & \(\mathcal{D}_{tf}\)
        & 0.00{\footnotesize $\pm$ 0.00} & 2.34{\footnotesize $\pm$ 1.85} & 0.67{\footnotesize $\pm$ 0.62} & 0.00{\footnotesize $\pm$ 0.00} & 14.33{\footnotesize $\pm$ 5.59} & 36.32{\footnotesize $\pm$ 4.78} & 7.84{\footnotesize $\pm$ 3.22} & 0.00{\footnotesize $\pm$ 0.00} \\
        
        \cline{2-11}
        & \multirow{3}{*}{2} 
        
        & \(\mathcal{D}_{tr}\) 
        & 96.81{\footnotesize $\pm$ 0.15} & 95.26{\footnotesize $\pm$ 0.17} & 93.07{\footnotesize $\pm$ 0.48} & 23.22{\footnotesize $\pm$ 17.63} & 68.65{\footnotesize $\pm$ 2.65} & 89.48{\footnotesize $\pm$ 2.01} & 84.89{\footnotesize $\pm$ 2.50} & 95.25{\footnotesize $\pm$ 0.16} \\
        
        & & \(\mathcal{D}_{tf}\)
        & 0.00{\footnotesize $\pm$ 0.00} & 6.85{\footnotesize $\pm$ 5.17} & 0.66{\footnotesize $\pm$ 0.47} & 0.00{\footnotesize $\pm$ 0.00} & 12.60{\footnotesize $\pm$ 7.40} & 39.90{\footnotesize $\pm$ 10.63} & 0.00{\footnotesize $\pm$ 0.00} & 0.00{\footnotesize $\pm$ 0.00} \\
        
        & & \(\mathcal{D}_{tg}\)
        & 0.00{\footnotesize $\pm$ 0.00} & 2.39{\footnotesize $\pm$ 2.13} & 37.75{\footnotesize $\pm$ 3.04} & 0.00{\footnotesize $\pm$ 0.00} & 27.37{\footnotesize $\pm$ 1.84} & 92.36{\footnotesize $\pm$ 1.56} & 0.00{\footnotesize $\pm$ 0.00} & 0.00{\footnotesize $\pm$ 0.00} \\
        
        \cline{2-11}
        & \multirow{3}{*}{3} 
        
        & \(\mathcal{D}_{tr}\) 
        & 96.67{\footnotesize $\pm$ 0.10} & 95.42{\footnotesize $\pm$ 0.12} & 92.83{\footnotesize $\pm$ 0.21} & 1.98{\footnotesize $\pm$ 1.37} & 40.69{\footnotesize $\pm$ 3.52} & 90.69{\footnotesize $\pm$ 0.81} & 85.04{\footnotesize $\pm$ 0.71} & 95.53{\footnotesize $\pm$ 0.14} \\
        
        & & \(\mathcal{D}_{tf}\)
        & 0.00{\footnotesize $\pm$ 0.00} & 0.18{\footnotesize $\pm$ 0.25} & 2.31{\footnotesize $\pm$ 2.89} & 0.00{\footnotesize $\pm$ 0.00} & 1.98{\footnotesize $\pm$ 2.43} & 32.20{\footnotesize $\pm$ 3.28} & 0.00{\footnotesize $\pm$ 0.00} & 0.00{\footnotesize $\pm$ 0.00} \\
        
        & & \(\mathcal{D}_{tg}\)
        & 0.00{\footnotesize $\pm$ 0.00} & 4.14{\footnotesize $\pm$ 2.83} & 17.65{\footnotesize $\pm$ 3.53} & 0.00{\footnotesize $\pm$ 0.00} & 11.18{\footnotesize $\pm$ 2.95} & 93.17{\footnotesize $\pm$ 5.15} & 0.00{\footnotesize $\pm$ 0.00} & 0.00{\footnotesize $\pm$ 0.00} \\
        
        \bottomrule
    \end{tabular}
}
\caption{Performance for continual unlearning on VGGFace2 with ResNet-50.}
\label{tab:vgg2_perf}
\end{table*}

%% file: tables/table_acc_mufac.tex
\begin{table*}[h!]
    \centering
    \setlength{\tabcolsep}{9pt}
    \resizebox{\textwidth}{!}{
    \begin{tabular}{c|cc|cccccccc}
        \toprule
        
        & Phase & Acc. & Retrain & SCRUB & SALUN & SSD & BndShrink & NegGrad & Finetune & SAFER \\
        \midrule
        \multirow{11}{*}{\rotatebox{90}{20PERSONS-3PHASES}} & \multirow{3}{*}{1} 
        
        & \(\mathcal{D}_{r}\)
        & 99.97{\footnotesize $\pm$ 0.00} & 99.94{\footnotesize $\pm$ 0.02} & 99.04{\footnotesize $\pm$ 0.58} & 64.29{\footnotesize $\pm$ 42.49} & 52.95{\footnotesize $\pm$ 12.63} & 53.82{\footnotesize $\pm$ 31.04} & 99.97{\footnotesize $\pm$ 0.00} & 99.73{\footnotesize $\pm$ 0.07} \\
        
        & & \(\mathcal{D}_{f}\)
        & 73.69{\footnotesize $\pm$ 6.66} & 76.53{\footnotesize $\pm$ 8.84} & 21.33{\footnotesize $\pm$ 5.15} & 65.69{\footnotesize $\pm$ 40.87} & 29.66{\footnotesize $\pm$ 6.42} & 45.24{\footnotesize $\pm$ 31.26} & 98.71{\footnotesize $\pm$ 0.30} & 71.94{\footnotesize $\pm$ 1.02} \\
        
        & & \(\mathcal{D}_{t}\)
        & 67.02{\footnotesize $\pm$ 0.39} & 69.48{\footnotesize $\pm$ 0.42} & 64.03{\footnotesize $\pm$ 4.15} & 44.46{\footnotesize $\pm$ 30.43} & 43.75{\footnotesize $\pm$ 9.92} & 45.48{\footnotesize $\pm$ 20.44} & 68.24{\footnotesize $\pm$ 0.64} & 68.88{\footnotesize $\pm$ 0.47} \\
        
        \cline{2-11}
        & \multirow{4}{*}{2}
        
        & \(\mathcal{D}_{r}\)
        & 99.98{\footnotesize $\pm$ 0.01} & 99.98{\footnotesize $\pm$ 0.01} & 99.81{\footnotesize $\pm$ 0.06} & 64.23{\footnotesize $\pm$ 42.39} & 29.16{\footnotesize $\pm$ 8.88} & 21.46{\footnotesize $\pm$ 5.83} & 99.97{\footnotesize $\pm$ 0.01} & 99.87{\footnotesize $\pm$ 0.04} \\
        
        & & \(\mathcal{D}_{f}\)
        & 71.63{\footnotesize $\pm$ 5.61} & 99.60{\footnotesize $\pm$ 0.57} & 28.21{\footnotesize $\pm$ 11.59} & 62.91{\footnotesize $\pm$ 44.04} & 17.58{\footnotesize $\pm$ 5.02} & 21.38{\footnotesize $\pm$ 8.65} & 99.73{\footnotesize $\pm$ 0.38} & 70.49{\footnotesize $\pm$ 4.39} \\
        
        & & \(\mathcal{D}_{t}\)
        & 66.91{\footnotesize $\pm$ 0.27} & 69.30{\footnotesize $\pm$ 0.60} & 64.30{\footnotesize $\pm$ 0.41} & 44.62{\footnotesize $\pm$ 30.52} & 27.08{\footnotesize $\pm$ 9.18} & 23.29{\footnotesize $\pm$ 7.17} & 68.17{\footnotesize $\pm$ 0.19} & 67.95{\footnotesize $\pm$ 1.00} \\
        
        & & \(\mathcal{D}_{g}\)
        & 61.66{\footnotesize $\pm$ 6.81} & 76.67{\footnotesize $\pm$ 8.90} & 27.81{\footnotesize $\pm$ 6.83} & 65.40{\footnotesize $\pm$ 40.70} & 19.16{\footnotesize $\pm$ 2.13} & 20.91{\footnotesize $\pm$ 7.87} & 98.62{\footnotesize $\pm$ 0.88} & 75.95{\footnotesize $\pm$ 7.65} \\
        
        \cline{2-11}
        & \multirow{4}{*}{3}
        
        & \(\mathcal{D}_{r}\)
        & 99.98{\footnotesize $\pm$ 0.01} & 99.98{\footnotesize $\pm$ 0.01} & 99.46{\footnotesize $\pm$ 0.38} & 64.19{\footnotesize $\pm$ 42.29} & 19.39{\footnotesize $\pm$ 3.14} & 17.12{\footnotesize $\pm$ 0.20} & 99.98{\footnotesize $\pm$ 0.01} & 99.92{\footnotesize $\pm$ 0.01} \\
        
        & & \(\mathcal{D}_{f}\)
        & 72.37{\footnotesize $\pm$ 10.39} & 100.00{\footnotesize $\pm$ 0.00} & 29.21{\footnotesize $\pm$ 12.23} & 63.81{\footnotesize $\pm$ 45.26} & 10.46{\footnotesize $\pm$ 2.35} & 22.97{\footnotesize $\pm$ 3.62} & 100.00{\footnotesize $\pm$ 0.00} & 64.68{\footnotesize $\pm$ 2.85} \\
        
        & & \(\mathcal{D}_{t}\)
        & 66.95{\footnotesize $\pm$ 0.57} & 69.33{\footnotesize $\pm$ 0.51} & 64.21{\footnotesize $\pm$ 1.30} & 44.48{\footnotesize $\pm$ 30.45} & 19.19{\footnotesize $\pm$ 1.85} & 18.22{\footnotesize $\pm$ 0.00} & 68.61{\footnotesize $\pm$ 0.34} & 67.51{\footnotesize $\pm$ 0.54} \\
        
        & & \(\mathcal{D}_{g}\)
        & 66.80{\footnotesize $\pm$ 5.50} & 87.17{\footnotesize $\pm$ 4.70} & 52.34{\footnotesize $\pm$ 11.98} & 64.19{\footnotesize $\pm$ 42.18} & 15.64{\footnotesize $\pm$ 1.38} & 17.32{\footnotesize $\pm$ 2.97} & 99.00{\footnotesize $\pm$ 0.60} & 78.65{\footnotesize $\pm$ 3.28} \\

        \bottomrule
    \end{tabular}
}
\caption{Performance for continual unlearning on MUFAC with ViT-B/16.}
\vspace{-1.em}
\label{tab:mufac_perf}
\end{table*}

%% file: sec/6_analysis.tex
\section{Analysis}
\label{sec:analysis}

\paragraph{Representation Clustering Quality.}
To assess how repeated unlearning affects the structure of learned representations, we analyze the representation stability of retain data using the Davies–Bouldin Index (DBI)~\cite{davies2009cluster}, a widely used data clusterability metric.
SAFER aims to stabilize feature representations by reducing intra-class variation and reinforcing inter-class separation. To quantitatively assess whether the retain data remain well-clustered throughout continual unlearning, we adopt the DBI metric.
Formally, for each class (\textit{i}), we compute its DBI as
\begin{equation}
    \mathrm{DBI}_{i} = \max_{i \neq j} \frac{d_{intra}(i) + d_{intra}(j)}{d_{inter} (i,j)},
\end{equation}
where \(d_{intra}(i)\) denotes the intra-class variance of cluster (\textit{i}), computed as the mean squared distance of its samples from the corresponding class centroid.
\(d_{inter} (i,j)\) represents the distance between the class centroids of clusters (\textit{i}) and (\textit{j}).
The overall DBI score is obtained by averaging \(\mathrm{DBI}_i\) across all classes in the retain data at phase \(t\).
A lower DBI value indicates tighter clusters and greater separation.

~\cref{fig:retain_dbi} shows the DBI results for the retain data across all phases on CIFAR-100, VGGFace2, and MUFAC.
In all datasets, Retrain maintains consistently low DBI values across phases, reflecting compact intra-class structures and well-separated class centers.
This behavior represents the ideal target for stable representation geometry and serves as the reference point against which all unlearning methods are evaluated.
Compared to the retrain model, NegGrad shows higher DBI values. This suggests that error-maximizing unlearning methods that do not account for retain data may induce drift in retrain data representation, leading to negative effects on model utility. 
On the contrary, our proposed method maintains low DBI values across all phases, remaining close to the retrain model. This suggests that its feature-regularization mechanism effectively constrains representational drift over phases.

\begin{figure}[!tbp]
\centering
    \begin{subfigure}[b]{0.32\columnwidth}
    \includegraphics[width=\columnwidth] {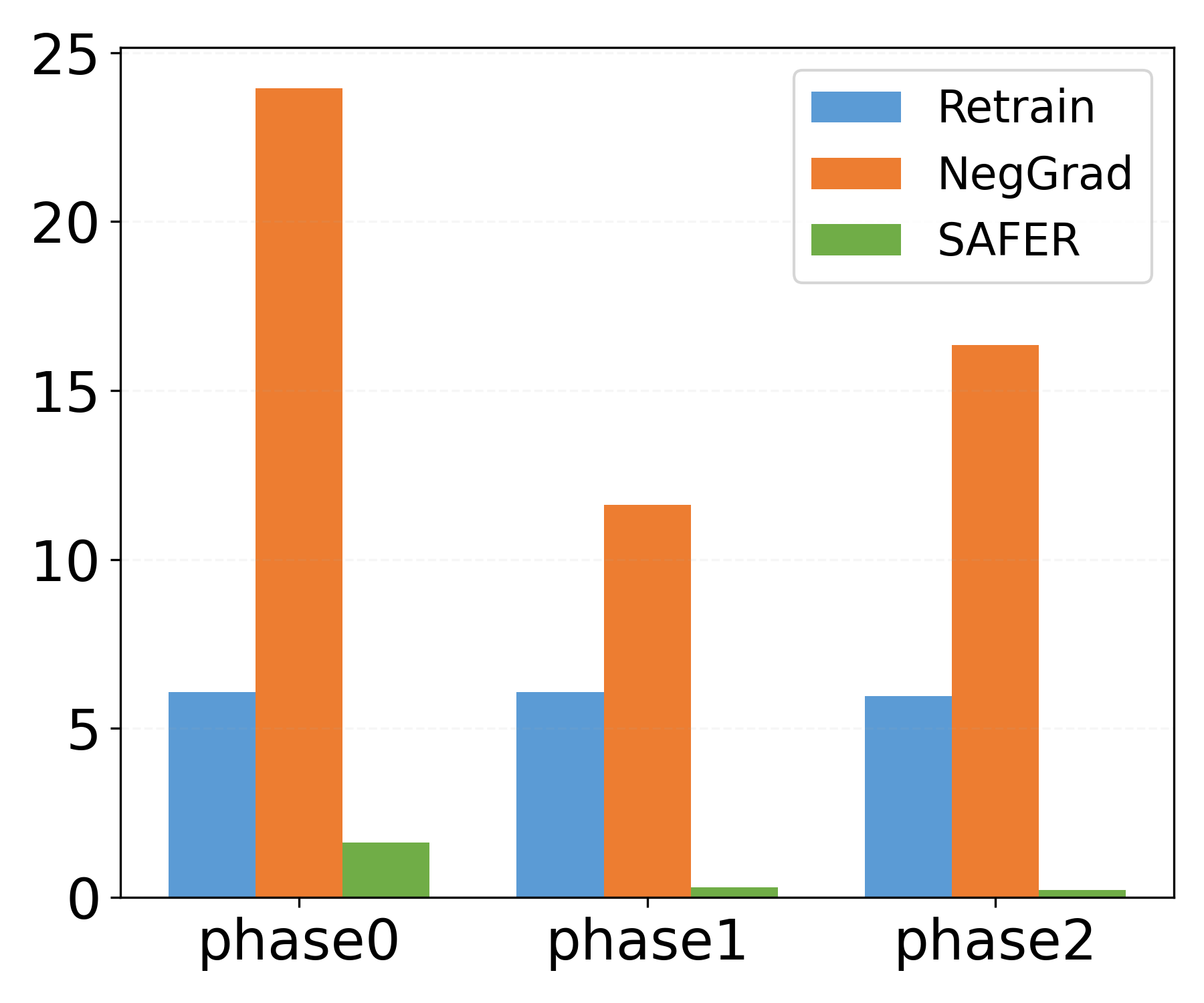}
    \caption{CIFAR-100}
    \end{subfigure}
    \begin{subfigure}[b]{0.32\columnwidth}
    \includegraphics[width=\columnwidth]{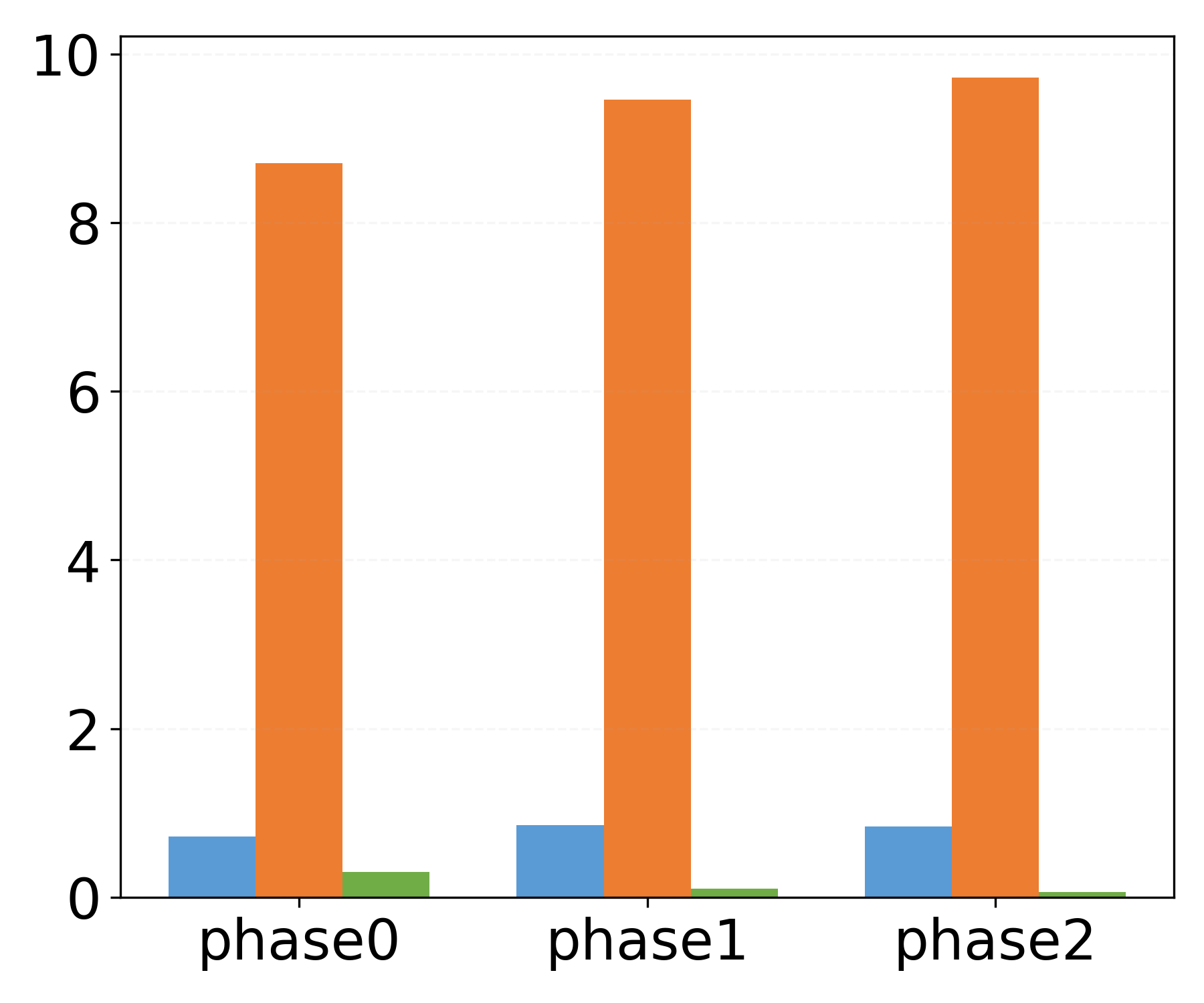}
    \caption{VGGFace2}
    \end{subfigure}
    \begin{subfigure}[b]{0.32\columnwidth}
    \includegraphics[width=\columnwidth]{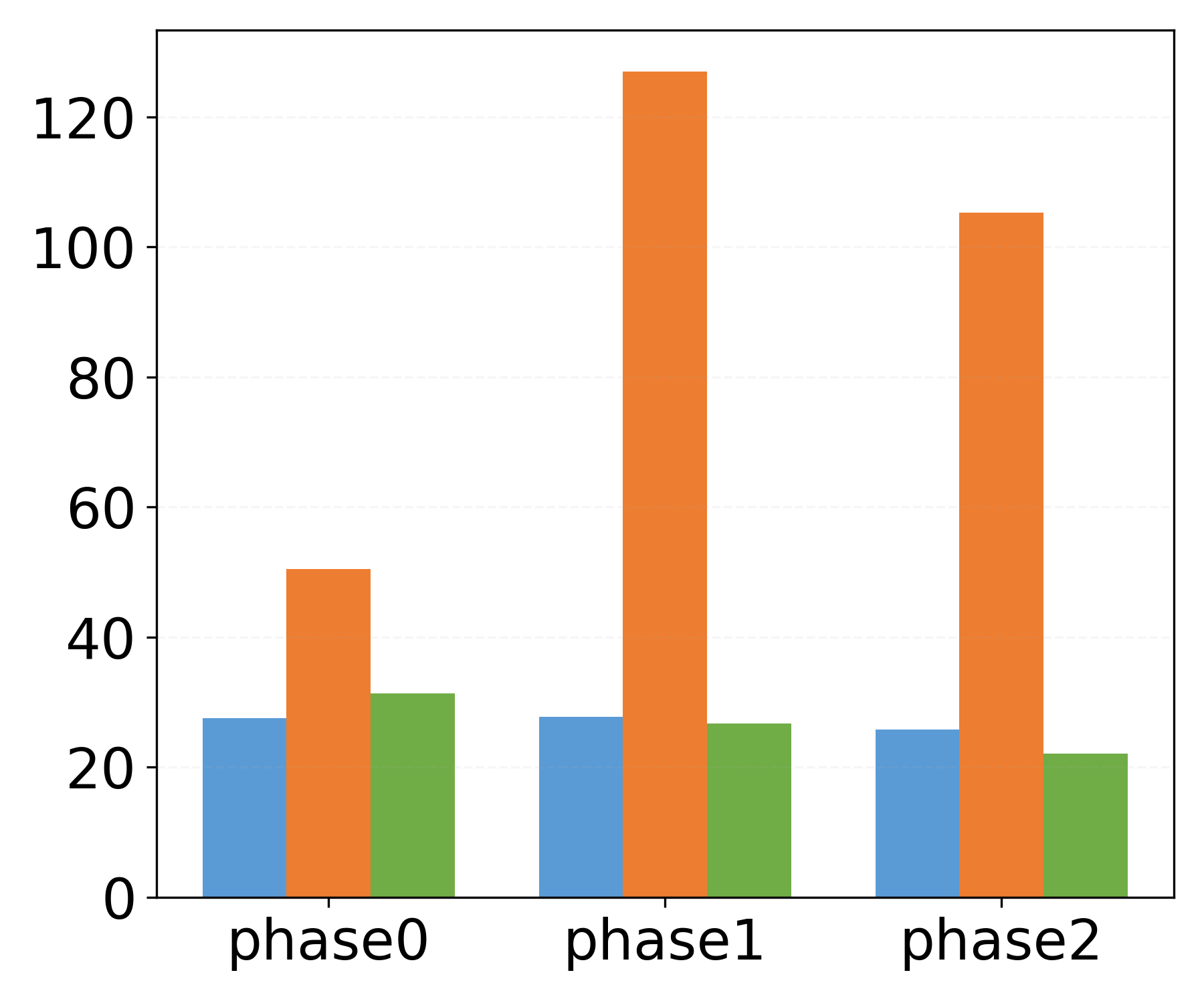}
    \caption{MUFAC}
    \end{subfigure}    
\caption{DBI scores for retain data across unlearning phases. A smaller DBI score reflects more compact clusters and better inter-class separation.}
\label{fig:retain_dbi}
\vspace{-1.em}
\end{figure}






\paragraph{Negative Unlearning Margin.}
To further evaluate how effectively SAFER prevents forgetting reversal and preserves the intended forgetting behavior across phases, we analyze the unlearning margin distributions of the forget and forgotten data. 
Recall that the proposed method explicitly enforces negative logit margins for all non-retain classes as defined in~\cref{eq:unlearning_margin}.

~\cref{fig:neg_logit_margin} presents the unlearning margin distributions for the forget set \(\mathcal{D}_{\text{forget}}^{(t)}\) and the forgotten set \(\mathcal{D}_{\text{forgot}}^{(t)}\) across phases. 
For the retrained model, \(UM(\mathbf{x})\) is strictly negative for CIFAR-100 and VGGFace2, whereas the margin distribution for MUFAC is skewed toward positive values. This indicates that, in the class-aligned continual unlearning setting, forget data never become more confident than the other classes, as they have never been part of the model’s training set. However, in the class-misaligned continual unlearning setting, the retain data influence all classes during training, causing the unlearning margin to form distributions with values exceeding zero.
Analogously to the retrained model, SAFER maintains negative unlearning margins across all phases in the class-aligned continual unlearning setting. 
This suggests that the explicit logit margin constraint is effective in preventing forgotten classes from re-entering the decision region. 
In contrast, under the class-misaligned continual unlearning setting, SAFER exhibits a tendency for its UM distribution to shift toward zero, unlike the retrained model. 
This indicates that applying the logit margin constraint to forget data influences how the unlearning margins are distributed in such settings.

In~\cref{sec:add_analysis}, we provide further analysis of our proposed method, including the performance contribution of each module in SAFER and the sensitivity of \(\lambda\) in~\cref{eq:retain_loss}.

\begin{figure}[!tbp]
\centering
    \begin{subfigure}[b]{0.49\columnwidth} 
        \centering 
        \includegraphics[width=\columnwidth]{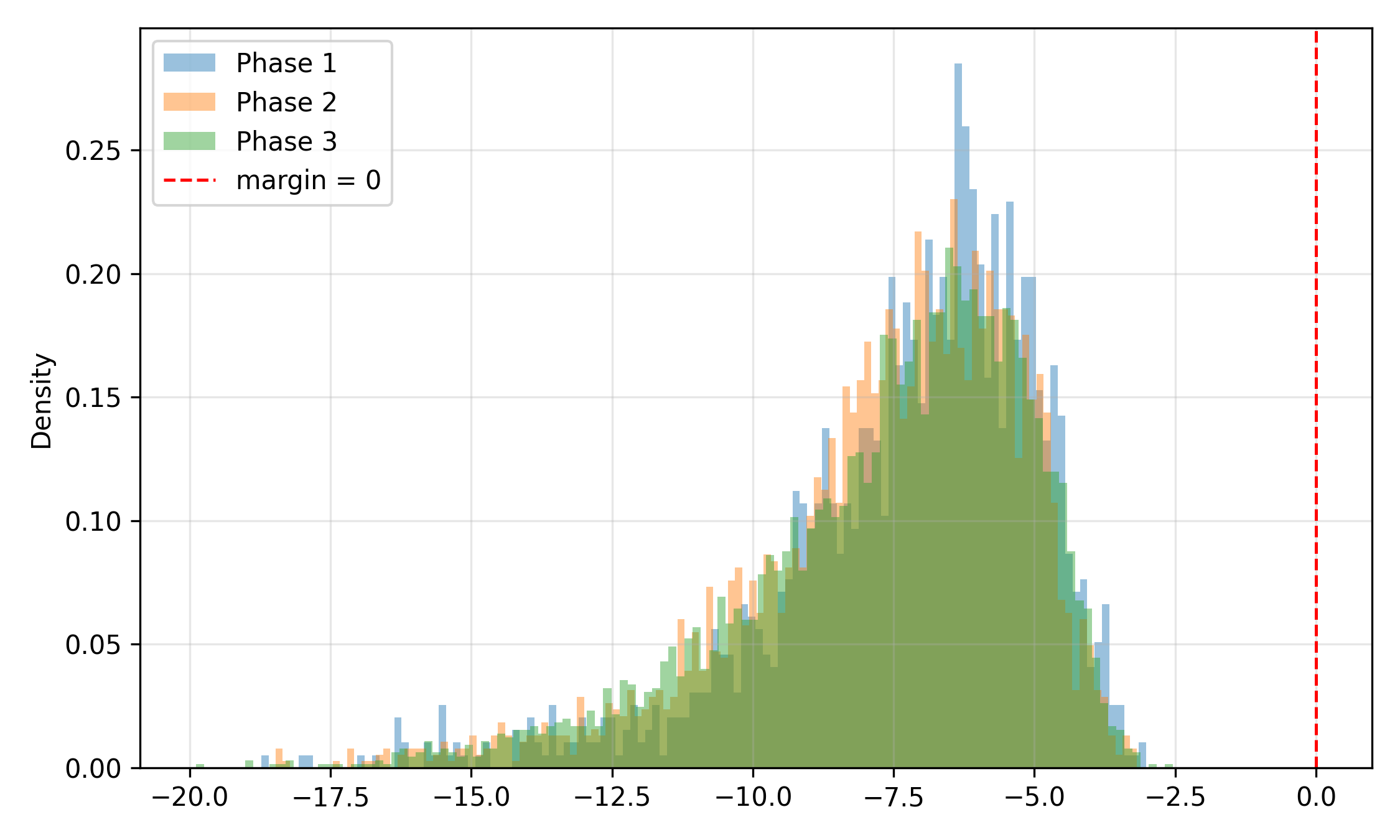}
        \vfill
        \includegraphics[width=\columnwidth]{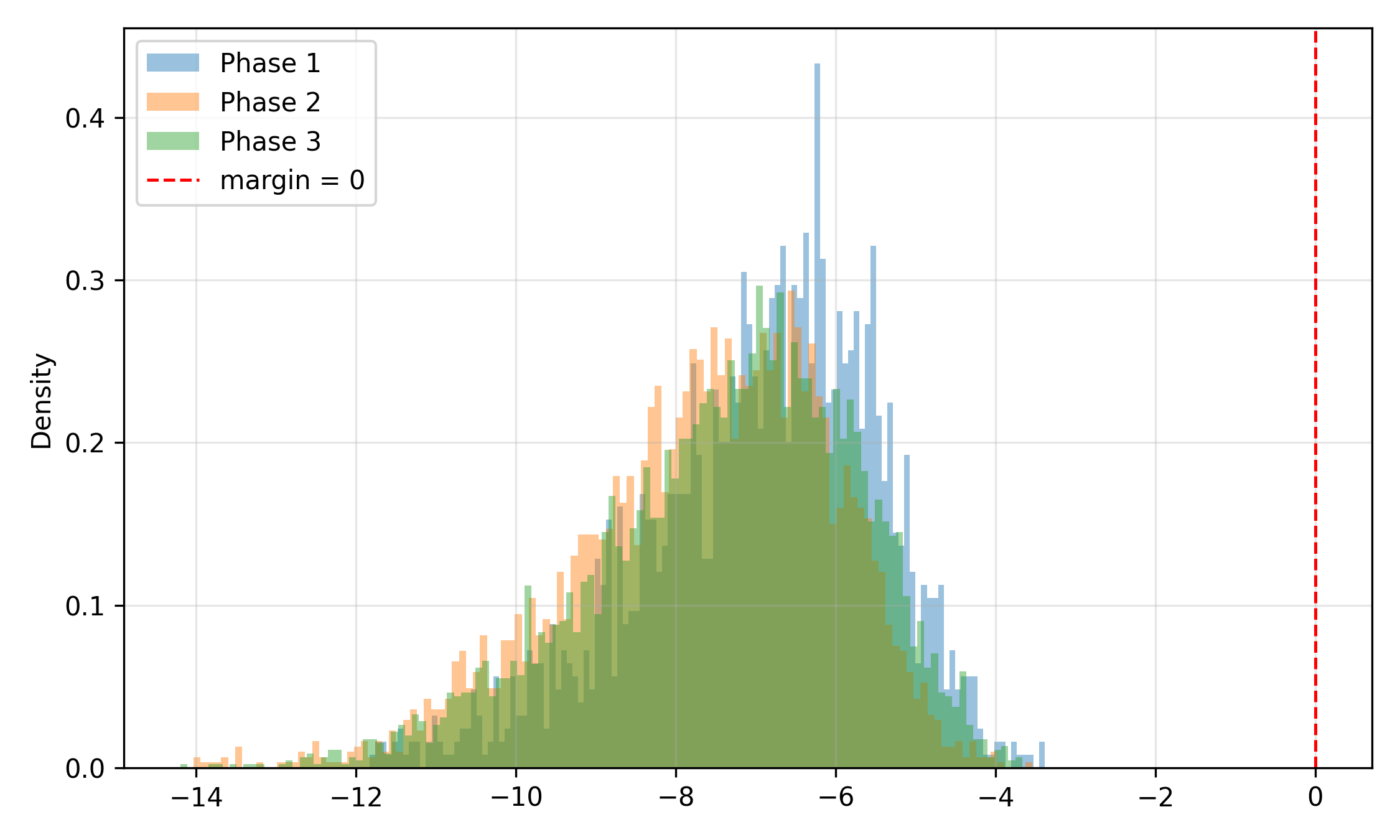}
        \vfill
        \includegraphics[width=\columnwidth]{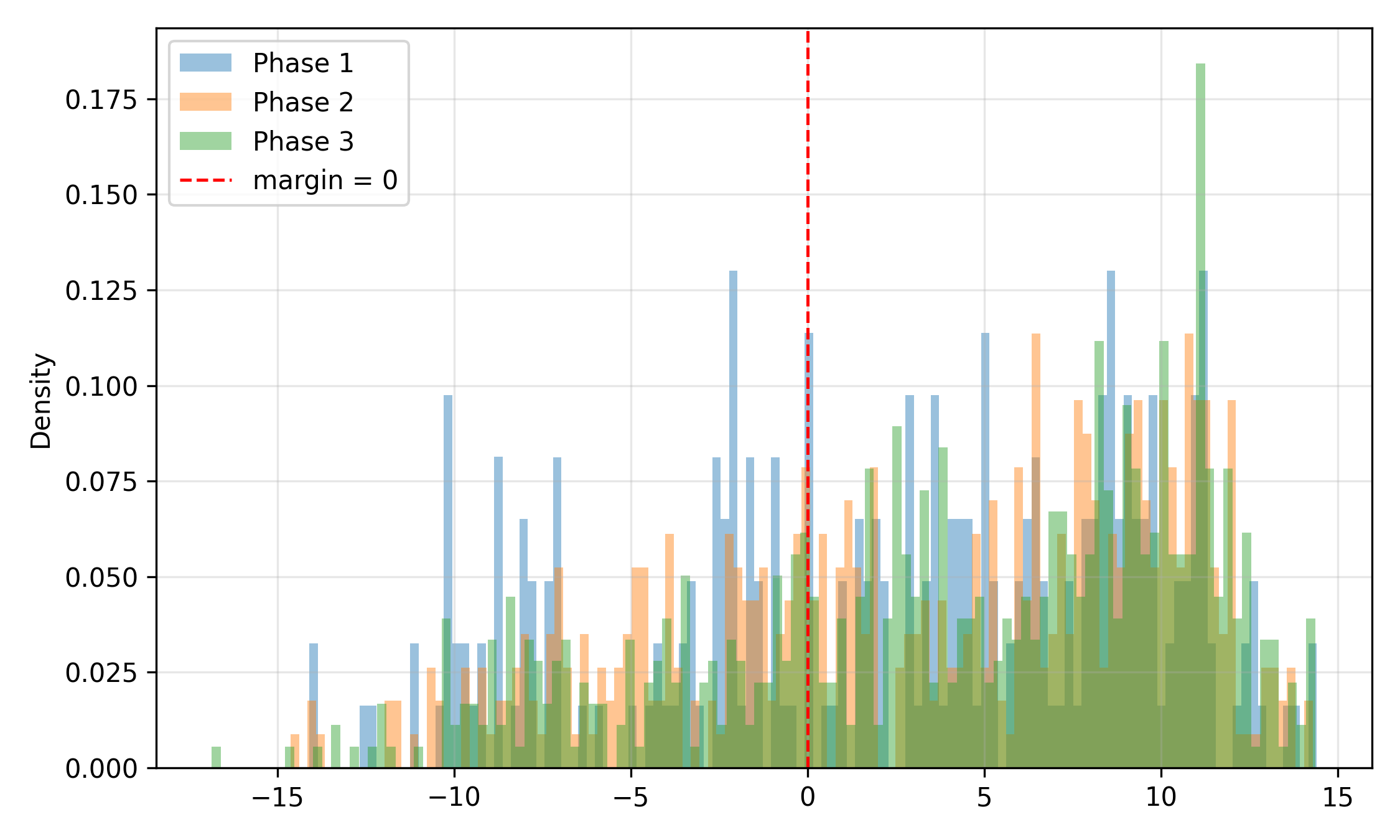}
        \caption{Retrain}
    \end{subfigure}
    \hfill
    \begin{subfigure}[b]{0.49\columnwidth} 
        \centering 
        \includegraphics[width=\columnwidth]{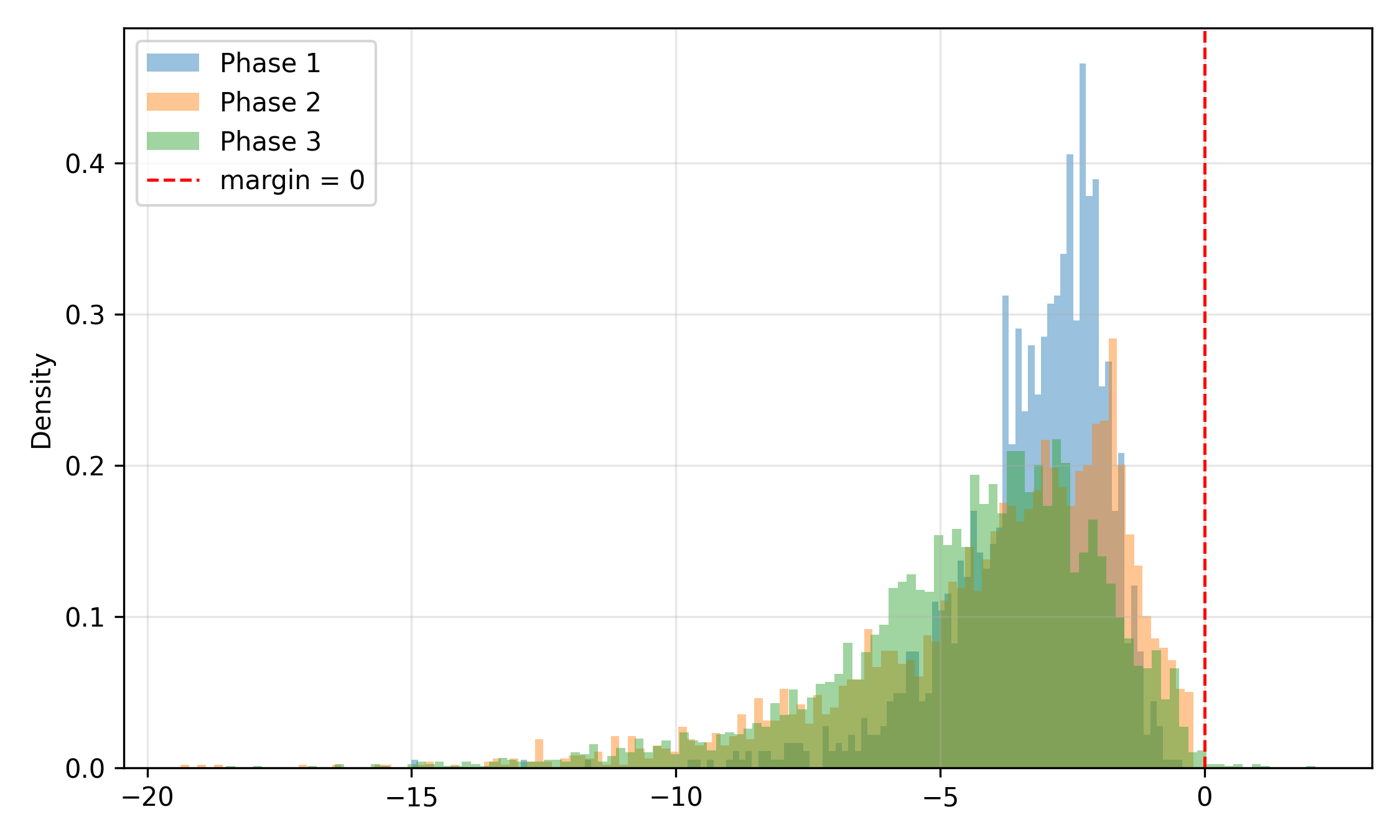}
        \vfill
        \includegraphics[width=\columnwidth]{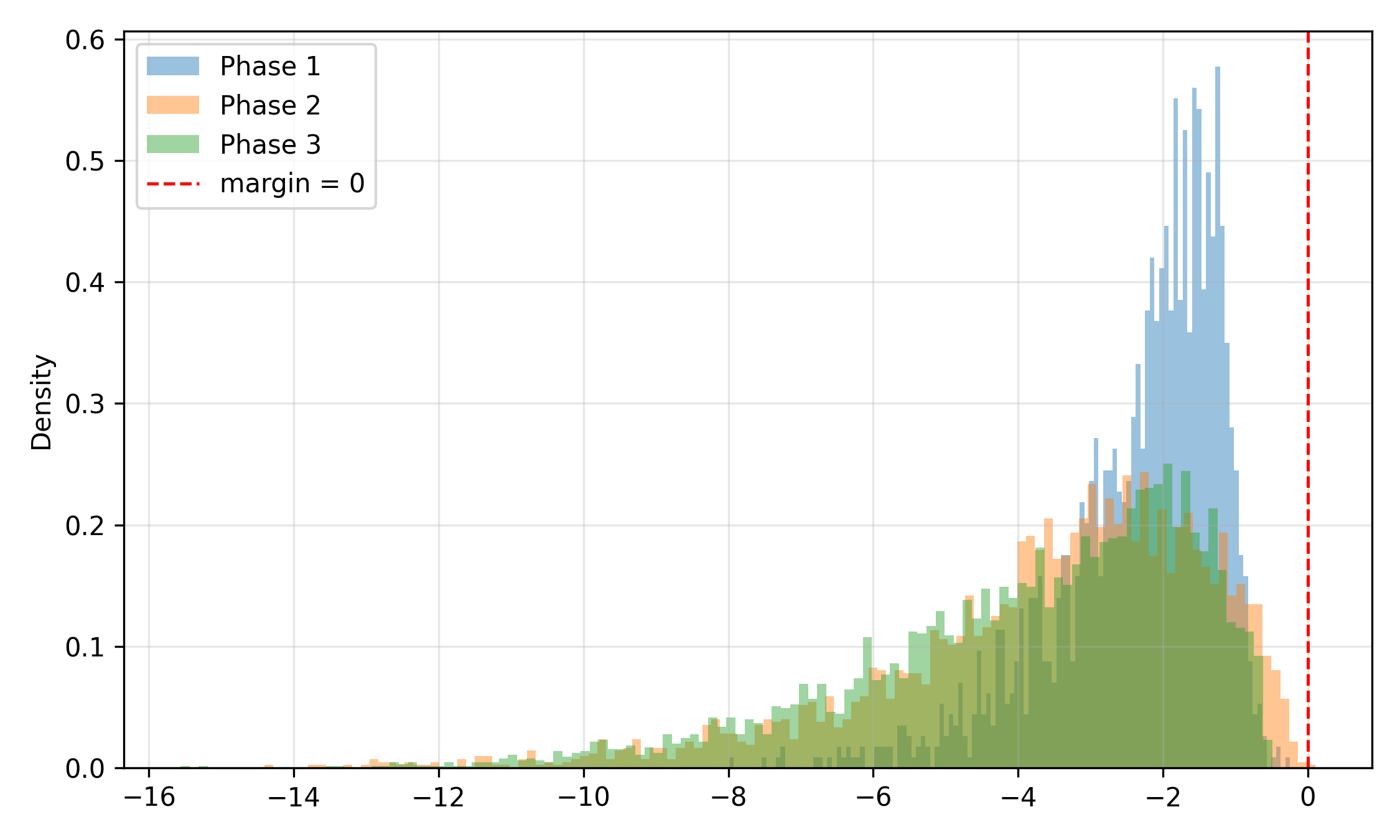}
        \vfill
        \includegraphics[width=\columnwidth]{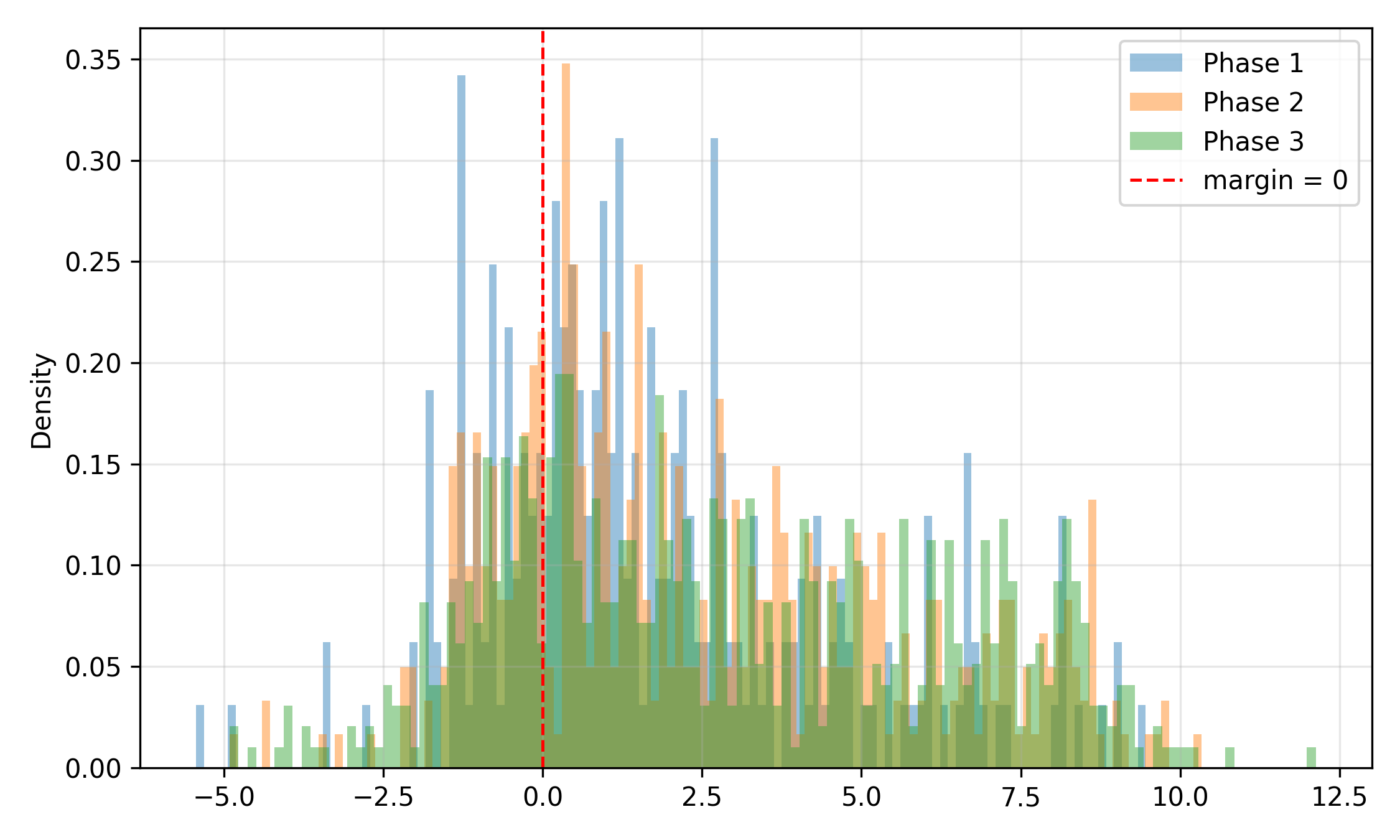}
        \caption{SAFER}
    \end{subfigure}  

\caption{Logit margin distribution. From top to bottom, the plots show the results for CIFAR-100, VGGFace2, and MUFAC.}
\label{fig:neg_logit_margin}
\vspace{-1.em}
\end{figure}

%% file: sec/7_conclusion.tex
\section{Conclusion}
\label{sec:conclusion}

In this work, we examined the challenges that emerge when machine unlearning is performed continually across multiple phases. Our analysis shows that existing unlearning approaches exhibit two critical issues in this setting: knowledge erosion, characterized by a gradual decline in performance on retain data, and forgetting reversal, where previously forgotten samples become recognizable again in later stages. These issues underscore the need for methods that not only ensure complete removal of designated information but also preserve model stability over time.
To address this challenge, we proposed SAFER, a continual unlearning framework that maintains well-clustered representations for retain samples while enforcing negative logit margins for forget samples. Experimental evaluations across multiple datasets and unlearning configurations demonstrate that SAFER consistently preserves model utility while preventing the reactivation of forgotten information.
Furthermore, we provide an in-depth analysis of how margin-based suppression contributes to stability in continual unlearning.
We hope our framework contributes to efforts toward robust and practically deployable continual unlearning solutions.

\section*{Acknowledgment}
This work was partly supported by Institute for Information \& communication Technology Planning \& evaluation (IITP) grants funded by the Korean government MSIT: (RS-2022-II220688, RS-2019-II190421, RS-2024-00437849, RS-2024-00337703). Also, this work was supported by the Cyber Investigation Support Technology Development Program (No.RS-2025-02304983) of the Korea Institute of Police Technology (KIPoT), funded by the Korean National Police Agency. Lastly, this work was supported by the National Research Foundation of Korea (NRF) grant funded by the Korea government (MSIT) (No. RS-2024-00356293).

%% file: sec/8_suppl.tex
\clearpage
\setcounter{page}{1}
\maketitlesupplementary

\section{Theoretical Analysis of Unlearning Margin Suppression}
\label{sec:lms_theory}


\begin{proposition}[Effect of Negative Unlearning Margin]
For a non-retain sample \(\boldsymbol{\mathrm{x}}\) with original label \(\mathrm{y}\), optimizing the KL divergence loss
with the random retain-class target distribution \(\mathbf{q}\) in Eq.~\ref{eq:random_logitdist}
drives its unlearning margin toward negative values:
\begin{equation}
    UM(\mathbf{x}) < 0.
\end{equation}
Thus, forgotten samples remain suboptimal within the decision space.
\end{proposition}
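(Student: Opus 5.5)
We work directly with the softmax/KL structure of the loss. Write the model's prediction on $\mathbf{x}$ as $p_i = e^{\ell_i}/\sum_j e^{\ell_j}$. The loss is $\mathcal{L}(\mathbf{x}) = \mathrm{KL}(\mathbf{q}\,\|\,\mathbf{p}) = \sum_i q_i\log q_i - \sum_i q_i \ell_i + \log\sum_j e^{\ell_j}$, which is convex in the logit vector $\boldsymbol{\ell}$. Its gradient with respect to the logits is the familiar
\begin{equation*}
\frac{\partial \mathcal{L}}{\partial \ell_i} = p_i - q_i .
\end{equation*}
The rest of the argument compares two things: what happens to $\ell_{\mathrm{y}}$ and to the competing logits along gradient flow, and what the minimizers of $\mathcal{L}$ look like.

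\textbf{Step 1 (dynamics on the logit of the original class).} The label $\mathrm{y}$ is not a retain class, so $q_{\mathrm{y}}=0$. Hence $\partial\mathcal{L}/\partial\ell_{\mathrm{y}} = p_{\mathrm{y}} > 0$ for every $\boldsymbol{\ell}$, and a gradient step always lowers $\ell_{\mathrm{y}}$. For a retain class $i$ with $q_i > p_i$, the same step raises $\ell_i$. Such classes must exist whenever $p_{\mathrm{y}}>0$: $\sum_{i\in\mathcal{D}_{\text{retain}}^{(t)}} q_i = 1 > \sum_{i\in\mathcal{D}_{\text{retain}}^{(t)}} p_i$. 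In particular, pick the retain class $k^\star$ maximizing $q_i - p_i$. Along the flow, $UM(\mathbf{x}) \le \ell_{\mathrm{y}}-\ell_{k^\star}$ strictly decreases with rate $p_{\mathrm{y}} + (q_{k^\star}-p_{k^\star}) > 0$.

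\textbf{Step 2 (characterizing the limit).} The infimum of $\mathcal{L}$ is $0$, and it is approached exactly when $\mathbf{p}\to\mathbf{q}$. This forces $p_{\mathrm{y}} \to 0$ while $p_i \to q_i > 0$ for retain classes; the $r_i$ are drawn from a continuous uniform law, so $q_i>0$ almost surely. Then $\ell_{\mathrm{y}} - \ell_i = \log(p_{\mathrm{y}}/p_i) \to -\infty$ for every retain $i$. Any other non-retain class $k$ also has $p_k\to 0$, so it cannot be the argmax competitor once $\mathbf{p}$ is close to $\mathbf{q}$. Consequently $\max_{k\neq \mathrm{y}}\ell_k$ is attained at a retain class. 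To conclude $UM(\mathbf{x})<0$ in a quantitative way, I would state it as follows: if $\mathrm{KL}(\mathbf{q}\|\mathbf{p}) \le \varepsilon$, then by Pinsker's inequality $\|\mathbf{p}-\mathbf{q}\|_1 \le \sqrt{2\varepsilon}$. Hence $p_{\mathrm{y}} \le \sqrt{2\varepsilon}$ and $\max_{i} p_i \ge \max_i q_i - \sqrt{2\varepsilon}$. Since $\max_i q_i \ge 1/|\mathcal{D}_{\text{retain}}^{(t)}|$ (in classes), we get $UM(\mathbf{x}) = \log(p_{\mathrm{y}}/\max_{k\ne \mathrm{y}}p_k) < 0$ as soon as $2\sqrt{2\varepsilon} < 1/|\mathcal{D}_{\text{retain}}^{(t)}|$.

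\textbf{Main obstacle.} The hardest part is connecting this logit-level picture to actual training. The logits are produced by network parameters $\theta$, which are shared with the retain objective $\mathcal{L}_{\text{retain}}$ in Eq.~\eqref{eq:retain_loss}. Because of that coupling, monotonic decrease of $UM$ along logit gradient flow does not transfer directly to parameter updates. I would handle this by stating the proposition under the assumption that the forget-loss term attains a sufficiently small value $\varepsilon$, which reduces everything to the Pinsker bound of Step 2. I would also note that in the parameter-space flow the change in $\boldsymbol{\ell}$ is $-J J^\top(\mathbf{p}-\mathbf{q})$, where $J$ is the Jacobian. This still decreases $\mathcal{L}$ whenever $JJ^\top \succ 0$ on the relevant subspace, and the $\varepsilon$-argument then goes through unchanged.
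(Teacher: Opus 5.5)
Your argument is correct and has the same skeleton as the paper's. Both compute $\partial\mathcal{L}/\partial\ell_i = p_i - q_i$, both use $q_{\mathrm{y}}=0$ to see that $\ell_{\mathrm{y}}$ is always pushed down, and both conclude from $\mathbf{p}\to\mathbf{q}$ that $p_{\mathrm{y}}\to 0$ and hence $UM(\mathbf{x})<0$.

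The difference is in the final step. The paper argues ``at a stationary point'' and then asserts that $\max_{k\neq\mathrm{y}}\ell_k$ is attained at a retain class. You correctly observe that $\mathrm{KL}(\mathbf{q}\,\|\,\mathbf{p})$ never reaches its infimum in logit space, since that would require $\ell_{\mathrm{y}}=-\infty$, so no stationary point exists. You replace the heuristic with a sublevel-set statement. From $\mathrm{KL}\le\varepsilon$ and Pinsker you get $p_{\mathrm{y}}\le\sqrt{2\varepsilon}$ and $\max_{k\neq\mathrm{y}}p_k\ge 1/|\mathcal{D}_{\text{retain}}^{(t)}|-\sqrt{2\varepsilon}$. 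This gives $UM<0$ without identifying which class is the competitor, so other forget or forgotten classes are handled automatically. What this buys you is an explicit, non-asymptotic threshold. It also makes clear that the result is conditional on the forget loss actually being driven small under shared-parameter training with $\mathcal{L}_{\text{retain}}$, a point the paper glosses over.

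One caveat: drop the suggestion, in Step 1 and again in your ``main obstacle'' paragraph, that $UM$ itself decreases monotonically along the logit flow. Only the instantaneous bound $\ell_{\mathrm{y}}-\ell_{k^\star}$ decreases, and $k^\star$ changes over time. For example, take $\mathbf{q}=(0,0.2,0.8)$ and $\mathbf{p}=(0.05,0.9,0.05)$ with the first coordinate being $\mathrm{y}$. The competitor is the second class, and $\frac{d}{dt}UM=-0.05+0.7>0$. Step 2 does not rely on this claim, so your conclusion stands.

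Also note that neither your argument nor the paper's closing sentence (``this update is repeatedly applied across unlearning phases'') controls $UM$ for forgotten samples in later phases, because those samples no longer enter the loss. Your current-phase, conditional formulation is the defensible version of the claim.
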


\begin{proof}[Proof Sketch]
Let \(\mathbf{p}(\mathbf{x}) = \mathrm{softmax}(\boldsymbol{\ell}(\mathbf{x}))\) denote the model’s predictive distribution over logits \(\boldsymbol{\ell}\).
Then the KL objective is written as:
\begin{equation}
    \mathcal{L}_{\mathrm{KL}}(\mathbf{x})
    = \mathrm{KL}(\mathbf{q} \,\|\, \mathbf{p}(\mathbf{x}))
    = \sum_{i} q_i \log \frac{q_i}{p_i(\mathbf{x})},
\end{equation}
where \(i\) indexes over all classes.

\noindent Since \(q_y = 0\) and \(q_i > 0\) only for \(i \in \mathcal{D}_{\text{retain}}^{(t)}\),
the gradient w.r.t. logits becomes:
\begin{equation}
    \frac{\partial \mathcal{L}_{\mathrm{KL}}}{\partial \ell_i}
    =
    \begin{cases}
    p_i(\mathbf{x}) - q_i, & i \in \mathcal{D}_{\text{retain}}^{(t)}, \\
    p_i(\mathbf{x}), & i \notin \mathcal{D}_{\text{retain}}^{(t)}.
    \end{cases}
\end{equation}

\noindent In particular, for the ground-truth class \(y\), if \(p_y(\boldsymbol{\mathrm{x}}) > 0\), gradient descent decreases \(\ell_y\).
Meanwhile, logits of retain classes \(p_i(\boldsymbol{\mathrm{x}})\) are pushed toward matching \(q_i\).
At a stationary point, 
\(p_y(\boldsymbol{\mathrm{x}})\) approaches 0, leading to 
\(\ell_y \ll \max_{j \in \mathcal{D}_{\text{retain}}^{(t)}} \ell_j\).
Because $\max_{k \neq y}\ell_k = \max_{j \in \mathcal{D}_{\text{retain}}^{(t)}}\ell_j$, we obtain
\(
UM(\mathbf{x}) = \ell_y - \max_{k \neq y}\ell_k < 0.
\)

\noindent Therefore, the forgotten class is consistently discouraged from re-entering its decision region.
Since this update is repeatedly applied across unlearning phases, negative unlearning margins consistently prevent forgetting reversal.
\end{proof}


\section{Experiment Details} 
\label{sec:exp_details}

\subsection{Implementation}
\label{sec:impl}

\textbf{Original Models.}
We trained ResNet-18 on CIFAR-100, ResNet-50 on VGGFace2, and ViT-B/16 on MUFAC from scratch. 
For optimization, we used SGD with a learning rate of 0.1 for CIFAR-100 and VGGFace2, and 0.001 for MUFAC, along with weight decay of 0.0005 and momentum of 0.9.
We set the batch size to 128 for CIFAR-100 and 64 for VGGFace2 and MUFAC. 

\noindent\textbf{SAFER and Baselines.}
Retrain models were trained in the same manner as original models, using only the remaining data \(\mathcal{D}_{\text{retain}}^{(t)}\).
For the other baseline methods, we utilized publicly available source code. 

\noindent We perform 10 unlearning epochs for all experiments, and the optimal hyperparameters are selected using Optuna~\cite{akiba2019optuna}.
All methods are tuned only in Phase 1, and the selected hyperparameters are fixed and reused for all subsequent phases to ensure a fair comparison without phase-wise re-optimization. This optimization protocol is applied to all experimental results in this paper unless otherwise noted.

\noindent\textbf{Hardware Specifications.} 
All experiments are conducted on Ubuntu 18.04 using NVIDIA GeForce RTX 3090 GPUs, each equipped with 24,268 MB of memory.

\subsection{Unlearning Targets}
\label{subsec:unlearning_targets}

\cref{tab:unlearning_targets} describes the unlearning targets used in our experiments.
At each phase, we unlearn 3 classes for CIFAR-100 and VGGFace2, and 20 identities for MUFAC.
The three-phase unlearning process is considered as a single evaluation unit, and we repeat it three times with different targets. The reported results are averaged over the three runs.

\input{tables/supp_table_targets}

\section{Additional Analysis} 
\label{sec:add_analysis}

\noindent\textbf{Ablation Study.}
As presented in~\cref{tab:ablation_results_tow}, we conduct ablation studies to evaluate the impact of each component in SAFER.
The ablation results are evaluated under the same conditions described in~\cref{sec:experiment}, and for each phase, the highest ToW score is highlighted in bold.

\noindent First, when optimizing intra-class compactness (IC), the ToW score gradually decreases after the first phase in both class-aligned and class-misaligned unlearning settings. 
This suggests that repeated unlearning with only IC optimization causes the model to diverge from the retrained model (the ideal unlearning reference) in both model utility and unlearning efficacy.
Second, adding centroid drift suppression to intra-class compactness (IC+CD) yields consistent performance improvements in the class-aligned setting compared to IC alone. However, its effect remains minimal in the class-misaligned setting, suggesting that controlling centroid drift has limited influence under this scenario.
Lastly, when all components are incorporated (UM+IC+CD), the ToW scores are significantly higher than those of the other settings across all datasets and phases. This result implies that jointly optimizing unlearning margin and retain-representation clusterability is crucial to preserving both model utility and effective forgetting throughout multiple unlearning phases.

\input{tables/supp_table_ablation_tow}

\noindent\textbf{Results in Extended Phases.}
A downward trend of SAFER is observed for MUFAC in~\cref{fig:tow_all}. Therefore, we conduct additional experiments up to five phases under the same experimental setting.
As described above, the phase 5 results are obtained using the same hyperparameters as in the main paper, without any additional tuning. 
In this experiment, we include GS-Lora as an additional baseline, using the authors' default hyperparameters.
As shown in~\cref{fig:baseline_comparison}, SAFER remains competitive with strong baselines and does not exhibit a persistent downward trend beyond phase 3.
It consistently maintains a ToW above 0.8, indicating that the discrepancy from the Retrain model does not accumulate over extended phases, even in the class-misaligned setting. 
These results indicate that SAFER maintains competitive and stable performance beyond three phases.


\noindent\textbf{Hyperparameter Sensitivity.}
To further examine the robustness of our proposed method, we provide a sensitivity analysis of the regularization weight $\lambda$ in Eq. (6), which controls cluster separation. As shown in~\cref{fig:lambda_sensitivity}, varying $\lambda$ results in highly consistent trends up to phase 5, without significant performance fluctuations. These results indicate that SAFER shows limited sensitivity and remains stable across a reasonable range of hyperparameter values.
\begin{figure}[!tbp]
\centering
    \begin{subfigure}[]{0.45\columnwidth}
    \centering
    \includegraphics[width=\columnwidth]{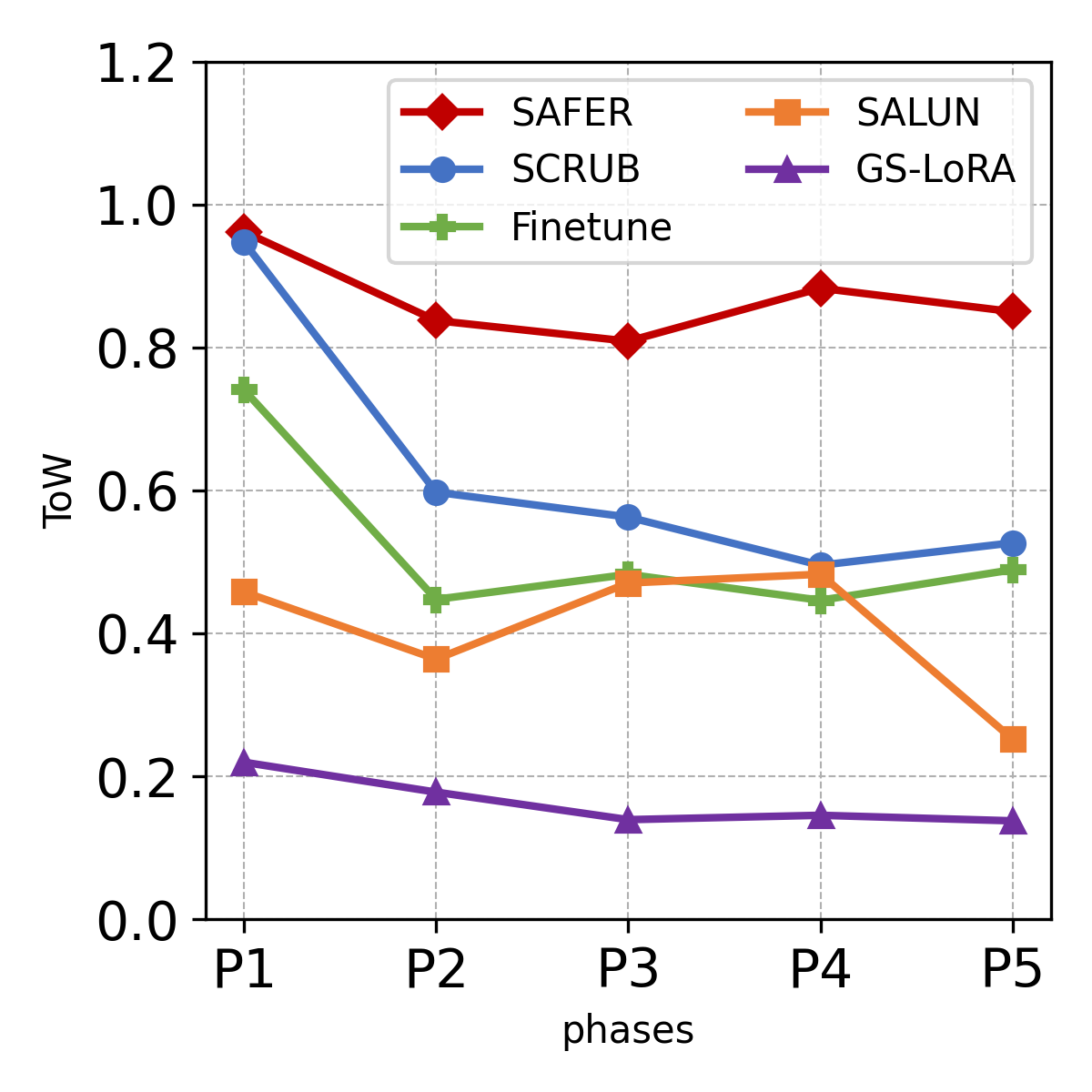}
    \caption{Baseline comparison}
    \label{fig:baseline_comparison}
    \end{subfigure}
    \begin{subfigure}[]{0.45\columnwidth}
    \centering
    \includegraphics[width=\columnwidth]{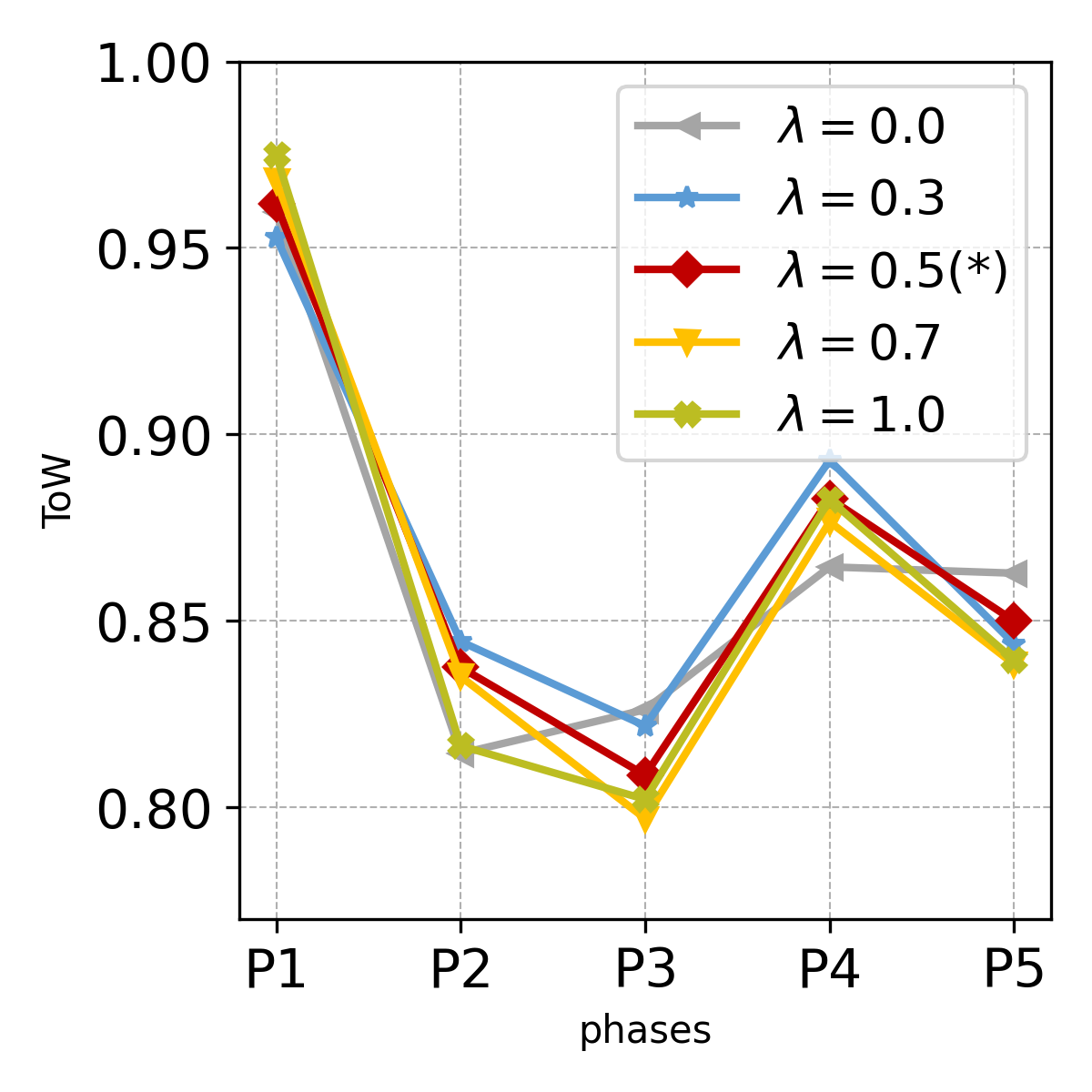}
    \caption{Hyperparameter sensitivity}
    \label{fig:lambda_sensitivity}
    \end{subfigure}
\caption{ToW performance across phases 1-5.}
\label{fig:phase5}
\vspace{-1.5 em}
\end{figure}


\noindent\textbf{Representation Similarity.}
To further evaluate representation stability under continual unlearning settings, we measure the cosine similarity between the feature representations extracted by the model at each unlearning phase and those of the original model before any unlearning is applied.

\noindent For a given sample \(x_i\), let the original model and the model after \(t\)-th unlearning phase produce feature representations:
\begin{equation}
    \mathbf{f}^{\text{before}}_i = f_{\theta_0}(x_i),
    \qquad
    \mathbf{f}^{\text{after}}_i = f_{\theta_t}(x_i),
\end{equation}
where \(t \in \{1, 2, 3\}\).
Applying \(L2\)-normalization, we measure the representation similarity using cosine similarity:
\begin{equation}
    \text{Sim}(x_i)
    =
    \frac{
    \mathbf{f}^{\text{before}}_i \cdot \mathbf{f}^{\text{after}}_i
    }{
    \|\mathbf{f}^{\text{before}}_i\|_2 \,
    \|\mathbf{f}^{\text{after}}_i\|_2
    }
\end{equation}
, where \(\text{Sim}(x_i) \in [-1, 1]\).
We obtain the similarity distributions for retain data and forget data as:
\begin{equation}
    \text{Sim}_{\text{retain}}^{(t)} 
    = 
    \{\text{Sim}(x_i) \mid i \in \mathcal{D}_{\text{retain}}^{(t)}\}, 
    \qquad
    \text{Sim}_{\text{forget}}^{(t)}
    = 
    \{\text{Sim}(x_i) \mid i \notin \mathcal{D}_{\text{retain}}^{(t)}\}.
\end{equation}

\noindent The underlying rationale is that an ideal unlearning algorithm should remove information only related to the forgotten samples, while preserving knowledge from the retain samples. 
Thus, high similarity for retain samples across unlearning phases indicates that the model successfully maintains useful knowledge without feature distortion. 
Conversely, decreasing similarity for forget samples suggests that their embeddings diverge from their original feature space, reflecting effective forgetting.

\noindent ~\cref{fig:c100_similarity}, ~\cref{fig:vgg2_similarity}, and ~\cref{fig:mufac_similarity} present the representation similarity results between the original model and the unlearned models at each phase for CIFAR-100, VGGFace2, and MUFAC.
For the retrained model in the class-aligned unlearning setting, the retain and forget samples form distinguishable distributions in terms of their representation similarity to the original model. 
The retain-sample distribution is shifted toward the right with a narrower spread, indicating consistently high similarity. 
In contrast, the forget-sample distributions across all unlearning phases shift further left with a broader spread, indicating that their similarity to the original model remains uniformly low.
In the class-misaligned unlearning setting, the retrained model exhibits retain and forget similarity distributions with nearly identical shapes, although the forget distribution tends to be slightly more dispersed.
Among all approximate unlearning methods, SAFER most closely preserves the representation similarity characteristics of Retrain. 
This indicates that SAFER achieves desirable and consistent unlearning behavior, not only in accuracy-based evaluation but also in this representational analysis.

\input{figs/fig_similarity_c100}
\input{figs/fig_similarity_vgg2}
\input{figs/fig_similarity_mufac}

%% file: tables/supp_table_targets.tex
\begin{table}[h!]
\centering
    \setlength{\tabcolsep}{6pt}
    \resizebox{0.5\columnwidth}{!}{
    \begin{tabular}{c c c c c}
    \toprule
    Set & Phase & CIFAR-100 & VGGFace2 & MUFAC \\
    \midrule
    \multirow{3}{*}{1}  & 1 & 31,33,35 & 31,33,35 & 250 - 259, 260 - 269  \\
     & 2 & 37,39,51 & 37,39,51 & 270 - 279, 280 - 289 \\
     & 3 & 53,55,57 & 53,55,57 & 700 - 709, 710 - 719 \\
    \midrule
    \multirow{3}{*}{2}  & 1 & 71,63,20 & 6,7,8     & 230 - 239, 320 - 329 \\
     & 2 & 35,17,81 & 26,27,28  & 510 - 519, 600 - 609 \\
     & 3 & 9,18,15  & 51,52,53  & 610 - 619, 620 - 629 \\
    \midrule
    \multirow{3}{*}{3} & 1 & 6,7,8    & 36,37,38  & 240 - 249, 340 - 349 \\
     & 2 & 26,27,28 & 2,3,4     & 540 - 549, 850 - 859 \\
     & 3 & 51,52,53 & 81,82,83  & 860 - 869, 870 - 879 \\
    \bottomrule
    \end{tabular}
}
\caption{Unlearning target classes for CIFAR-100, VGGFace2, and MUFAC across three phases in our experiments.}
\label{tab:unlearning_targets}
\end{table}

%% file: tables/supp_table_ablation_tow.tex
\begin{table}[t]
\centering
\setlength{\tabcolsep}{11pt}
\resizebox{0.5\columnwidth}{!}{
\begin{tabular}{c|c|ccc}
\toprule
Dataset & Phase & IC & IC+CD & UM+IC+CD \\
\midrule

\multirow{3}{*}{CIFAR-100}
& 1 & 0.8687 & 0.9857 & \textbf{0.9910}\\
& 2 & 0.6052 & 0.8635 & \textbf{0.9889}\\
& 3 & 0.4621 & 0.3944 & \textbf{0.9981} \\
\midrule

\multirow{3}{*}{VGGFace2}
& 1 & 0.6890 & 0.9806 & \textbf{0.9824} \\
& 2 & 0.4655 & 0.7456 & \textbf{0.9844} \\
& 3 & 0.6697 & 0.8090 & \textbf{0.9887} \\
\midrule

\multirow{3}{*}{MUFAC}
& 1 & 0.7372 & 0.7371 & \textbf{0.9618} \\
& 2 & 0.4563 & 0.4445 & \textbf{0.8375} \\
& 3 & 0.4895 & 0.4783 & \textbf{0.8086} \\
\bottomrule
\end{tabular}
}
\caption{Ablation study for CIFAR-100, VGGFace2, and MUFAC. The results represent ToW scores, and higher values indicate better unlearning performance.
UM corresponds to the optimization in~\cref{eq:random_logitdist}, whereas IC and CD optimize the intra-class compactness and centroid drift suppression terms in~\cref{eq:retain_loss}, respectively. 
SAFER includes all three components (UM+IC+CD).
}
\vspace{-1.5 em}
\label{tab:ablation_results_tow}
\end{table}

%% file: figs/fig_similarity_c100.tex
\begin{figure*}[t]
\centering
\setlength{\tabcolsep}{1pt}
\begin{tabular}{c|ccc}
\toprule
Method & {Phase 1} & {Phase 2} & {Phase 3}\\
\midrule

\textbf{Retrain} &
\includegraphics[width=0.22\textwidth]{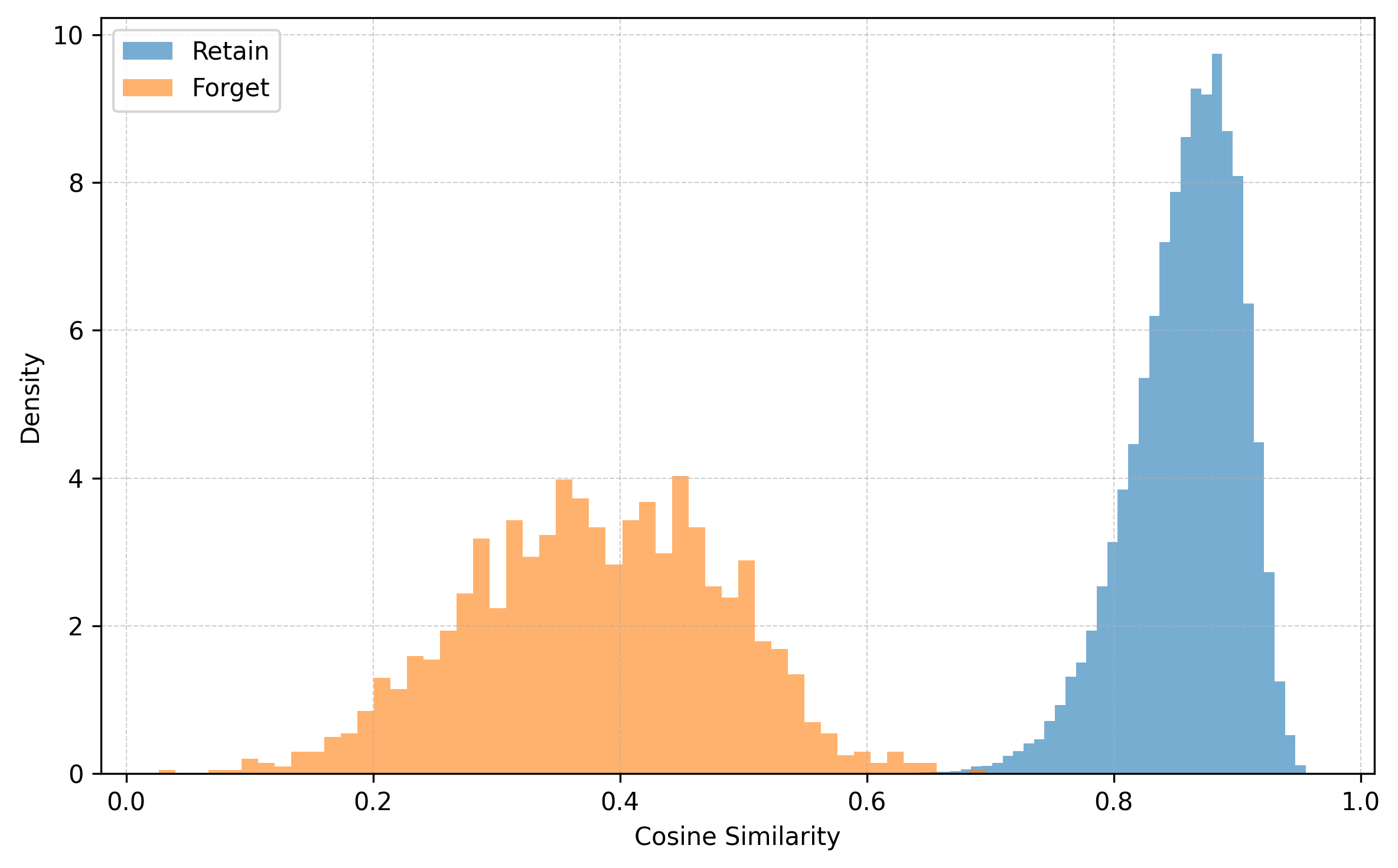} &
\includegraphics[width=0.22\textwidth]{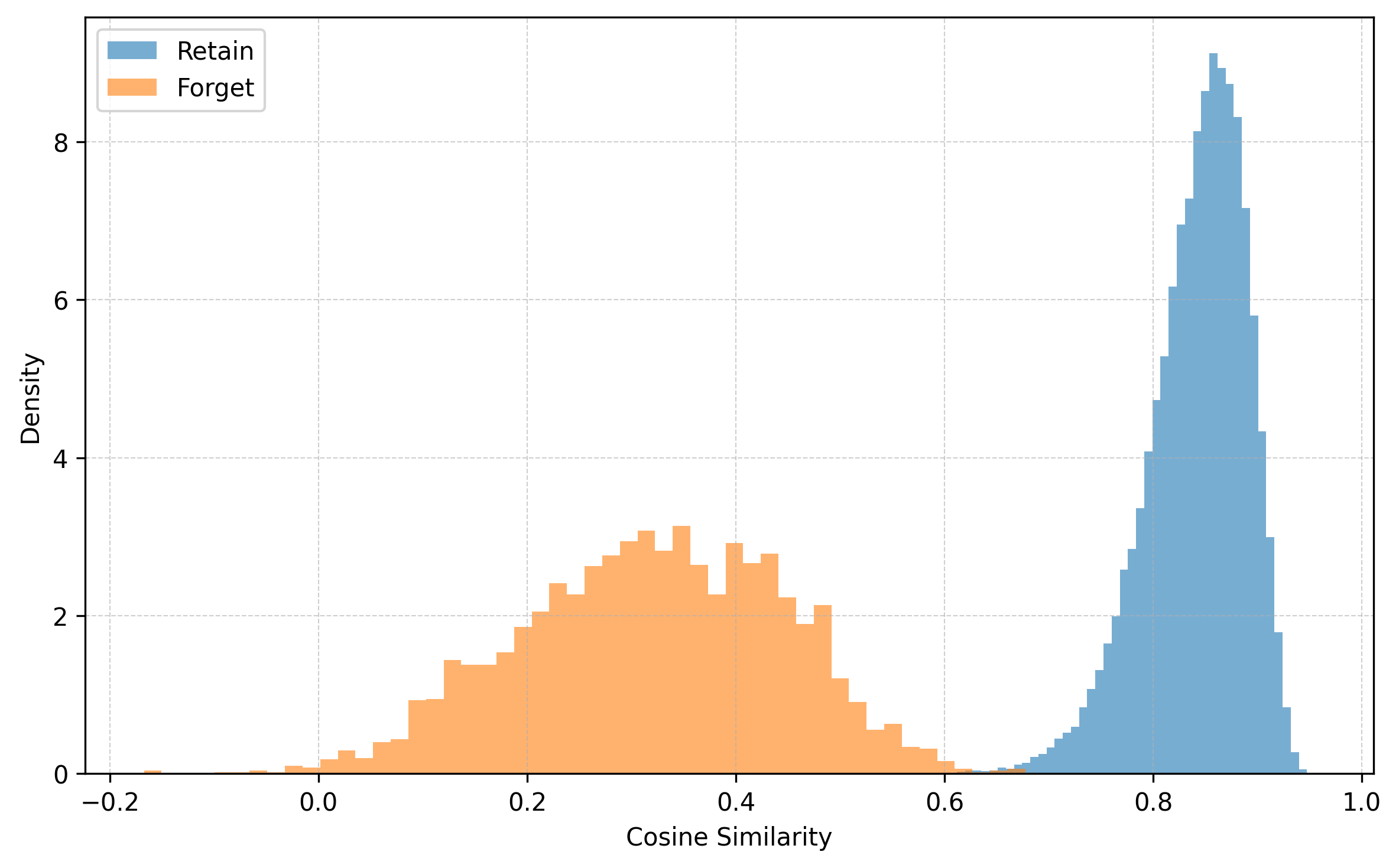} &
\includegraphics[width=0.22\textwidth]{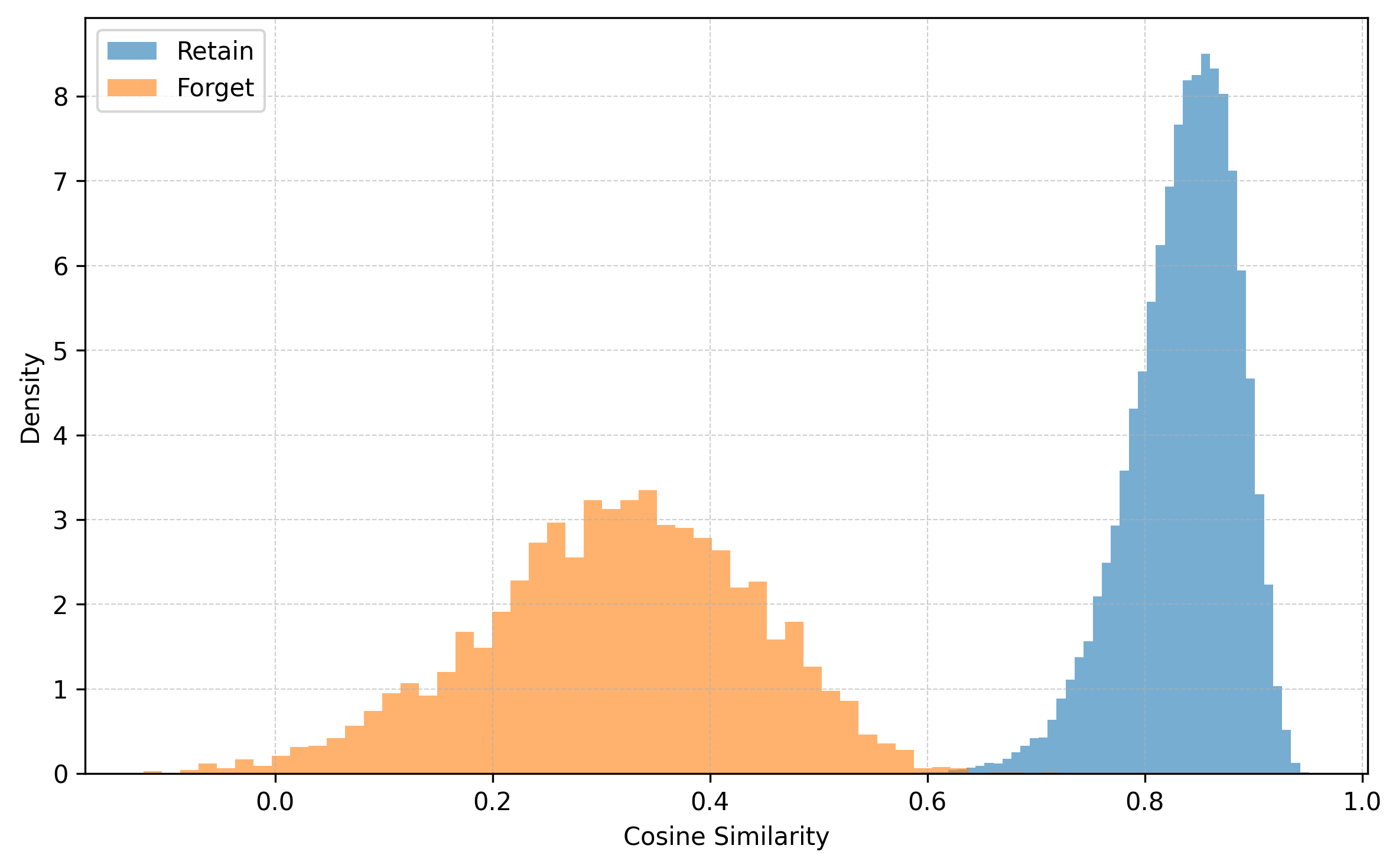} \\
\midrule

\textbf{SCRUB} &
\includegraphics[width=0.22\textwidth]{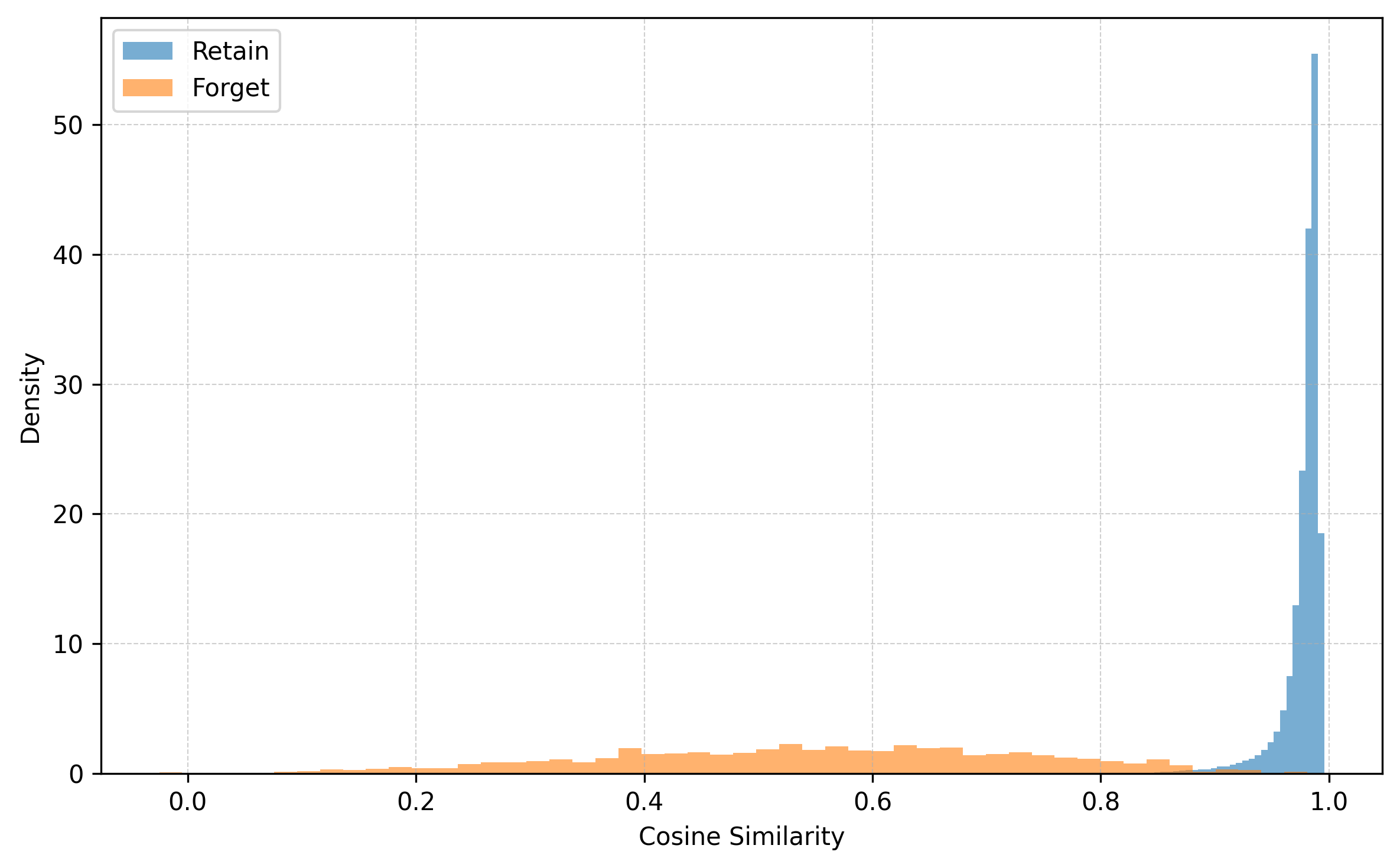} &
\includegraphics[width=0.22\textwidth]{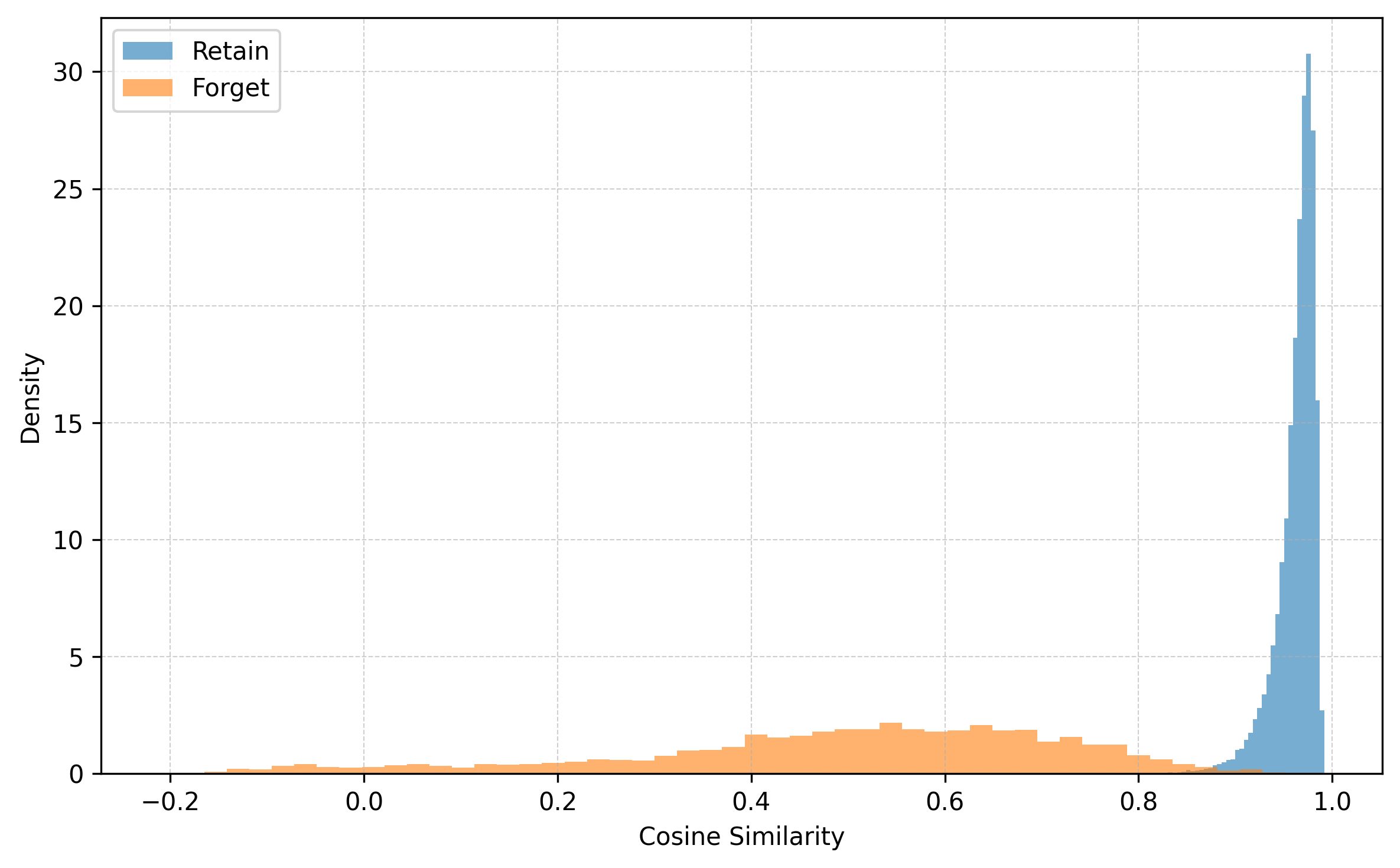} &
\includegraphics[width=0.22\textwidth]{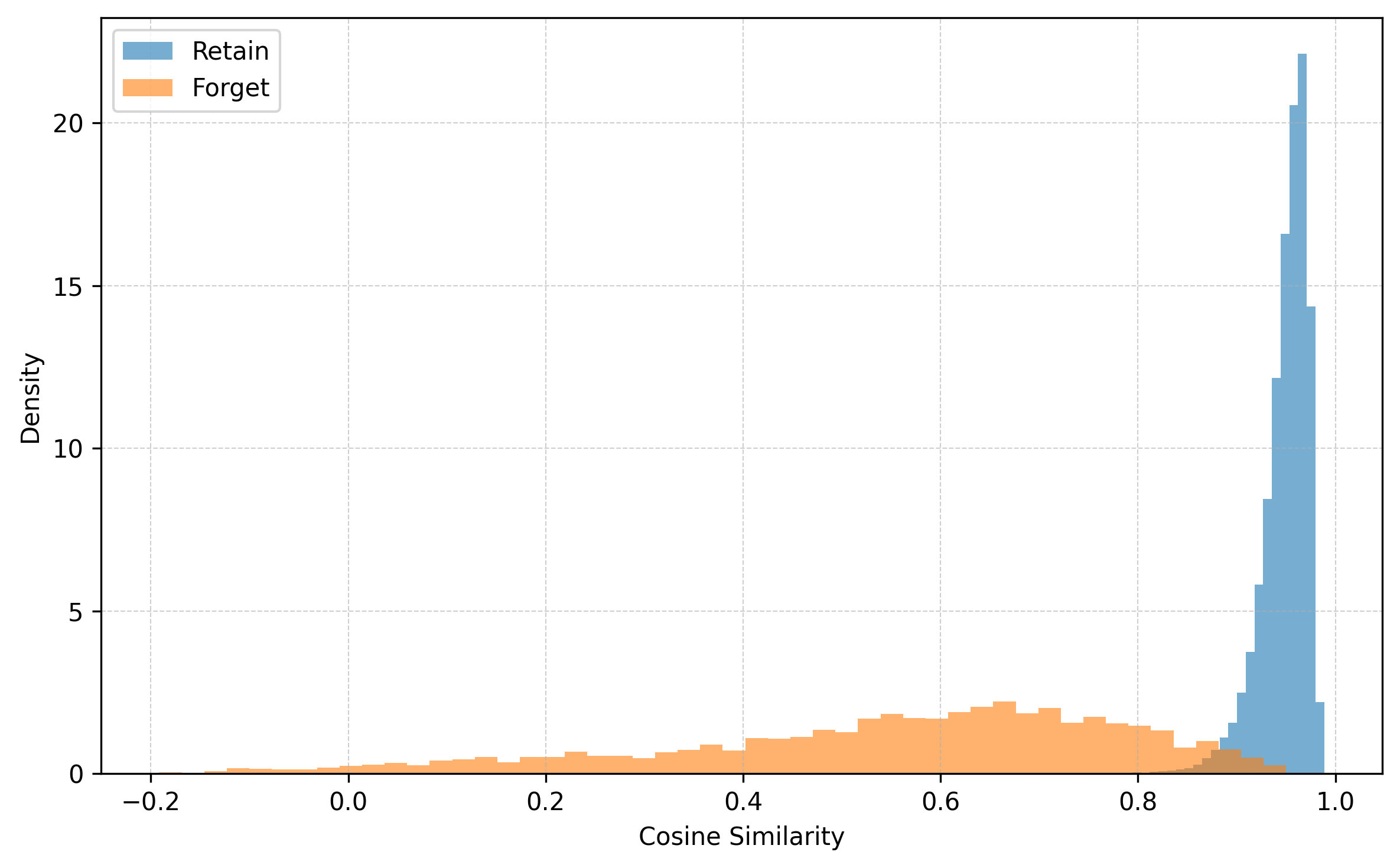} \\
\midrule

\textbf{SALUN} &
\includegraphics[width=0.22\textwidth]{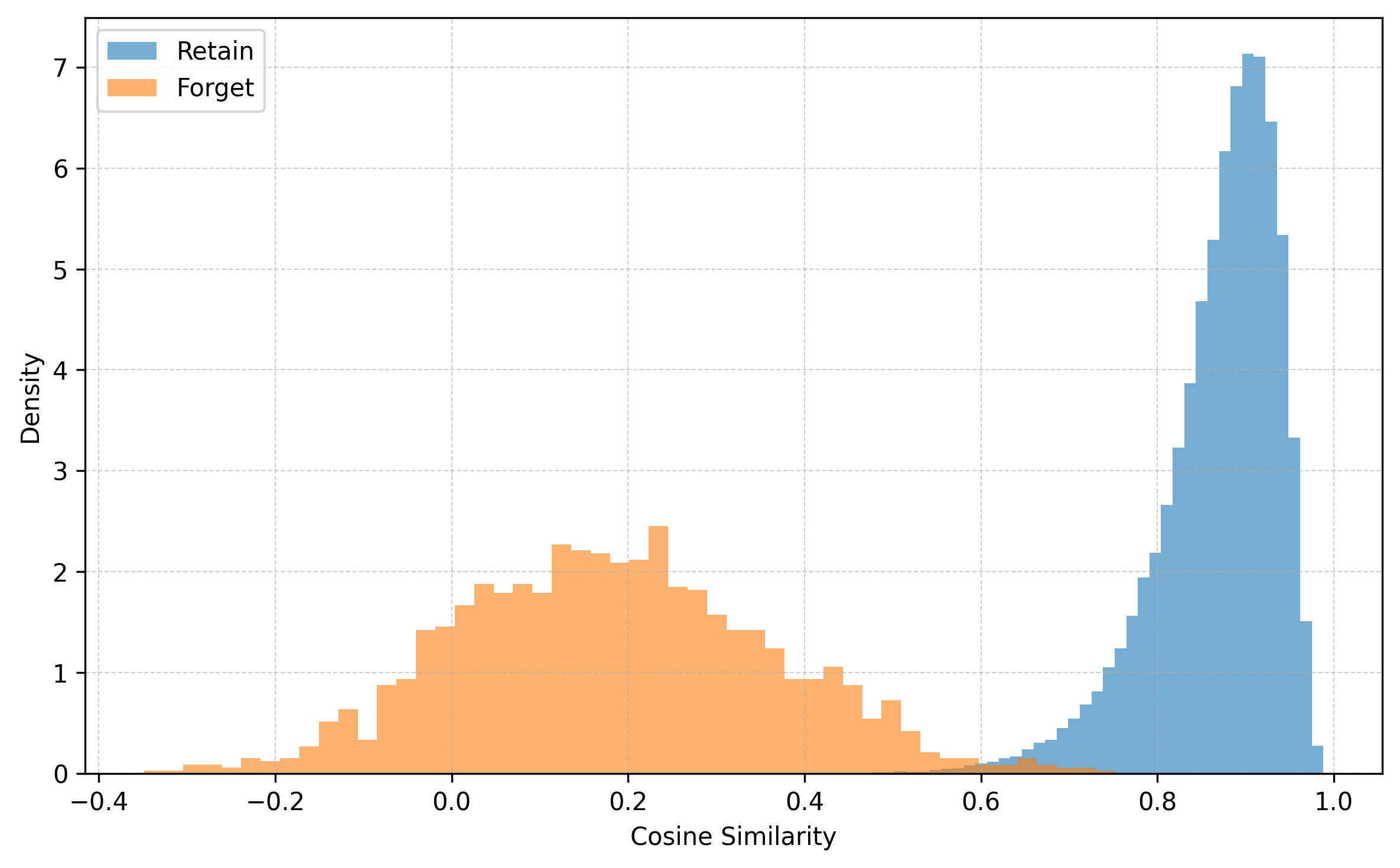} &
\includegraphics[width=0.22\textwidth]{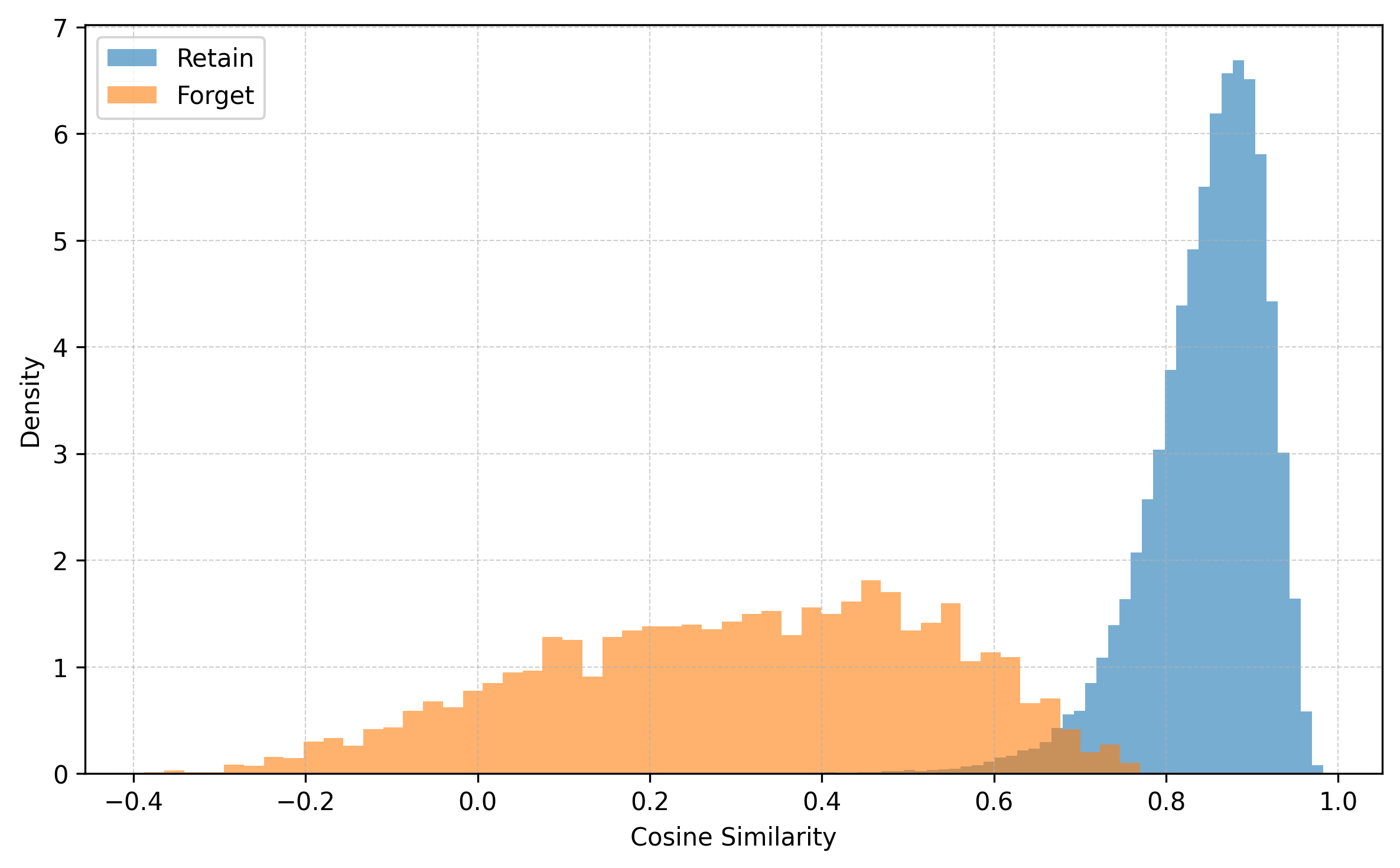} &
\includegraphics[width=0.22\textwidth]{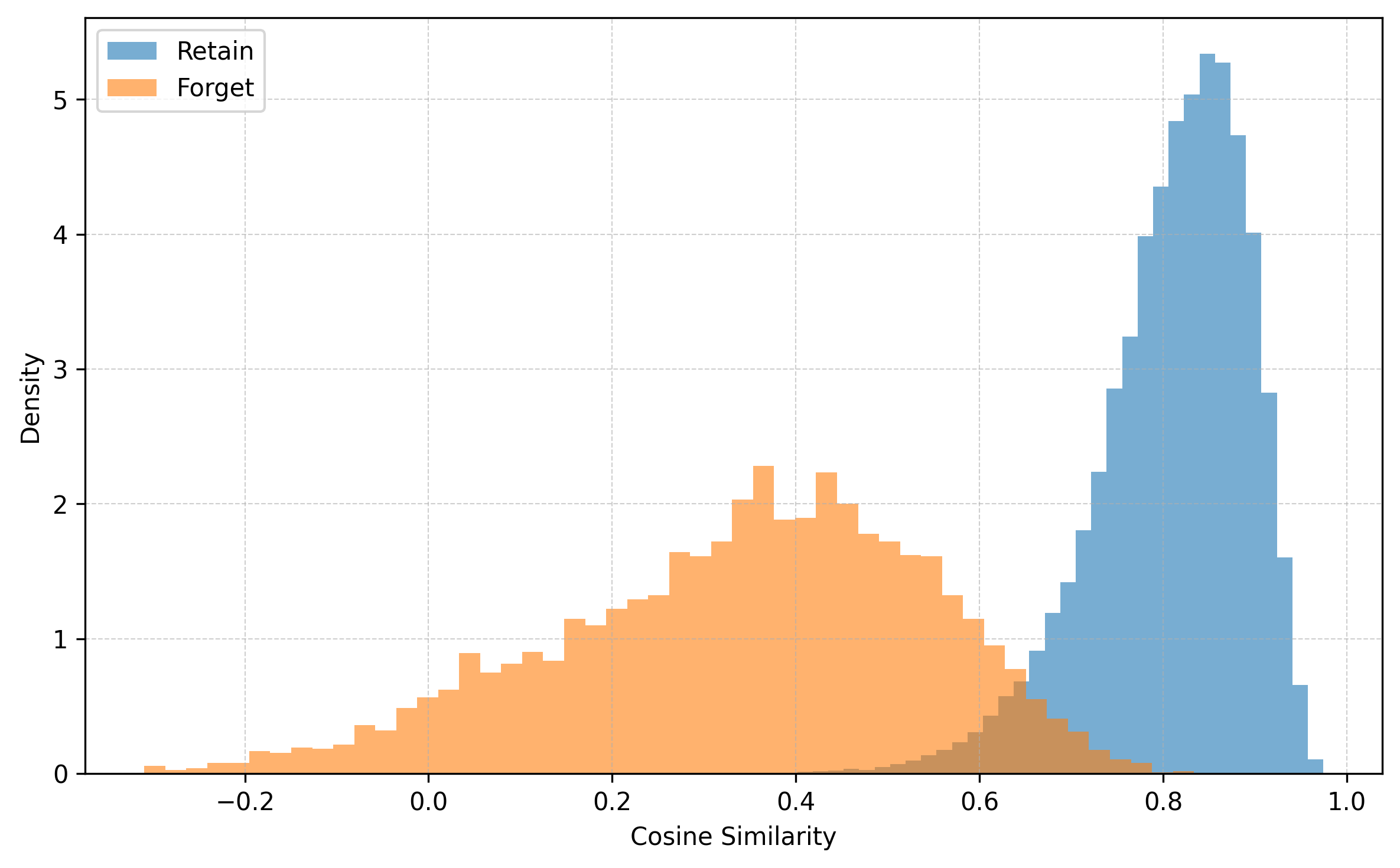} \\
\midrule

\textbf{SSD} &
\includegraphics[width=0.22\textwidth]{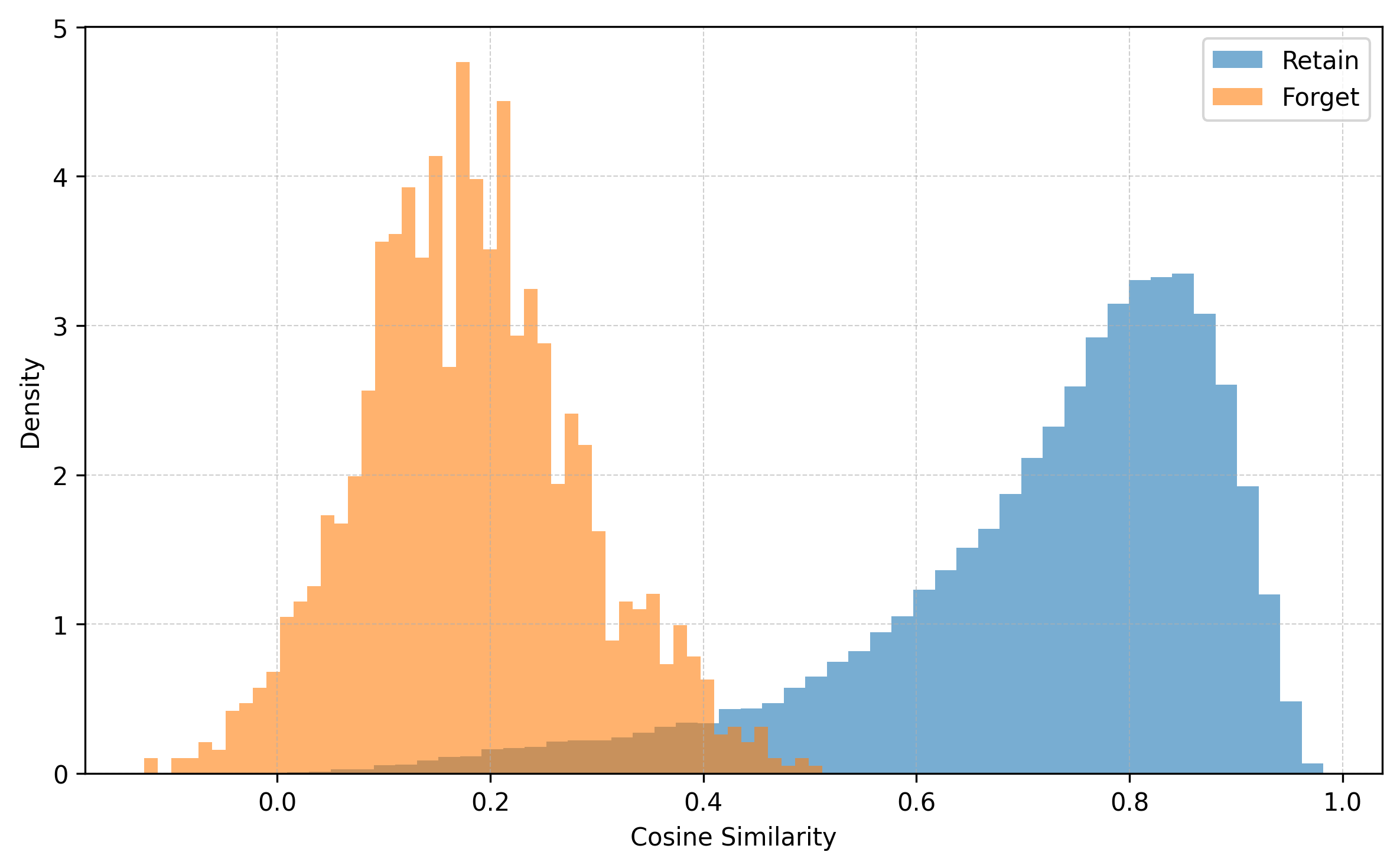} &
\includegraphics[width=0.22\textwidth]{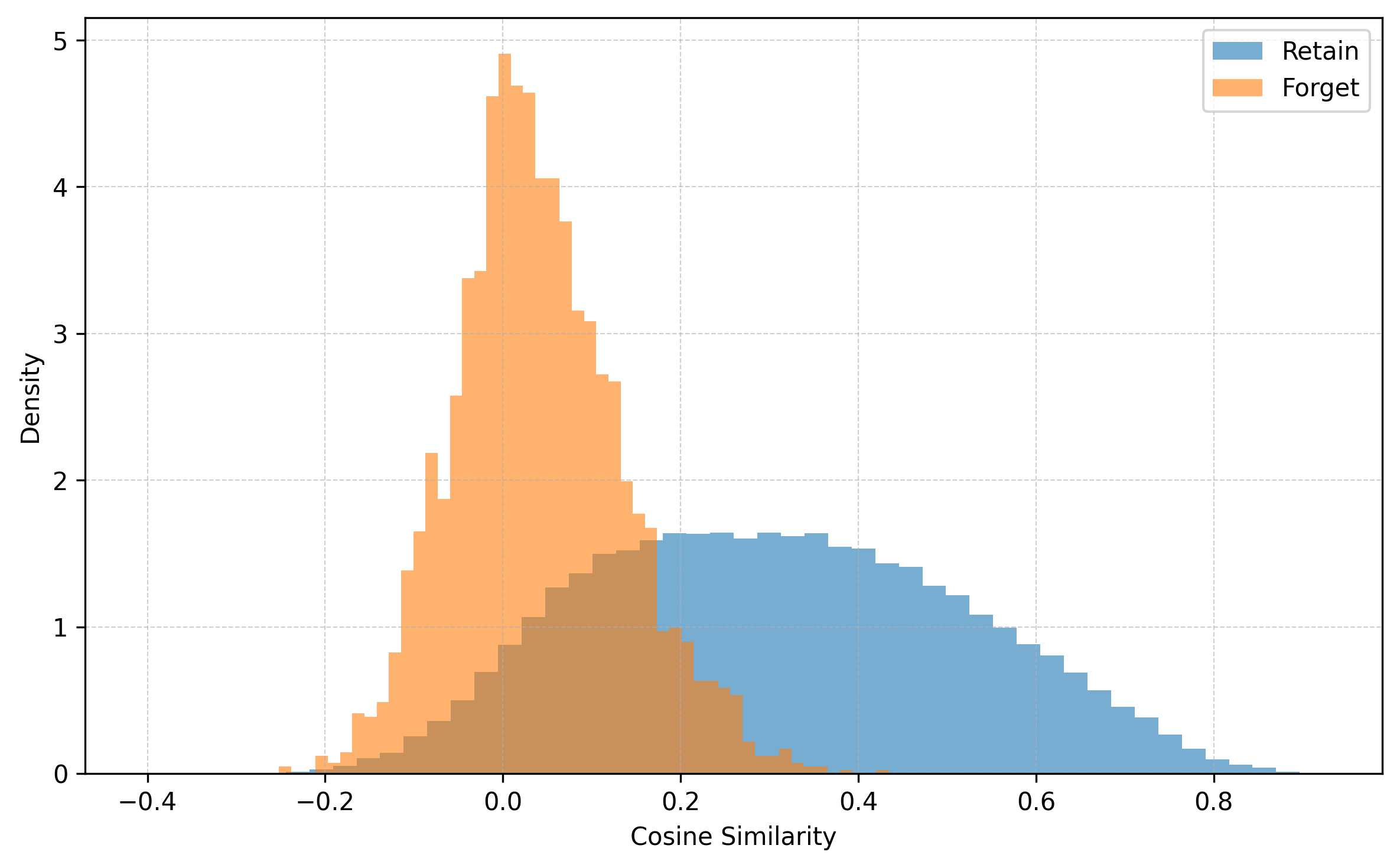} &
\includegraphics[width=0.22\textwidth]{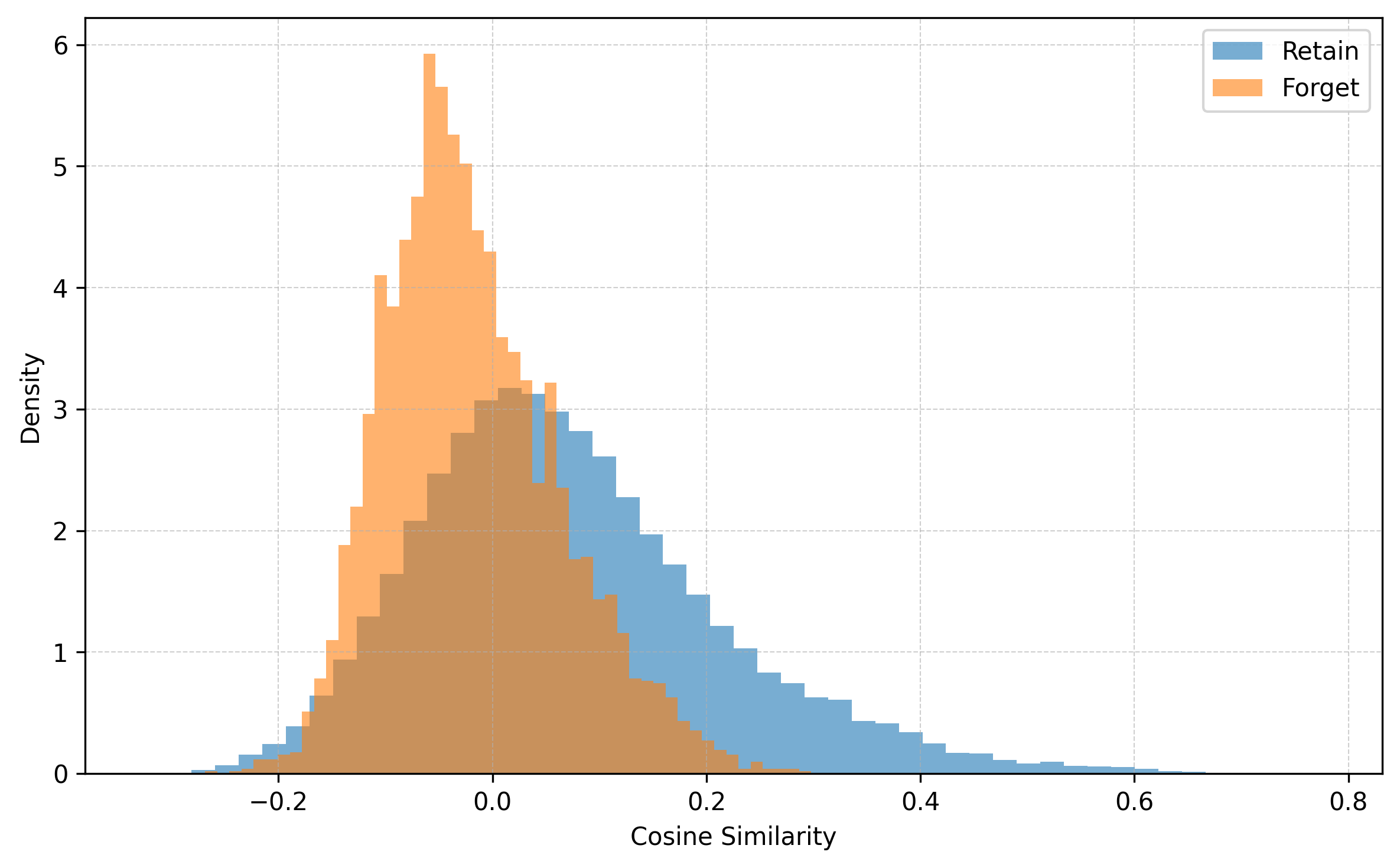} \\
\midrule

\textbf{BndShrink}  &
\includegraphics[width=0.22\textwidth]{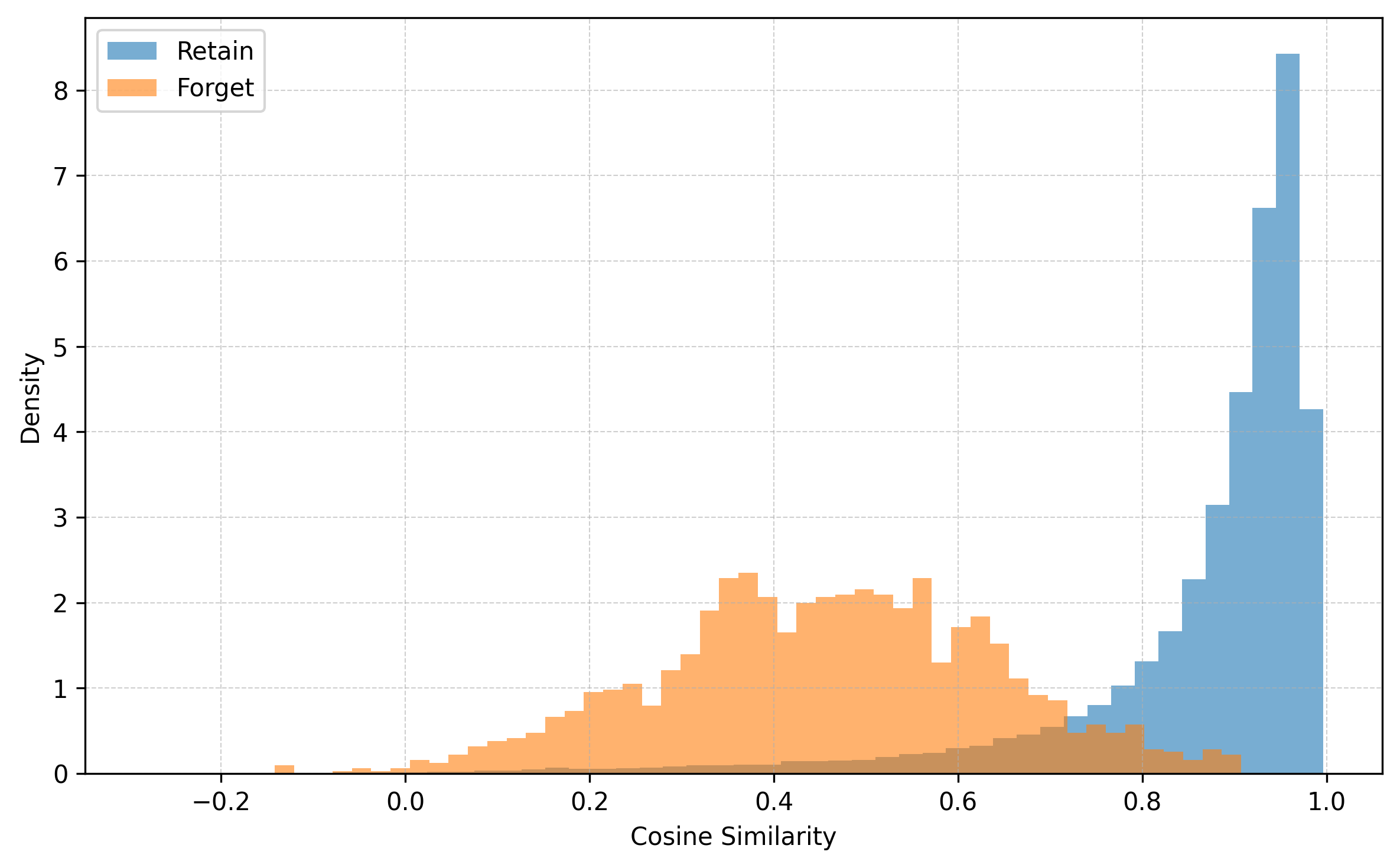} &
\includegraphics[width=0.22\textwidth]{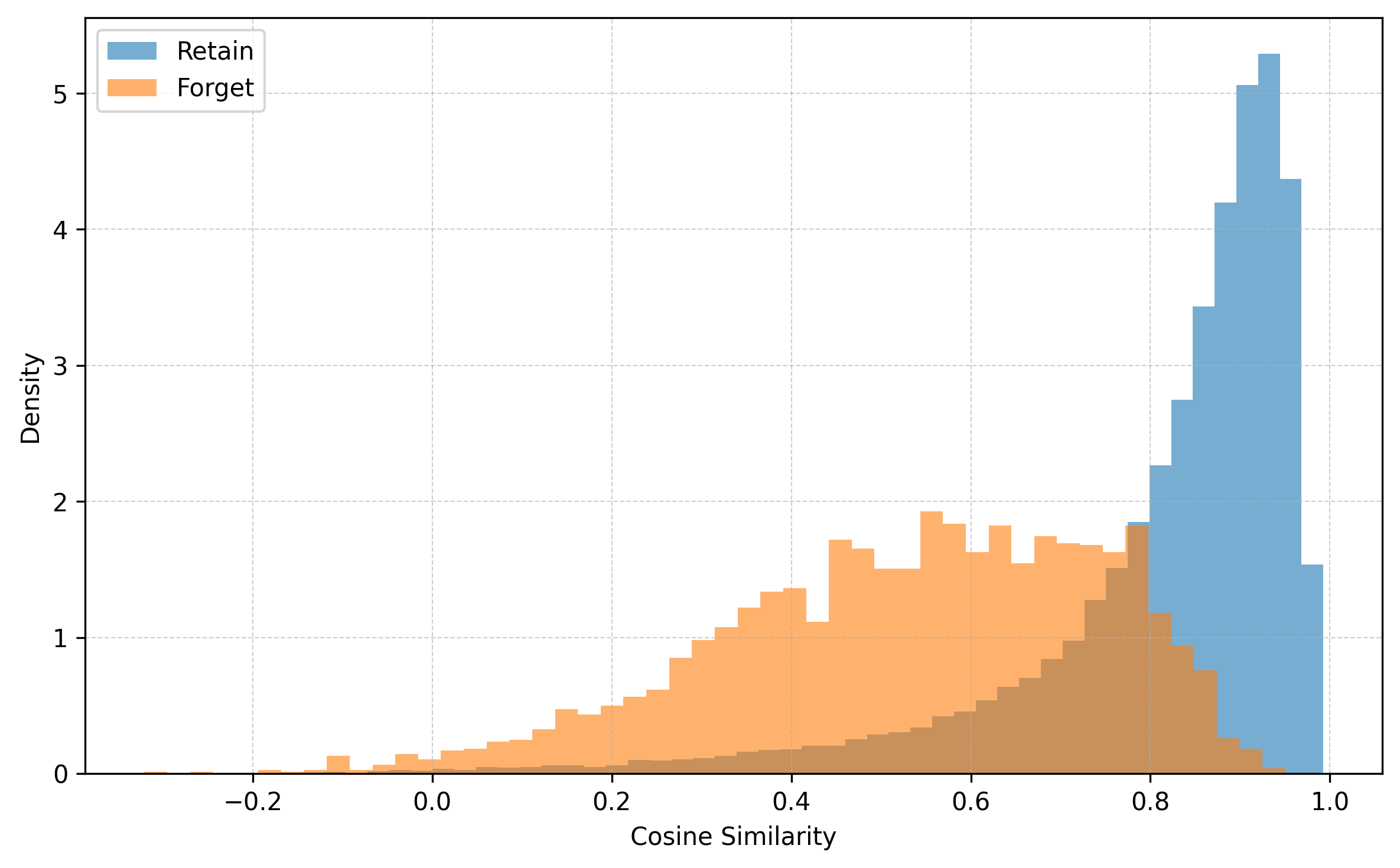} &
\includegraphics[width=0.22\textwidth]{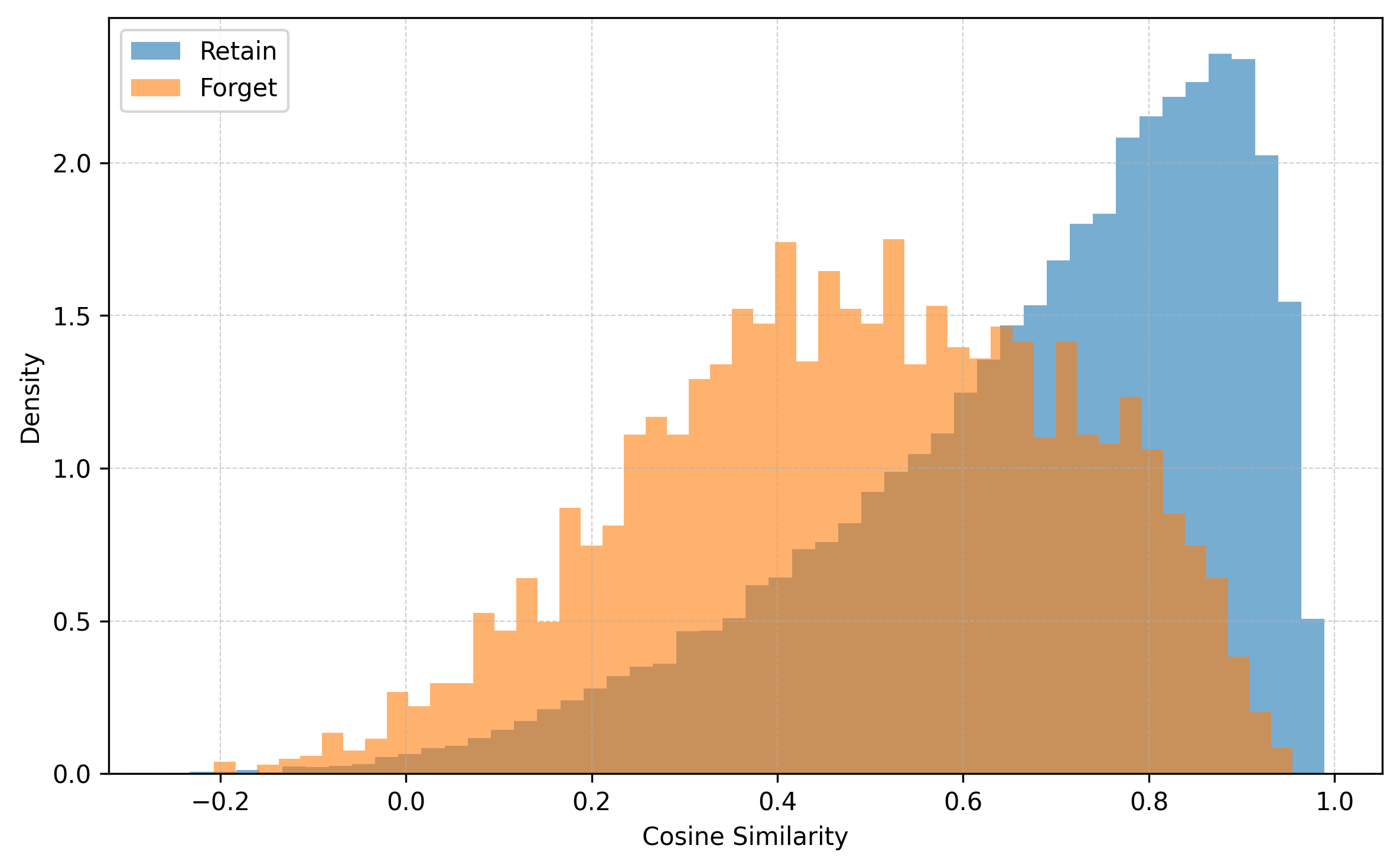} \\
\midrule

\textbf{NegGrad} &
\includegraphics[width=0.22\textwidth]{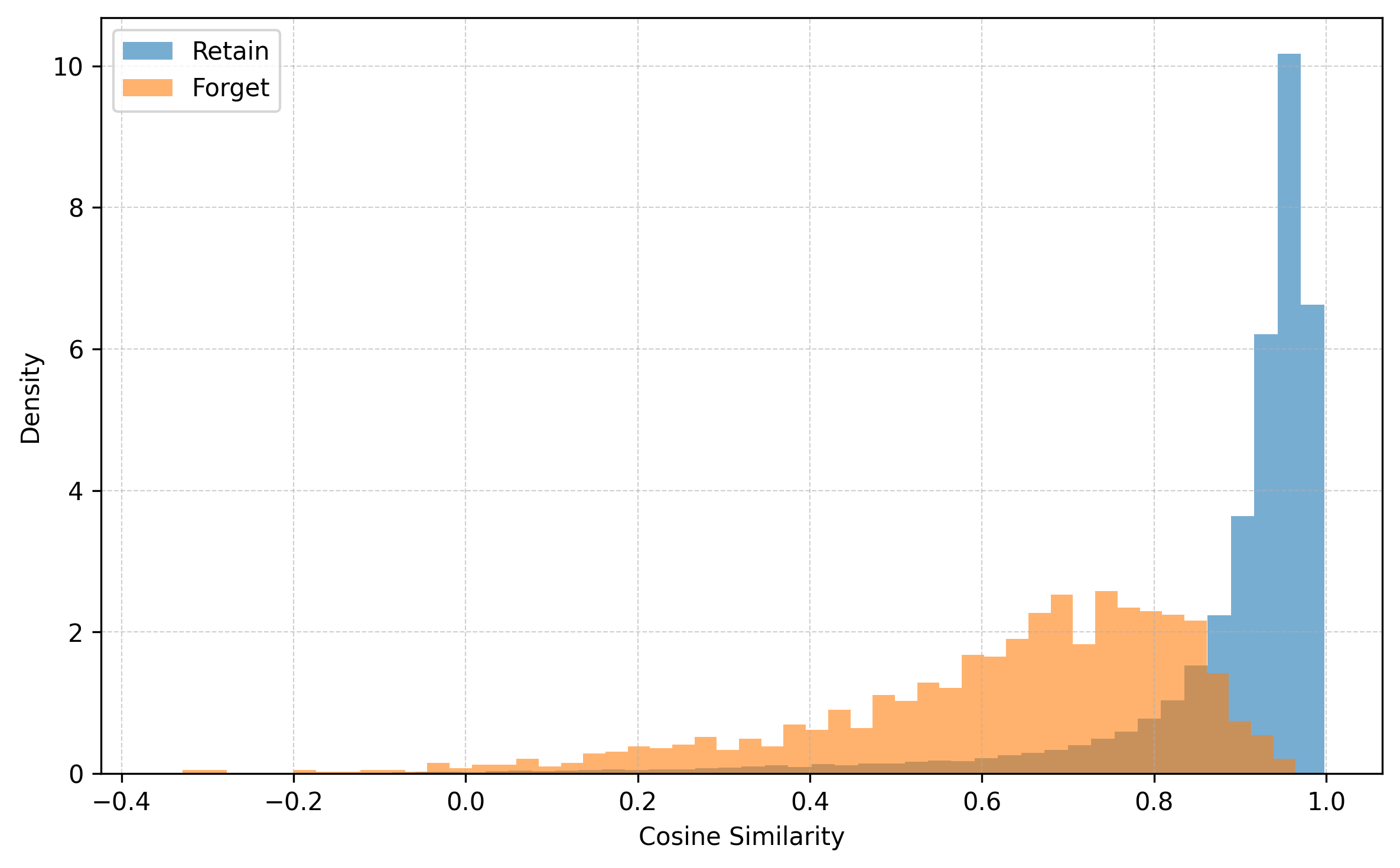} &
\includegraphics[width=0.22\textwidth]{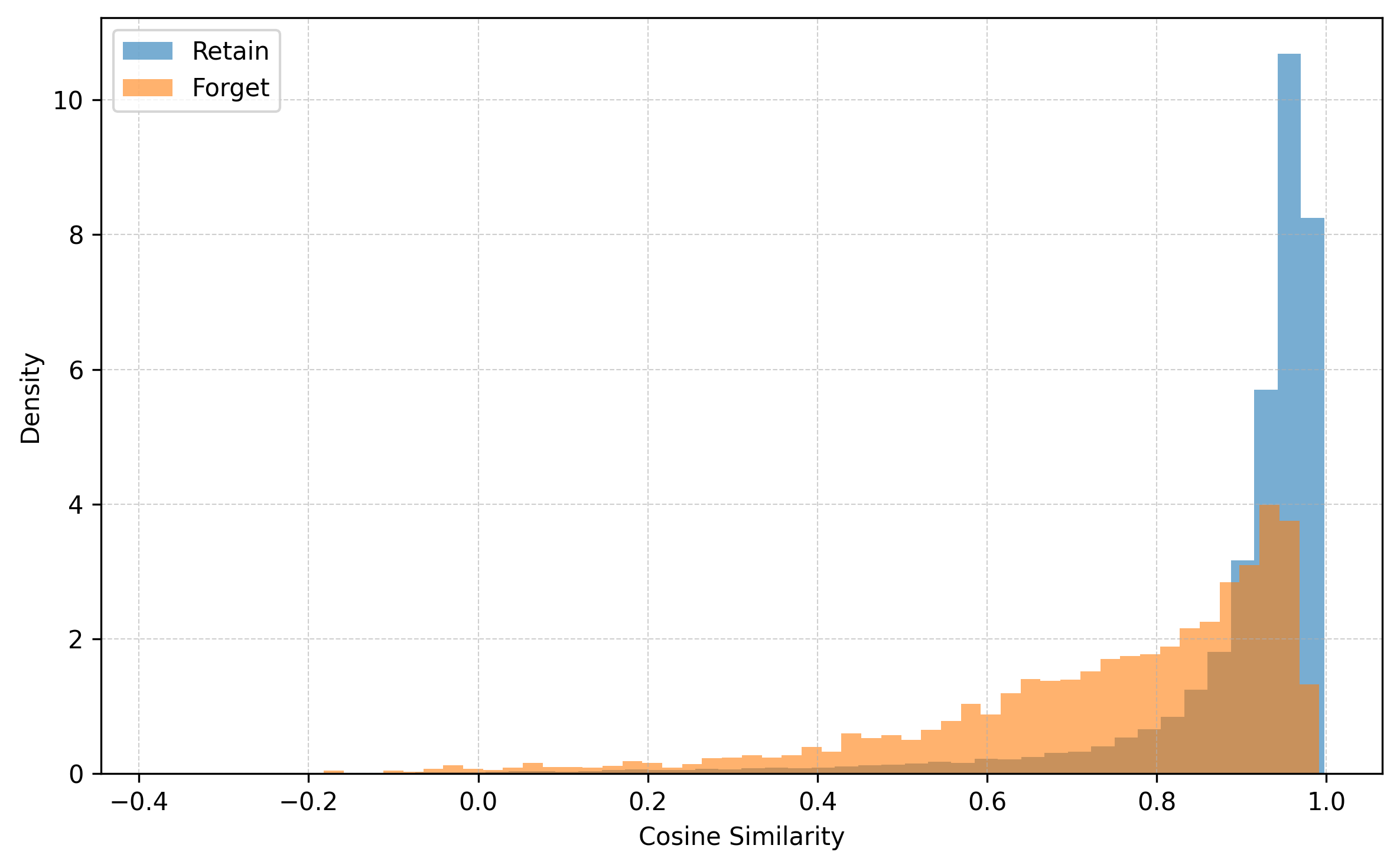} &
\includegraphics[width=0.22\textwidth]{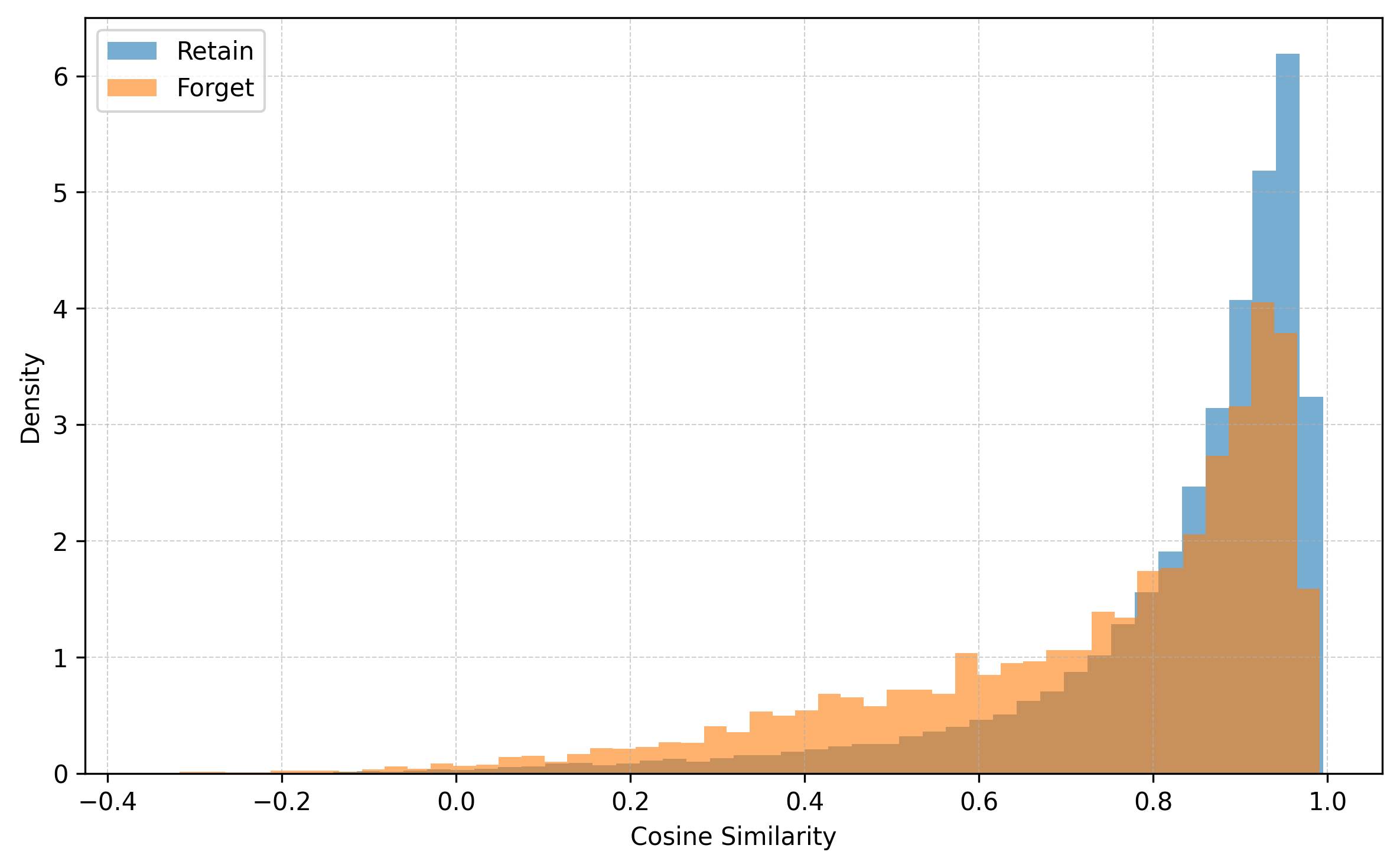} \\
\midrule

\textbf{Finetune} &
\includegraphics[width=0.22\textwidth]{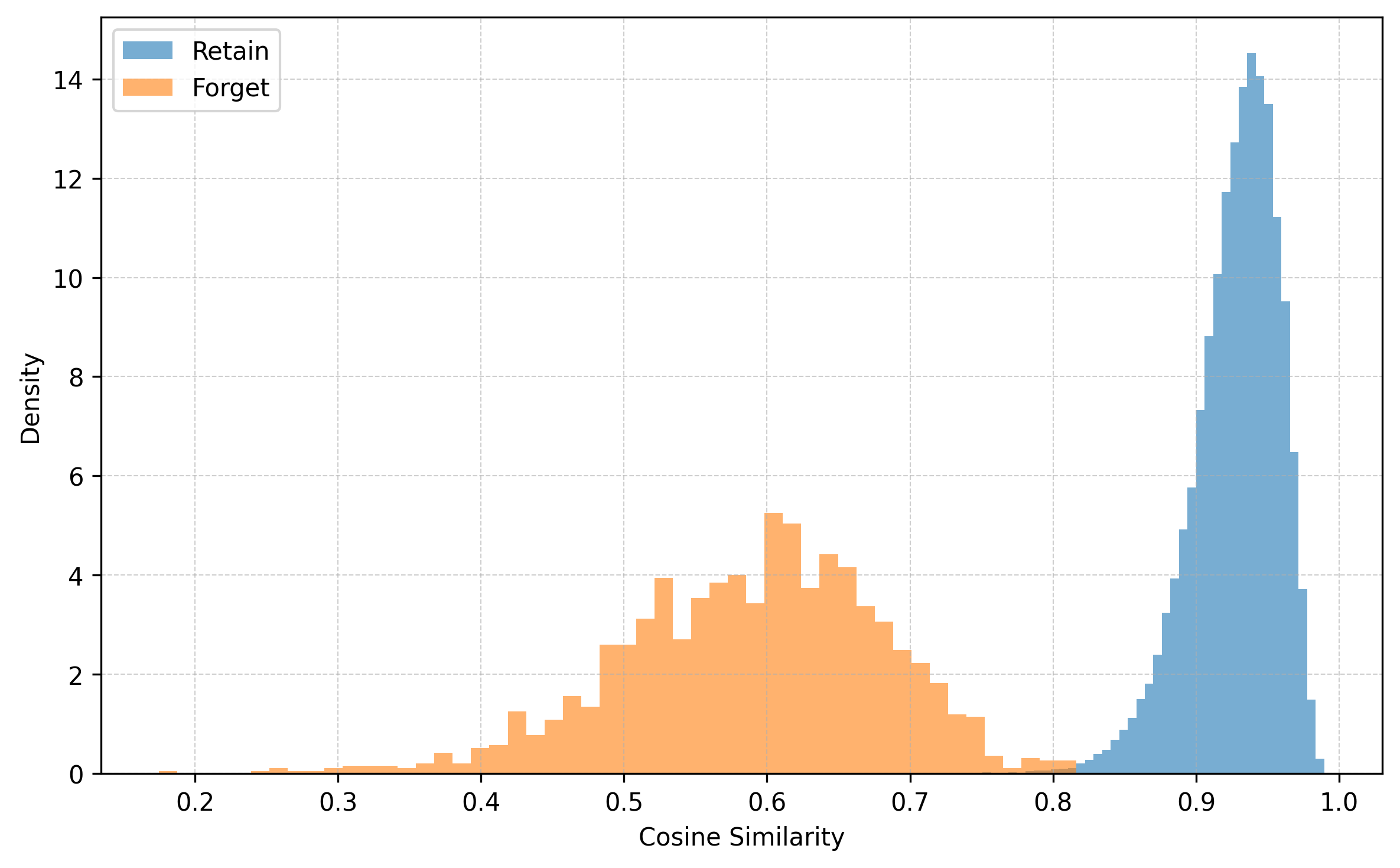} &
\includegraphics[width=0.22\textwidth]{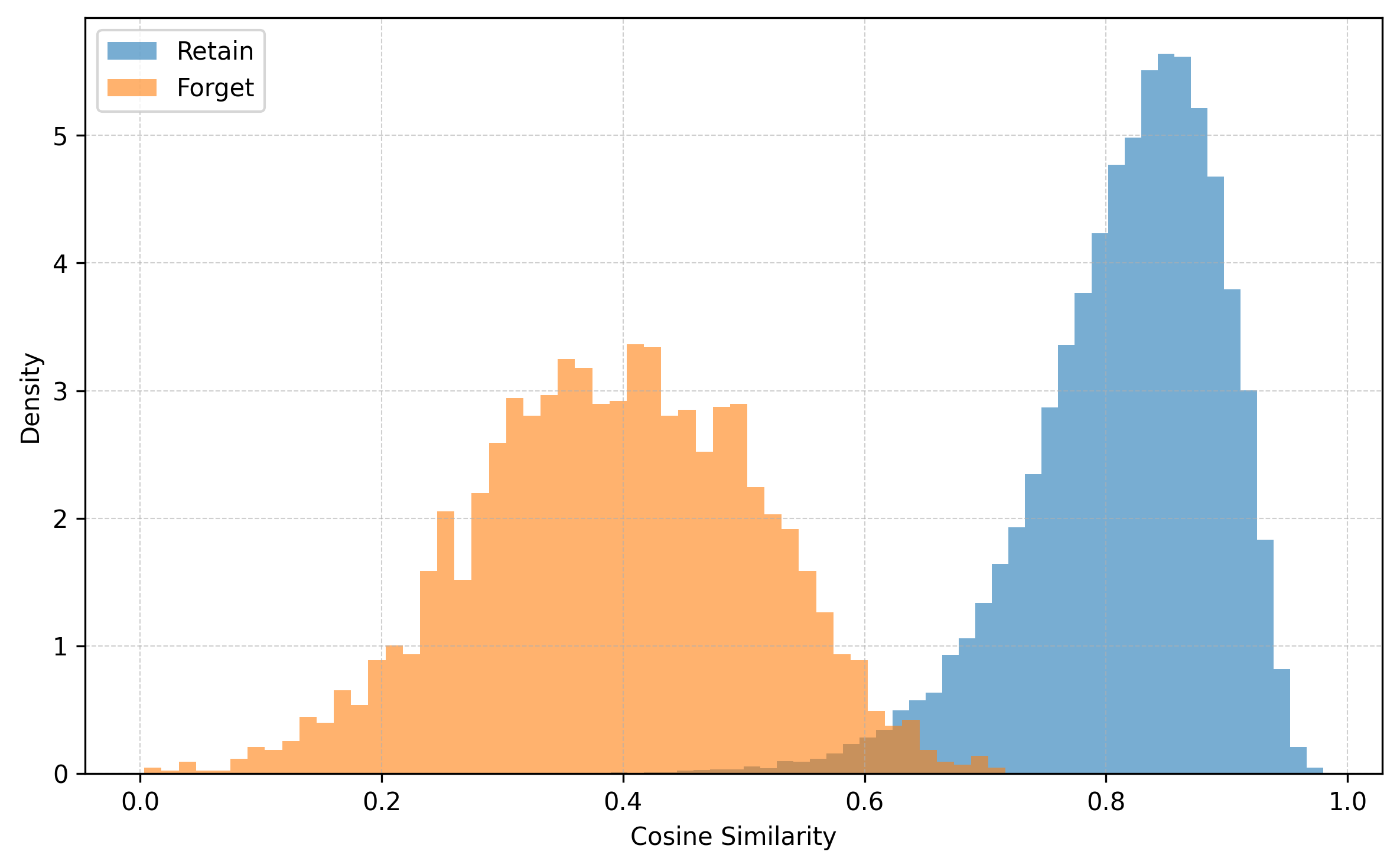} &
\includegraphics[width=0.22\textwidth]{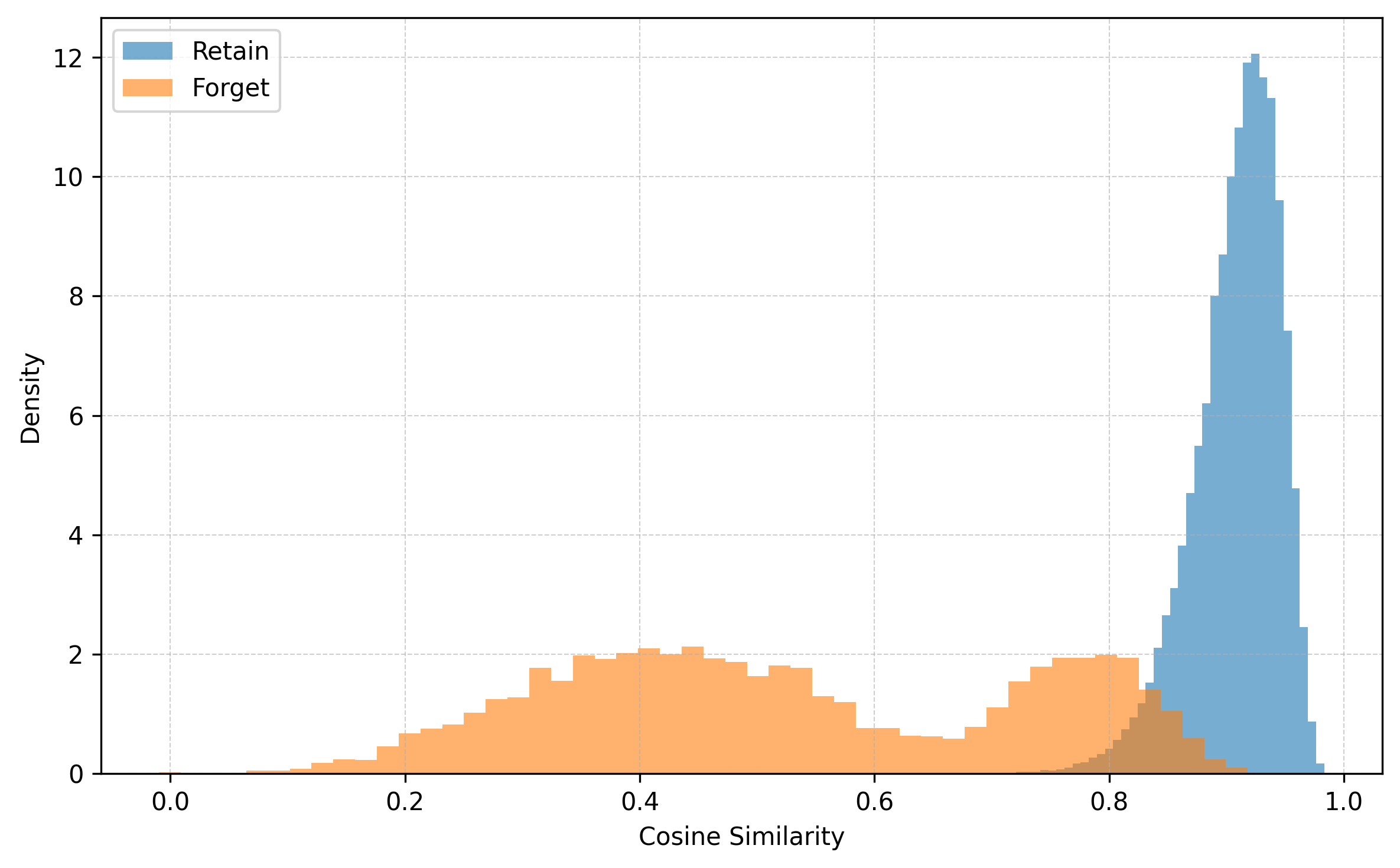} \\
\midrule

\textbf{SAFER} &
\includegraphics[width=0.22\textwidth]{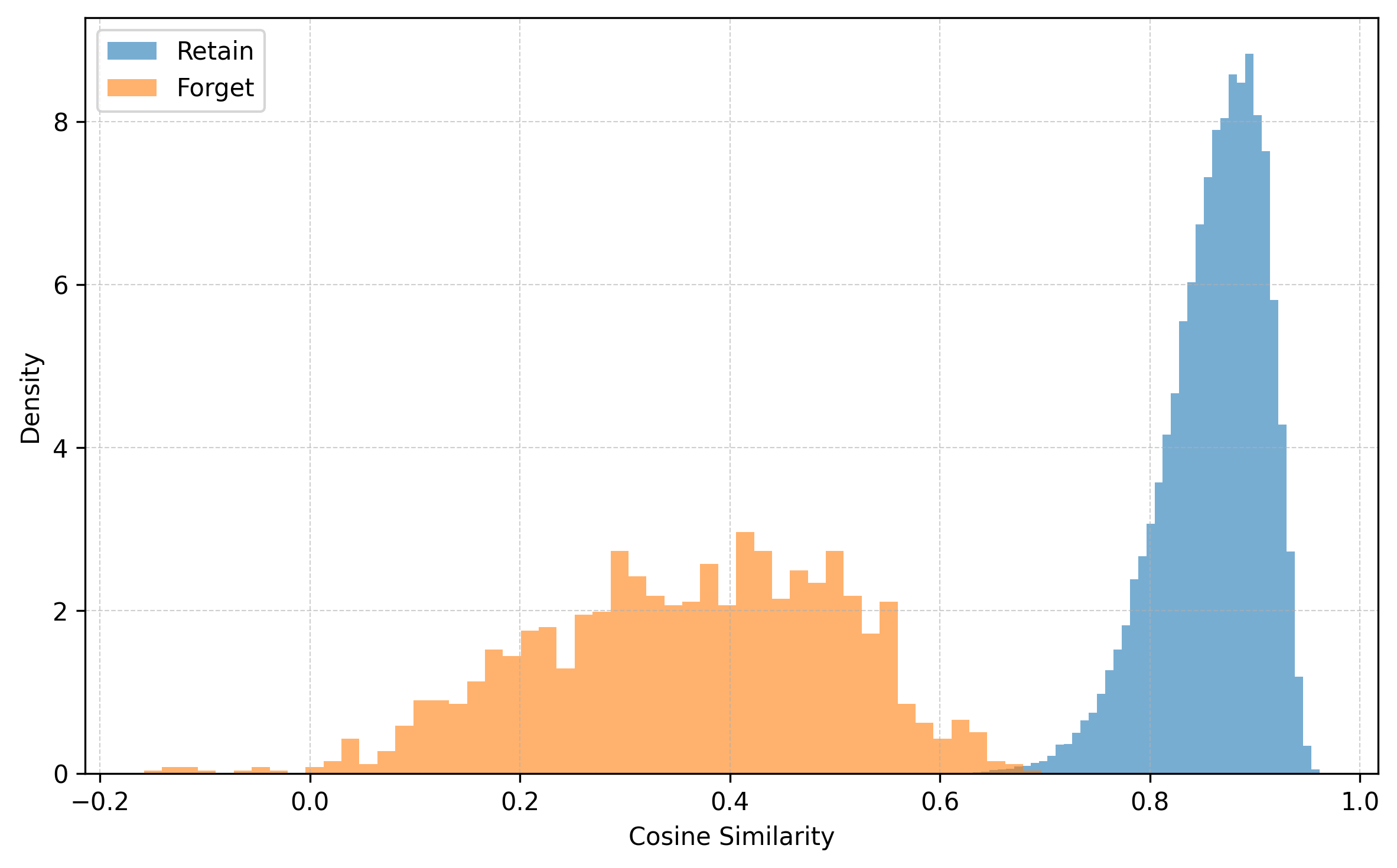} &
\includegraphics[width=0.22\textwidth]{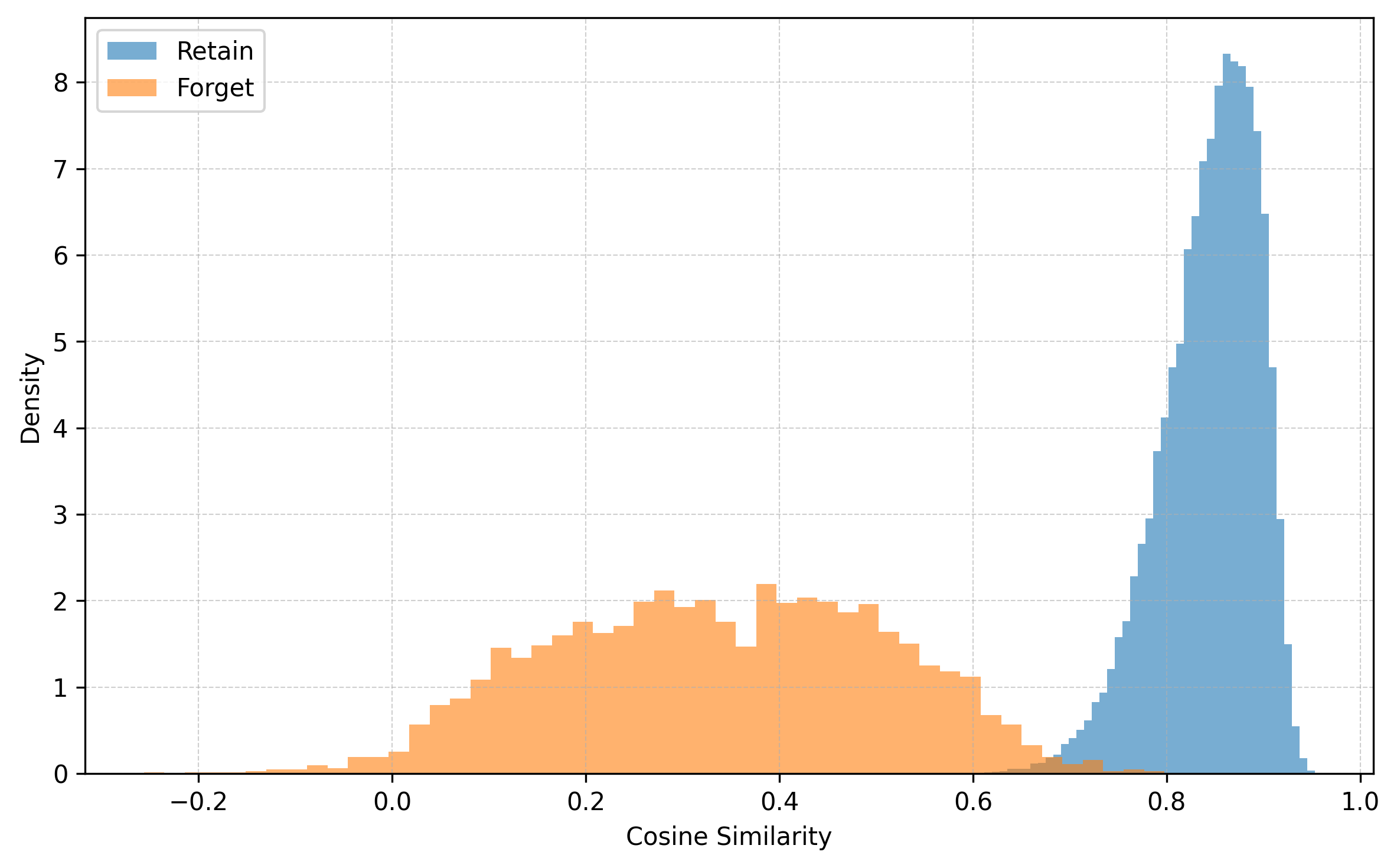} &
\includegraphics[width=0.22\textwidth]{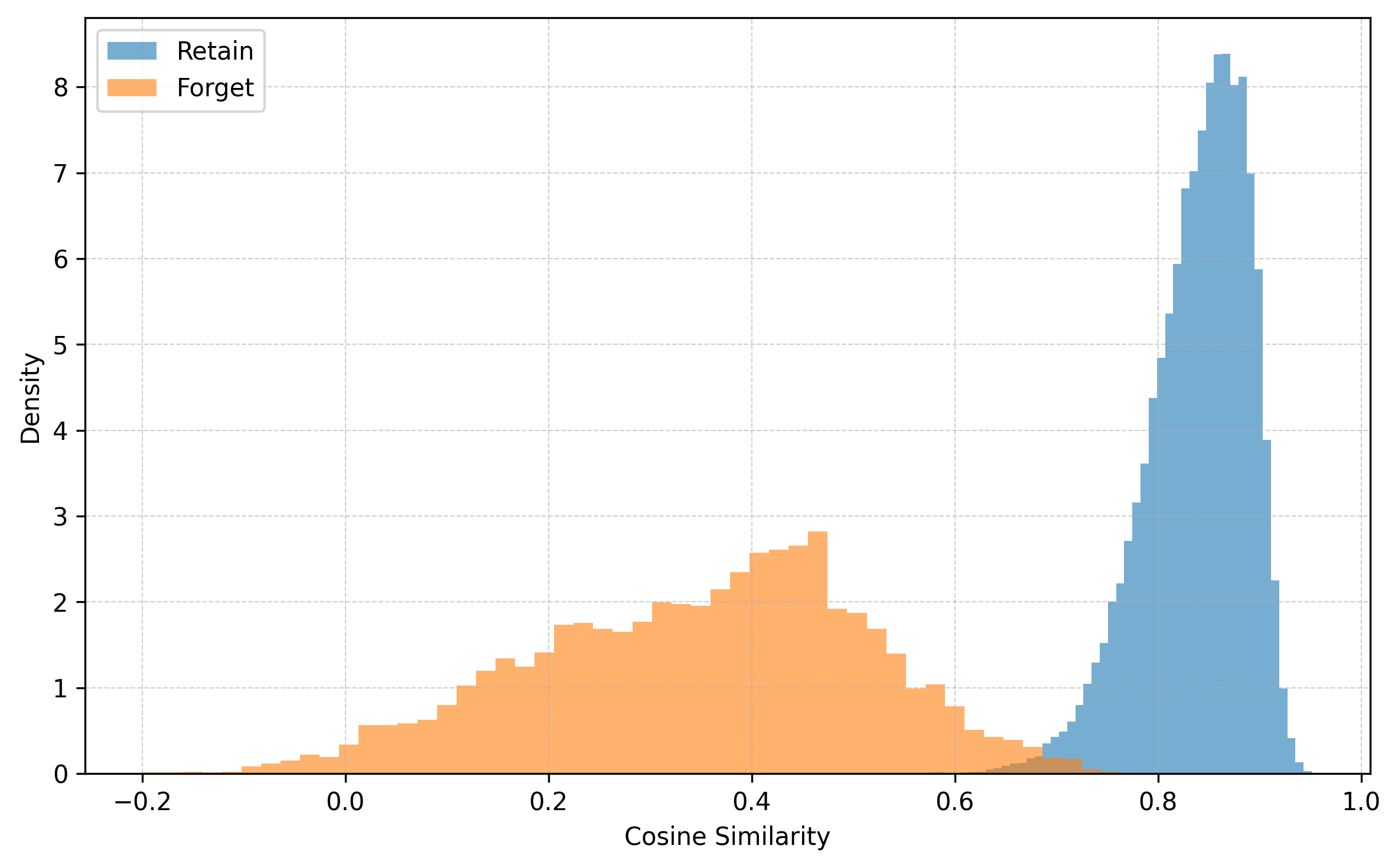} \\

\bottomrule

\end{tabular}
\caption{Representation similarity over the three-phase unlearning process on CIFAR-100. The orange shows the retain data distribution, while the blue shows the forget data distribution.}
\label{fig:c100_similarity}
\end{figure*}

%% file: figs/fig_similarity_vgg2.tex
\begin{figure*}[t]
\centering
\setlength{\tabcolsep}{1pt}
\begin{tabular}{c|ccc}
\toprule
Method & {Phase 1} & {Phase 2} & {Phase 3}\\
\midrule

\textbf{Retrain} &
\includegraphics[width=0.22\textwidth]{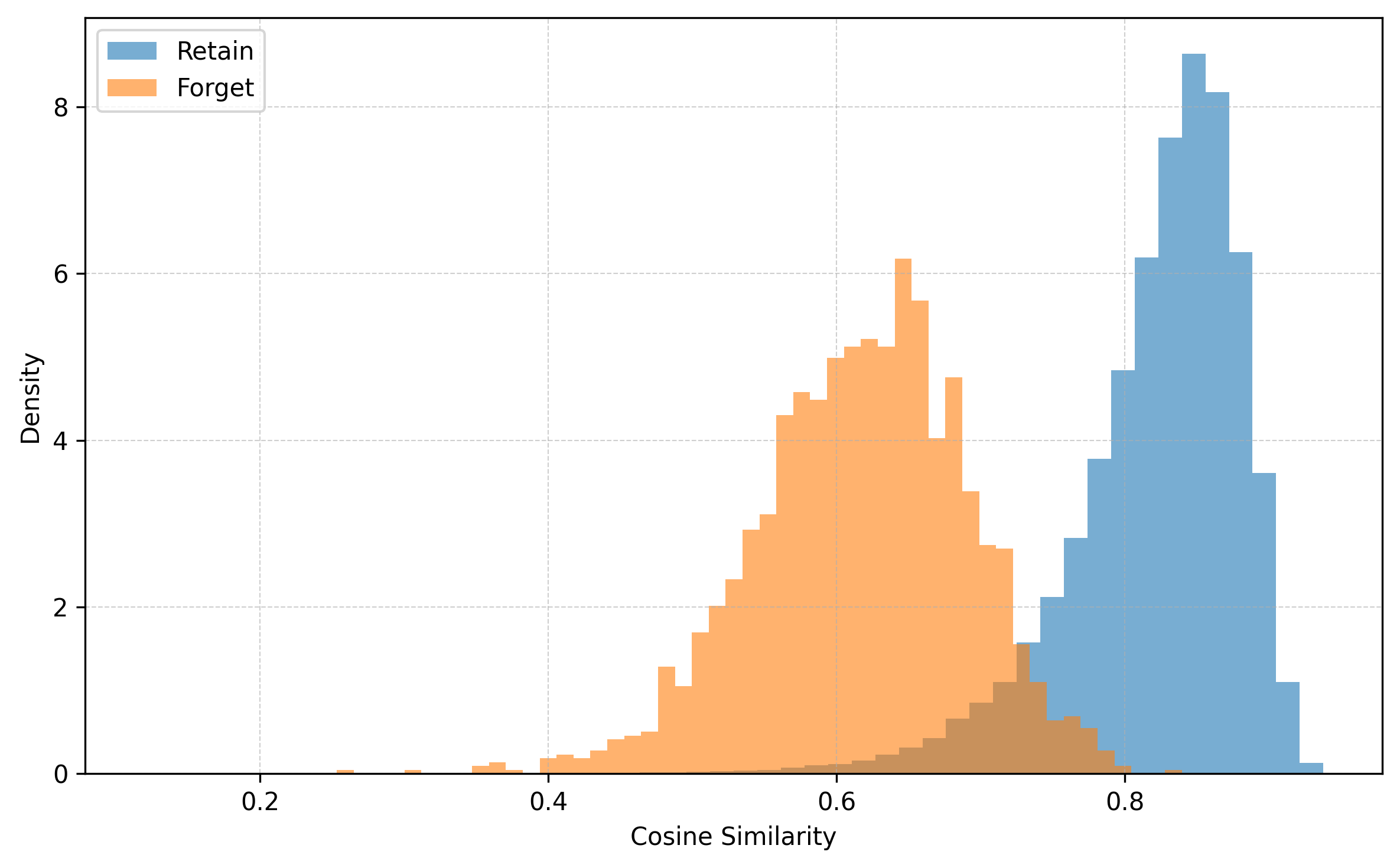} &
\includegraphics[width=0.22\textwidth]{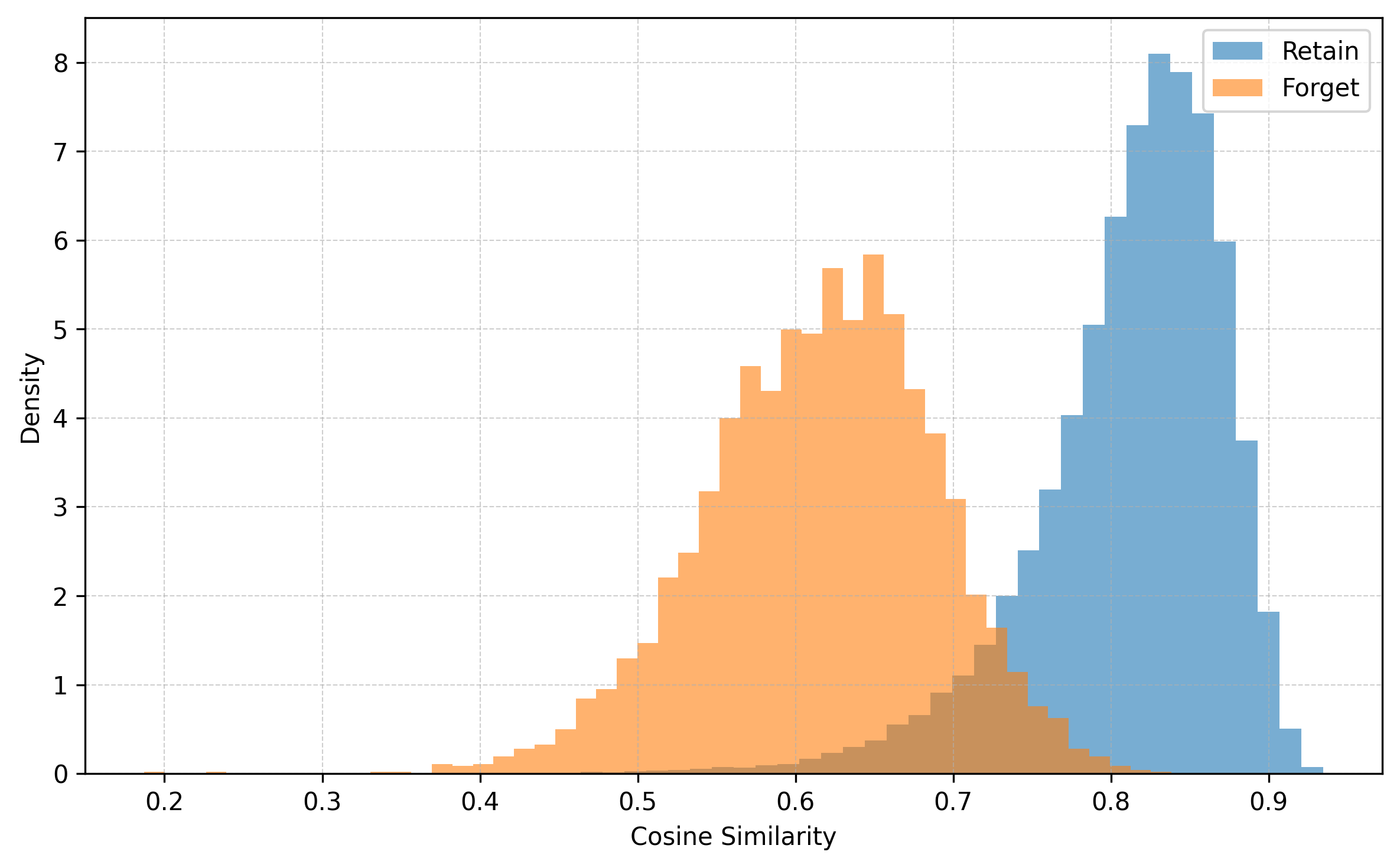} &
\includegraphics[width=0.22\textwidth]{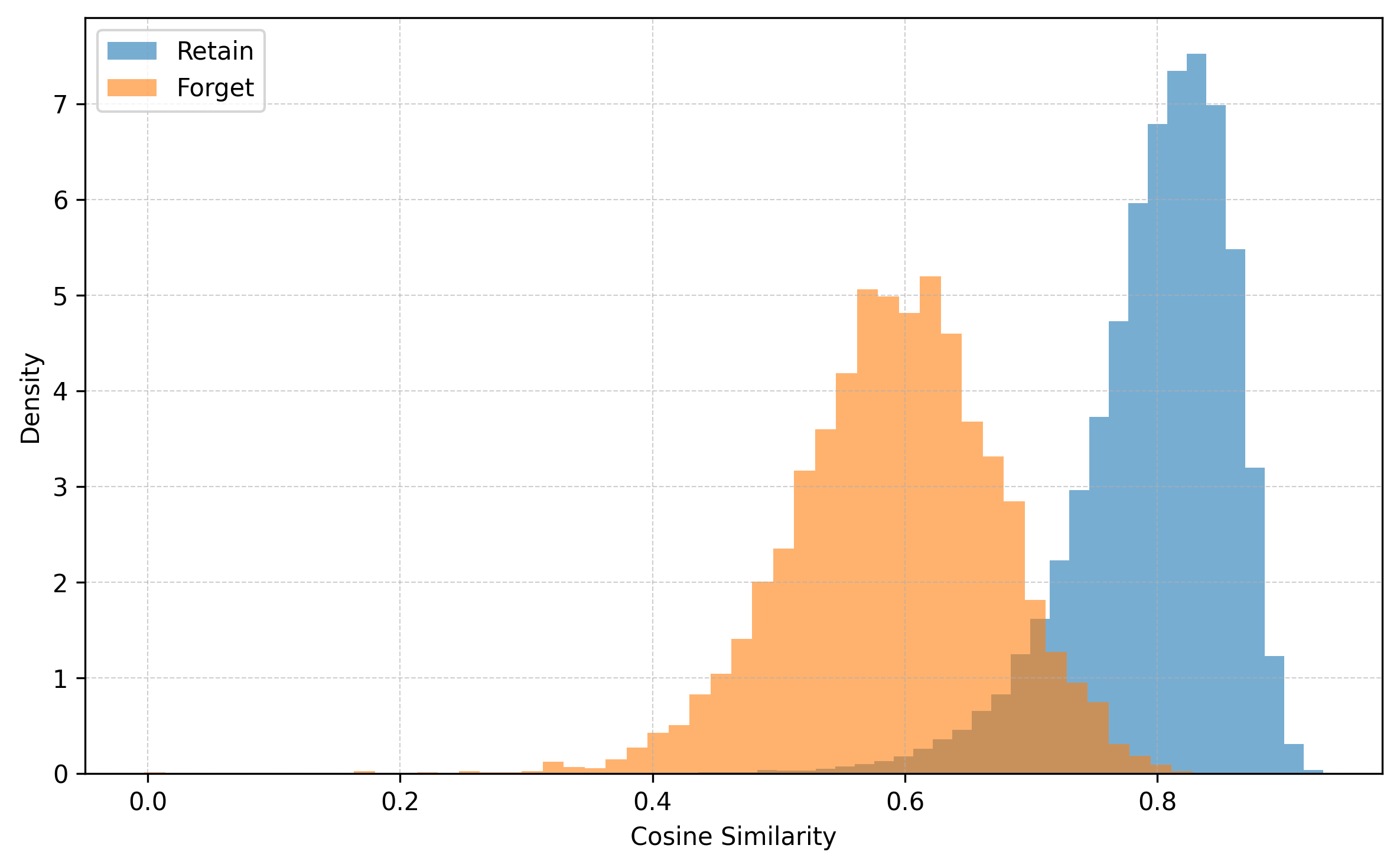} \\
\midrule

\textbf{SCRUB} &
\includegraphics[width=0.22\textwidth]{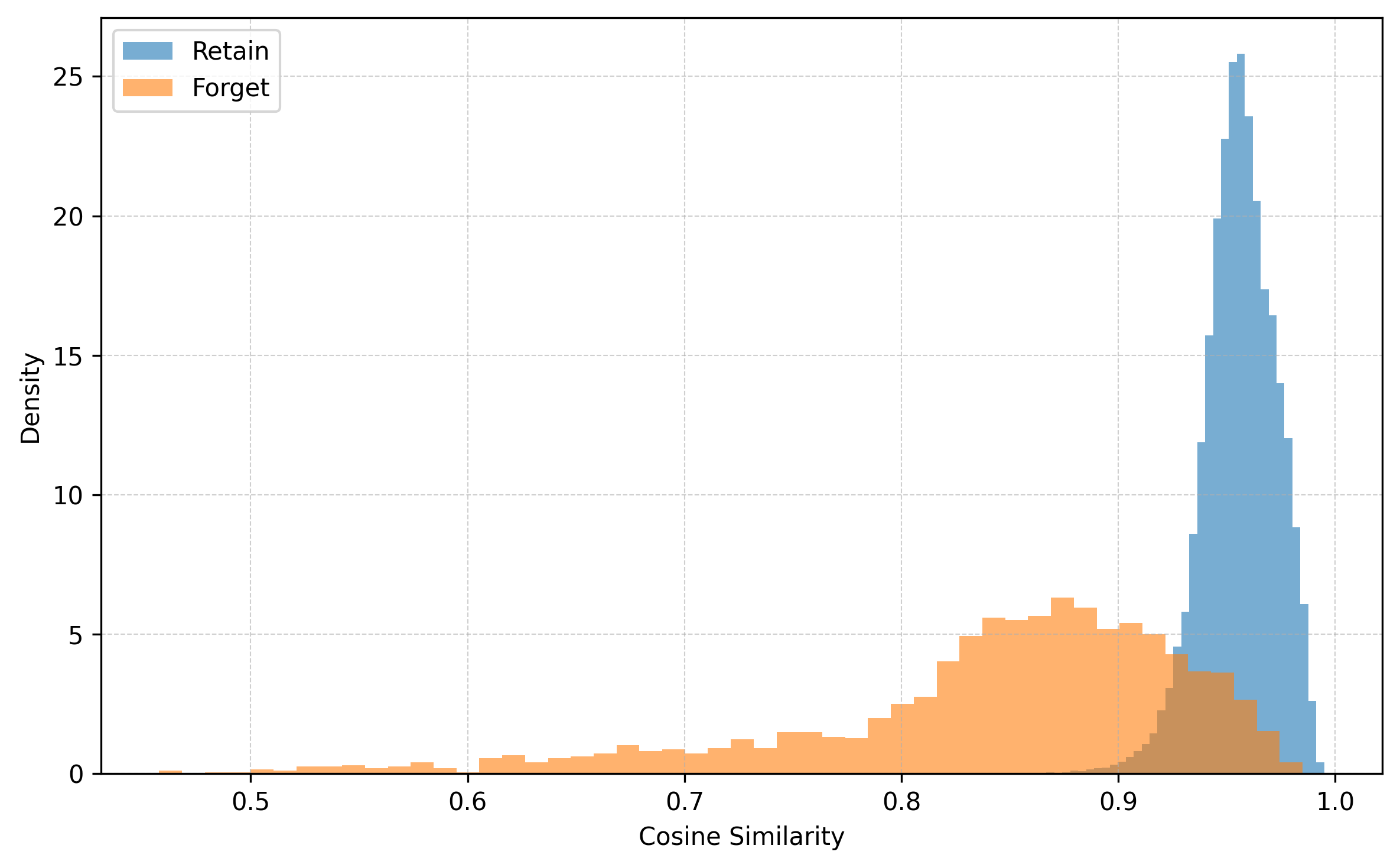} &
\includegraphics[width=0.22\textwidth]{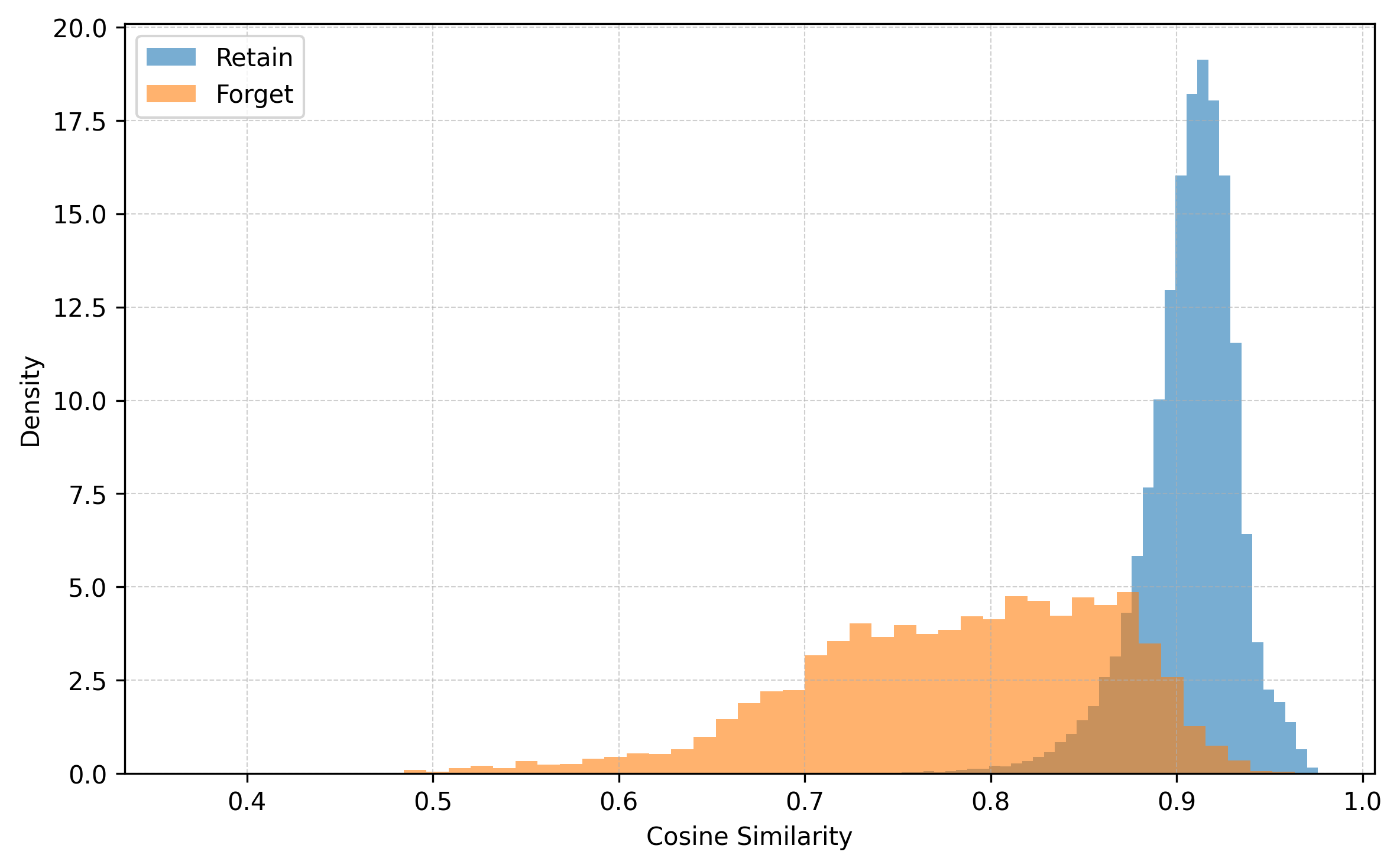} &
\includegraphics[width=0.22\textwidth]{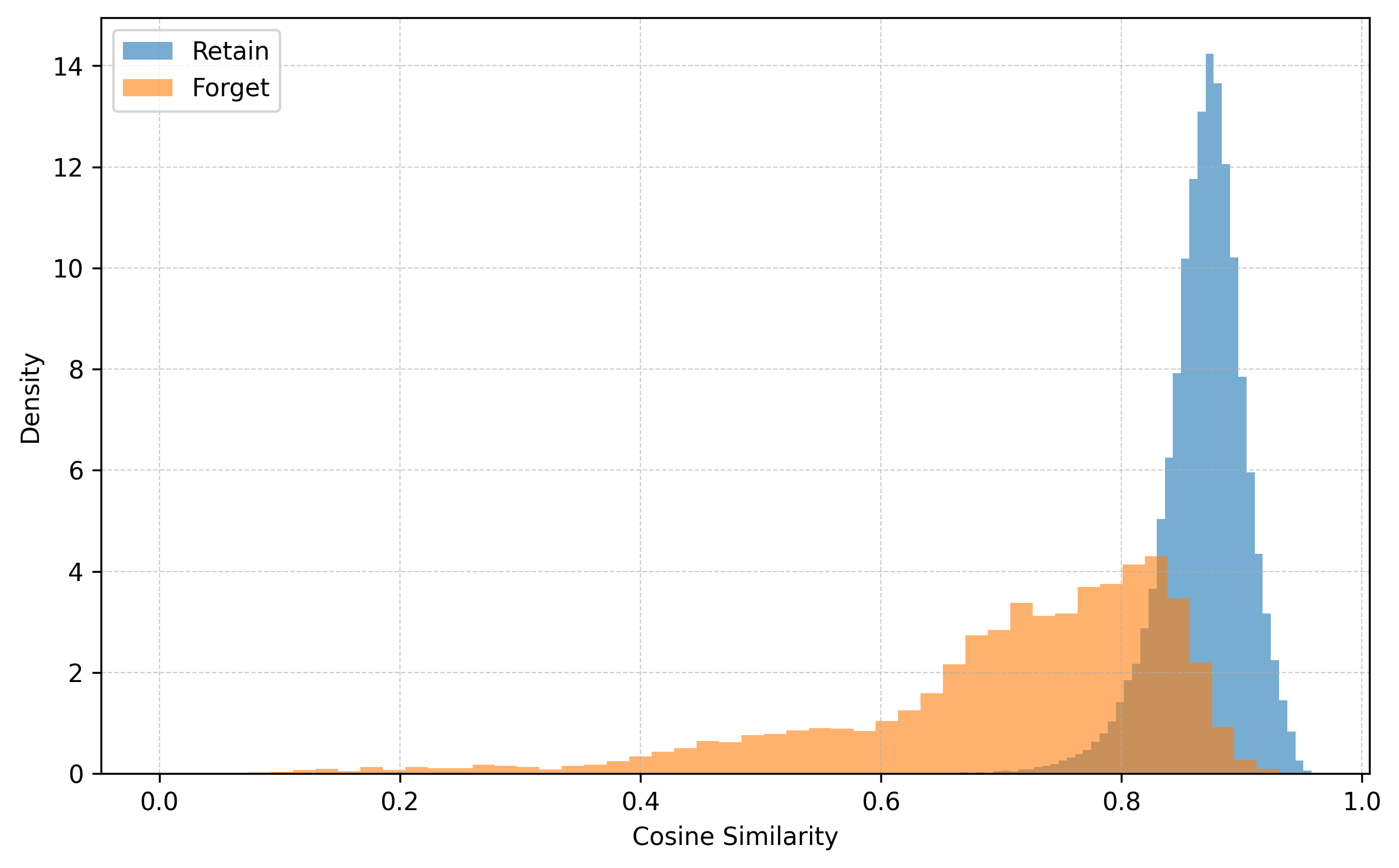} \\
\midrule

\textbf{SALUN} &
\includegraphics[width=0.22\textwidth]{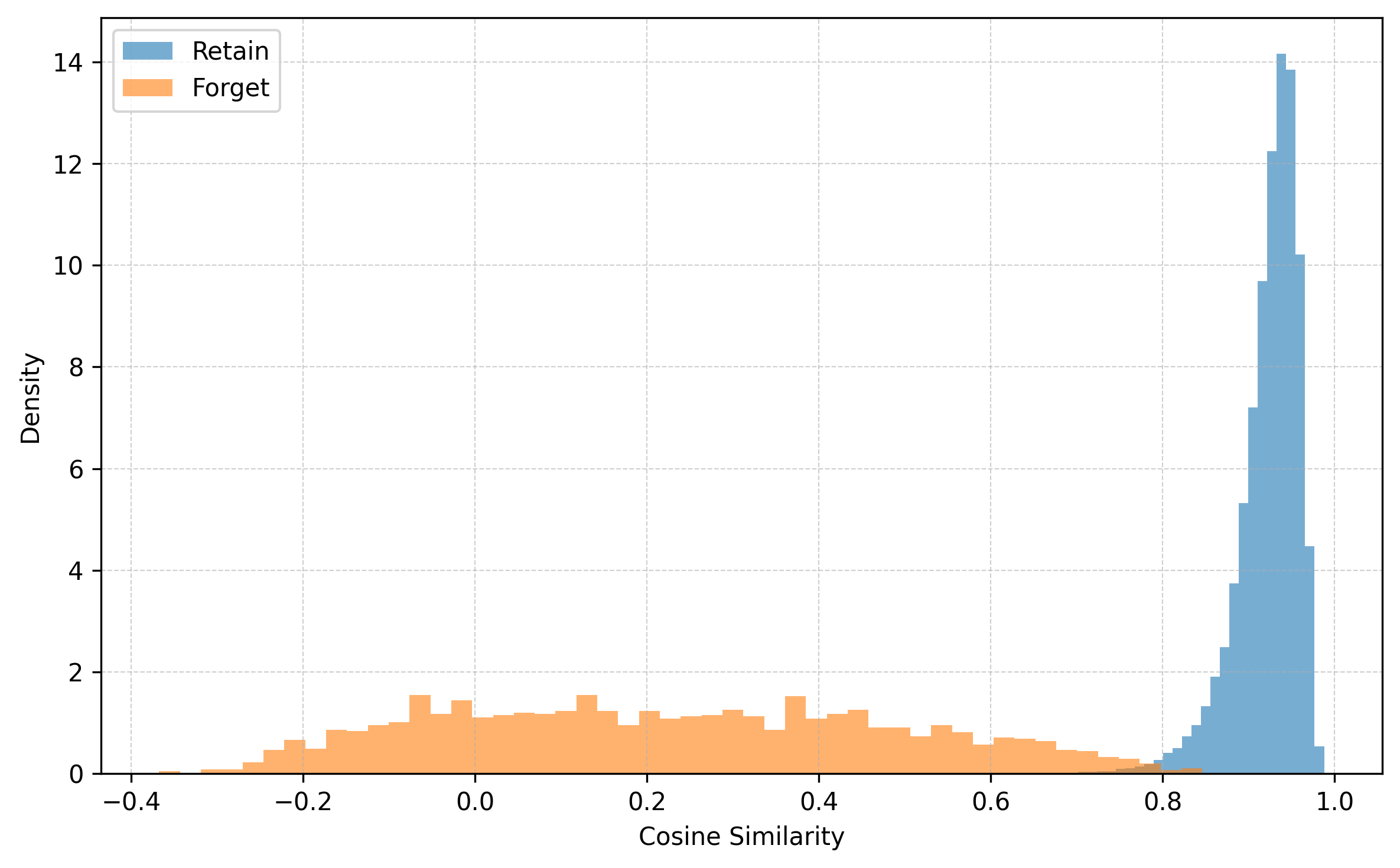} &
\includegraphics[width=0.22\textwidth]{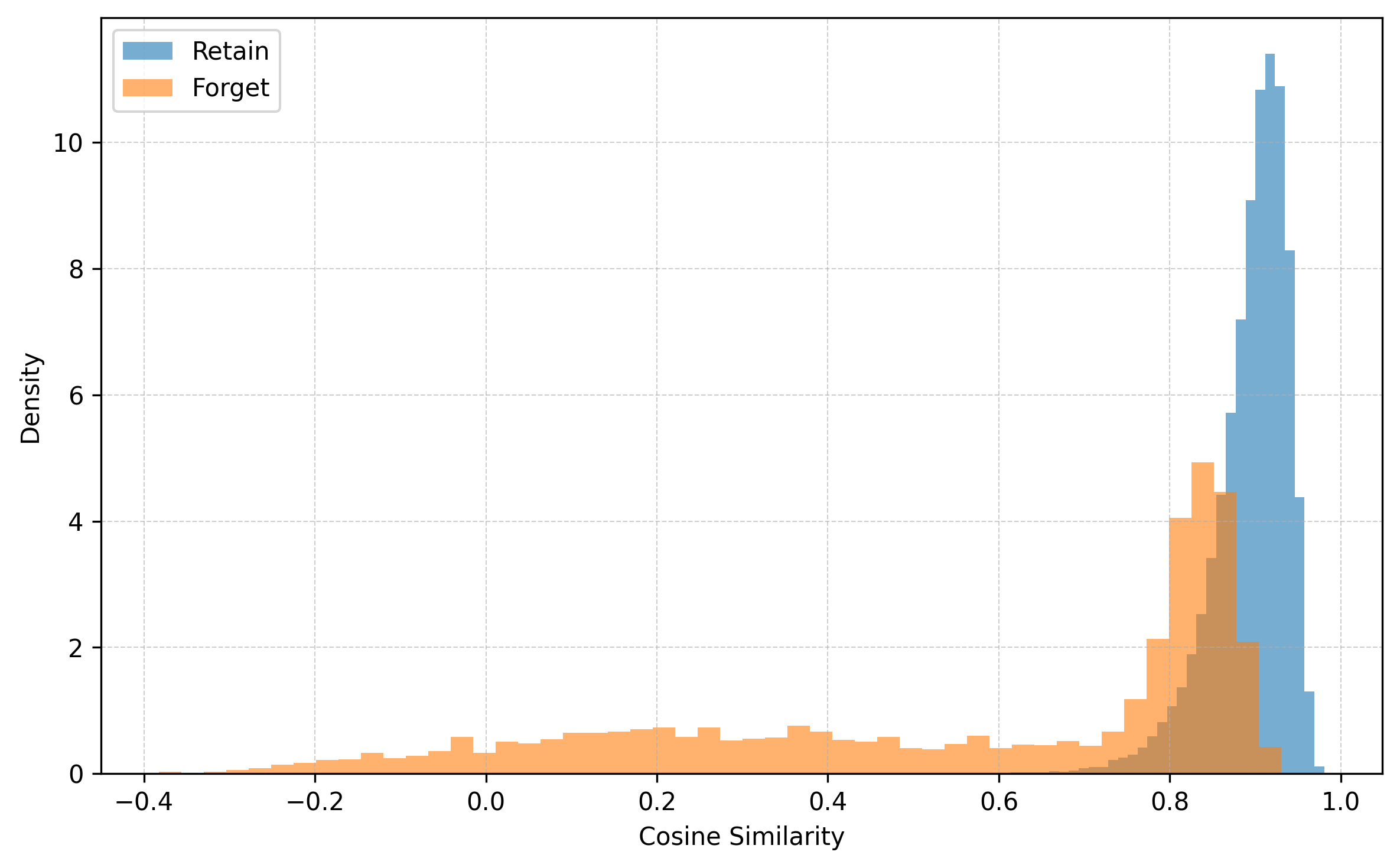} &
\includegraphics[width=0.22\textwidth]{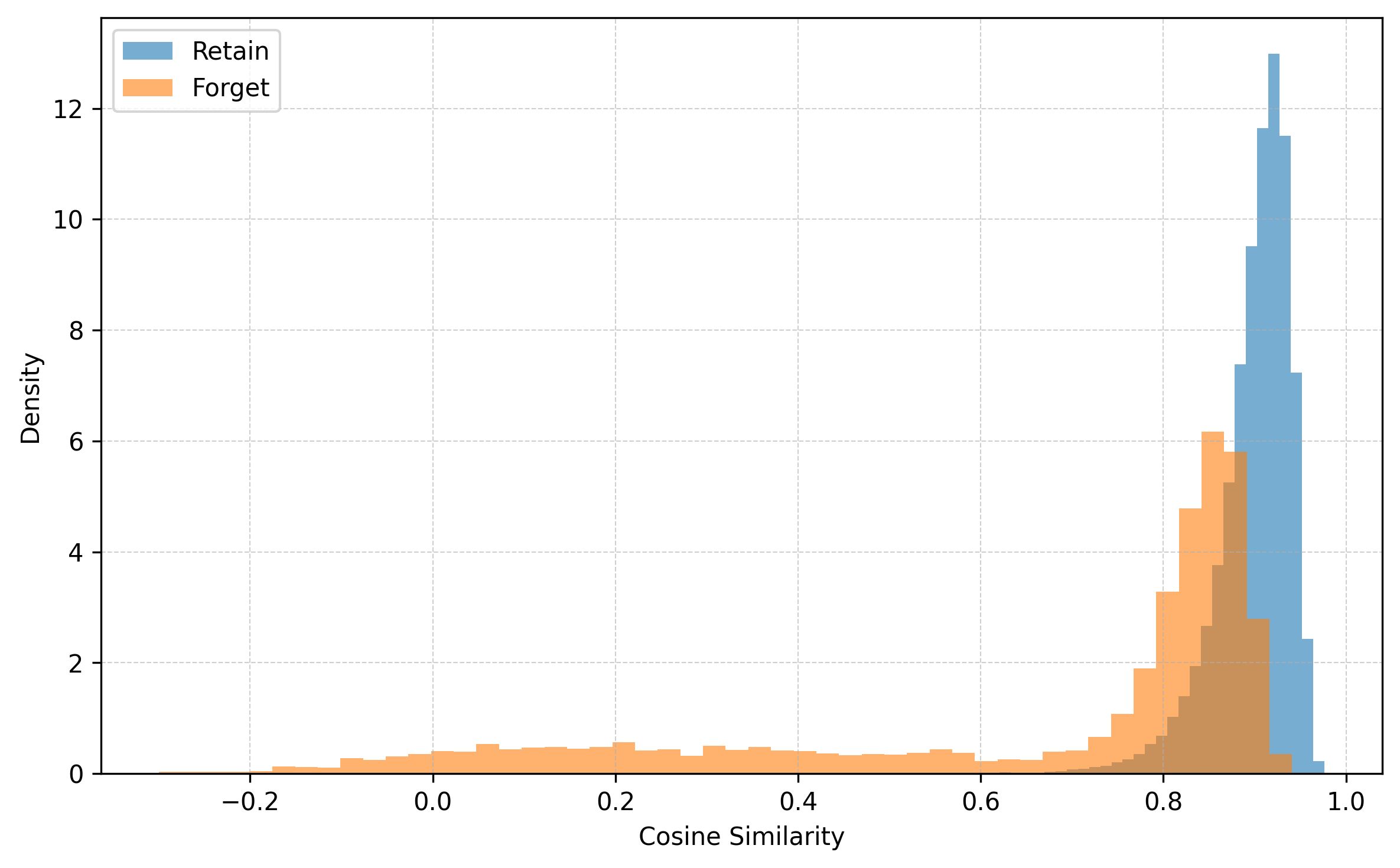} \\
\midrule

\textbf{SSD} &
\includegraphics[width=0.22\textwidth]{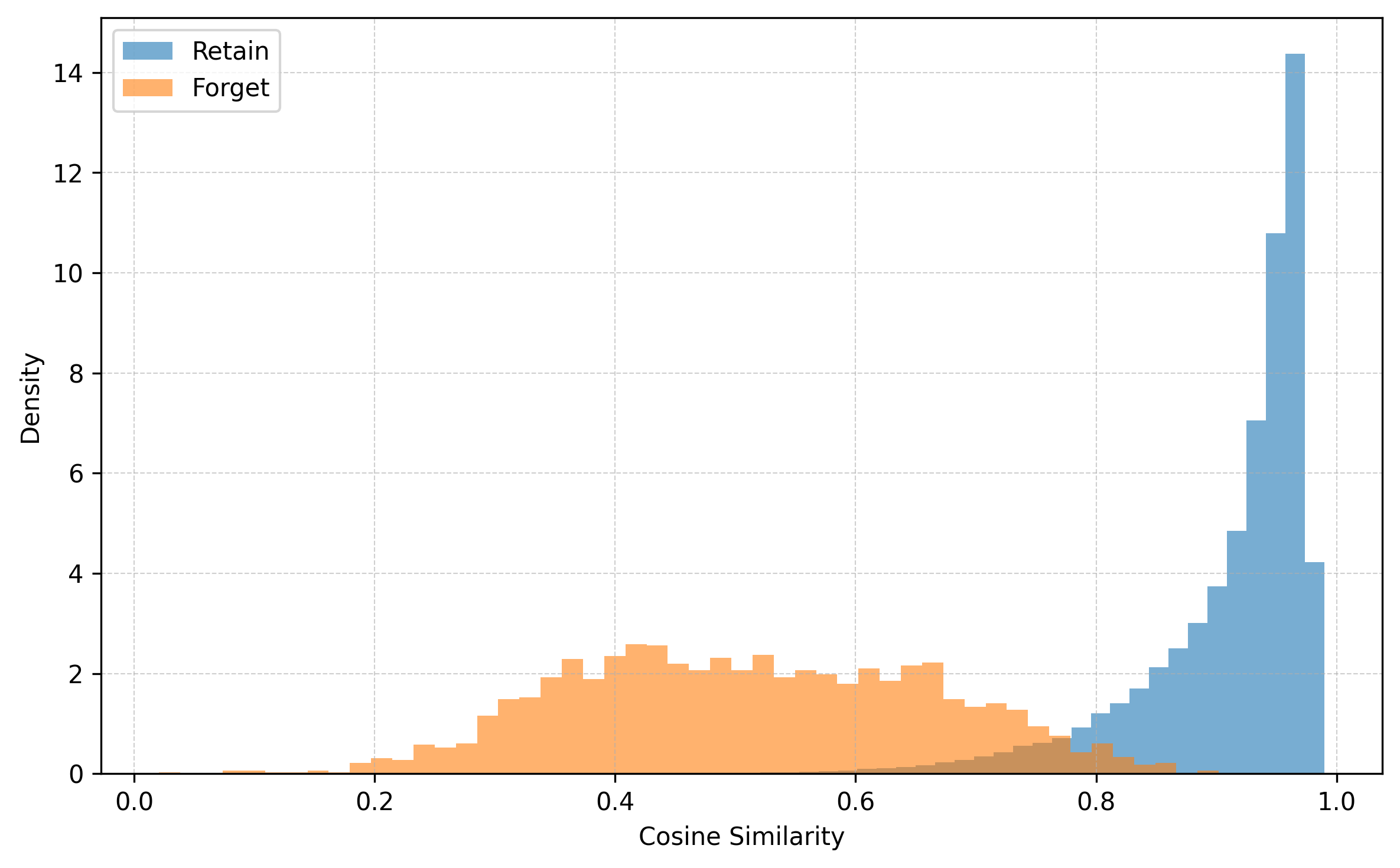} &
\includegraphics[width=0.22\textwidth]{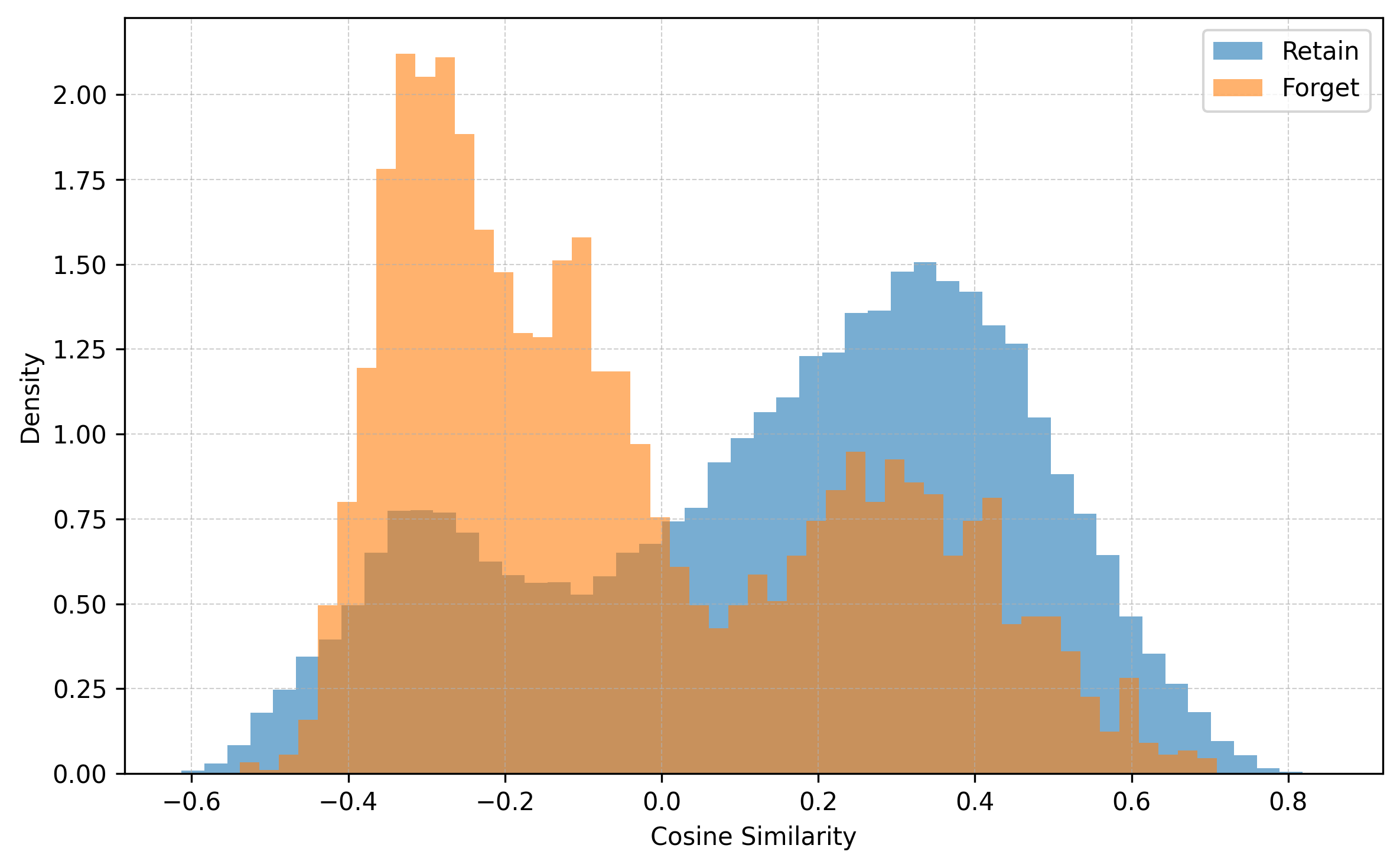} &
\includegraphics[width=0.22\textwidth]{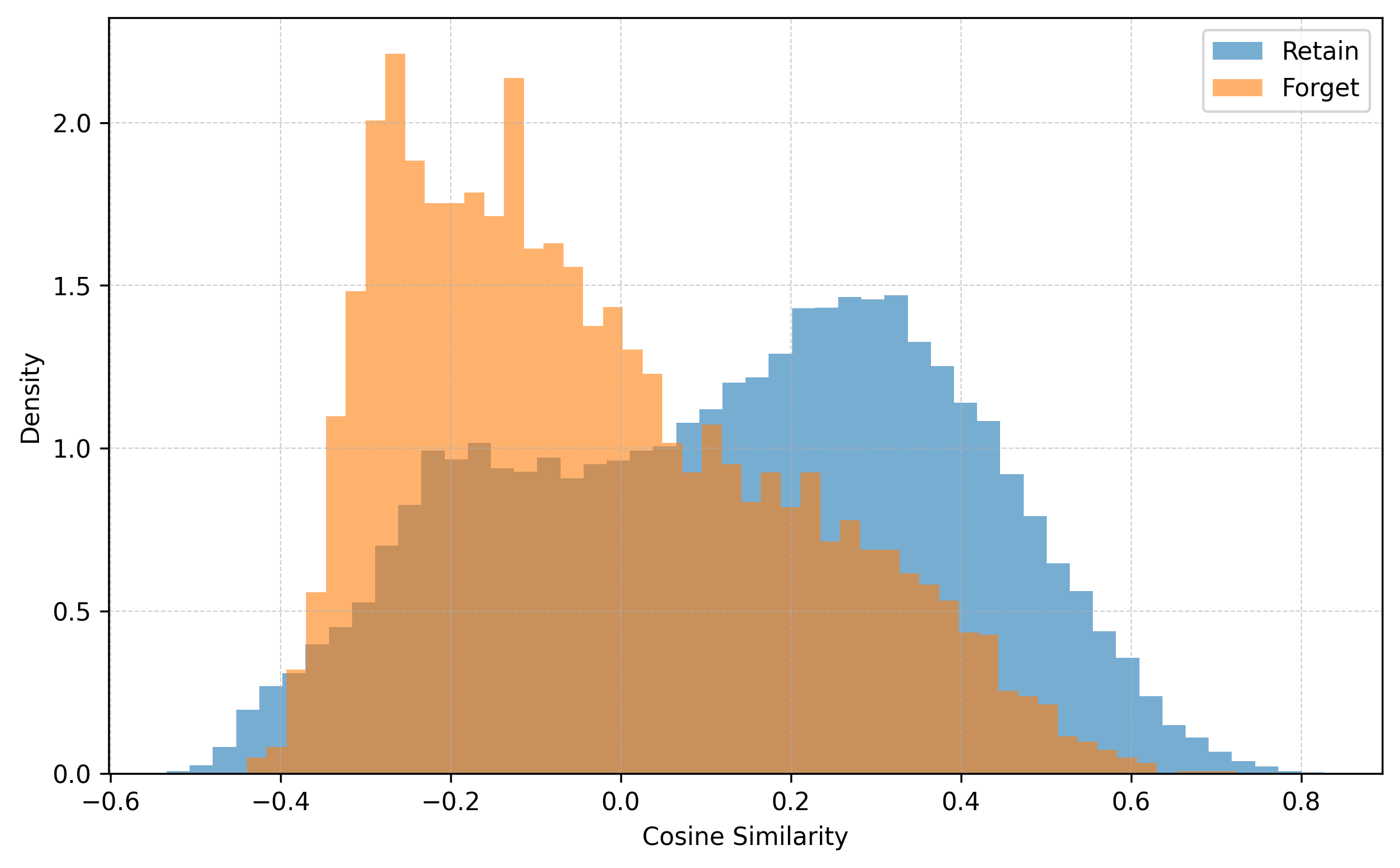} \\
\midrule

\textbf{BndShrink}  &
\includegraphics[width=0.22\textwidth]{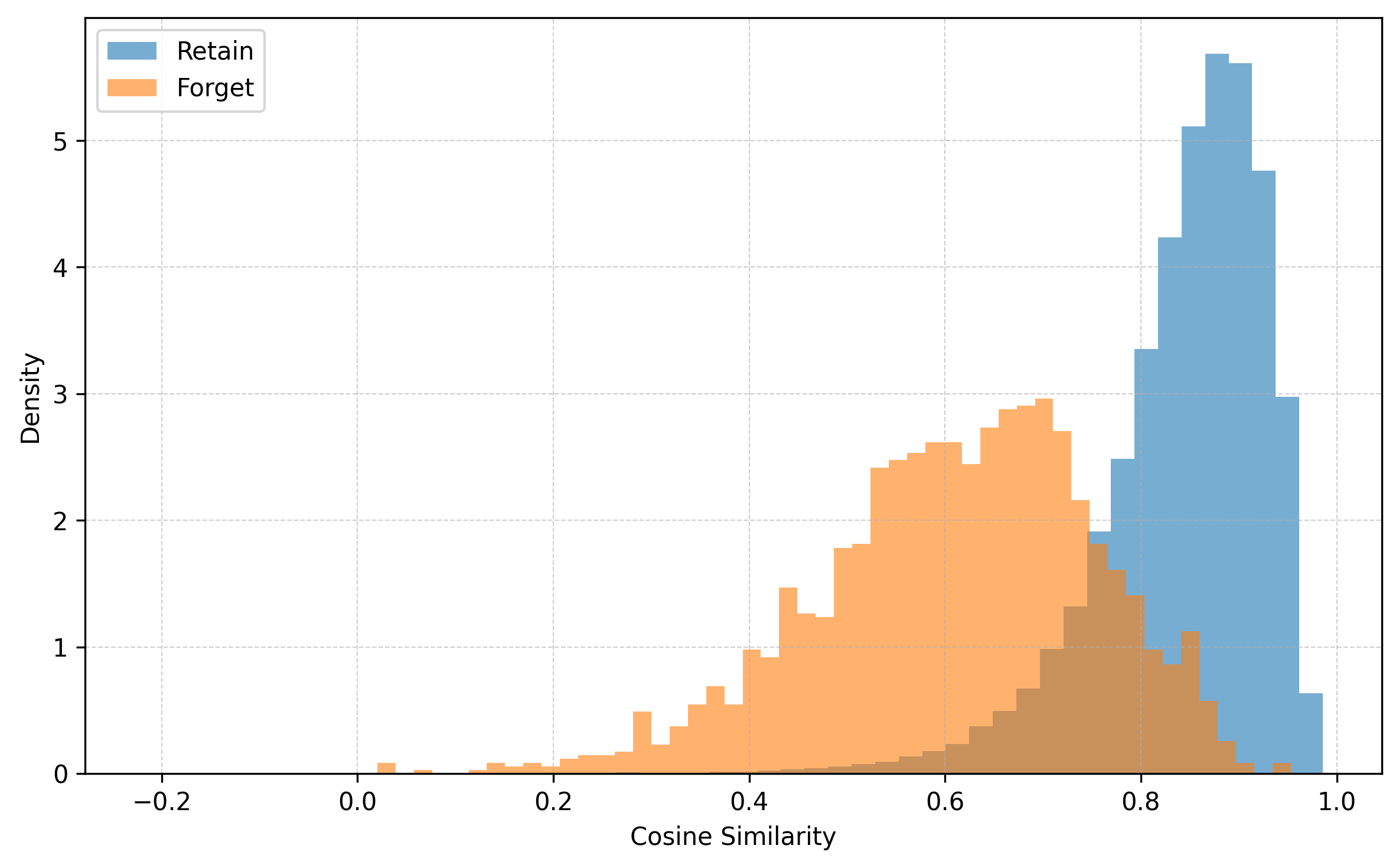} &
\includegraphics[width=0.22\textwidth]{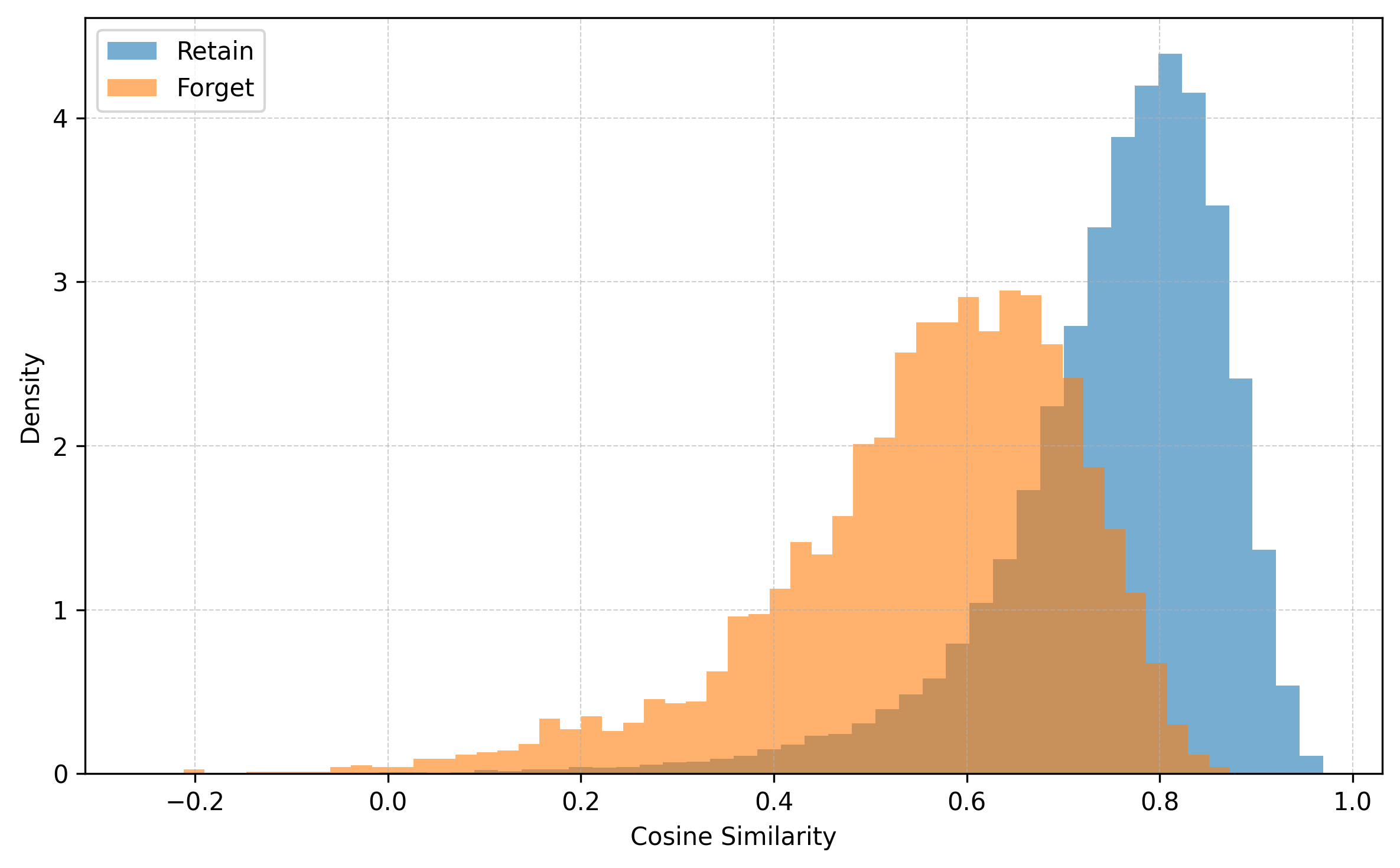} &
\includegraphics[width=0.22\textwidth]{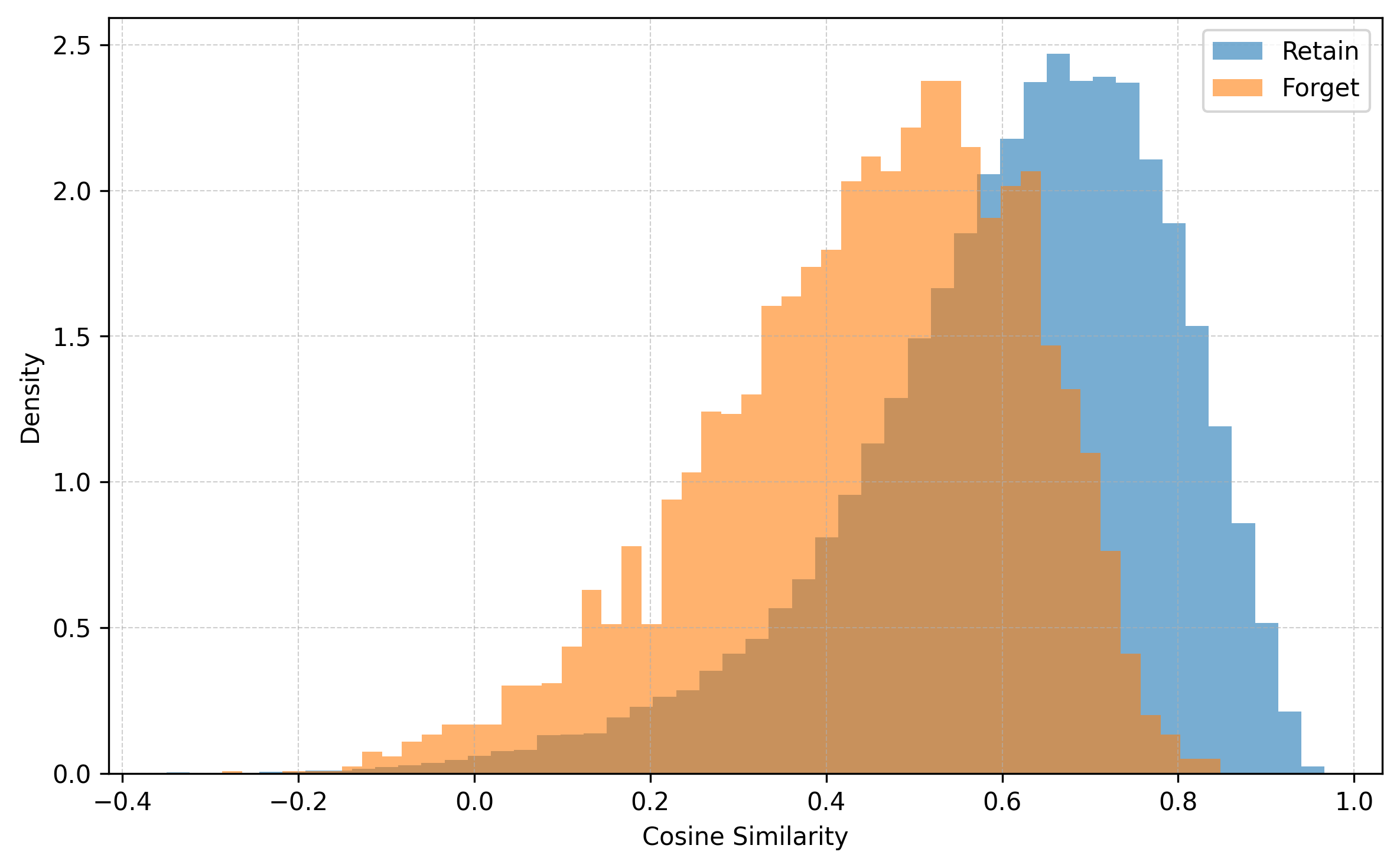} \\
\midrule

\textbf{NegGrad} &
\includegraphics[width=0.22\textwidth]{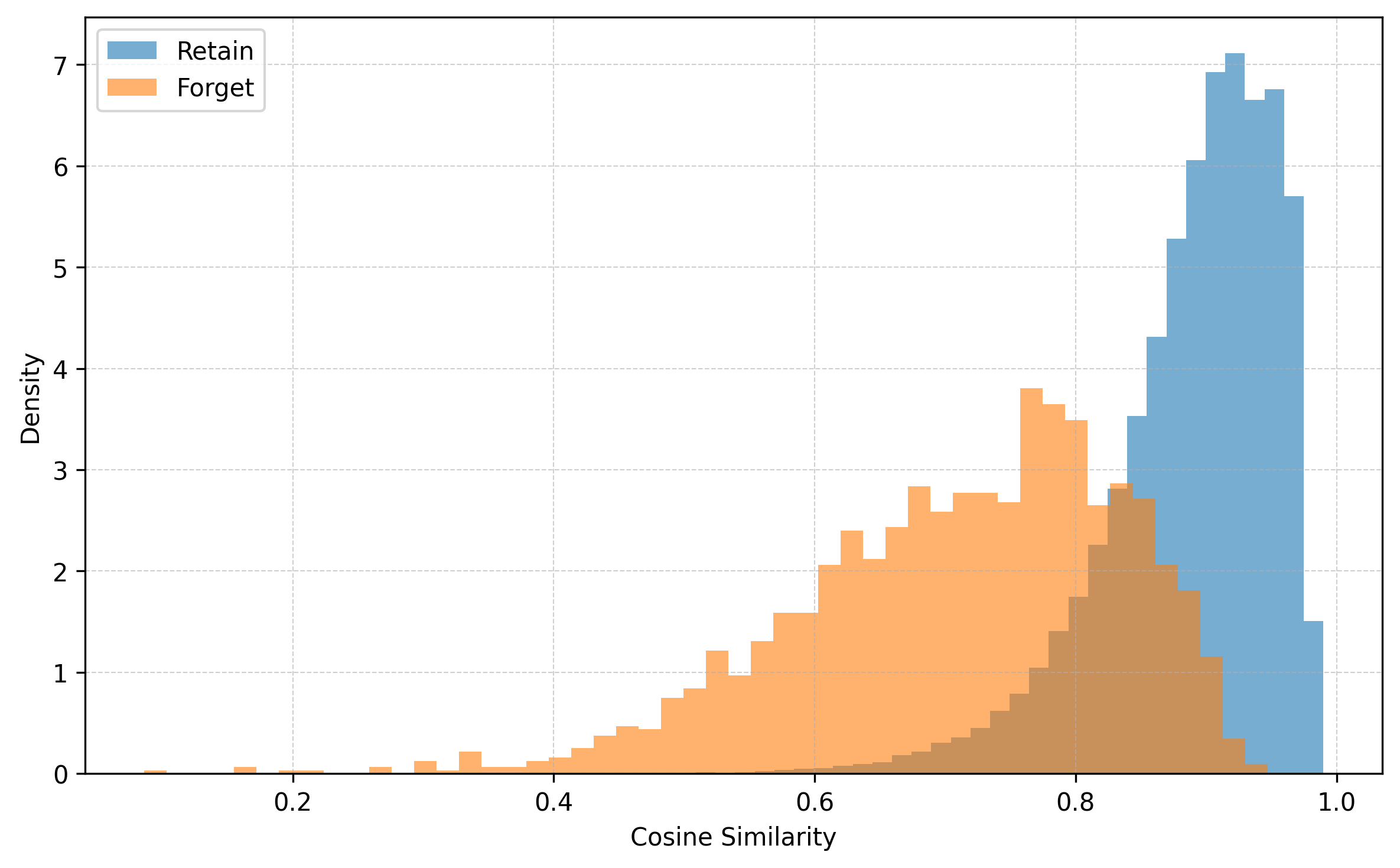} &
\includegraphics[width=0.22\textwidth]{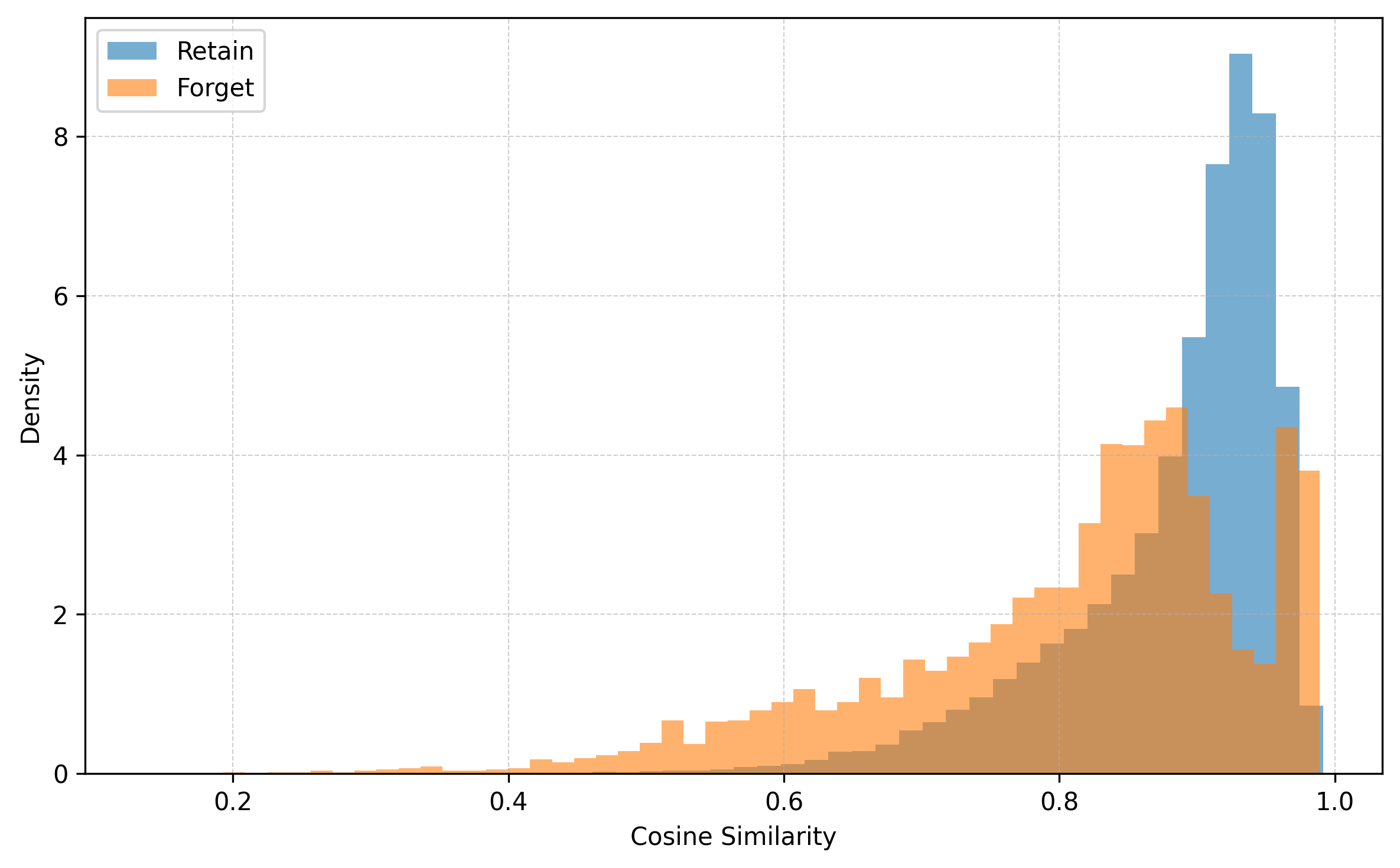} &
\includegraphics[width=0.22\textwidth]{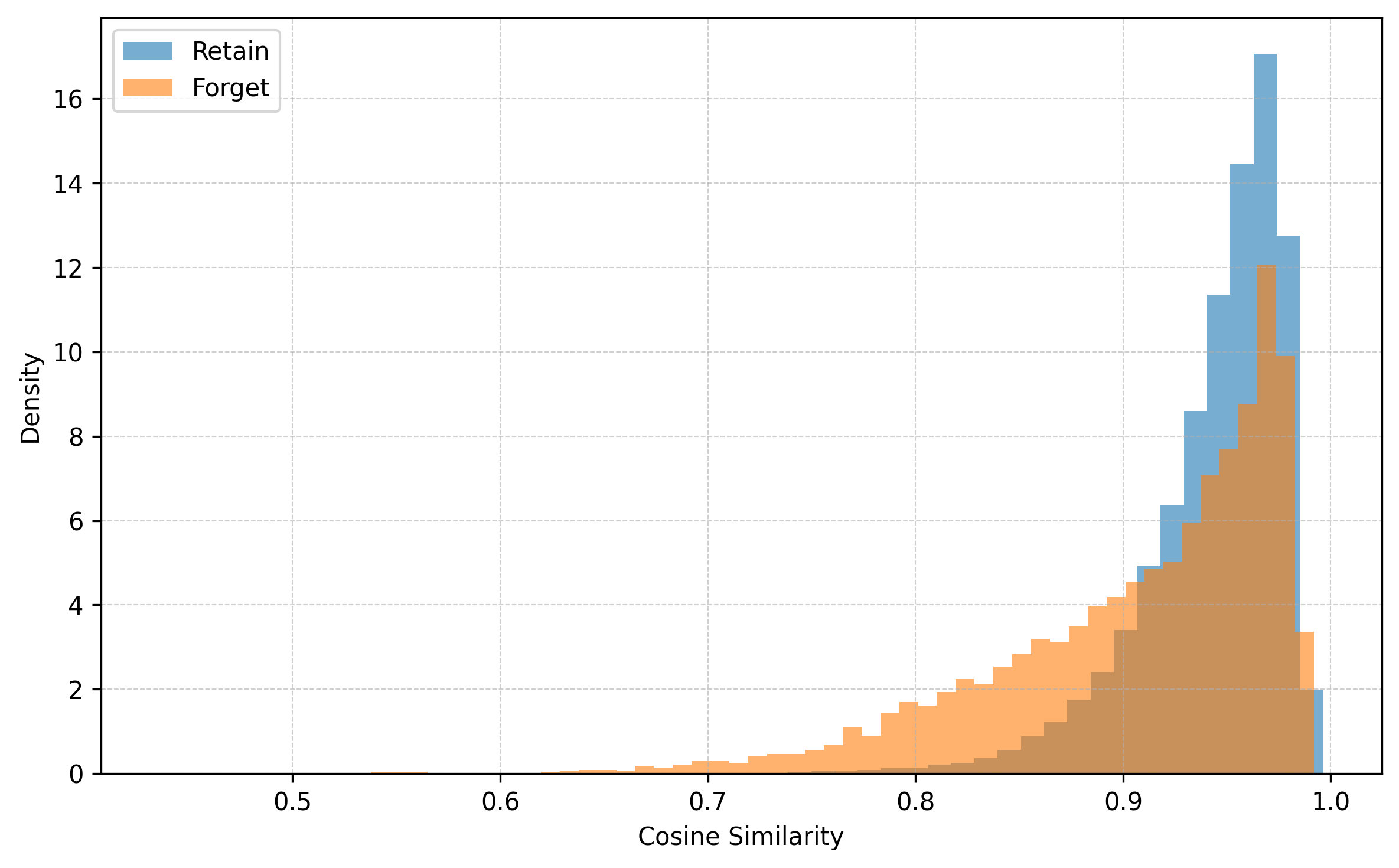} \\
\midrule

\textbf{Finetune} &
\includegraphics[width=0.22\textwidth]{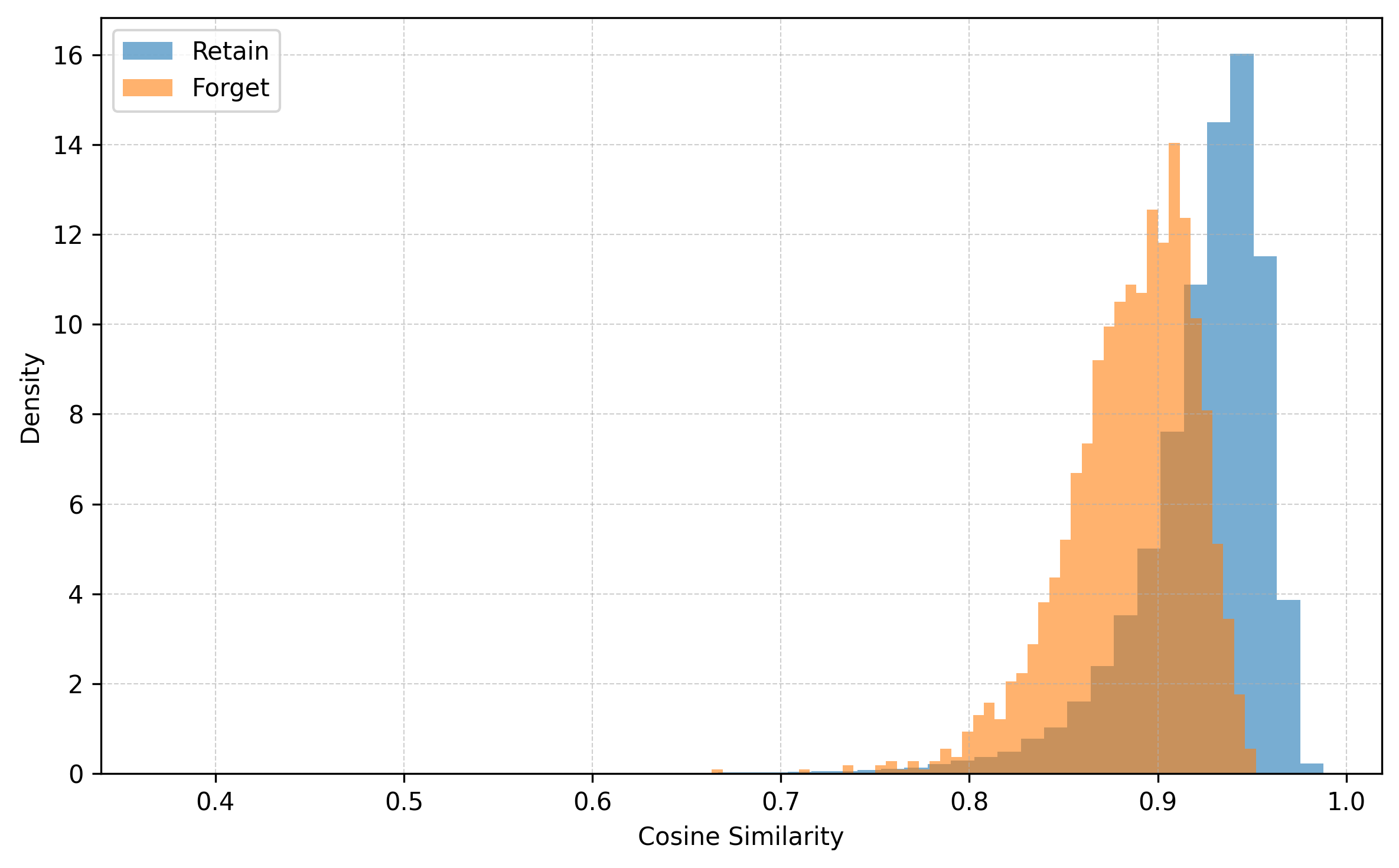} &
\includegraphics[width=0.22\textwidth]{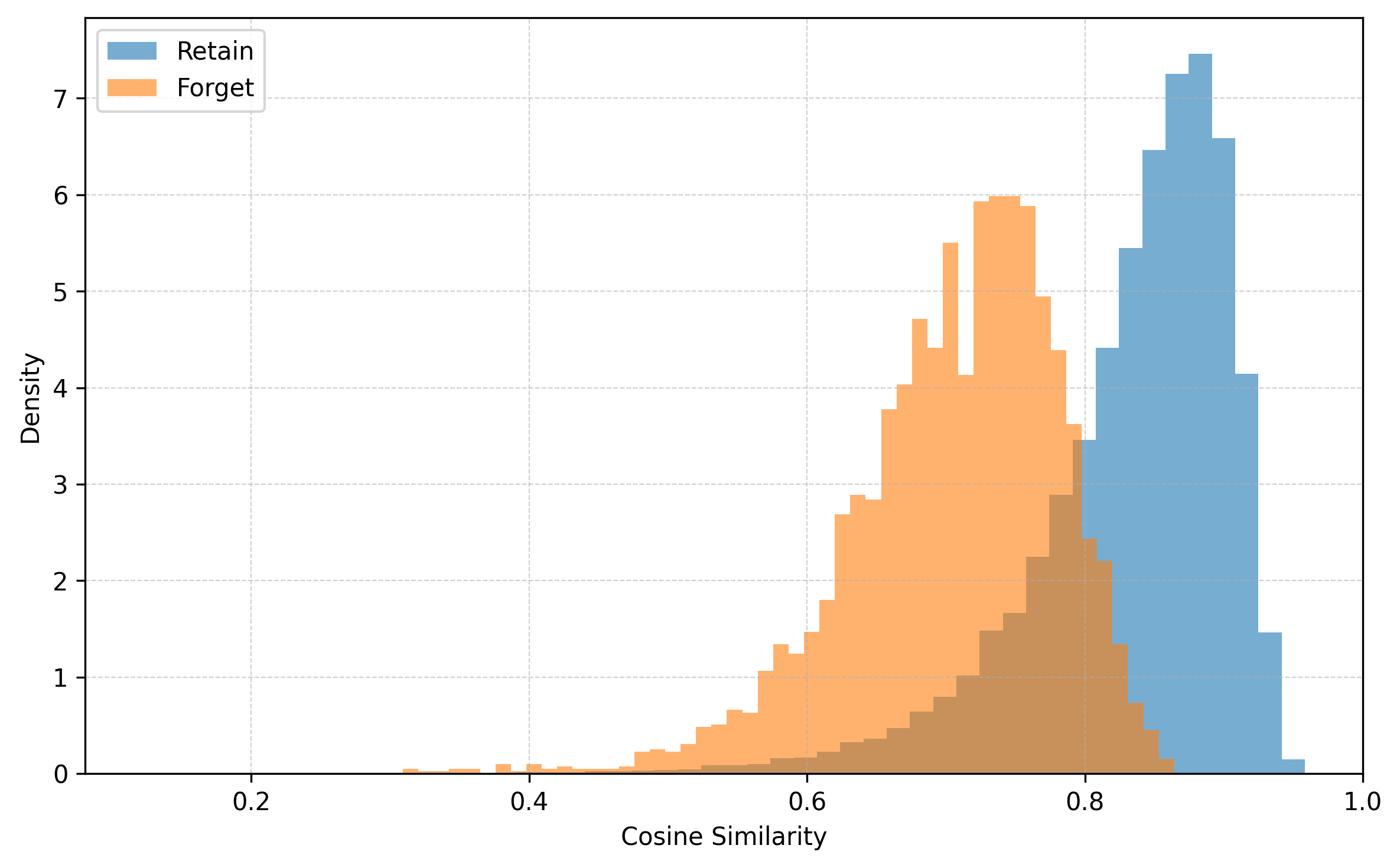} &
\includegraphics[width=0.22\textwidth]{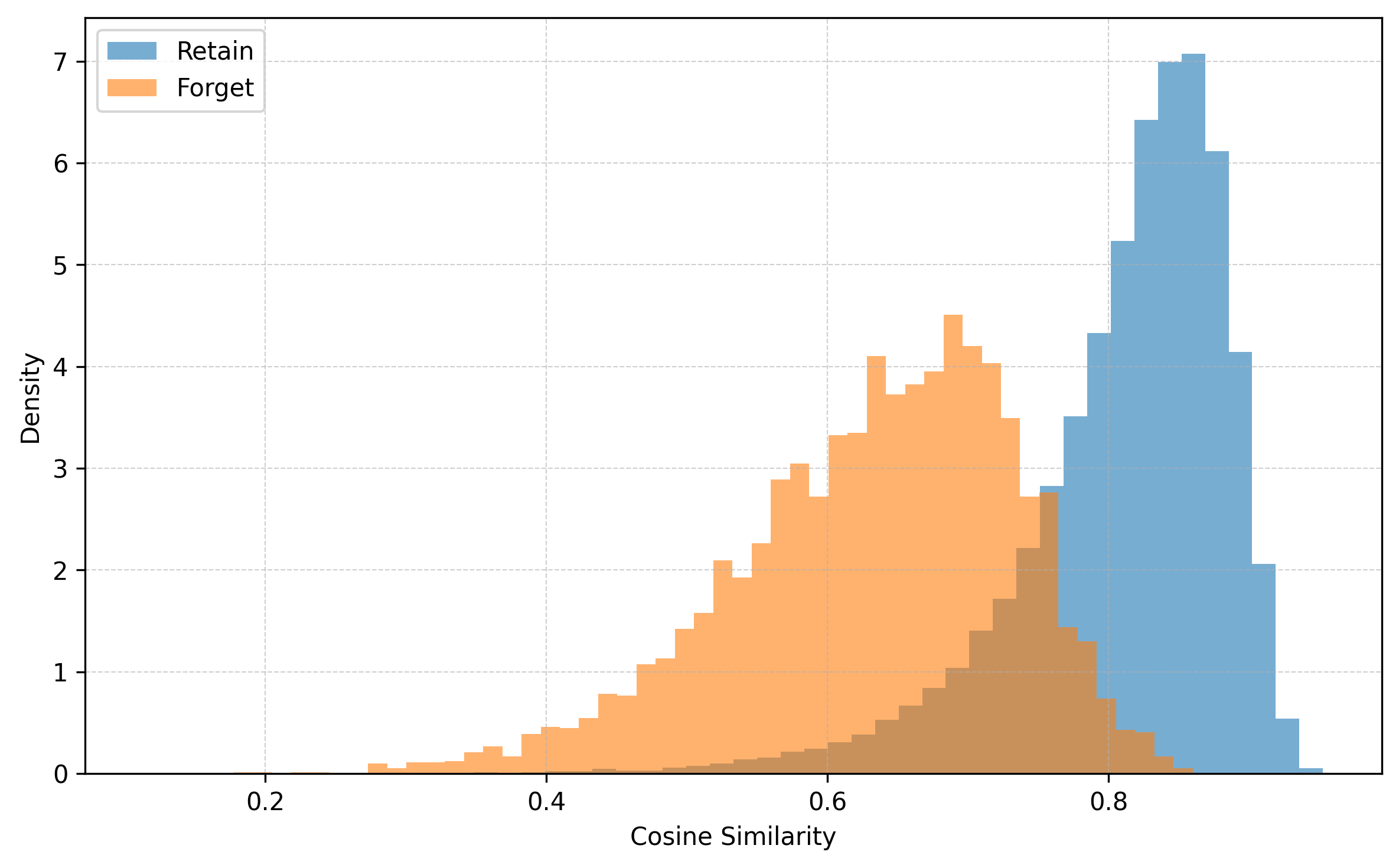} \\
\midrule

\textbf{SAFER} &
\includegraphics[width=0.22\textwidth]{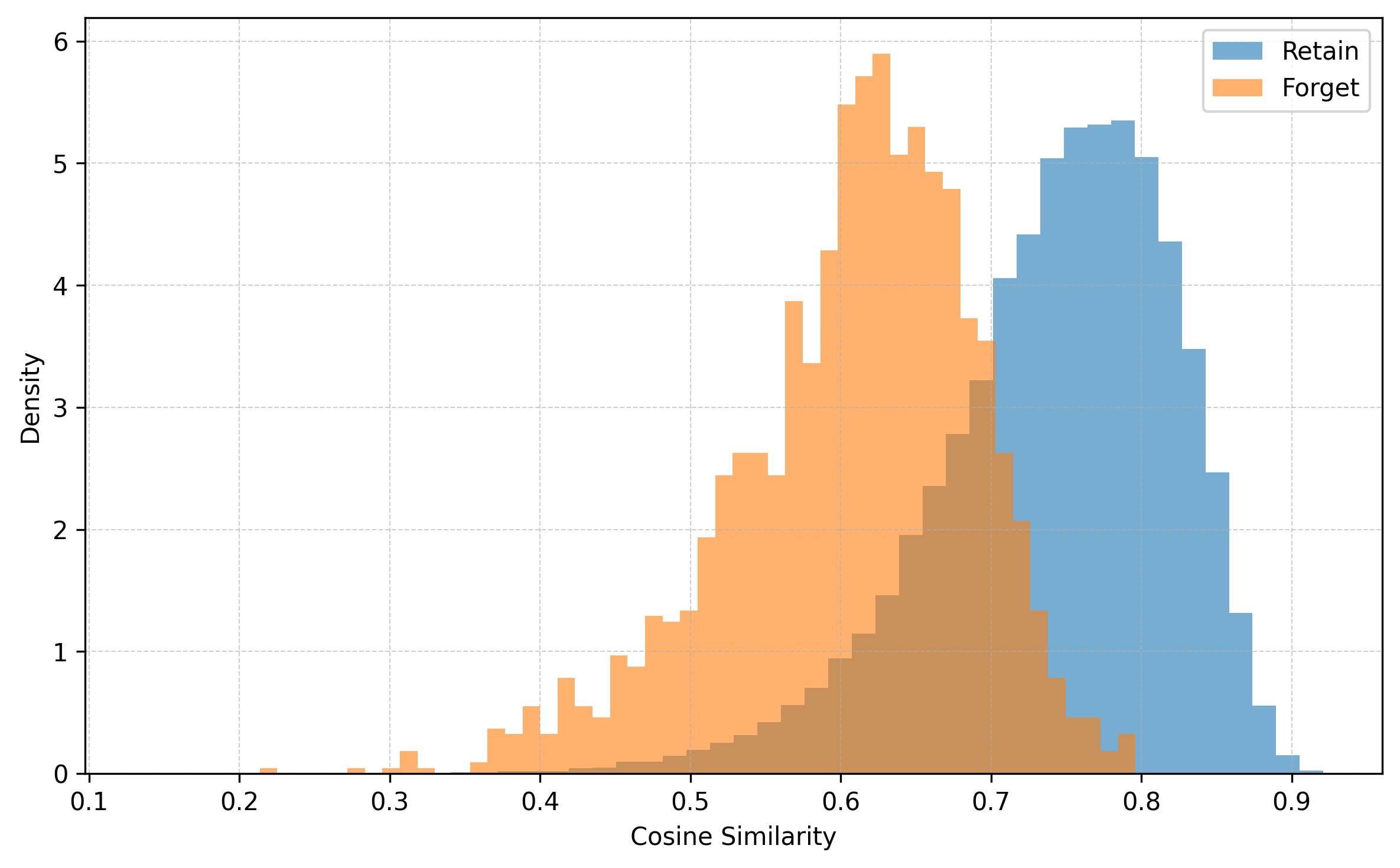} &
\includegraphics[width=0.22\textwidth]{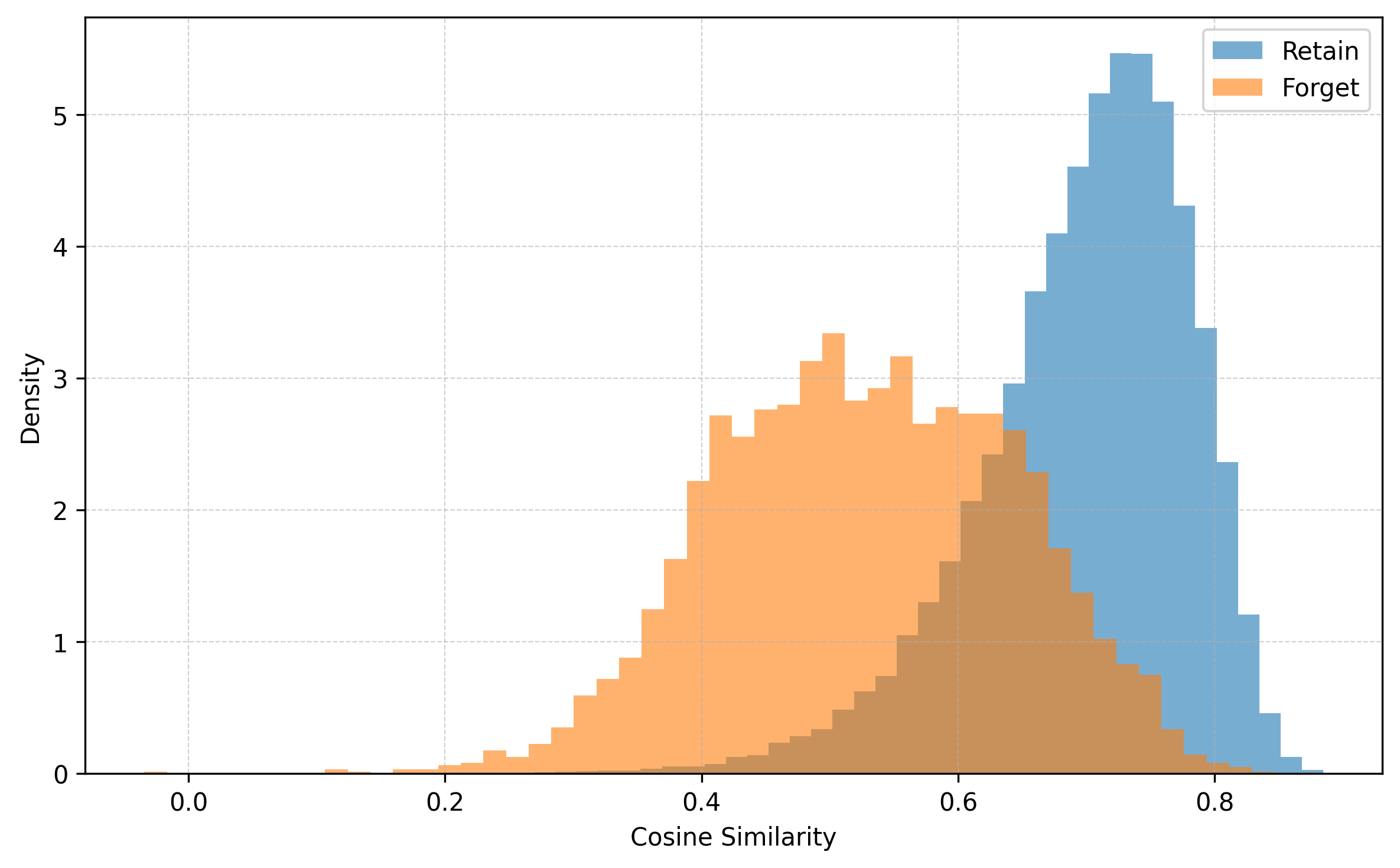} &
\includegraphics[width=0.22\textwidth]{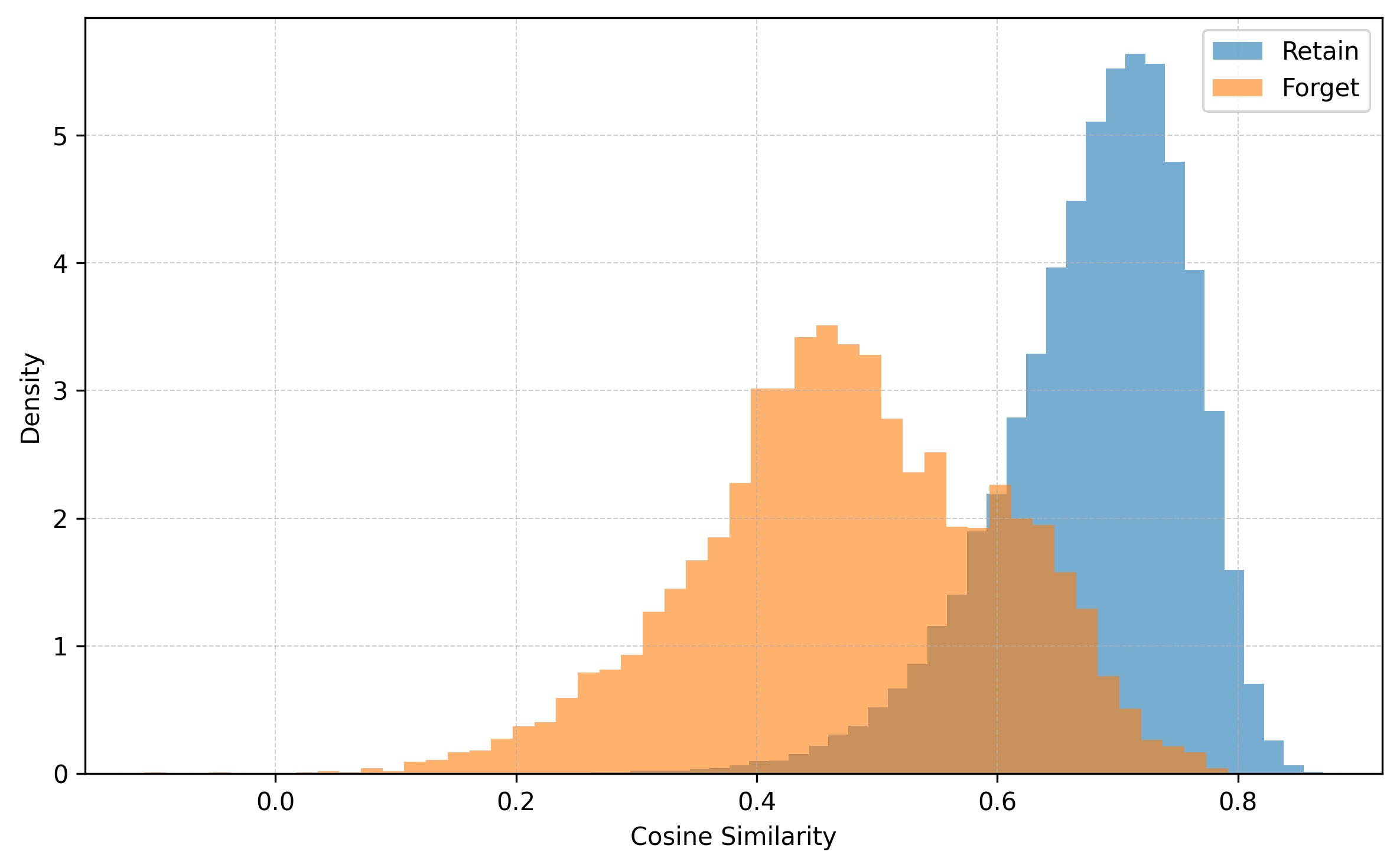} \\

\bottomrule

\end{tabular}
\caption{Representation similarity over the three-phase unlearning process on VGGFace2. The orange shows the retain data distribution, while the blue shows the forget data distribution.}
\label{fig:vgg2_similarity}
\end{figure*}

%% file: figs/fig_similarity_mufac.tex
\begin{figure*}[t]
\centering
\setlength{\tabcolsep}{1pt}
\begin{tabular}{c|ccc}
\toprule
Method & {Phase 1} & {Phase 2} & {Phase 3}\\
\midrule

\textbf{Retrain} &
\includegraphics[width=0.22\textwidth]{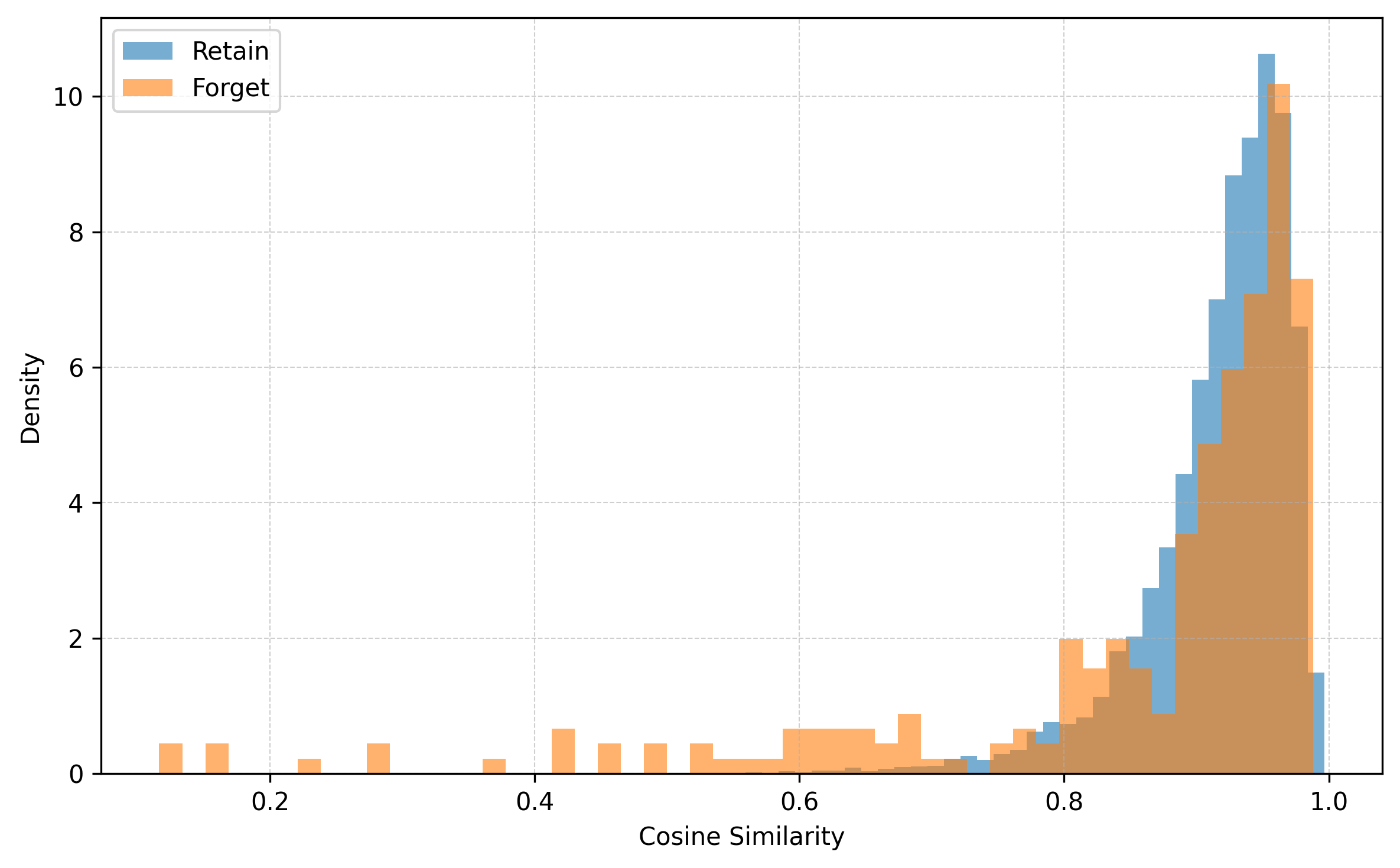} &
\includegraphics[width=0.22\textwidth]{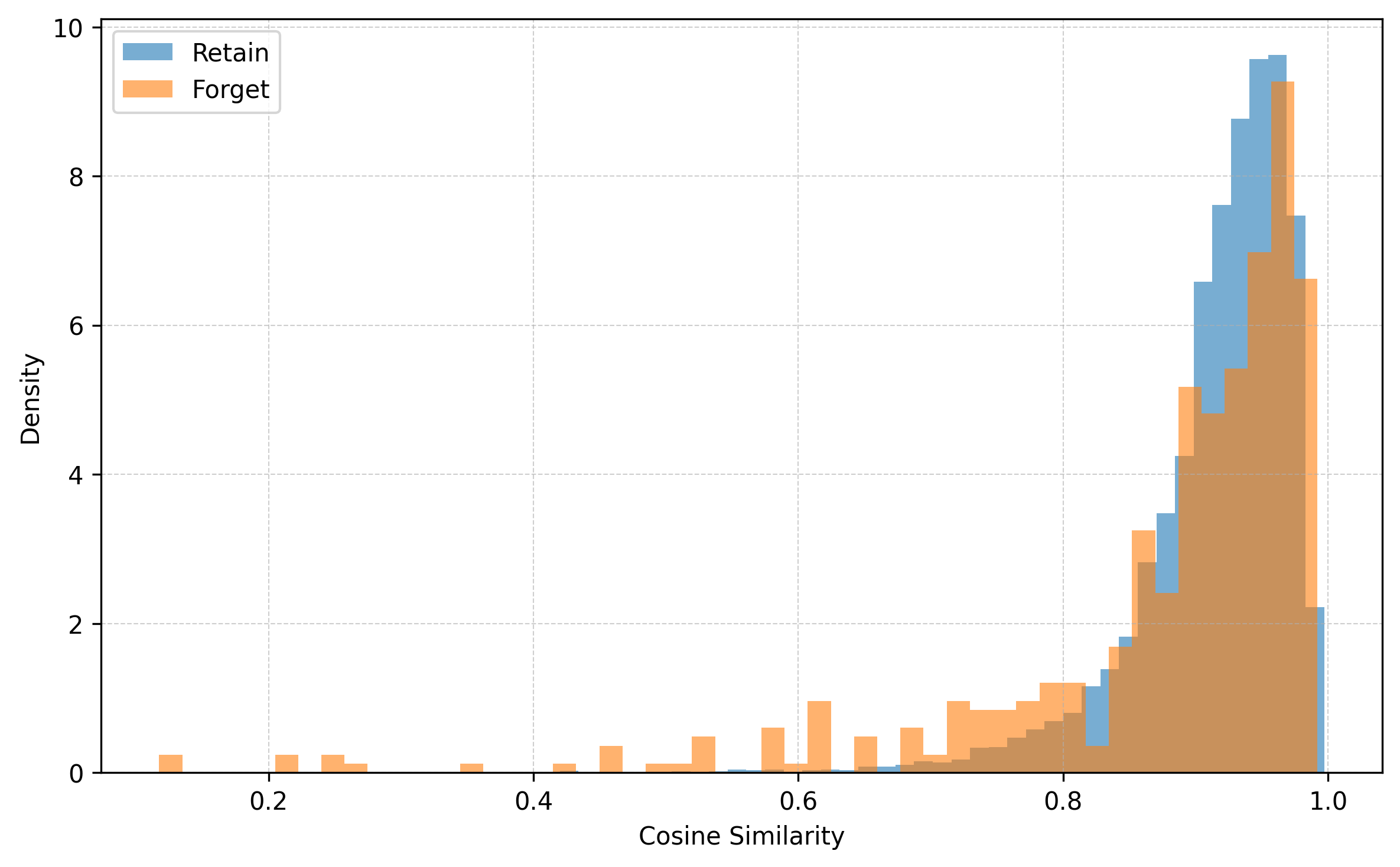} &
\includegraphics[width=0.22\textwidth]{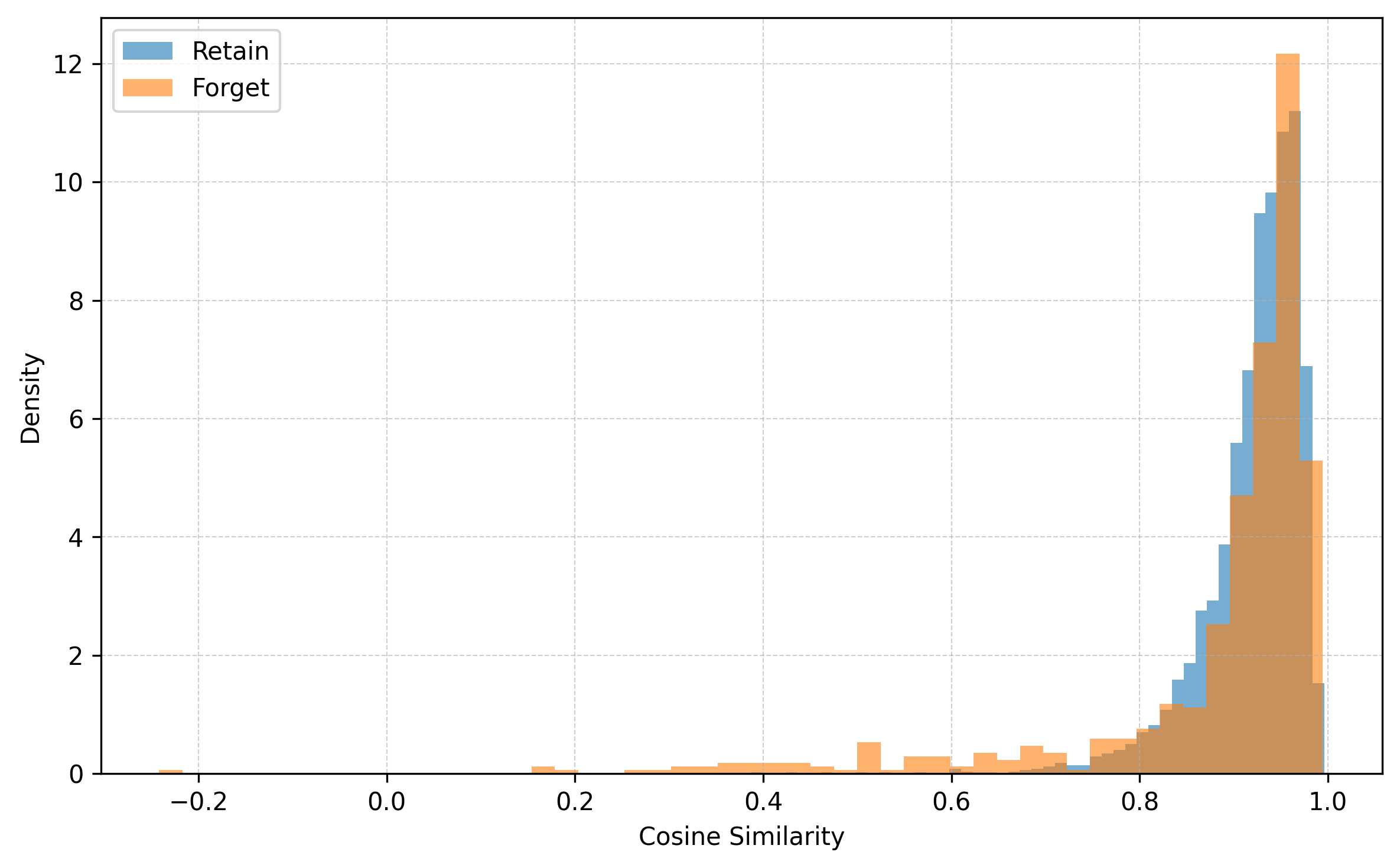} \\
\midrule

\textbf{SCRUB} &
\includegraphics[width=0.22\textwidth]{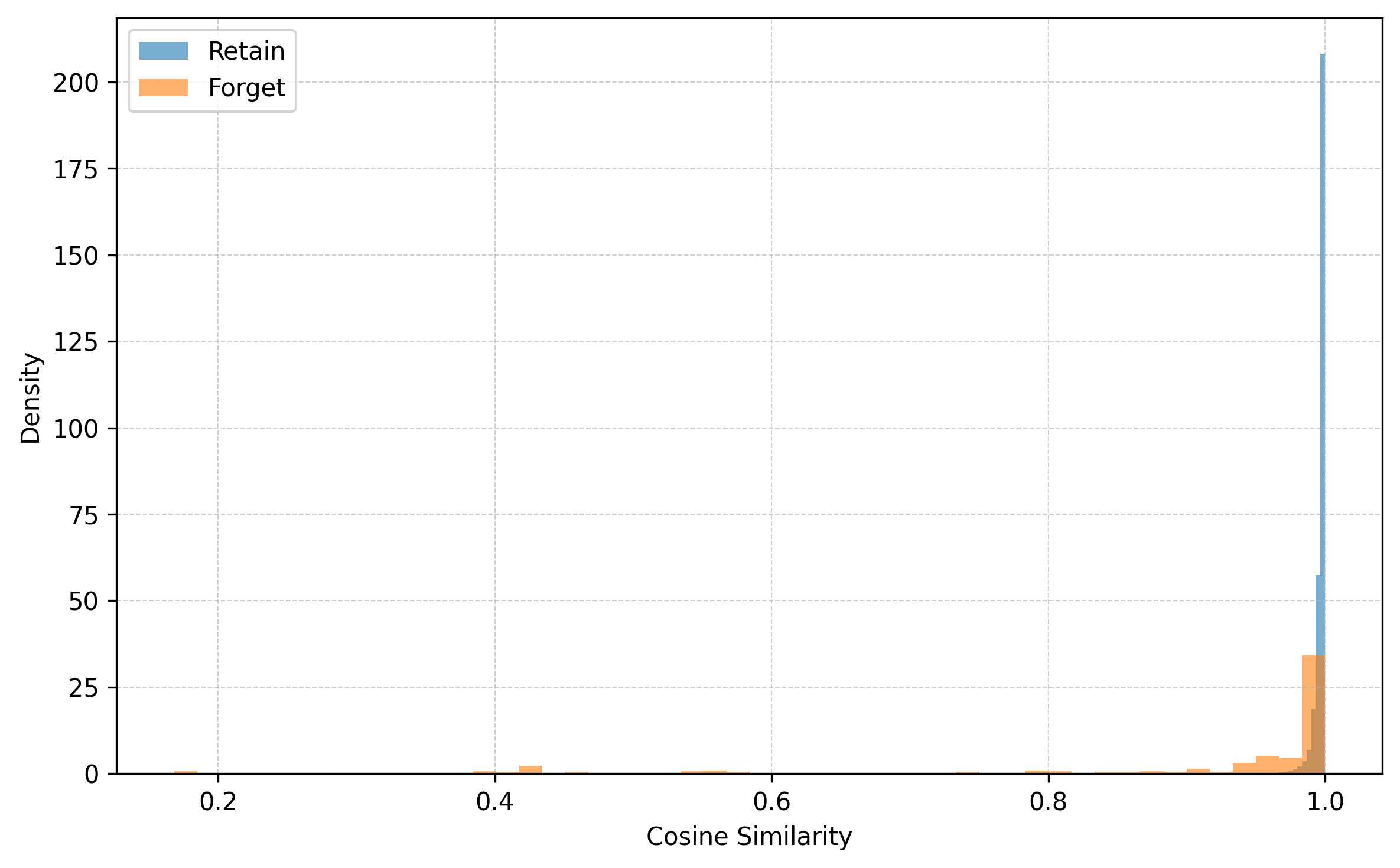} &
\includegraphics[width=0.22\textwidth]{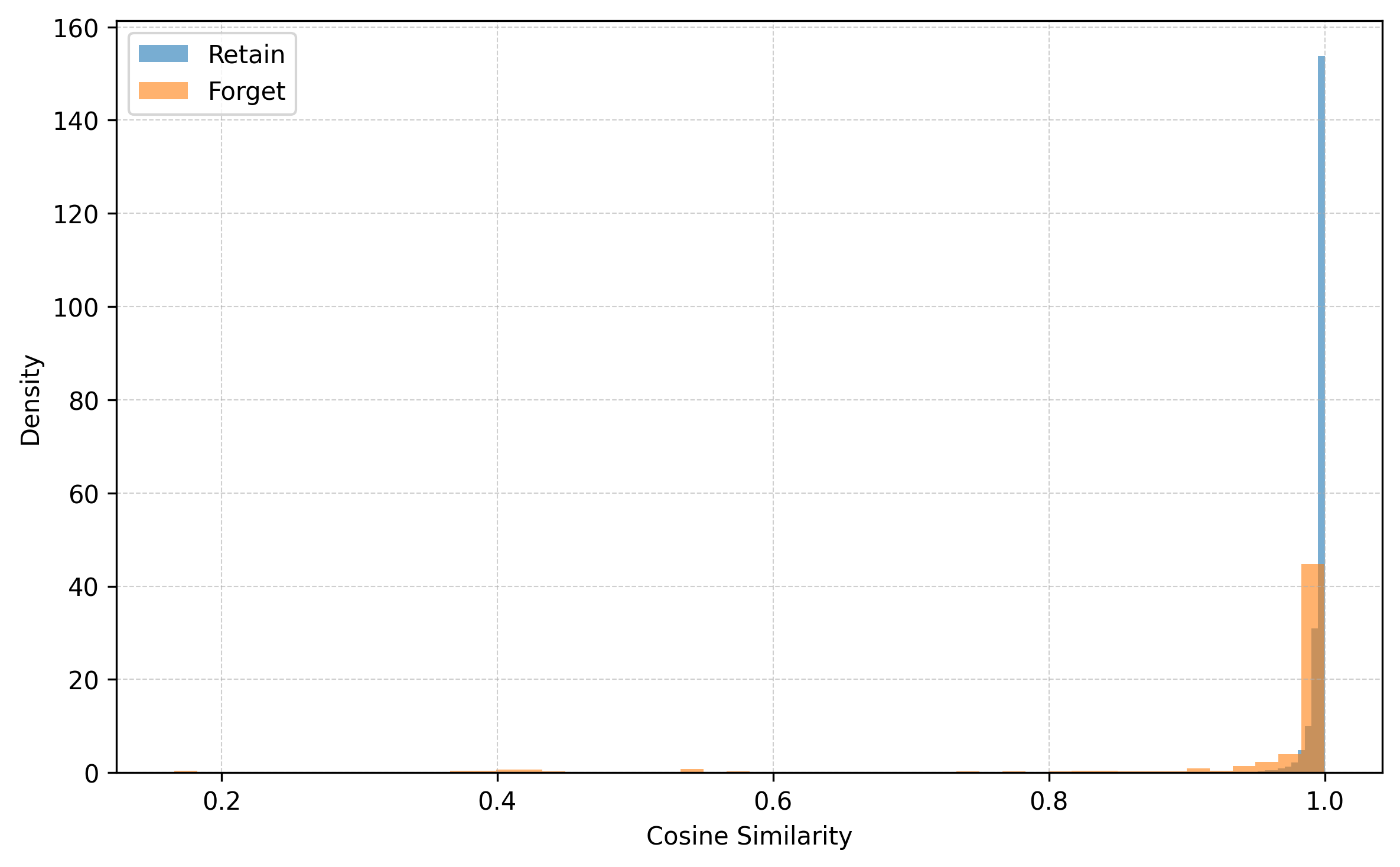} &
\includegraphics[width=0.22\textwidth]{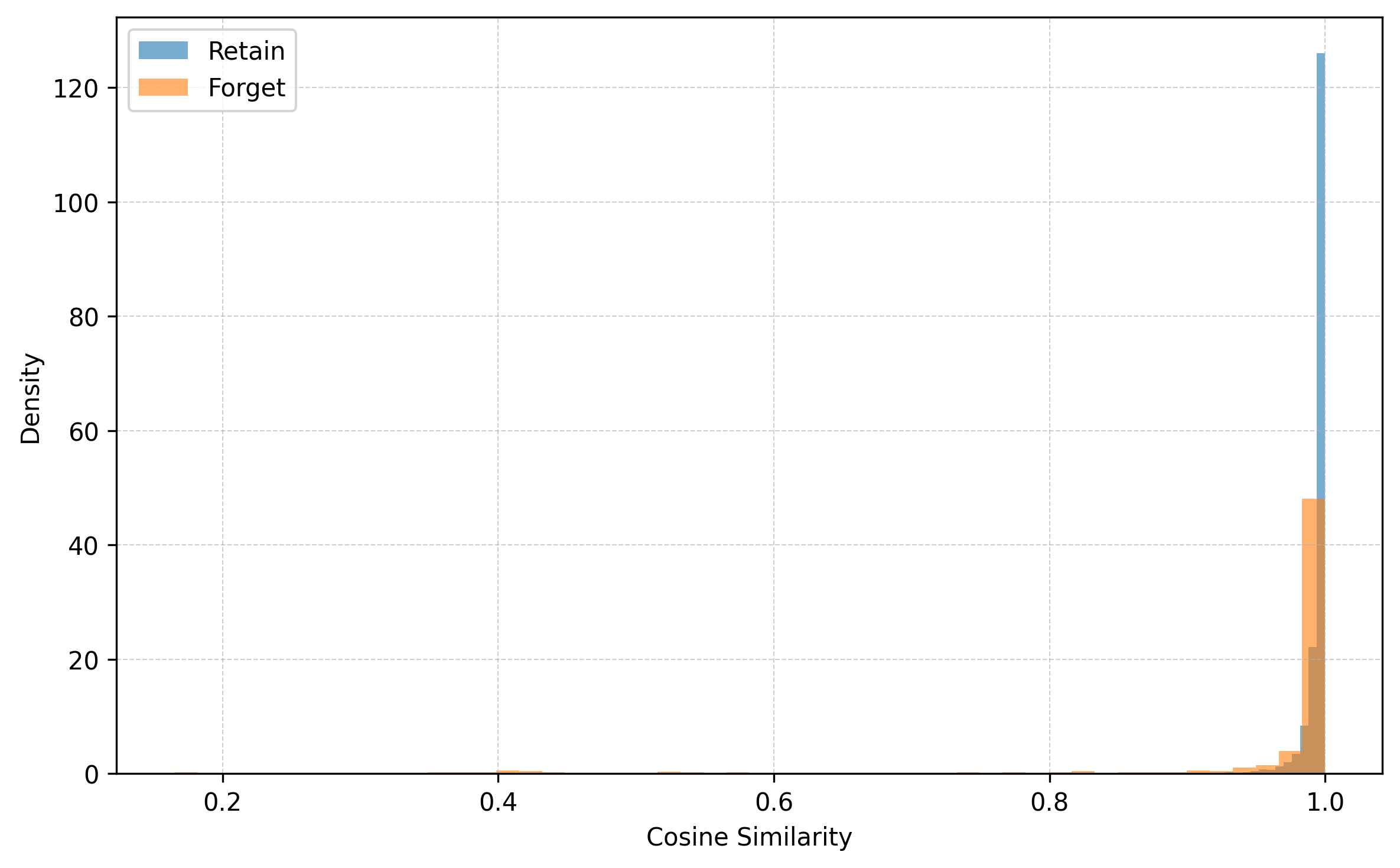} \\
\midrule

\textbf{SALUN} &
\includegraphics[width=0.22\textwidth]{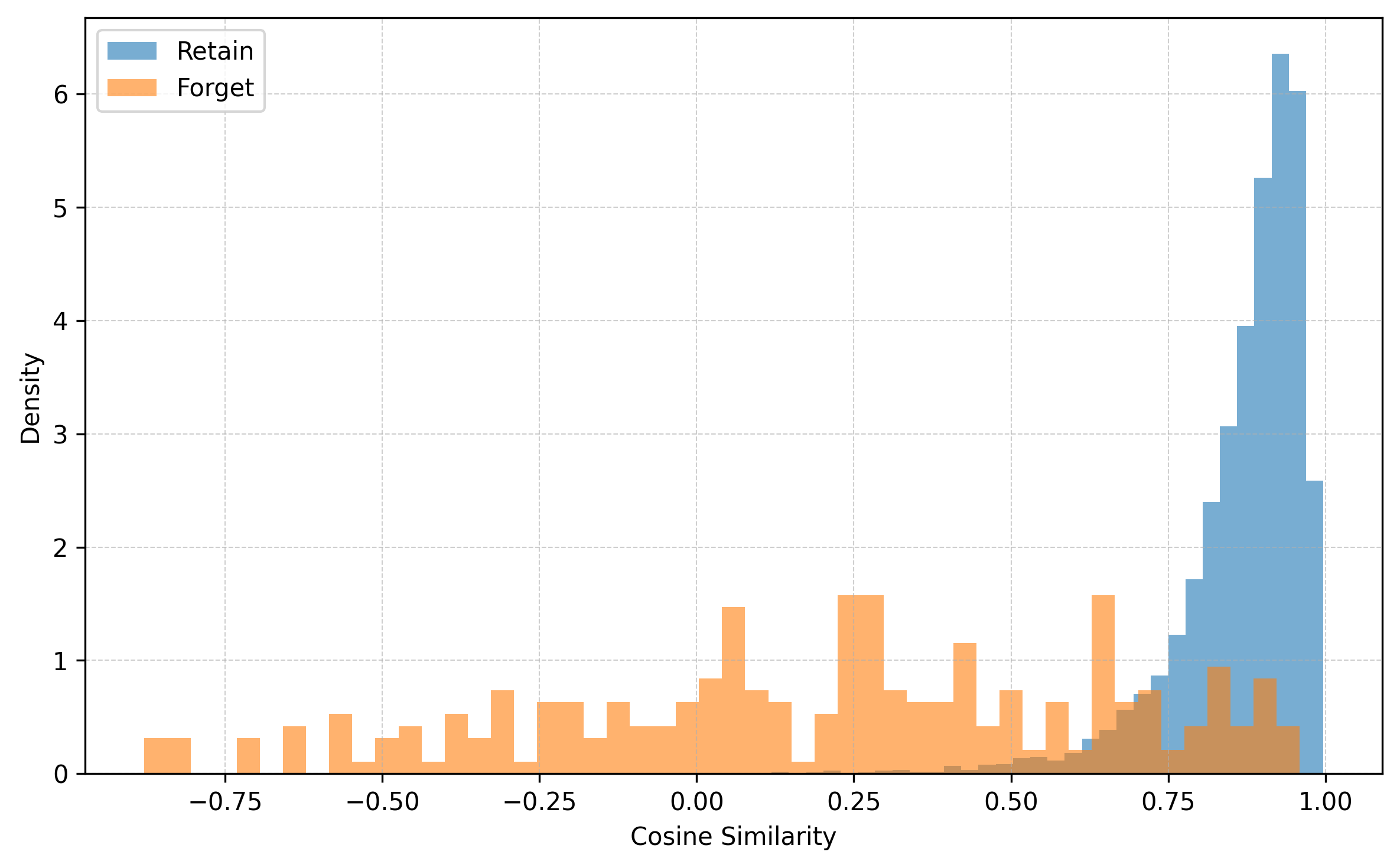} &
\includegraphics[width=0.22\textwidth]{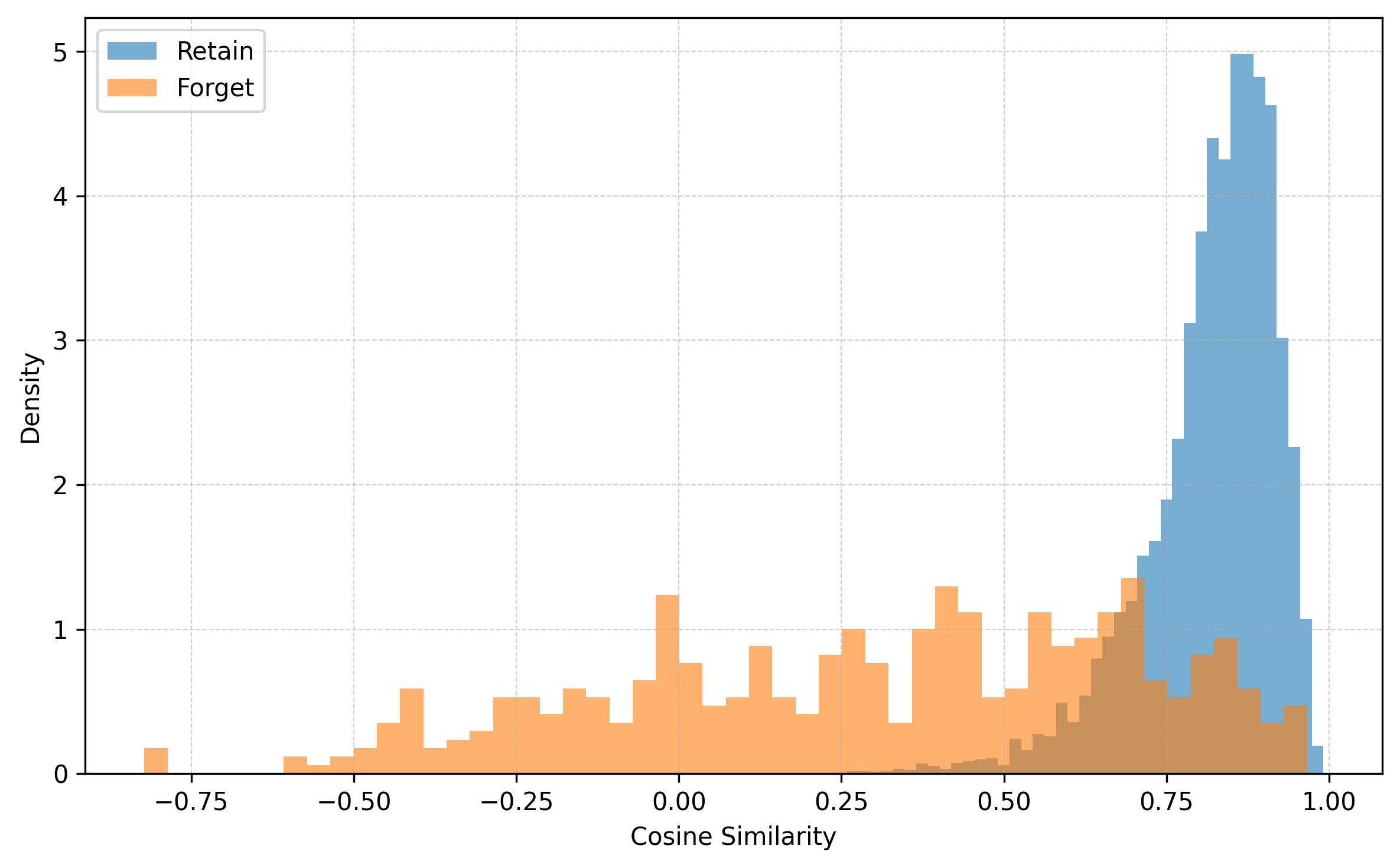} &
\includegraphics[width=0.22\textwidth]{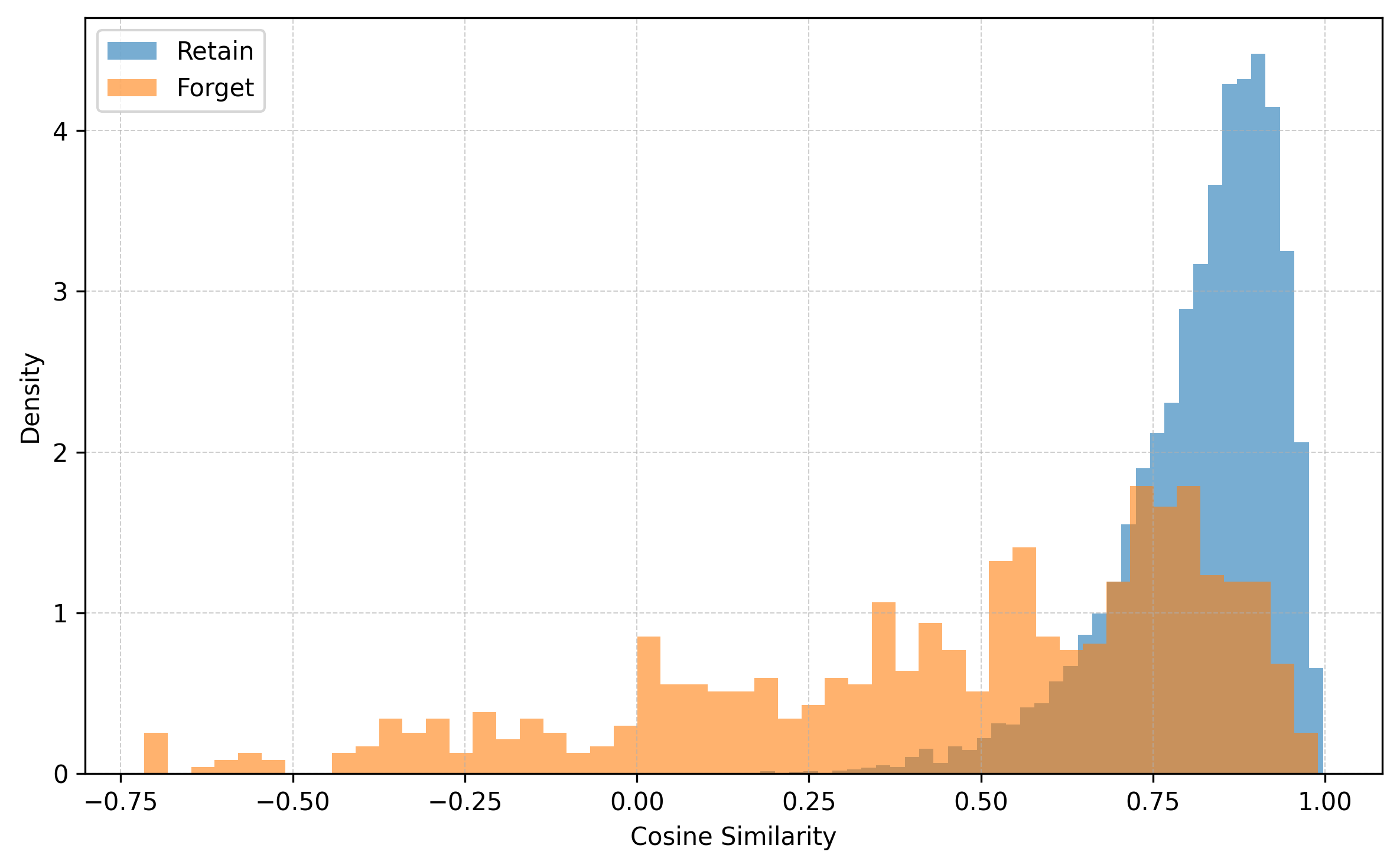} \\
\midrule

\textbf{SSD} &
\includegraphics[width=0.22\textwidth]{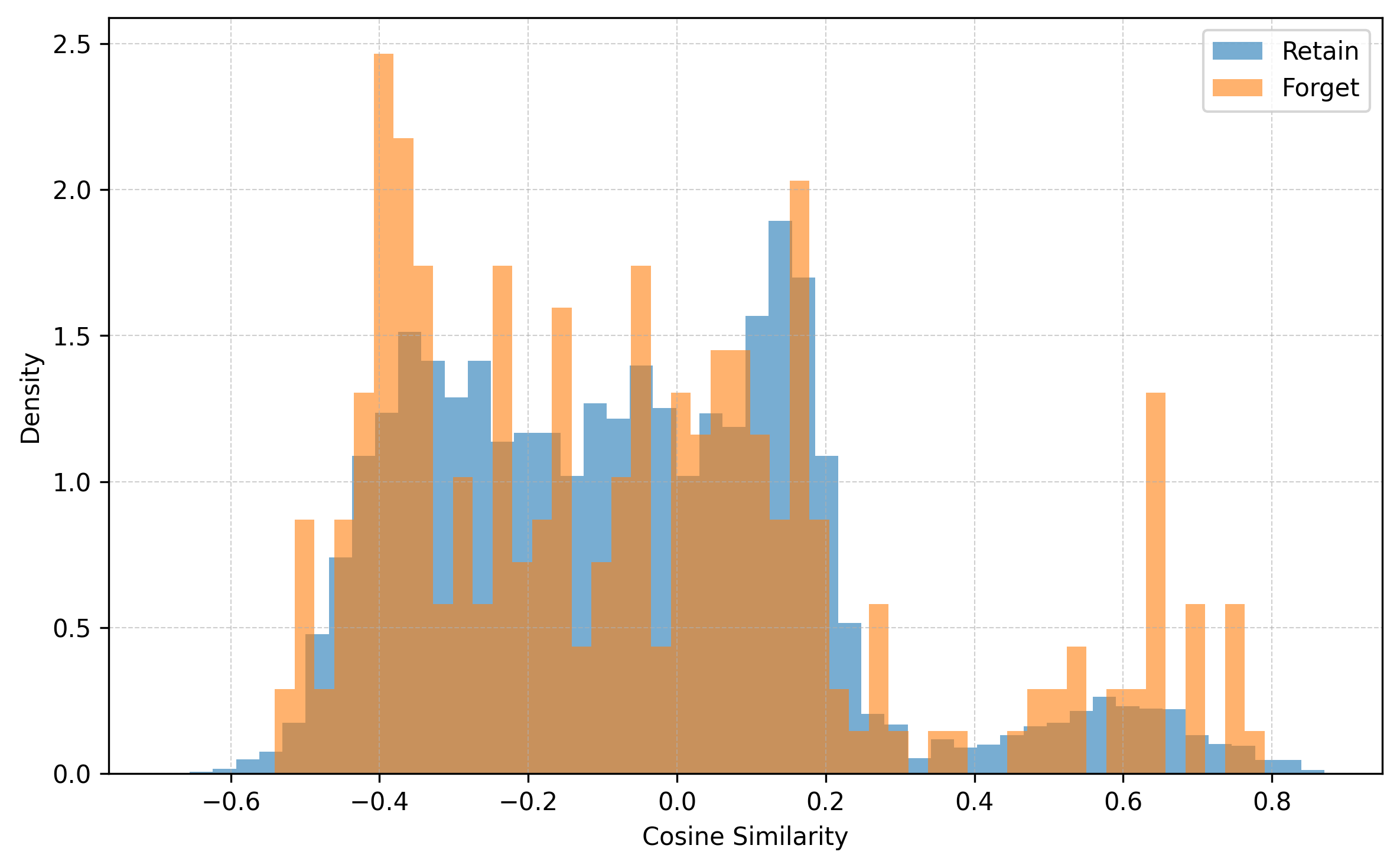} &
\includegraphics[width=0.22\textwidth]{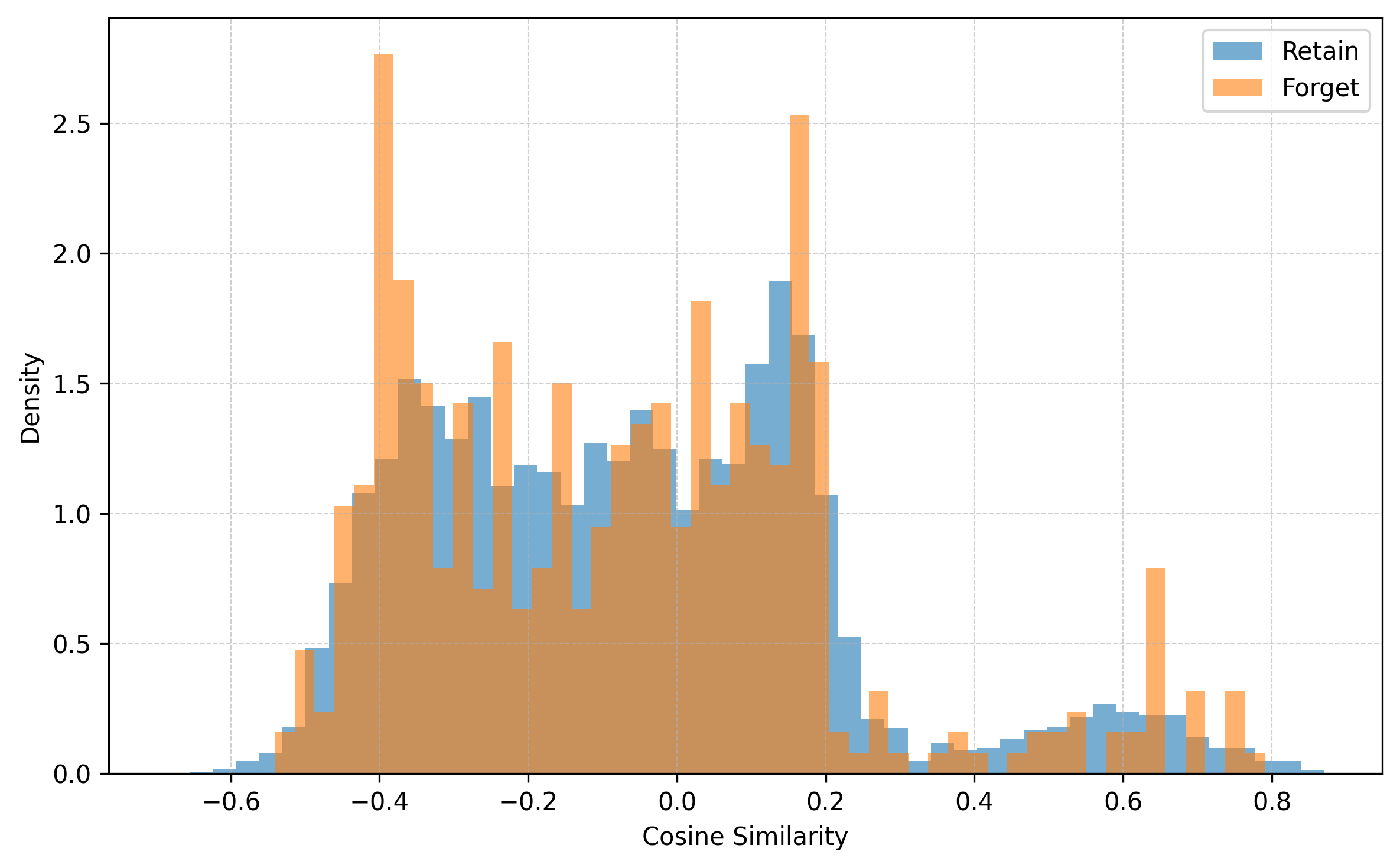} &
\includegraphics[width=0.22\textwidth]{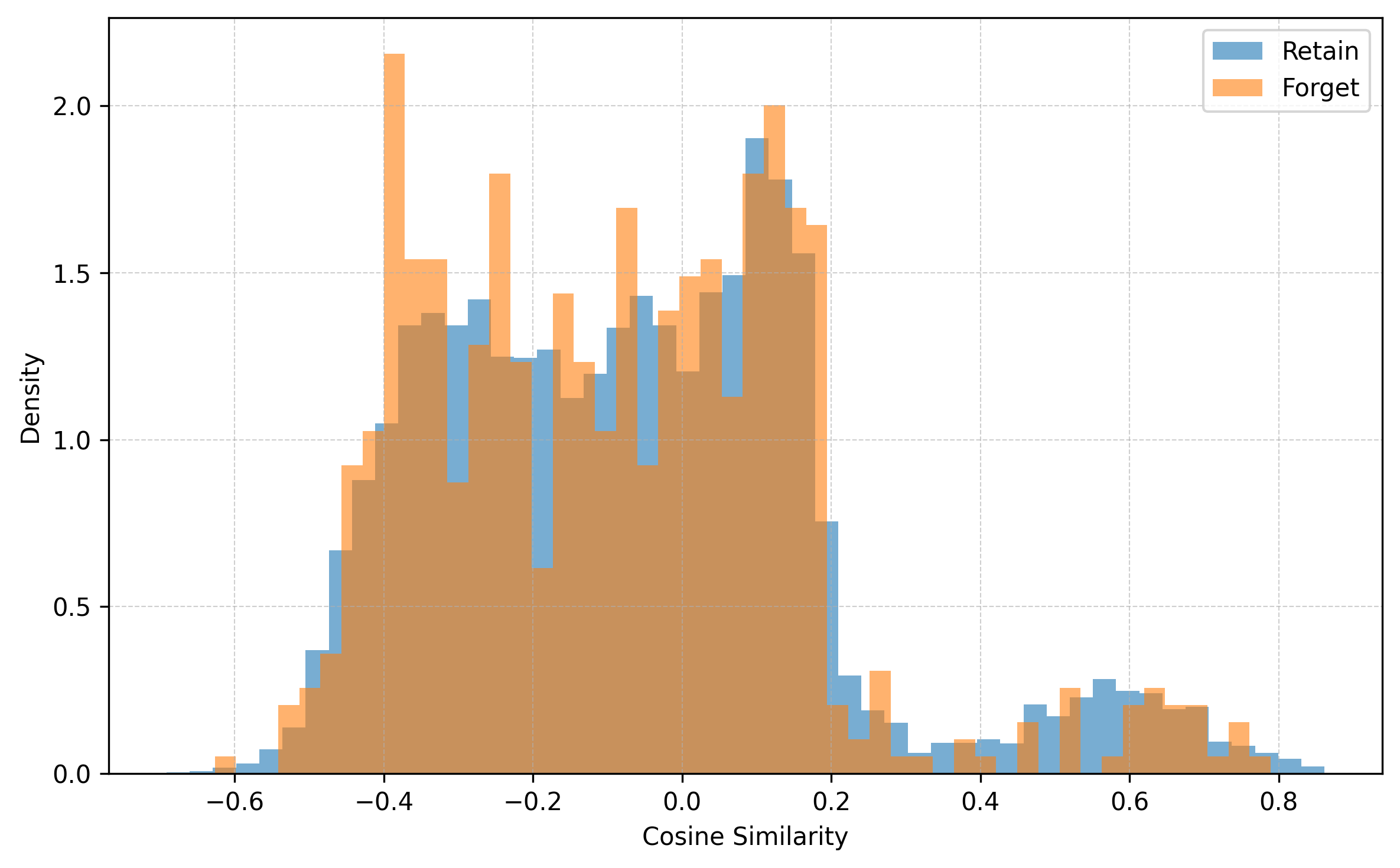} \\
\midrule

\textbf{BndShrink}  &
\includegraphics[width=0.22\textwidth]{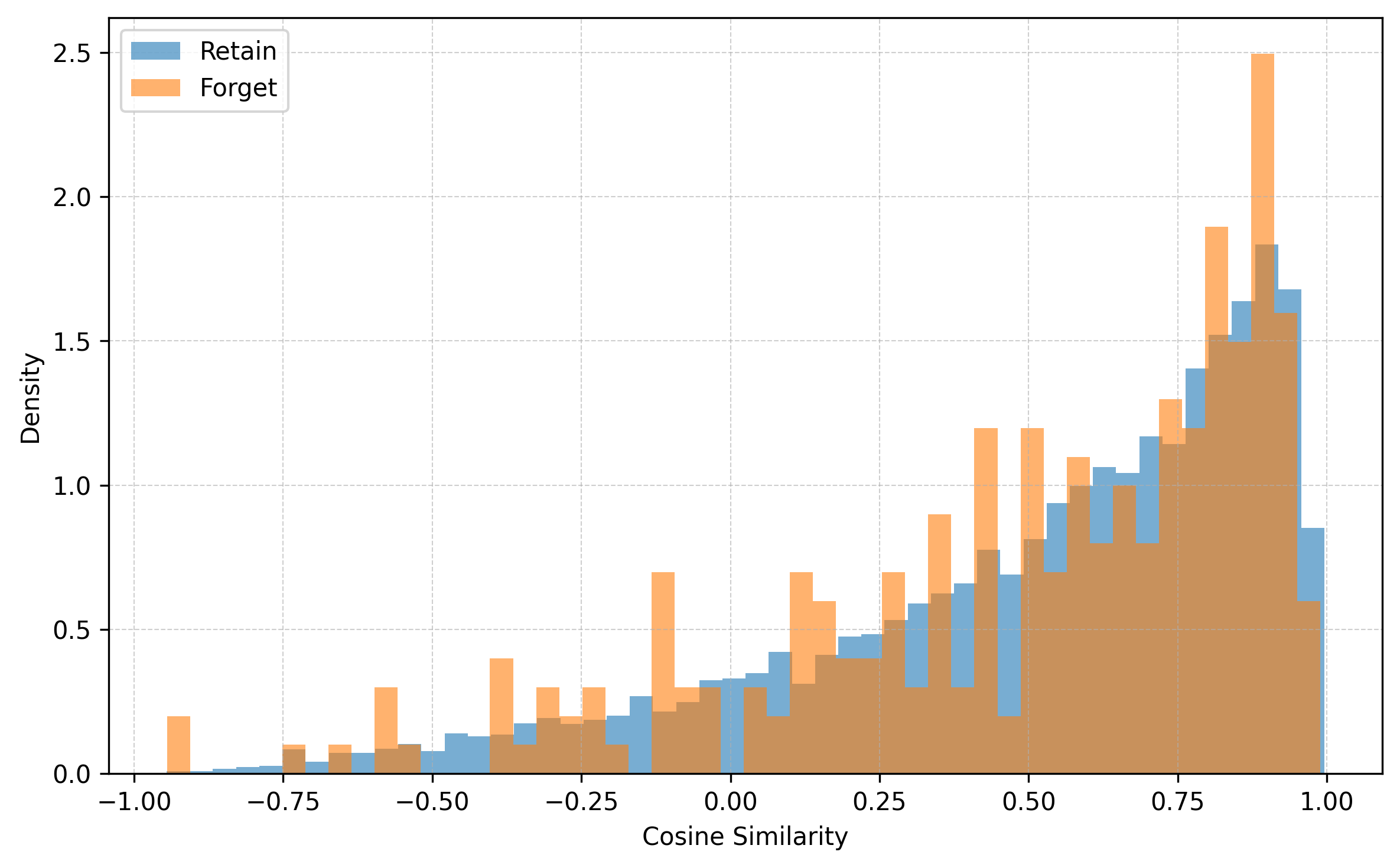} &
\includegraphics[width=0.22\textwidth]{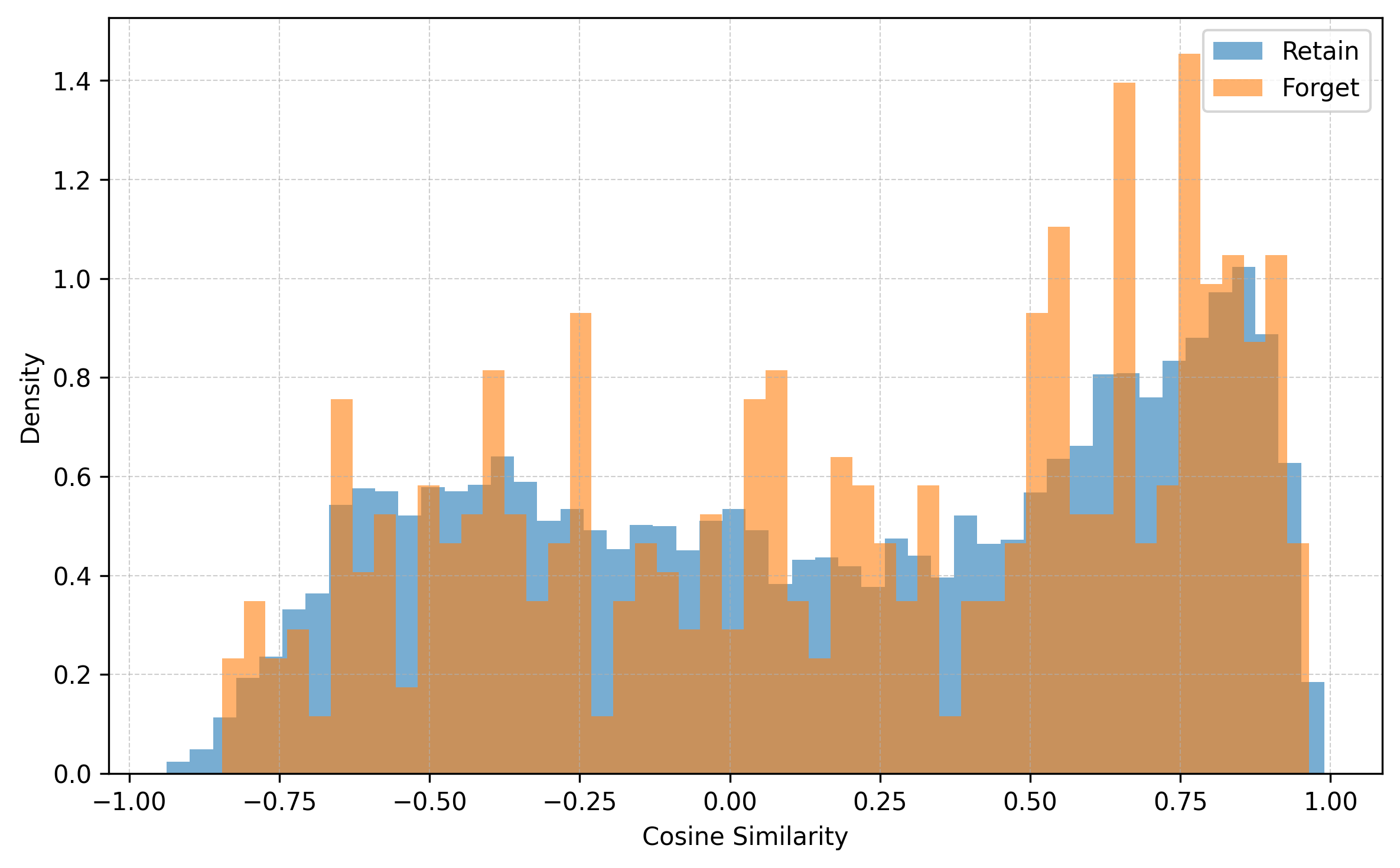} &
\includegraphics[width=0.22\textwidth]{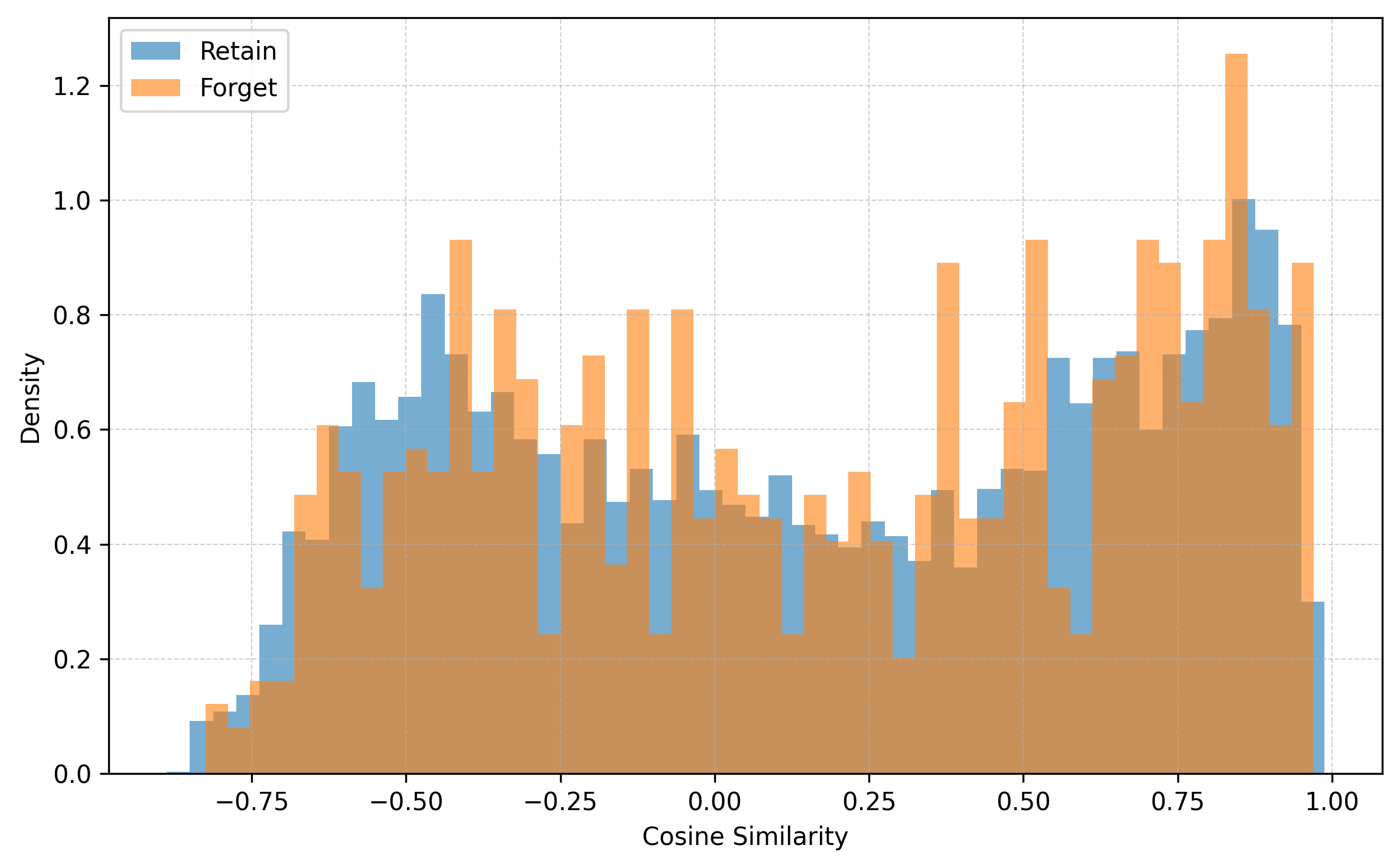} \\
\midrule

\textbf{NegGrad} &
\includegraphics[width=0.22\textwidth]{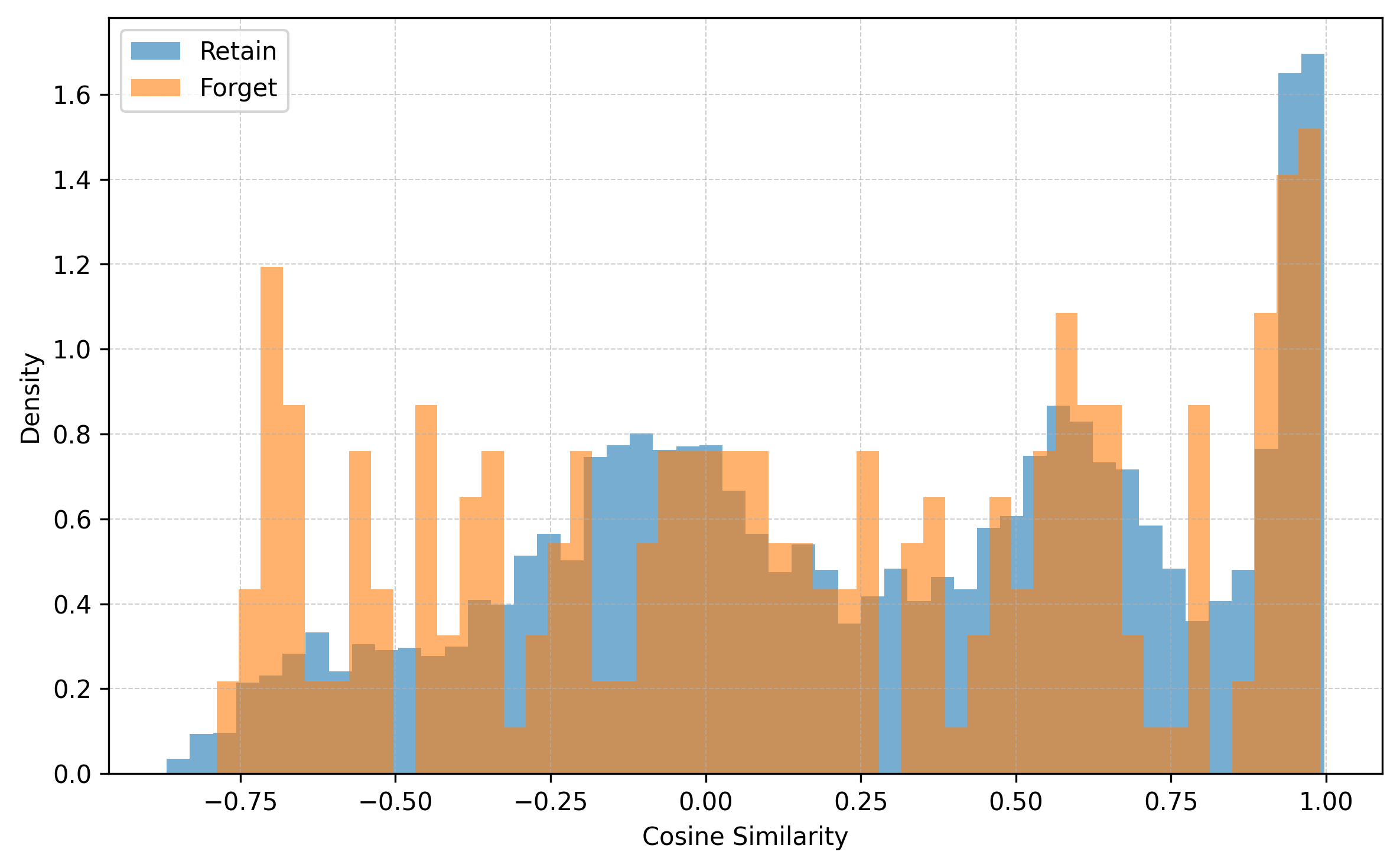} &
\includegraphics[width=0.22\textwidth]{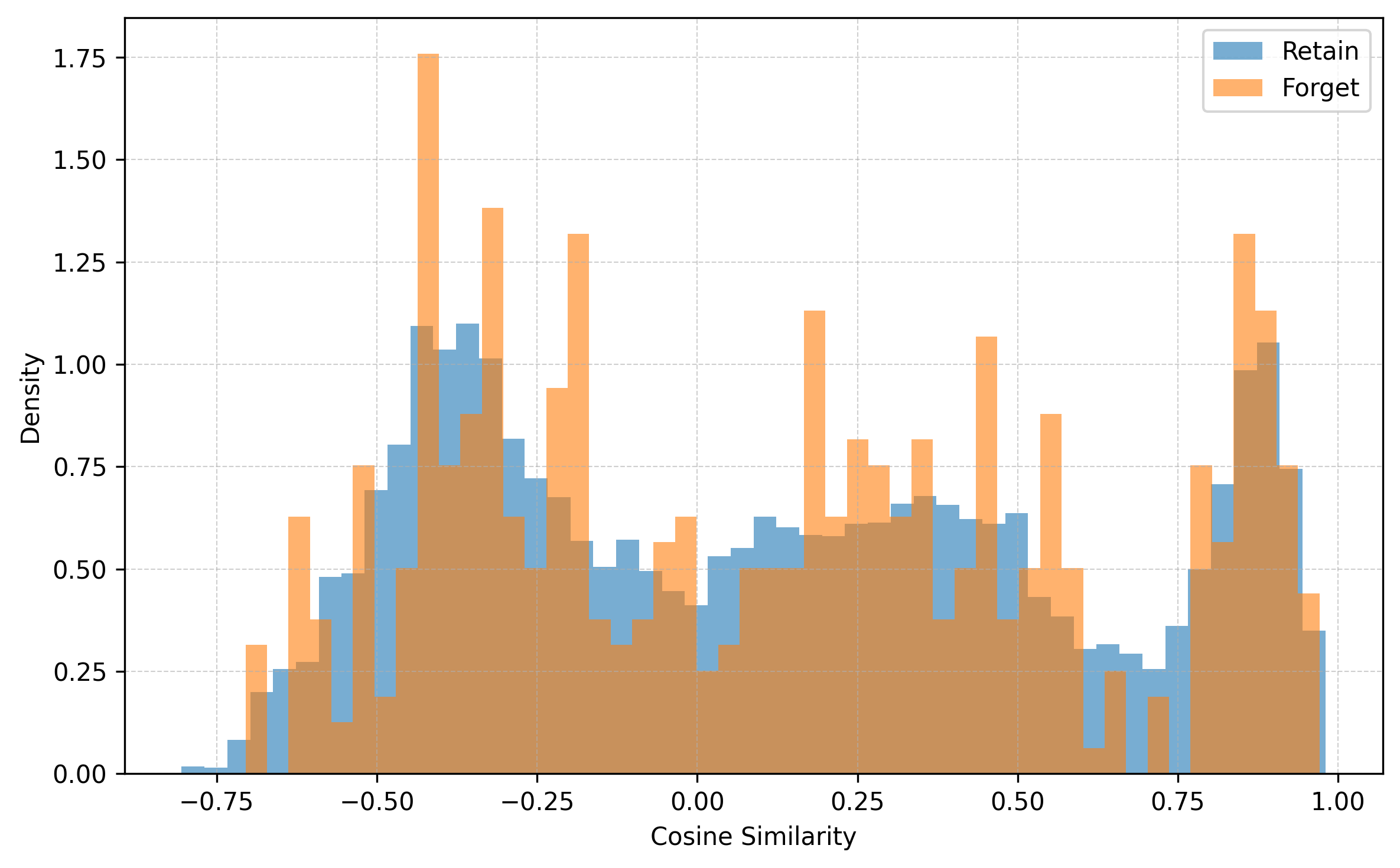} &
\includegraphics[width=0.22\textwidth]{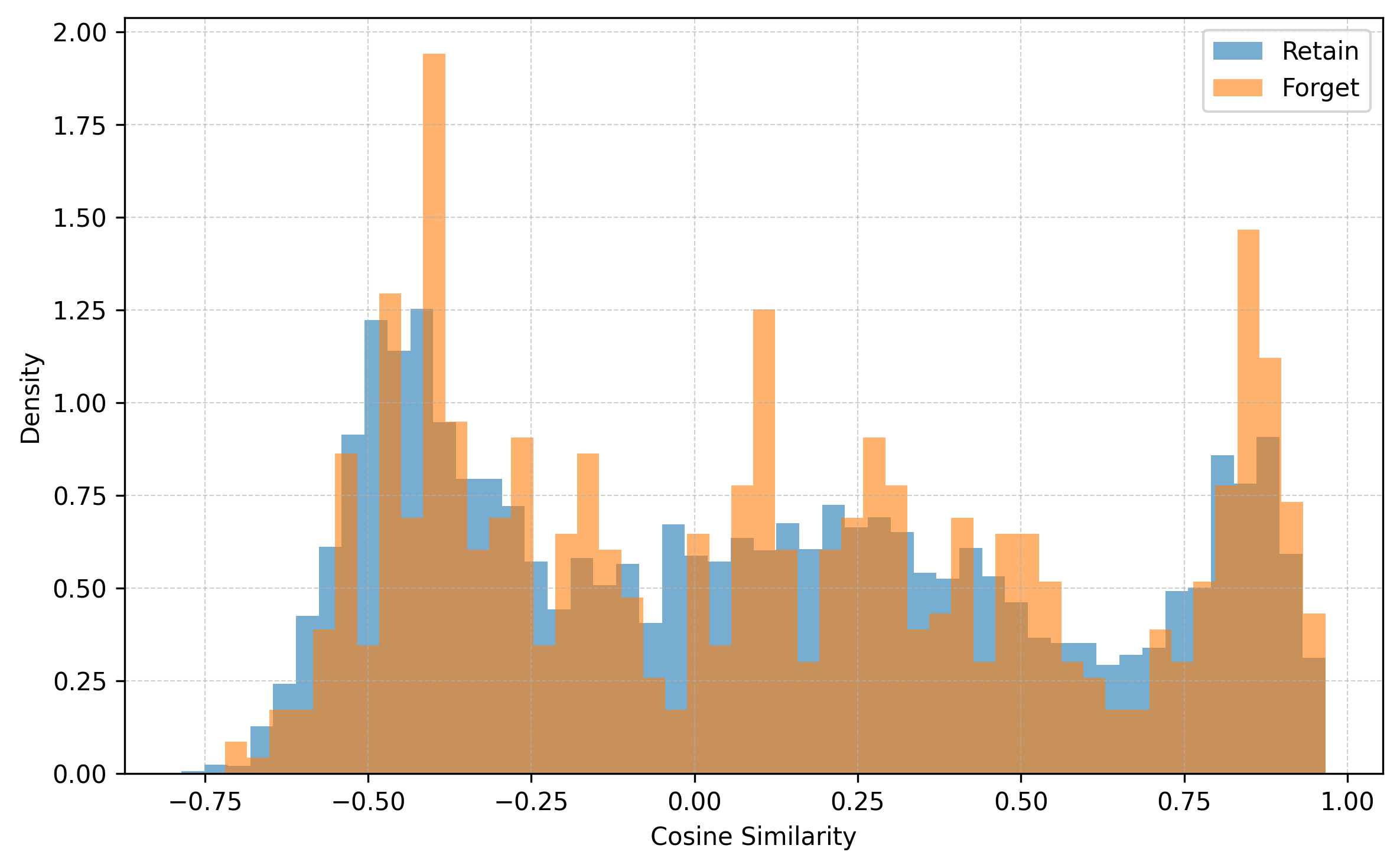} \\
\midrule

\textbf{Finetune} &
\includegraphics[width=0.22\textwidth]{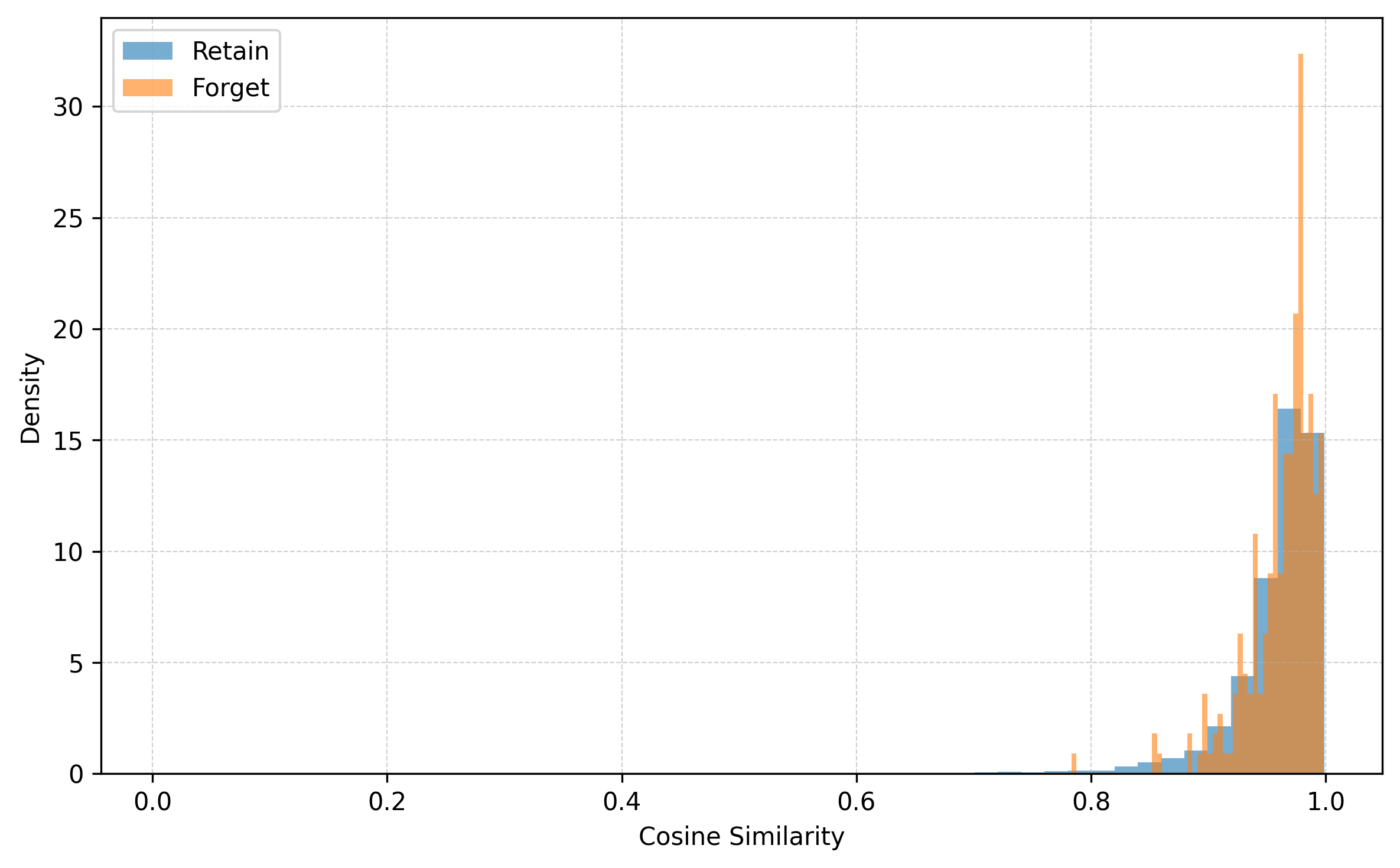} &
\includegraphics[width=0.22\textwidth]{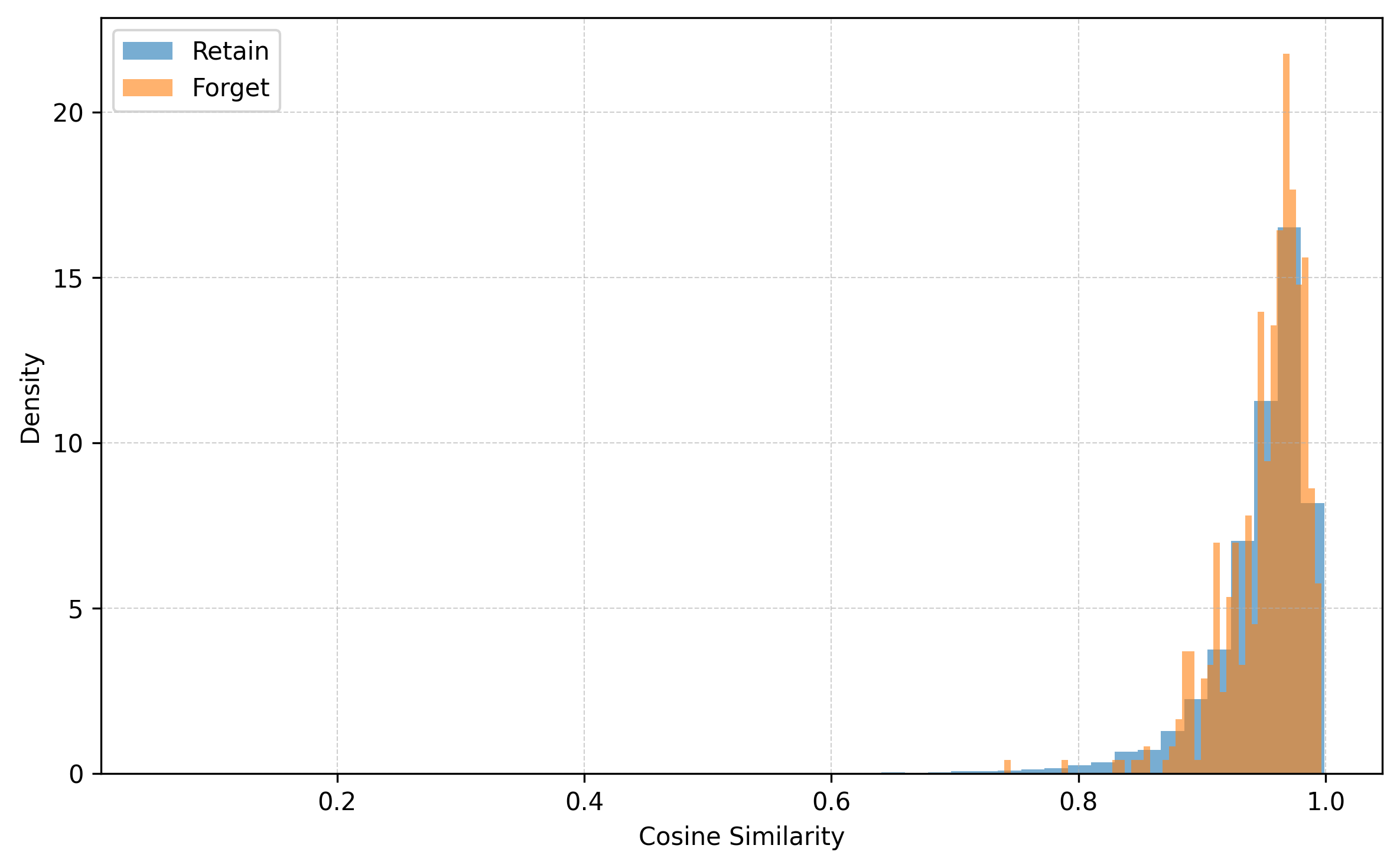} &
\includegraphics[width=0.22\textwidth]{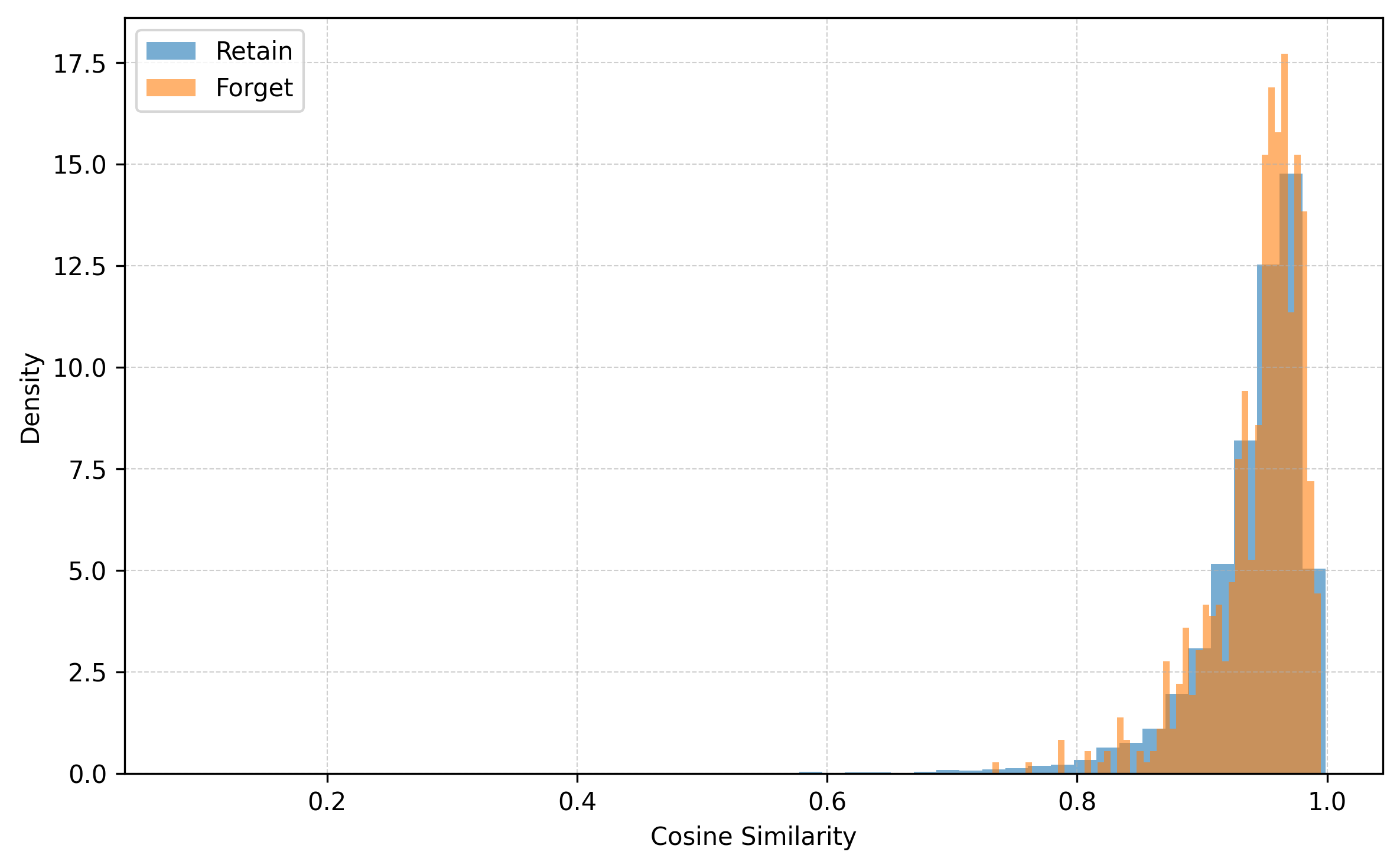} \\
\midrule

\textbf{SAFER} &
\includegraphics[width=0.22\textwidth]{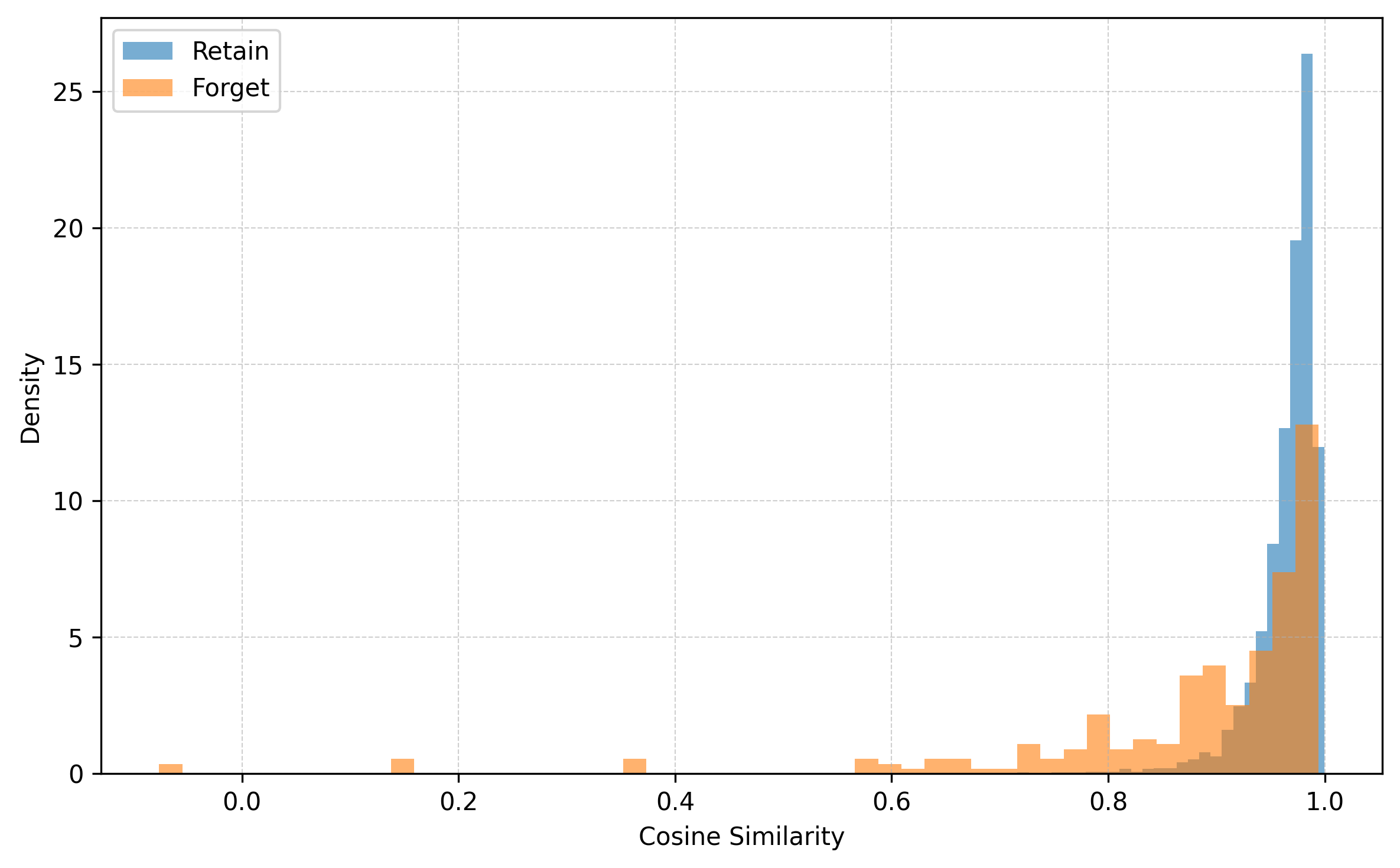} &
\includegraphics[width=0.22\textwidth]{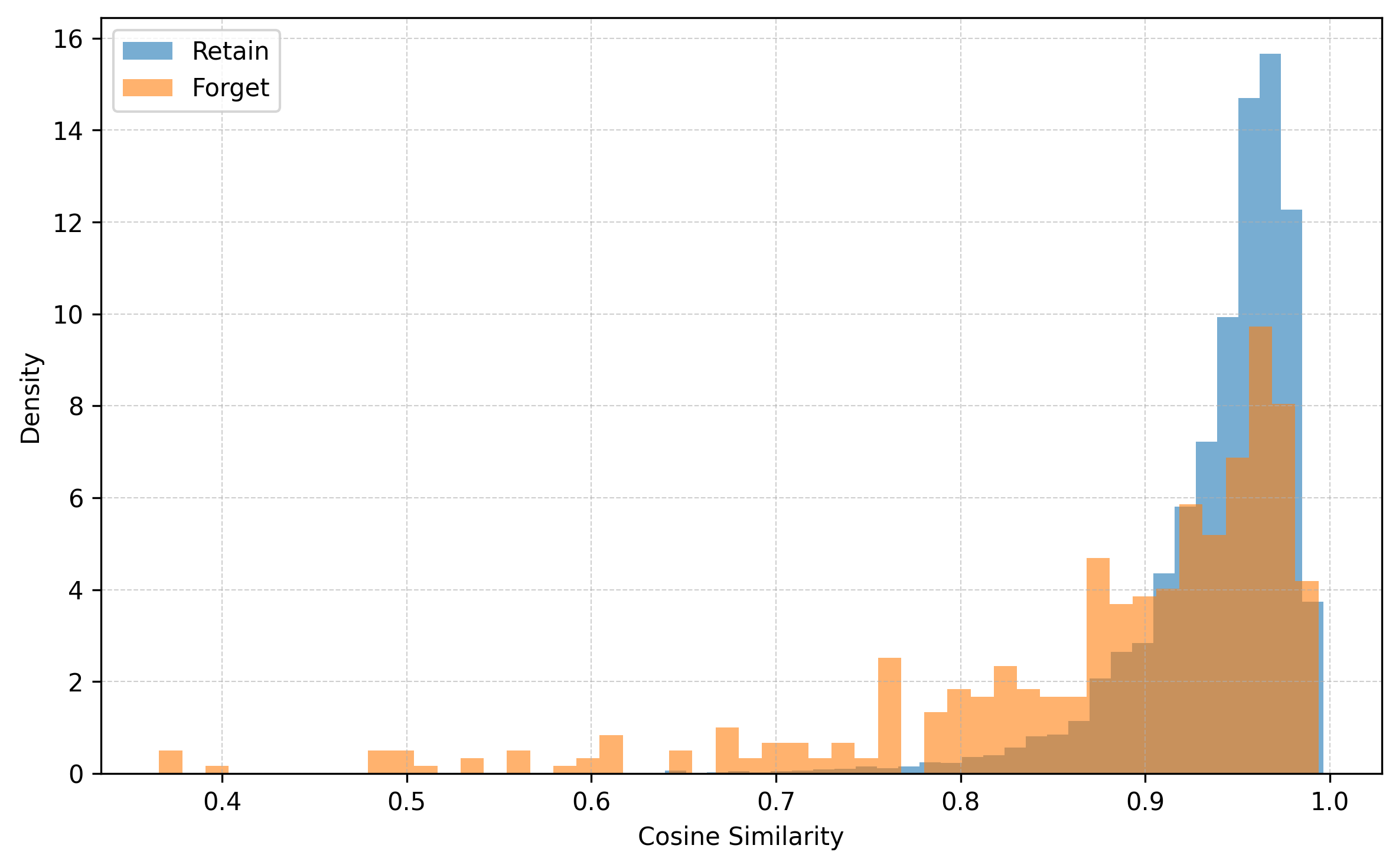} &
\includegraphics[width=0.22\textwidth]{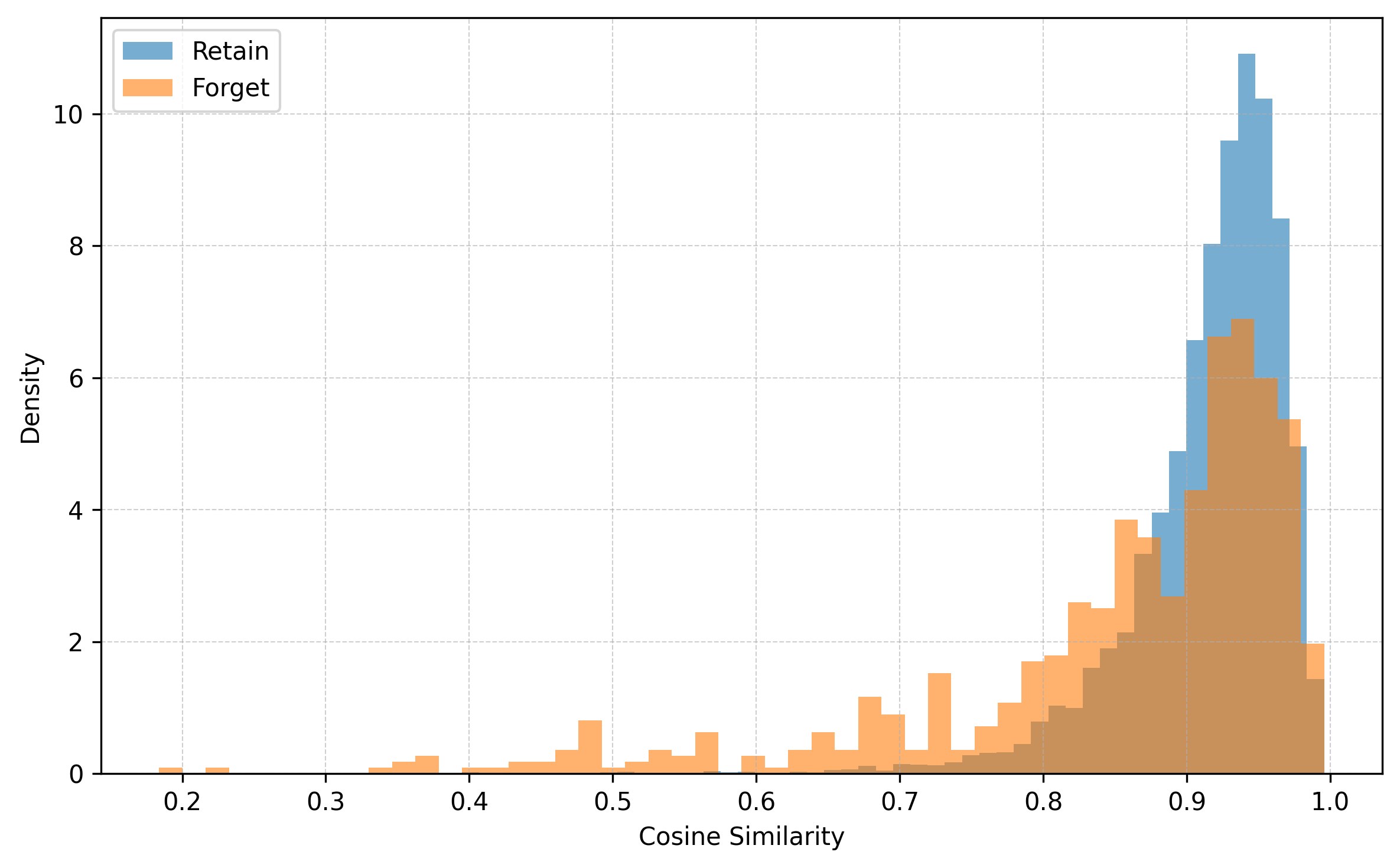} \\

\bottomrule

\end{tabular}
\caption{Representation similarity over the three-phase unlearning process on MUFAC. The orange shows the retain data distribution, while the blue shows the forget data distribution.}
\label{fig:mufac_similarity}
\end{figure*}

%% file: main.bbl
\begin{thebibliography}{31}
        \providecommand{\natexlab}[1]{#1}
        \providecommand{\url}[1]{\texttt{#1}}
        \expandafter\ifx\csname urlstyle\endcsname\relax
          \providecommand{\doi}[1]{doi: #1}\else
          \providecommand{\doi}{doi: \begingroup \urlstyle{rm}\Url}\fi
        
        \bibitem[Adhikari et~al.(2025)Adhikari, Kumaravelu, and Srijith]{adhikari2025unlearning}
        Sayanta Adhikari, Vishnuprasadh Kumaravelu, and PK Srijith.
        \newblock An unlearning framework for continual learning.
        \newblock \emph{arXiv preprint arXiv:2509.17530}, 2025.
        
        \bibitem[Akiba et~al.(2019)Akiba, Sano, Yanase, Ohta, and Koyama]{akiba2019optuna}
        Takuya Akiba, Shotaro Sano, Toshihiko Yanase, Takeru Ohta, and Masanori Koyama.
        \newblock {O}ptuna: A next-generation hyperparameter optimization framework.
        \newblock In \emph{The 25th ACM SIGKDD International Conference on Knowledge Discovery \& Data Mining}, pages 2623--2631, 2019.
        
        \bibitem[Bourtoule et~al.(2021)Bourtoule, Chandrasekaran, Choquette-Choo, Jia, Travers, Zhang, Lie, and Papernot]{bourtoule2021machine}
        Lucas Bourtoule, Varun Chandrasekaran, Christopher~A Choquette-Choo, Hengrui Jia, Adelin Travers, Baiwu Zhang, David Lie, and Nicolas Papernot.
        \newblock Machine unlearning.
        \newblock In \emph{2021 IEEE symposium on security and privacy (SP)}, pages 141--159. IEEE, 2021.
        
        \bibitem[Cao et~al.(2018)Cao, Shen, Xie, Parkhi, and Zisserman]{cao2018vggface2datasetrecognisingfaces}
        Qiong Cao, Li Shen, Weidi Xie, Omkar~M. Parkhi, and Andrew Zisserman.
        \newblock Vggface2: A dataset for recognising faces across pose and age, 2018.
        
        \bibitem[Cao and Yang(2015)]{cao2015towards}
        Yinzhi Cao and Junfeng Yang.
        \newblock Towards making systems forget with machine unlearning.
        \newblock In \emph{2015 IEEE symposium on security and privacy}, pages 463--480. IEEE, 2015.
        
        \bibitem[Carlini et~al.(2021)Carlini, Tramer, Wallace, Jagielski, Herbert-Voss, Lee, Roberts, Brown, Song, Erlingsson, et~al.]{carlini2021extracting}
        Nicholas Carlini, Florian Tramer, Eric Wallace, Matthew Jagielski, Ariel Herbert-Voss, Katherine Lee, Adam Roberts, Tom Brown, Dawn Song, Ulfar Erlingsson, et~al.
        \newblock Extracting training data from large language models.
        \newblock In \emph{30th USENIX security symposium (USENIX Security 21)}, pages 2633--2650, 2021.
        
        \bibitem[Chatterjee et~al.(2024)Chatterjee, Chundawat, Tarun, Mali, and Mandal]{chatterjee2024unified}
        Romit Chatterjee, Vikram Chundawat, Ayush Tarun, Ankur Mali, and Murari Mandal.
        \newblock A unified framework for continual learning and unlearning.
        \newblock \emph{arXiv preprint arXiv:2408.11374}, 2024.
        
        \bibitem[Chen et~al.(2023)Chen, Gao, Liu, Peng, and Wang]{chen2023boundary}
        Min Chen, Weizhuo Gao, Gaoyang Liu, Kai Peng, and Chen Wang.
        \newblock Boundary unlearning: Rapid forgetting of deep networks via shifting the decision boundary.
        \newblock In \emph{Proceedings of the IEEE/CVF Conference on Computer Vision and Pattern Recognition}, pages 7766--7775, 2023.
        
        \bibitem[Choi and Na(2024)]{choi2024towards}
        Dasol Choi and Dongbin Na.
        \newblock Towards machine unlearning benchmarks: Forgetting the personal identities in facial recognition systems.
        \newblock Presented at the AAAI Workshop on Privacy-Preserving Artificial Intelligence (PPAI-24), 2024.
        
        \bibitem[Chundawat et~al.(2023)Chundawat, Tarun, Mandal, and Kankanhalli]{chundawat2023can}
        Vikram~S Chundawat, Ayush~K Tarun, Murari Mandal, and Mohan Kankanhalli.
        \newblock Can bad teaching induce forgetting? unlearning in deep networks using an incompetent teacher.
        \newblock In \emph{Proceedings of the AAAI Conference on Artificial Intelligence}, pages 7210--7217, 2023.
        
        \bibitem[Davies and Bouldin(2009)]{davies2009cluster}
        David~L Davies and Donald~W Bouldin.
        \newblock A cluster separation measure.
        \newblock \emph{IEEE transactions on pattern analysis and machine intelligence}, \penalty0 (2):\penalty0 224--227, 2009.
        
        \bibitem[Deng et~al.(2009)Deng, Dong, Socher, Li, Li, and Fei-Fei]{deng2009imagenet}
        Jia Deng, Wei Dong, Richard Socher, Li-Jia Li, Kai Li, and Li Fei-Fei.
        \newblock Imagenet: A large-scale hierarchical image database.
        \newblock In \emph{2009 IEEE conference on computer vision and pattern recognition}, pages 248--255. Ieee, 2009.
        
        \bibitem[Dosovitskiy et~al.(2021)Dosovitskiy, Beyer, Kolesnikov, Weissenborn, Zhai, Unterthiner, Dehghani, Minderer, Heigold, Gelly, Uszkoreit, and Houlsby]{dosovitskiy2021an}
        Alexey Dosovitskiy, Lucas Beyer, Alexander Kolesnikov, Dirk Weissenborn, Xiaohua Zhai, Thomas Unterthiner, Mostafa Dehghani, Matthias Minderer, Georg Heigold, Sylvain Gelly, Jakob Uszkoreit, and Neil Houlsby.
        \newblock An image is worth 16x16 words: Transformers for image recognition at scale.
        \newblock In \emph{International Conference on Learning Representations}, 2021.
        
        \bibitem[Fan et~al.(2024)Fan, Liu, Zhang, Wong, Wei, and Liu]{fan2024salun}
        Chongyu Fan, Jiancheng Liu, Yihua Zhang, Eric Wong, Dennis Wei, and Sijia Liu.
        \newblock Salun: Empowering machine unlearning via gradient-based weight saliency in both image classification and generation.
        \newblock In \emph{The Twelfth International Conference on Learning Representations}, 2024.
        
        \bibitem[Foster et~al.(2024)Foster, Schoepf, and Brintrup]{foster2024fast}
        Jack Foster, Stefan Schoepf, and Alexandra Brintrup.
        \newblock Fast machine unlearning without retraining through selective synaptic dampening.
        \newblock In \emph{Proceedings of the AAAI Conference on Artificial Intelligence}, pages 12043--12051, 2024.
        
        \bibitem[Golatkar et~al.(2020)Golatkar, Achille, and Soatto]{golatkar2020eternal}
        Aditya Golatkar, Alessandro Achille, and Stefano Soatto.
        \newblock Eternal sunshine of the spotless net: Selective forgetting in deep networks.
        \newblock In \emph{Proceedings of the IEEE/CVF Conference on Computer Vision and Pattern Recognition}, pages 9304--9312, 2020.
        
        \bibitem[Graves et~al.(2021)Graves, Nagisetty, and Ganesh]{graves2021amnesiac}
        Laura Graves, Vineel Nagisetty, and Vijay Ganesh.
        \newblock Amnesiac machine learning.
        \newblock In \emph{Proceedings of the AAAI Conference on Artificial Intelligence}, pages 11516--11524, 2021.
        
        \bibitem[Grimes et~al.(2024)Grimes, Abidi, Frank, and Gallagher]{grimes2024gone}
        Keltin Grimes, Collin Abidi, Cole Frank, and Shannon Gallagher.
        \newblock Gone but not forgotten: Improved benchmarks for machine unlearning.
        \newblock \emph{arXiv preprint arXiv:2405.19211}, 2024.
        
        \bibitem[He et~al.(2016)He, Zhang, Ren, and Sun]{he2016deep}
        Kaiming He, Xiangyu Zhang, Shaoqing Ren, and Jian Sun.
        \newblock Deep residual learning for image recognition.
        \newblock In \emph{Proceedings of the IEEE conference on computer vision and pattern recognition}, pages 770--778, 2016.
        
        \bibitem[Huang et~al.(2025)Huang, Cheng, Zhang, Zheng, Wang, He, Li, and Huang]{huang2025unified}
        Zhehao Huang, Xinwen Cheng, Jie Zhang, Jinghao Zheng, Haoran Wang, Zhengbao He, Tao Li, and Xiaolin Huang.
        \newblock A unified gradient-based framework for task-agnostic continual learning-unlearning.
        \newblock \emph{arXiv preprint arXiv:2505.15178}, 2025.
        
        \bibitem[Krizhevsky et~al.(2009)Krizhevsky, Hinton, et~al.]{krizhevsky2009learning}
        Alex Krizhevsky, Geoffrey Hinton, et~al.
        \newblock Learning multiple layers of features from tiny images.
        \newblock 2009.
        
        \bibitem[Krizhevsky et~al.(2012)Krizhevsky, Sutskever, and Hinton]{krizhevsky2012imagenet}
        Alex Krizhevsky, Ilya Sutskever, and Geoffrey~E Hinton.
        \newblock Imagenet classification with deep convolutional neural networks.
        \newblock \emph{Advances in neural information processing systems}, 25, 2012.
        
        \bibitem[Kurmanji et~al.(2023)Kurmanji, Triantafillou, Hayes, and Triantafillou]{kurmanji2023towards}
        Meghdad Kurmanji, Peter Triantafillou, Jamie Hayes, and Eleni Triantafillou.
        \newblock Towards unbounded machine unlearning.
        \newblock In \emph{Thirty-seventh Conference on Neural Information Processing Systems}, 2023.
        
        \bibitem[Ngnaw{\'e} et~al.(2024)Ngnaw{\'e}, Sahoo, Pequignot, Precioso, and Gagn{\'e}]{ngnawe2024detecting}
        Jonas Ngnaw{\'e}, Sabyasachi Sahoo, Yann Pequignot, Fr{\'e}d{\'e}ric Precioso, and Christian Gagn{\'e}.
        \newblock Detecting brittle decisions for free: Leveraging margin consistency in deep robust classifiers.
        \newblock \emph{Advances in Neural Information Processing Systems}, 37:\penalty0 23301--23324, 2024.
        
        \bibitem[Tang et~al.(2025)Tang, Zhuang, Fang, Li, Han, Huang, Fan, Wang, Zhu, Zhang, et~al.]{tang2025acu}
        Jianheng Tang, Huiping Zhuang, Di Fang, Jiaxu Li, Feijiang Han, Yajiang Huang, Kejia Fan, Leye Wang, Zhanxing Zhu, Shanghang Zhang, et~al.
        \newblock Acu: Analytic continual unlearning for efficient and exact forgetting with privacy preservation.
        \newblock \emph{arXiv preprint arXiv:2505.12239}, 2025.
        
        \bibitem[Voigt and Bussche(2017)]{GDPR}
        Paul Voigt and Axel Bussche.
        \newblock \emph{The EU General Data Protection Regulation (GDPR): A Practical Guide}.
        \newblock 2017.
        
        \bibitem[Xu(2024)]{xu2024machine}
        Miao Xu.
        \newblock Machine unlearning: challenges in data quality and access.
        \newblock In \emph{Proceedings of the Thirty-Third International Joint Conference on Artificial Intelligence (IJCAI-24)}, 2024.
        
        \bibitem[Zhang et~al.(2025)Zhang, Shen, Chen, and Xu]{zhang2025toward}
        Chenhao Zhang, Shaofei Shen, Weitong Chen, and Miao Xu.
        \newblock Toward efficient data-free unlearning.
        \newblock In \emph{Proceedings of the AAAI Conference on Artificial Intelligence}, pages 22372--22379, 2025.
        
        \bibitem[Zhao et~al.(2024{\natexlab{a}})Zhao, Ni, Fan, Wang, Chen, Meng, and Zhang]{Zhao_2024_CVPR}
        Hongbo Zhao, Bolin Ni, Junsong Fan, Yuxi Wang, Yuntao Chen, Gaofeng Meng, and Zhaoxiang Zhang.
        \newblock Continual forgetting for pre-trained vision models.
        \newblock In \emph{Proceedings of the IEEE/CVF Conference on Computer Vision and Pattern Recognition (CVPR)}, pages 28631--28642, 2024{\natexlab{a}}.
        
        \bibitem[Zhao et~al.(2026)Zhao, Zhu, Ni, Zhu, Meng, and Zhang]{zhao2026practical}
        Hongbo Zhao, Fei Zhu, Bolin Ni, Feng Zhu, Gaofeng Meng, and Zhaoxiang Zhang.
        \newblock Practical continual forgetting for pre-trained vision models.
        \newblock \emph{IEEE Transactions on Pattern Analysis and Machine Intelligence}, 2026.
        
        \bibitem[Zhao et~al.(2024{\natexlab{b}})Zhao, Kurmanji, B{\u{a}}rbulescu, Triantafillou, and Triantafillou]{zhao2024makes}
        Kairan Zhao, Meghdad Kurmanji, George-Octavian B{\u{a}}rbulescu, Eleni Triantafillou, and Peter Triantafillou.
        \newblock What makes unlearning hard and what to do about it.
        \newblock \emph{Advances in Neural Information Processing Systems}, 37:\penalty0 12293--12333, 2024{\natexlab{b}}.
        
        \end{thebibliography}
